\documentclass{article}
\usepackage[margin=1in]{geometry}
\usepackage[T1]{fontenc}
\usepackage[utf8]{inputenc}
\usepackage{microtype}
\usepackage[numbers,sort&compress]{natbib}
\usepackage{amsmath,amssymb,amsthm,mathtools,bm}
\usepackage[cal=euler]{mathalpha}
\usepackage{cochineal}
\usepackage{enumitem}
\usepackage{booktabs}
\usepackage{algorithm}
\usepackage[noend]{algpseudocode}
\usepackage{float}
\usepackage[hidelinks,bookmarksdepth=2]{hyperref}
\usepackage{url}
\usepackage{xcolor}
\usepackage{tikz}
\usepackage{longtable}
\usepackage{tabularx}

\mathtoolsset{showonlyrefs,showmanualtags}
\allowdisplaybreaks

\newtheorem{theorem}{Theorem}[section]
\newtheorem{lemma}[theorem]{Lemma}
\newtheorem{proposition}[theorem]{Proposition}
\newtheorem{corollary}[theorem]{Corollary}

\theoremstyle{remark}

\newcommand{\R}{\mathbb{R}}
\newcommand{\E}{\mathbb{E}}
\newcommand{\one}{\mathbf{1}}
\newcommand{\KL}{\mathrm{KL}}
\newcommand{\reg}{\mathrm{Reg}}
\newcommand{\Var}{\mathrm{Var}}
\newcommand{\Cov}{\mathrm{Cov}}
\newcommand{\TV}{\mathrm{TV}}

\newcommand{\Hc}{\mathrm{H}}
\newcommand{\cC}{\mathcal{C}}

\newcommand{\ip}[2]{\langle #1,#2\rangle}

\newcommand{\Env}[2]{\mathrm{Env}_{\Gamma}\!\left(#1;\,#2\right)}

\newcommand{\akshay}[1]{}
\newcommand{\vac}[1]{}

\makeatletter
\g@addto@macro\normalsize{%
  \setlength{\abovedisplayskip}{3pt plus 1pt minus 1pt}%
  \setlength{\belowdisplayskip}{3pt plus 1pt minus 1pt}%
  \setlength{\abovedisplayshortskip}{0pt plus 1pt}%
  \setlength{\belowdisplayshortskip}{2pt plus 1pt minus 1pt}%
}
\def\thm@space@setup{%
  \thm@preskip=5pt plus 1pt minus 1pt
  \thm@postskip=\thm@preskip
}
\makeatother
\title{Adaptive Bayes exactly tracks information over intrinsic time}
\author{Akshay Balsubramani \\ {\small \texttt{akshay@vac.bio}}}
\date{}

\makeatletter
\def\bvapx@title{}
\AtBeginDocument{\providecommand{\phantomsection}{}}
\let\bvapx@maketitle\maketitle
\def\maketitle{\global\let\bvapx@title\@title\bvapx@maketitle}
\newcommand{\appendixtitle}{%
  \clearpage
  \suppressfloats[t]%
  \phantomsection
  \begin{center}
    {\LARGE\bfseries Appendices\ifx\bvapx@title\@empty\else\space to ``\bvapx@title''\fi\par}%
  \end{center}
  \vspace{1.5\baselineskip}%
}
\makeatother

\providecommand{\akshay}[1]{}\renewcommand{\akshay}[1]{}%
\providecommand{\vac}[1]{}\renewcommand{\vac}[1]{}%
\begin{document}
\maketitle

\begin{abstract}
Bayesian and multiplicative-weights updates reweight experts, models, or actions from sequential feedback.
We show that the regret of any such update obeys an exact information-accounting identity.
On each round, the learner's excess loss to any chosen comparator is the sum of an immediate cost for the uncertainty exposed by the round and a reduction in the information distance from the learner's current weights to the comparator.
The cumulative cost defines a pathwise uncertainty clock, the \emph{intrinsic time} of the realized sequence.
Summing one-step balances yields two exact adaptive decompositions of cumulative regret, one for each natural way of composing the update across rounds.
Because the decompositions are exact, favorable stochastic or low-noise regimes appear as self-bounding properties of the realized intrinsic time.
The accounting also fixes a learning rate, inverse in the square root of intrinsic time.
That schedule is competitive with adaptive baselines in selected online-learning settings.
The same calculus covers Hedge, optimistic and side-information variants, continuous priors, boosting, online convex optimization, contextual bandits, and repeated games: the pathwise account is the same in every case.
\end{abstract}

\section{Introduction}

The main mechanisms for learning from sequential feedback are Bayesian and multiplicative-weights updates.
In the classical experts problem they reweight experts after each loss vector; in Bayesian model averaging they reweight models after each observation; and in many modern pipelines they reweight candidate actions, hypotheses, or responses after partial evidence.
The common basic technique is to keep a distribution over candidates, observe feedback, and move mass toward candidates that explain that feedback better.

On an arbitrary sequence, the regret of any such update equals an information quantity exactly, round by round.
Each round's excess loss splits into an immediate cost for the uncertainty the round exposes, plus the information transported toward the comparator.
Added together, these costs define a clock---the realized sequence's \emph{intrinsic time}---computed from quantities the update itself generates along the path.
Because the relation is an identity, an easy sequence registers directly as a slow realized clock.

The performance of these updates is therefore an exact ledger of information.
Summing the one-step balances produces two adaptive decompositions: one for a prior-retempered update that recomputes the current posterior from the original prior using cumulative scores, and one for a local update that moves only from the current weights using the current score vector.
In both cases, information is the common accounting currency: it measures comparator complexity, governs learning-rate effects, and records how difficult the realized sequence was.

\begin{figure}[t]
\centering
\includegraphics[width=\textwidth]{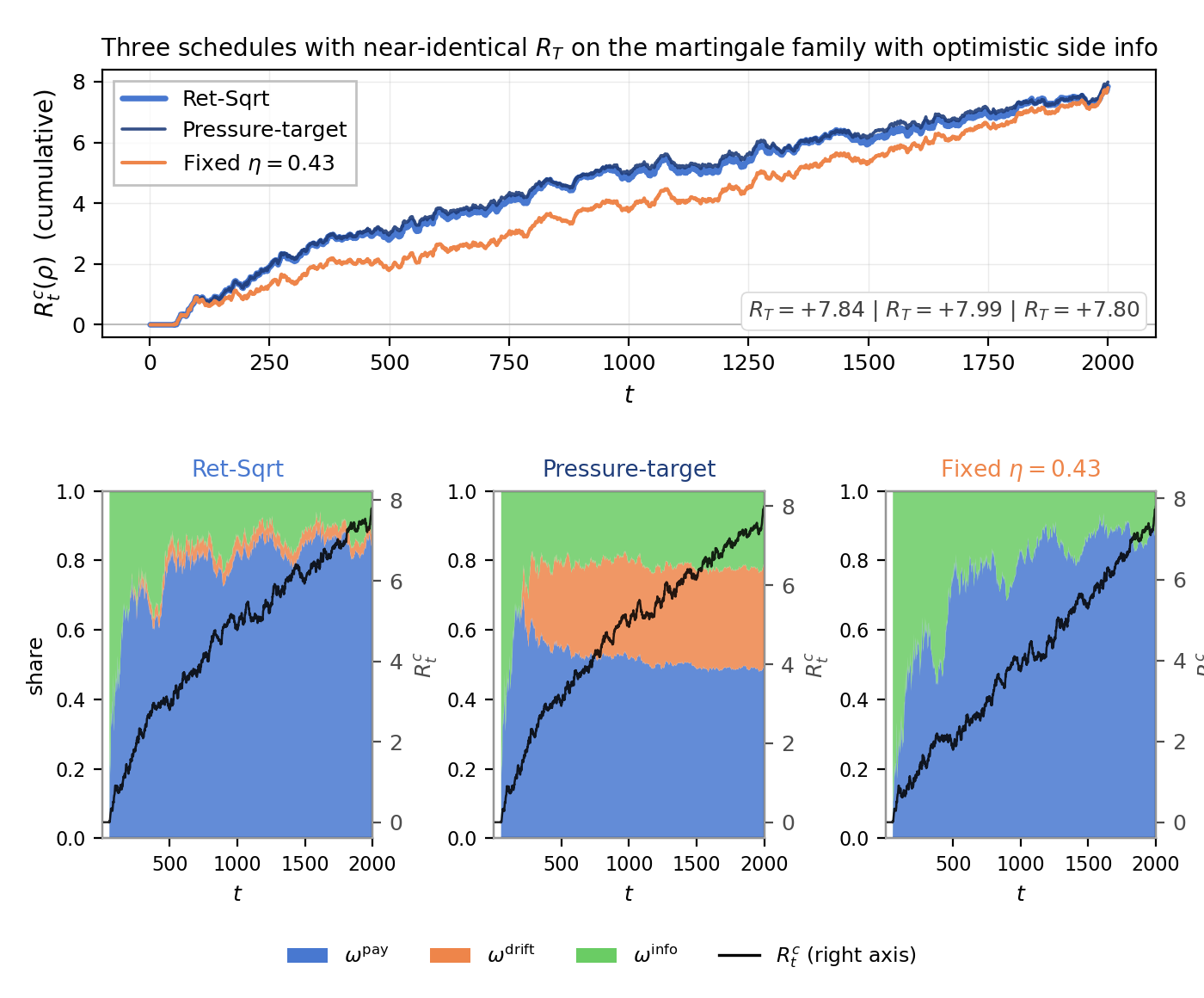}
\caption{\textbf{Three schedules reach the same regret by different routes, and the identity separates mechanisms that the terminal regret alone conceals.} The three learning-rate schedules run the same exponential-weights update---the intrinsic-time square-root schedule \textsc{Ret-Sqrt}, the pressure-target line search, and a fixed rate $\eta=0.43$---on one sequence family (martingale losses with optimistic side information; $K=8$, $T=2000$, $12$ seeds). \textbf{Top:} cumulative regret to the best fixed expert; the three terminal values agree to within $2.4\%$ ($R_T^c=+7.84,\,+7.99,\,+7.80$). \textbf{Bottom:} the identity splits each schedule's regret, round by round, into three shares summing to one---intrinsic-time loss $\omega^{\mathrm{pay}}$ (blue), temperature drift $\omega^{\mathrm{drift}}$ (orange), comparator information $\omega^{\mathrm{info}}$ (green)---with the signed regret as the thin black line on the secondary axis. \textsc{Ret-Sqrt} and the fixed rate pay chiefly through the clock ($\omega^{\mathrm{pay}}_T\approx 0.85$, $\omega^{\mathrm{info}}_T\approx 0.13$); the pressure-target schedule reaches the same regret by a different route ($\omega^{\mathrm{pay}}_T\approx 0.50$, $\omega^{\mathrm{drift}}_T\approx 0.30$, $\omega^{\mathrm{info}}_T\approx 0.21$) because its temperature varies locally with the data.}
\label{fig:same-regret-diff-decomp}
\end{figure}

The same accounting runs unchanged on real data: the round-by-round split closes exactly, and the three shares of Figure~\ref{fig:same-regret-diff-decomp} separate datasets by where their regret goes.
Benchmarked against adaptive online-learning baselines, the two schedules recommended here are competitive in the settings tested, and the decomposition attributes each gap to a named term (Section~\ref{sec:experiments}; Figure~\ref{fig:sota-comparison}).

This update template applies well beyond prediction with expert advice.
It encompasses Bayesian model averaging, optimistic online learning, bandit algorithms, boosting, online convex optimization over continuous domains, repeated-game play, contextual decision-making, multiscale aggregation, and softmax preference learning.
All are instances of a single Bayes-rule update with different choices of scores, side information, and learning rate. 

On round $t$ the learner chooses weights $p_t\in\Delta([K])$ over $K$ experts, observes expert losses $\ell_t\in\R^K$, and incurs mixture loss $\ip{p_t}{\ell_t}$.
For a comparator distribution $\rho\in\Delta([K])$, the cumulative regret up to time $T$ is
\[
R_T^\ell(\rho) := \sum_{t=1}^T \ip{p_t}{\ell_t}-\ip{\rho}{L_T}
\qquad \qquad
L_T := \sum_{t=1}^T \ell_t
\]
Point-mass comparators recover the usual regret to a single expert, while diffuse $\rho$ compare to arbitrary mixtures.
A comparator is the benchmark distribution we want to match in hindsight, and regret is the learner's extra cumulative loss relative to that benchmark.
Throughout, the loss sequence may be arbitrary, so every statement is pathwise, over the realized trajectory of the learner.

For most of the introduction the per-round loss can be read as the ordinary loss vector $\ell_t$: the classic Hedge/exponential-weights algorithm \citep{FreundSchapire97} is the special case where the score driving the update is $c_t=\ell_t$ (in the original Hedge parameterization, weights are multiplied by $\beta^{\ell_t(i)}$ with $\beta=e^{-\eta}$).
More generally, side information, optimistic forecasts, compensators, and several partial-information reductions are folded into a single \emph{composite loss} $c_t$ on which the same analysis runs unchanged \cite{HazanKale2010,ChiangEtAl2012,SteinhardtLiang2014}; the explicit reduction is developed in Section~\ref{sec:exact}, and until then no harm comes from reading $c_t$ as $\ell_t$.

This viewpoint is especially natural in settings where one repeatedly reweights proposals, actions, or hypotheses from sequential evidence.
It underlies optimistic repeated-game play \cite{HazanKale2010,ChiangEtAl2012}, adaptive step-size selection in online and stochastic optimization \cite{DuchiHazanSinger2011}, and, more recently, relative-entropy-regularized policy updates in reinforcement learning from human feedback \cite{rafailov2023neuripsb} and related post-training pipelines for language models. In those settings, the same softmax reweighting step governs how candidate responses are promoted or suppressed by sequential evidence.

Once written in this form, the prior plays two roles.
It initializes the update, and it also serves as a decoder for the comparator class.
For a point comparator concentrated on expert $i$, the description cost is $-\log\pi(i)$; for a mixed comparator $\rho$ it is $\KL(\rho\|\pi)$.
Difficulty is therefore measured relative to the realized sequence and the chosen decoder; no data-generating law is assumed \citep{grunwald2007minimum}.
This is close in spirit to the predictive-complexity and game-theoretic traditions of online learning, where one fixes the sequential prediction rule and asks how hard the realized path is for that rule \cite{Dawid1984,Vovk1998AA,MerhavFeder1998,Kalnishkan2002,vovk2005algorithmic,ShaferVovk2019}.

\subsection{The common mechanism: one-step information balance}
\label{sec:intro-mechanism}

The analysis rests on a single one-step identity.
Fix the \textbf{exponential-weights update}---equivalently the multiplicative-weights or one-step Bayesian update---which reweights the current distribution by the exponentiated score at temperature $\eta_t$, $q_{t+1}(i)\propto q_t(i)\,e^{-\eta_t c_t(i)}$.
The temperature is the learning rate of online learning; the two names are used interchangeably, the first emphasizing the Gibbs form of the update.
If $q_t$ is the current distribution, $q_{t+1}$ the distribution it produces on composite losses $c_t$, and $\rho$ is any comparator, then
\begin{equation}
\label{eq:intro-centered-step}
\ip{q_t}{c_t} - \ip{\rho}{c_t}
= \delta_t(c)
+ \frac{\KL(\rho\|q_t) - \KL(\rho\|q_{t+1})}{\eta_t}
\end{equation}
where $\delta_t(c)$ is the round's immediate cost: the gap between the learner's mean composite loss and the centered logarithmic mix loss for that round.
Read from left to right, the identity says that excess loss splits into what is paid now and what is converted into a reduction of the information gap to the comparator.
No approximation has yet been made.

This one-step balance is the paper's basic accounting statement, and it is special to exponential reweighting.
The cancellation that turns excess loss into a telescoping information gap is the algebraic identity the exponential-weights update satisfies; an arbitrary update rule admits no such cancellation.
Within that family it holds for every Bayesian update over experts, regardless of how the temperature is chosen.
At fixed temperature the relative-entropy term telescopes, giving the familiar fixed-rate identities.
When the temperature varies, the same one-step balance can be composed in two natural ways, depending on how the new temperature enters the update.

\subsection{Two ways to compose the same update}
\label{sec:intro-two-updates}

Once the temperature varies, two natural adaptive updates appear.
The first is the \textbf{prior-retempered update}, which recomputes the current posterior from the original prior at the current temperature:
\[
p_t(i)\propto \pi(i)e^{-\eta_t C_{t-1}(i)}
\qquad \qquad
C_t(i):=\sum_{s=1}^t c_s(i)
\]
Its decomposition (Theorem~\ref{thm:pacbayes-second-order}) says that for every comparator $\rho$ and time $T$,
\begin{equation}
\label{eq:intro-three-piece}
R_T^c(\rho)
=
\underbrace{\sum_{t=1}^T \eta_t Q_t(c)}_{\textup{intrinsic-time loss}}
+
\underbrace{D_T}_{\textup{drift from changing temperature}}
+
\underbrace{B_T(\rho)}_{\textup{terminal comparator information}}
\end{equation}
Here $Q_t(c)\ge 0$ is the round-difficulty increment revealed by the update itself.
The cumulative quantity $V_T(c):=\sum_{t=1}^T Q_t(c)$ is the \emph{intrinsic time}: the uncertainty clock that the realized path itself exposes.

The second is the \textbf{local update}, $p_{t+1}(i)\propto p_t(i)e^{-\eta_t c_t(i)}$,
which updates only from the current weights and the current score vector.
Its cumulative identity (Proposition~\ref{prop:pressure-cumulative}) is built from the same one-step loss term $Q_t(c)$, but the cumulative bookkeeping is different. Instead of a temperature-drift term, it produces a cumulative normalization identity, and for comparator classes it is naturally expressed through the terminal mass $p_{T+1}(E)$ assigned to an event or class $E$.

The practical distinction mirrors the algebra: the two updates call for different learning-rate schedules.
The prior-retempered update pairs naturally with a monotone second-order schedule driven by cumulative intrinsic time.
The local update pairs naturally with a ``pressure-target" line search that chooses the current temperature so that the one-step normalized loss hits a prescribed target.
At fixed temperature the two updates agree; under adaptive rates they diverge, even though the same one-step information balance drives both of them.
Crossing the two update geometries with two natural rate controllers gives a $2\times2$ family of adaptive Bayes algorithms, developed in Section~\ref{sec:schedules}.
The two main pairings are the prior-retempered update at a square-root schedule and the local update at a pressure target.

\subsection{The meaning of intrinsic time}
\label{sec:intro-intrinsic}

The decomposition therefore shifts attention from the size of regret to its composition: how the realized sequence splits it into online uncertainty $\sum_t \eta_t Q_t(c)$, schedule-induced distortion $D_T$, and comparator description cost $B_T(\rho)$. 
This variational form is kept intact for as long as possible: up through the schedule-dependent calculations, the main statements are identities or two-sided envelopes, and one-sided inequalities enter only where the exact terms are deliberately relaxed to simpler online surrogates. 

Two clocks must be kept apart: the exact intrinsic clock $V_t(c)$, on which the identities run, and a quadratic relaxation $W_t(c)$ used only as a practical controller (Section~\ref{sec:second-order}).

The intrinsic-time increment $Q_t(c)$ is therefore an exact quantity the update incurs. Proposition~\ref{prop:tilted-var} shows that it is an average of tilted variances generated by the Bayes-rule update on that round, verified numerically in Appendix~\ref{sec:eval-intrinsic-time-clock}; Section~\ref{sec:adaptive} takes up what this says about how the update reveals the realized path's structure.

Both decompositions are built from the same per-round information functional: the intrinsic-time increment $Q_t(c)$, read as a centered finite-temperature cumulant of the current played distribution. Only the prefactors and the cumulative bookkeeping differ. The two update families therefore share one mechanism---the round-by-round information cost is the same functional of $(p_t,c_t)$ in both---while the realized values of $Q_t(c)$ remain algorithm-dependent, since under a variable schedule the two recursions generally play different $p_t$ on the same loss path. They coincide round by round only when the temperature schedule is fixed, so that the played distributions agree.

\subsection{Choosing the update and the learning rate}
\label{sec:intro-which}

An adaptive Bayes algorithm is fixed by exactly two choices: whether the update is composed from the cumulative scores or from the current score alone, and how the learning rate is set.
Familiar algorithms occupy cells of this design space.
At any fixed temperature the two compositions coincide and reduce to classic Hedge \cite{FreundSchapire97}; AdaHedge plays the prior-retempered composition at its gap-implied rate \cite{derooij2014adahedge}; variance-tuned rates drive the same composition through a quadratic clock \cite{CesaBianchiMansourStoltz2007,gaillard2014secondorderhedge}; Safe Bayes runs it at a misspecification-calibrated constant temperature \cite{the2012safebayesian}.
Because the identity holds for every predictable schedule, the accounting covers all of these algorithms at once, and it tightens their classical analyses.
The familiar regret bounds are relaxations of the same ledger, obtained by discarding the terminal comparator information and replacing the exact intrinsic-time loss by a worst-case proxy (Appendix~\ref{sec:variational-sharpening}).

The recommended default is the prior-retempered update at the inverse-square-root intrinsic-time rate $\eta_t\propto 1/\sqrt{V_{t-1}(c)}$ of Section~\ref{sec:second-order}, with the constant set by a comparator-information budget.
Tunings of this square-root form differ only in which clock they certify.
Classic Hedge books every round at its worst-case range; a variance-bound tuning books the realized spread through its Bernstein-type certificate, and runs warmer once that spread falls sufficiently below its range ceiling; the intrinsic-time rate books the exact loss term, which neither certificate undercuts (Propositions~\ref{prop:range} and~\ref{prop:surrogate-tightness}).
The recommended rate is therefore slightly more aggressive than its variance-bound counterpart, just as variance-bound rates are more aggressive than the horizon-tuned rate of classic Hedge.
The identity makes that aggressiveness safe: the intrinsic-time loss is incurred at the clock itself, so no slack is reserved for an analysis.
The local update paired with the pressure-target line search is the specialist choice: it calibrates each round to a prescribed target instead of a cumulative budget, chases a moving leader that a cooling schedule tracks only with delay, and is the natural form in boosting-style applications (Section~\ref{sec:pressure}).

\subsection{Side information and a unified perspective}

The decomposition also relates several distinctions that are usually presented as separate tradeoffs---worst-case versus stochastic behavior, first- versus second-order bounds, point experts versus mixtures, fixed horizons versus adaptive schedules, full versus partial feedback. Many individual pieces of the picture have appeared before; this paper places them inside one exact accounting. Table~\ref{tab:tradeoffs} in the related-work section (Section~\ref{sec:discussion-related}) lays these out row by row, conventional manifestation against exact informational form, and is used there to organize the surrounding literature.

The composite-loss viewpoint is especially useful when some part of the next loss vector is predictable.
If $m_t\in\R^K$ is a forecast available before round $t$, taking $u_t = -m_t$ yields the optimistic update. 
The composite loss becomes $c_t=\ell_t-m_t$, so only the forecast residual enters the intrinsic-time cost. 
The original-loss regret then separates into an online term for the unpredictable residual and an explicit predictable mismatch term. 
This is the form used by optimistic Hedge in adaptive game playing, model-based repeated games, and other settings where one predicts the opponent or the environment before acting \cite{HazanKale2010,ChiangEtAl2012}. 
Good side information does not change the exact theorem; it changes the sequence to which the theorem is applied.

The main decomposition is distribution-free. 
A \emph{stochastic-luckiness} result adds a second, optional interpretation. 
If the realized sequence happens to satisfy a self-bounding relation---roughly, if the same intrinsic-time term that drives the adversarial bound is controlled by the comparator's own excess loss---then the typical $O(\sqrt{V_T})$ scaling of regret collapses to a constant or other fast rate. 
This covers adversarial robustness and stochastic easiness with a single statement about the observed sequence.
The algorithm remains the same, but the realized sequence turns out to be easy for the chosen decoder. 
Section~\ref{sec:luckiness} develops this comparator-centered fast-rate interpretation in PAC-Bayesian form. 

A useful historical reference point is the scaling-time question in \cite{Freund2016}, which asked for regret guarantees controlled by internal variance-like quantities in place of the horizon and the number of experts.
We derive the exact identity first and only then upper bound it. 
That makes the result simultaneous over distributional comparators, directly compatible with side information, and naturally interpretable as an individual-sequence complexity statement.

Adaptive online optimization methods choose step sizes from the realized geometry of the loss sequence \cite{Zinkevich2003,DuchiHazanSinger2011}.
The same philosophy appears here, with entropic geometry in place of Euclidean geometry. 
Two contrasting learning-rate families are studied in Section~\ref{sec:schedules}: we choose the temperature from the intrinsic-time process, and the other uses the current score geometry to hit a prescribed pressure target.
In that sense the accounting gives an entropic analogue of adaptive step-size selection: the update strength is chosen from the difficulty already revealed by the sequence.

Viewed this way, optimistic updates, control-variate corrections, compensators, and side priors are variants of one update: they change which part of the round's information is treated as predictable and which part remains to be learned online. In the composite-loss notation, all of these variants feed the same Bayes update a different residual sequence. When the look-ahead signal is accurate, that residual is genuinely easier, the intrinsic-time clock slows down, and the exact regret identity records the improvement directly. 
Lower regret in these variants is therefore a statement about reduced unexplained information.

\subsection{Contributions}

\paragraph{A generic Bayes-rule reduction.}
Section~\ref{sec:exact} converts multiplicative evidence, side priors, and predictable corrections into additive composite losses.
The reduction also identifies the residual information that still has to be learned online.
After the reduction, the rest of the analysis depends only on the induced sequence $c_t$ and the prior complexity term.

\paragraph{One shared one-step balance, two parallel three-piece cumulative decompositions.}
Section~\ref{sec:adaptive} proves the exact prefix and terminal identities for the prior-retempered update $p_t(i)\propto \pi(i)e^{-\eta_t C_{t-1}(i)}$, splitting regret into intrinsic-time loss, temperature-change drift, and terminal comparator-information terms.
Section~\ref{sec:pressure} proves a structurally parallel three-piece identity for the local update $p_{t+1}(i)\propto p_t(i)e^{-\eta_t c_t(i)}$ under pressure-target schedules: the same intrinsic-time loss, a transport drift, and a terminal-mass partition. Both updates use the same composite-loss reduction and the same per-round information functional (a centered cumulant of the current played distribution); only the cumulative bookkeeping differs, and at fixed temperature, where the two recursions play the same $p_t$, they collapse to one identity.

\paragraph{Schedule design from the same information clock.}
Section~\ref{sec:schedules} studies two natural controller families.
The prior-retempered update supports an intrinsic-time square-root schedule, driven either by the exact clock $V_t(c)$ or by the quadratic relaxation $W_t(c)$, yielding pathwise sampled-expert identities, high-probability and anytime consequences, upper law-of-the-iterated-logarithm behavior, and optimistic corollaries through predictable side information.
The local update supports a pressure-target schedule $\sum_i p_t(i)e^{-\eta_t(c_t(i)-a_t)}=1$, yielding exact weighted and class-conditioned identities that are especially natural in boosting-style applications.
The same accounting fixes the recommended default of Section~\ref{sec:intro-which}: the prior-retempered update at the inverse-square-root intrinsic-time rate, whose exact clock runs below every certificate of it and therefore supports a slightly warmer temperature than variance-bound tuning.

\paragraph{Comparator-rich and side-information extensions.}
Later sections return to the original losses and derive shifting-comparator and simultaneous-quantile theorems.
They take the comparator-centered low-noise condition from the fixed rate to the adaptive schedule, so that fast PAC-Bayes rates survive intrinsic-time tuning.
The calculus then transfers to repeated games, continuous-action online convex optimization (OCO), and boosting.

\paragraph{An empirical diagnostic viewpoint, with numerical evidence.}
Because the decomposition is exact, its shares can be tracked round by round, turning the identity into a diagnostic that reads a sequence's regime---stochastic, adversarial, or mixed---directly off how regret is spent. Section~\ref{sec:experiments} develops this diagnostic, confirms the predicted signatures on canonical sequence families and on real data streams, checks that the second-order envelope of Theorem~\ref{thm:scaling-time} binds as predicted, and benchmarks the paper's two schedules against adaptive online-learning baselines on synthetic and real streams, where they are competitive.

Taken together, these results describe how sequential information is spent and gained by Bayes/exponential weights. 
The composite-loss reduction determines what evidence is encoded. 
The two update families determine how that evidence is composed over time. 
The schedule determines how aggressively that evidence is trusted. 
And the theorem records exactly what is incurred immediately, what is shifted by changing the temperature, and what is ultimately attributed to comparator complexity.

\paragraph{Overlap with the game formulation.}
The one-step balance, the three-piece chain it composes into, and the two-sided envelope of Theorem~\ref{thm:scaling-time} are shared with \cite{balsubramani2026concentration}, which develops them as the value identity of a repeated zero-sum game between learner and nature.
The online-learning half of that correspondence is new here.
It comprises the composite-loss reduction that fixes what evidence the update encodes, the two update geometries and their parallel cumulative identities, and the empirical program that reads the decomposition as a diagnostic on realized paths.
It also comprises the schedule the shared clock determines, together with its sampled, high-probability, anytime and iterated-logarithm consequences.
Results already established in \cite{balsubramani2026concentration} are cited at their statements; this paper does not restate their proofs.

\paragraph{Guide to the paper.}
Appendix~\ref{sec:further} collects further extensions to weighted-entropy geometry, repeated games, continuum priors, continuous-action OCO, and boosting.

Throughout, the finite index set $[K]:=\{1,\dots,K\}$ may represent experts, actions, models, or hypotheses depending on the application.
For $q\in\Delta([K])$ and $x\in\R^K$, we write $\ip{q}{x}=\sum_{i=1}^K q(i)x(i)$.
The relative entropy is $\KL(q\|p):=\sum_{i=1}^K q(i)\log\frac{q(i)}{p(i)}$, with the usual convention that $\KL(q\|p)=+\infty$ unless $q$ is absolutely continuous with respect to $p$.
We use the standard total variation distance $\TV(q,p):=\frac12\sum_{i=1}^K |q(i)-p(i)|$.
If $\mu$ is a strictly positive measure on $[K]$, we write $\Hc(q,\mu):=-\sum_{i=1}^K q(i)\log \mu(i)$ for the generalized cross-entropy; when $\mu$ is a probability distribution, this is the usual cross-entropy.
We use PAC-Bayes terminology when convenient, but throughout the main online-learning statements the symbol $\rho$ denotes an arbitrary comparison distribution.
The main algorithms use the notation $p_t$ for the learner's weights; generic one-step identities use $q_t$; and temperature-indexed posteriors are written as $q_{t,\eta}$.

\section{Core identities}
\label{sec:exact-block}

Once side information has been absorbed into a composite loss, each round's excess loss equals an immediate centered cumulant cost plus a change in relative entropy to the comparator. Adaptive temperatures leave the one-step identity unchanged but alter the way those relative-entropy transport terms compose over time. For the prior-retempered update, they organize into a cumulative chain with an explicit drift term. For the local recursion, the natural cumulative object is different, and Section~\ref{sec:pressure} treats it separately together with the pressure-target schedule.

\subsection{The composite-loss reduction}
\label{sec:exact}

The mixed-coincidence identity of~\cite{balsubramani2026information} is the gateway from side information to an ordinary additive loss sequence; a self-contained proof of the finite form is included.
In static form it rewrites a geometric pool of measures as an entropy-corrected variational problem.
In sequential form it shows that multiplicative side factors, predictable corrections, and ordinary losses can all be folded into a single composite loss sequence.
Once that reduction is made, both adaptive updates studied later operate on the same objects.

\begin{theorem}[Finite mixed coincidence identity; {\cite{balsubramani2026information}}]\label{thm:finite-mixed}
Let $\mu_1,\dots,\mu_W$ be strictly positive measures on a finite set $\mathcal X$, and let $\alpha=(\alpha_1,\dots,\alpha_W)\in\R^W$.
Define
\[
Z(\alpha):=\sum_{x\in\mathcal X}\prod_{w=1}^W \mu_w(x)^{\alpha_w},
\qquad
p^\star_\alpha(x):=\frac{\prod_{w=1}^W \mu_w(x)^{\alpha_w}}{Z(\alpha)}
\]
Then for every $q\in\Delta(\mathcal X)$,
\begin{equation}\label{eq:finite-mixed}
-\log Z(\alpha)+\KL(q\|p^\star_\alpha)
=
\sum_{w=1}^W \alpha_w \Hc(q,\mu_w)-\Hc(q)
\end{equation}
where $\Hc(q):=-\sum_x q(x)\log q(x)$ is Shannon entropy.
If, in addition, each $\mu_w$ is a probability distribution and $\alpha\in\Delta([W])$, then
\[
-\log Z(\alpha)=\min_{q\in\Delta(\mathcal X)} \sum_{w=1}^W \alpha_w \KL(q\|\mu_w)
\]
and the unique minimizer is $q=p^\star_\alpha$.
\end{theorem}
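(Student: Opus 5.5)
The identity \eqref{eq:finite-mixed} follows by expanding the relative entropy directly. Everything is finite and strictly positive, so all logarithms are defined. The plan is to write $\log p^\star_\alpha(x)=\sum_{w}\alpha_w\log\mu_w(x)-\log Z(\alpha)$ and substitute this into
\[
\KL(q\|p^\star_\alpha)=\sum_x q(x)\log q(x)-\sum_x q(x)\log p^\star_\alpha(x).
\]
The first sum is $-\Hc(q)$. The second gives $\sum_w\alpha_w\Hc(q,\mu_w)+\log Z(\alpha)$, using $\sum_x q(x)=1$ to pull out the constant $\log Z(\alpha)$. Adding $-\log Z(\alpha)$ to both sides then gives \eqref{eq:finite-mixed}.

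This holds for every $q\in\Delta(\mathcal X)$, including $q$ with zeros, under the convention $0\log 0=0$. Since $p^\star_\alpha$ and every $\mu_w$ are strictly positive, no infinite values can occur. This is the only point that needs care. No sign restriction on $\alpha$ is needed for this part.

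For the variational statement, assume each $\mu_w$ is a probability distribution and $\alpha\in\Delta([W])$. Rewrite the right side of \eqref{eq:finite-mixed} using $\Hc(q,\mu_w)-\Hc(q)=\KL(q\|\mu_w)$. Because $\sum_w\alpha_w=1$, the single entropy term splits across the weights:
\[
\sum_w\alpha_w\Hc(q,\mu_w)-\Hc(q)=\sum_w\alpha_w\bigl(\Hc(q,\mu_w)-\Hc(q)\bigr)=\sum_w\alpha_w\KL(q\|\mu_w).
\]
This is the one place the simplex constraint is used.

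The identity therefore reads $\sum_w\alpha_w\KL(q\|\mu_w)=-\log Z(\alpha)+\KL(q\|p^\star_\alpha)$. By Gibbs' inequality, $\KL(q\|p^\star_\alpha)\ge 0$, with equality if and only if $q=p^\star_\alpha$. So the minimum over $q$ equals $-\log Z(\alpha)$, and it is attained uniquely at $p^\star_\alpha$. The argument is routine and has no real obstacle; the steps to check are the finiteness bookkeeping and the use of $\sum_w\alpha_w=1$ to distribute the entropy term.
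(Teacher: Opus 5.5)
Your proposal is correct and follows the paper's proof essentially line for line. You expand $\KL(q\|p^\star_\alpha)$ using $\log p^\star_\alpha=\sum_w\alpha_w\log\mu_w-\log Z(\alpha)$ and rearrange, then use $\sum_w\alpha_w=1$ to rewrite the right side as $\sum_w\alpha_w\KL(q\|\mu_w)$, and conclude from nonnegativity of relative entropy with equality only at $q=p^\star_\alpha$; your finiteness bookkeeping and the remark that the first identity needs no sign restriction on $\alpha$ are accurate points the paper leaves implicit.
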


To pass to the online setting, interpret one factor as the current posterior, one as the round-loss update, and the rest as side information.
The resulting step is exactly Bayes/exponential-weights updating with a predictable compensator.
This one-round identity yields the shared one-step information balance; Sections~\ref{sec:adaptive} and~\ref{sec:pressure} then compose it into the two distinct cumulative identities---retempered and local---that give the paper's two pathwise decompositions.

\begin{theorem}[Sequential mixed coincidence identity]\label{thm:seq-mixed}
Fix a sequence of learner distributions $(q_t)_{t\ge 1}$ on $[K]$, losses $\ell_t\in\R^K$, predictable learning rates $\eta_t>0$, and positive side factors $s_t\in(0,\infty)^K$.
Consider the update
\begin{equation}\label{eq:one-step-update}
q_{t+1}(i)
= \frac{q_t(i)e^{-\eta_t\ell_t(i)}s_t(i)}{Z_t},
\qquad
Z_t := \sum_{j=1}^K q_t(j)e^{-\eta_t\ell_t(j)}s_t(j)
\end{equation}
Define the side offset and the composite loss by $u_t(i):=-\eta_t^{-1}\log s_t(i)$ and $c_t(i):=\ell_t(i)+u_t(i)$.
Then for every posterior $\rho\in\Delta([K])$,
\begin{equation}\label{eq:seq-id}
-\frac1{\eta_t}\log Z_t+\frac1{\eta_t}\KL(\rho\|q_{t+1})
=
\frac1{\eta_t}\KL(\rho\|q_t)+\ip{\rho}{c_t}
\end{equation}
If the side factor decomposes as
\begin{equation}\label{eq:side-decomp}
s_t(i)=\prod_{w=1}^{J_t} \mu_{t,w}(i)^{\alpha_{t,w}}
\end{equation}
for strictly positive measures $\mu_{t,w}$ and coefficients $\alpha_{t,w}\in\R$ (with $w=1,\dots,J_t$), then
\begin{equation}\label{eq:seq-id-crossent}
-\frac1{\eta_t}\log Z_t+\frac1{\eta_t}\KL(\rho\|q_{t+1})
=
\frac1{\eta_t}\KL(\rho\|q_t)+\ip{\rho}{\ell_t}
+ \sum_{w=1}^{J_t} \frac{\alpha_{t,w}}{\eta_t}\Hc(\rho,\mu_{t,w})
\end{equation}
\end{theorem}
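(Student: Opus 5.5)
The plan is a direct pointwise computation of $\log(\rho/q_{t+1})$ from the update \eqref{eq:one-step-update}, followed by averaging under $\rho$. The side factor is absorbed into the exponent, and the cross-entropy form is then read off from the product decomposition \eqref{eq:side-decomp}. We could instead invoke Theorem~\ref{thm:finite-mixed} with measures $\mu_1=q_t$ (exponent $1$), $\mu_2=e^{-\ell_t}$ (exponent $\eta_t$), and the side measures $\mu_{t,w}$ (exponents $\alpha_{t,w}$). The direct computation is shorter, however, and is what we carry out.

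\textbf{Step 1: reduce to a pure exponential-weights step.} By definition of $u_t$ we have $s_t(i)=e^{-\eta_t u_t(i)}$, so
\[
q_t(i)e^{-\eta_t\ell_t(i)}s_t(i)=q_t(i)e^{-\eta_t c_t(i)} .
\]
Hence \eqref{eq:one-step-update} is exactly the Gibbs update on the composite loss, with $Z_t=\sum_j q_t(j)e^{-\eta_t c_t(j)}$. Since $s_t>0$, we have $Z_t>0$ and $q_{t+1}$ has the same support as $q_t$. In particular, $\KL(\rho\|q_{t+1})<\infty$ if and only if $\KL(\rho\|q_t)<\infty$.

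\textbf{Step 2: take logs and average.} For each $i$ in the support of $\rho$, the update gives
\[
\log\frac{\rho(i)}{q_{t+1}(i)}=\log\frac{\rho(i)}{q_t(i)}+\eta_t c_t(i)+\log Z_t .
\]
Averaging under $\rho$ yields
\[
\KL(\rho\|q_{t+1})=\KL(\rho\|q_t)+\eta_t\ip{\rho}{c_t}+\log Z_t .
\]
Dividing by $\eta_t>0$ and moving $\log Z_t/\eta_t$ to the left-hand side gives \eqref{eq:seq-id}.

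\textbf{Step 3: the cross-entropy form.} Under \eqref{eq:side-decomp},
\[
u_t(i)=-\eta_t^{-1}\sum_{w}\alpha_{t,w}\log\mu_{t,w}(i),
\]
so that
\[
\ip{\rho}{u_t}=\sum_w \frac{\alpha_{t,w}}{\eta_t}\Hc(\rho,\mu_{t,w}),
\]
using the generalized cross-entropy $\Hc(\rho,\mu)=-\sum_i\rho(i)\log\mu(i)$ defined earlier. Substituting $\ip{\rho}{c_t}=\ip{\rho}{\ell_t}+\ip{\rho}{u_t}$ into \eqref{eq:seq-id} gives \eqref{eq:seq-id-crossent}.

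The only point requiring care is the infinite case, when $\rho$ is not absolutely continuous with respect to $q_t$. By Step 1, both KL terms are then $+\infty$, so the identity holds trivially in the extended reals. All remaining quantities are finite because $\ell_t$ and $u_t$ are real-valued and the $\mu_{t,w}$ are strictly positive.
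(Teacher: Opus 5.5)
Your proposal is correct and follows the paper's own proof: take logarithms in the update to express $\KL(\rho\|q_{t+1})$ through $\KL(\rho\|q_t)$, $\log Z_t$, $\eta_t\ip{\rho}{\ell_t}$ and $-\sum_i\rho(i)\log s_t(i)$, then expand $\log s_t$ via the product decomposition to obtain the cross-entropy form. Your one addition is the explicit extended-real treatment of the case $\rho\not\ll q_t$, which the paper leaves implicit.
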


The next corollary rewrites the same identity in the mean-centered form that is most convenient for composing into cumulative regret; it is the form to which both cumulative decompositions are applied. The original form \eqref{eq:seq-id} records transport of comparator information; the centered form \eqref{eq:centered-seq} records regret.

\begin{corollary}[Centered one-step mixed coincidence identity]\label{cor:centered-seq}
In the setting of Theorem~\ref{thm:seq-mixed}, let $m_t:=-\eta_t^{-1}\log Z_t$ and $\delta_t(c):=\ip{q_t}{c_t}-m_t$.
Then for every posterior $\rho\in\Delta([K])$,
\begin{equation}\label{eq:centered-seq}
\ip{q_t}{c_t}-\ip{\rho}{c_t}
=
\delta_t(c)+\frac{\KL(\rho\|q_t)-\KL(\rho\|q_{t+1})}{\eta_t}
\end{equation}
Equivalently,
$
\ip{q_t}{\ell_t}-\ip{\rho}{\ell_t}
=
\delta_t(c)+\frac{\KL(\rho\|q_t)-\KL(\rho\|q_{t+1})}{\eta_t}
+\ip{\rho}{u_t}-\ip{q_t}{u_t}
$. 
If $\eta_t\equiv\eta$, then the composite-loss regret $R_T^c(\rho)$ (the analogue of $R_T^\ell(\rho)$ with $\ell_t$ replaced by $c_t$) satisfies
\begin{align*}
R_T^c(\rho)
=
\sum_{t=1}^T \delta_t(c)+\eta^{-1}\bigl(\KL(\rho\|q_1)-\KL(\rho\|q_{T+1})\bigr)
\end{align*}
\end{corollary}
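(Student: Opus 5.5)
The plan is to derive the corollary directly from the one-step identity \eqref{eq:seq-id} of Theorem~\ref{thm:seq-mixed} by subtracting $\ip{\rho}{c_t}$ from both sides and then adding $\ip{q_t}{c_t}$ to both sides. No new analytic ingredient is needed; everything is bookkeeping.

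\textbf{First identity.} With $m_t=-\eta_t^{-1}\log Z_t$, identity \eqref{eq:seq-id} reads
\[
m_t+\eta_t^{-1}\KL(\rho\|q_{t+1})=\eta_t^{-1}\KL(\rho\|q_t)+\ip{\rho}{c_t},
\]
so
\[
-\ip{\rho}{c_t}=-m_t+\eta_t^{-1}\bigl(\KL(\rho\|q_t)-\KL(\rho\|q_{t+1})\bigr).
\]
Adding $\ip{q_t}{c_t}$ to both sides and using the definition $\delta_t(c)=\ip{q_t}{c_t}-m_t$ gives \eqref{eq:centered-seq}.

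\textbf{Finiteness of the terms.} Here I would note that all quantities are finite whenever $\KL(\rho\|q_t)<\infty$. In that case $\KL(\rho\|q_{t+1})$ is also finite, because $q_{t+1}$ has the same support as $q_t$: the factors $e^{-\eta_t\ell_t}s_t$ are strictly positive. When $\KL(\rho\|q_t)=\infty$, both sides are read with the convention that the identity is trivially $\infty=\infty$, or the statement is restricted to $\rho\ll q_1$. This is the only subtle point, and it is minor.

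\textbf{Equivalent form.} Substitute $c_t=\ell_t+u_t$ on the left-hand side of \eqref{eq:centered-seq}:
\[
\ip{q_t}{c_t}-\ip{\rho}{c_t}=\ip{q_t}{\ell_t}-\ip{\rho}{\ell_t}+\ip{q_t}{u_t}-\ip{\rho}{u_t}.
\]
Moving the $u_t$ terms to the right-hand side yields the stated form. The term $\delta_t(c)$ is unchanged, since it is defined through $c_t$.

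\textbf{Cumulative identity.} For constant $\eta_t\equiv\eta$, sum \eqref{eq:centered-seq} over $t=1,\dots,T$. The left-hand side becomes $\sum_t\ip{q_t}{c_t}-\ip{\rho}{C_T}=R_T^c(\rho)$, with the learner's weights $q_t$ playing the role of $p_t$. The KL differences telescope, because the common prefactor $1/\eta$ can be pulled out:
\[
\sum_{t=1}^T\bigl(\KL(\rho\|q_t)-\KL(\rho\|q_{t+1})\bigr)=\KL(\rho\|q_1)-\KL(\rho\|q_{T+1}).
\]
This gives the displayed formula. Its validity rests exactly on the prefactors being equal. With varying $\eta_t$ the sum would not telescope, which is why drift terms appear later in the paper.
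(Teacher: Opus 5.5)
Your proof is correct and matches the paper's argument: rearrange \eqref{eq:seq-id} using $m_t=-\eta_t^{-1}\log Z_t$ and the definition of $\delta_t(c)$, substitute $c_t=\ell_t+u_t$ for the original-loss form, and telescope the relative-entropy differences at constant $\eta$. One small correction to your finiteness aside: when $\KL(\rho\|q_t)=\infty$, the centered identity contains the undefined expression $\infty-\infty$, so it does not read as $\infty=\infty$; restricting to $\rho\ll q_1$ is the right convention (equivalently $\rho\ll q_t$, since the update preserves support).
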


Equation~\eqref{eq:centered-seq} is the one-step information balance that drives the whole paper.
It is the shared mechanism behind both adaptive updates.
Following \citep{derooij2014adahedge}, we refer to the round's immediate cost $\delta_t(c)$ as the centered mixability gap.
The relative-entropy difference is the amount of comparator information transported from $q_t$ to $q_{t+1}$.
When the learning rate is fixed, that transport telescopes exactly and one recovers the familiar fixed-rate identity below.
When the rate varies, the coefficients change and the two adaptive updates diverge: the prior-retempered update recomputes from the prior at each new temperature and tracks the resulting drift explicitly (Section~\ref{sec:adaptive}), while the local recursion compounds step-by-step and produces a cumulative normalization identity (Section~\ref{sec:pressure}).

The idea of tuning the learning rate according to the mixability gap has an important history.
AdaHedge \citep{derooij2014adahedge} and SafeBayes \citep{the2012safebayesian}, for example, both use the gap to calibrate how aggressively the learner should update.
Our approach is close in spirit but different in aim.
Where those methods design one specific schedule that controls an upper bound on the gap, we keep the decomposition exact for any predictable schedule and study how information flows under that accounting.

\begin{corollary}[Fixed-rate exact comparator identity]
\label{cor:fixed-rate}
Assume $\eta_t\equiv\eta>0$ in \eqref{eq:one-step-update}, and define the one-step mix loss $m_t:=-\eta^{-1}\log Z_t$.
Then for every posterior $\rho\in\Delta([K])$,
\begin{equation}
\label{eq:fixed-rate-exact}
\sum_{t=1}^T m_t+\eta^{-1}\KL(\rho\|q_{T+1})
=
\ip{\rho}{C_T}+\eta^{-1}\KL(\rho\|q_1)
\end{equation}
\end{corollary}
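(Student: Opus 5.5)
The plan is to telescope the one-step identity \eqref{eq:seq-id} of Theorem~\ref{thm:seq-mixed} at constant temperature. With $\eta_t\equiv\eta$ and $m_t=-\eta^{-1}\log Z_t$, \eqref{eq:seq-id} reads, for every round $t$ and every $\rho\in\Delta([K])$,
\[
m_t+\eta^{-1}\KL(\rho\|q_{t+1})=\eta^{-1}\KL(\rho\|q_t)+\ip{\rho}{c_t}.
\]
Summing over $t=1,\dots,T$ cancels the intermediate relative entropies $\eta^{-1}\KL(\rho\|q_t)$ for $2\le t\le T$, since each appears once on the right (round $t$) and once on the left (round $t-1$). What survives is $\sum_t m_t+\eta^{-1}\KL(\rho\|q_{T+1})$ on the left and $\eta^{-1}\KL(\rho\|q_1)+\sum_t\ip{\rho}{c_t}=\eta^{-1}\KL(\rho\|q_1)+\ip{\rho}{C_T}$ on the right, which is \eqref{eq:fixed-rate-exact}.

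To make this self-contained, I would recall why \eqref{eq:seq-id} holds. Take logarithms of the update \eqref{eq:one-step-update}:
\[
\log q_{t+1}(i)=\log q_t(i)-\eta c_t(i)-\log Z_t,
\]
using $e^{-\eta\ell_t(i)}s_t(i)=e^{-\eta c_t(i)}$. Then multiply by $\rho(i)$ and sum over $i$. This gives $\KL(\rho\|q_{t+1})-\KL(\rho\|q_t)=\eta\ip{\rho}{c_t}+\log Z_t$, which after division by $\eta$ is the one-step identity.

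The only real obstacle is the handling of infinite relative entropies. If $\rho$ is not absolutely continuous with respect to $q_1$, the right-hand side is $+\infty$. Every $q_t$ has the same support as $q_1$, because the multiplicative factors are strictly positive, so $\KL(\rho\|q_{T+1})=+\infty$ as well, and the identity holds in the extended sense with both sides infinite. Otherwise all terms are finite and the cancellation is legitimate. I would state this support-invariance remark explicitly before telescoping, so that subtracting equal finite quantities is justified.
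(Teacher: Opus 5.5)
Your proposal is correct and is the paper's own proof: set $\eta_t\equiv\eta$ in the one-step identity \eqref{eq:seq-id}, sum over $t=1,\dots,T$, and let the intermediate relative entropies telescope. Your support-invariance remark, which makes both sides $+\infty$ when $\rho\not\ll q_1$, is a small addition that the paper leaves implicit.
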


When the side factor has the decomposed form \eqref{eq:side-decomp}, the mismatch term admits a closed-form expression in terms of cross-entropies.

\begin{corollary}[Cross-entropy form of the side-factor mismatch]\label{cor:fixed-rate-mismatch}
Under the same assumptions as Corollary~\ref{cor:fixed-rate}, if \eqref{eq:side-decomp} holds, then the mismatch term is exactly
\begin{equation}\label{eq:fixed-rate-mismatch}
\sum_{t=1}^T \left(\ip{\rho}{u_t}-\ip{q_t}{u_t}\right)
=
\sum_{t=1}^T \sum_{w=1}^{J_t} \frac{\alpha_{t,w}}{\eta}
\left(\Hc(\rho,\mu_{t,w})-\Hc(q_t,\mu_{t,w})\right)
\end{equation}
\end{corollary}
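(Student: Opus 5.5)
The plan is a direct substitution: express the side offset $u_t$ in the decomposed form \eqref{eq:side-decomp}, evaluate the two inner products as cross-entropies, and sum over rounds.

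First I will compute $u_t$ explicitly. By the definition in Theorem~\ref{thm:seq-mixed}, with $\eta_t\equiv\eta$, we have $u_t(i)=-\eta^{-1}\log s_t(i)$. Under \eqref{eq:side-decomp}, taking logarithms of the product gives
\[
u_t(i)=-\frac{1}{\eta}\sum_{w=1}^{J_t}\alpha_{t,w}\log\mu_{t,w}(i).
\]
This is legitimate because each $\mu_{t,w}$ is strictly positive, so every logarithm is finite.

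Next, for any $q\in\Delta([K])$, I pair this with $q$. Linearity of the finite sum and the definition $\Hc(q,\mu)=-\sum_i q(i)\log\mu(i)$ give
\[
\ip{q}{u_t}=\sum_{w=1}^{J_t}\frac{\alpha_{t,w}}{\eta}\Hc(q,\mu_{t,w}).
\]
All quantities are finite sums of finite numbers, so no integrability or convention issue arises, even though $\mu_{t,w}$ need not be normalized; the generalized cross-entropy was defined for arbitrary positive measures. I apply this once with $q=\rho$ and once with $q=q_t$, subtract, and sum over $t=1,\dots,T$. That yields \eqref{eq:fixed-rate-mismatch} exactly.

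As a consistency check, I will match this against Theorem~\ref{thm:seq-mixed}. Subtracting \eqref{eq:seq-id-crossent} from \eqref{eq:seq-id} shows $\ip{\rho}{c_t}-\ip{\rho}{\ell_t}=\sum_w(\alpha_{t,w}/\eta_t)\Hc(\rho,\mu_{t,w})$ for every $\rho$, which confirms the same per-round formula. Specializing $\rho$ to $q_t$ gives the learner's term.

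There is no real obstacle. The only point needing care is sign bookkeeping: the minus sign in $u_t=-\eta^{-1}\log s_t$ cancels the minus sign in the definition of cross-entropy, so the coefficient is $+\alpha_{t,w}/\eta$. The mismatch term here is the one appearing in Corollary~\ref{cor:centered-seq}, summed under the fixed-rate assumption of Corollary~\ref{cor:fixed-rate}.
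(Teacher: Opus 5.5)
Your proposal is correct and matches the paper's proof, which likewise expands $u_t(i)=-\eta^{-1}\sum_w\alpha_{t,w}\log\mu_{t,w}(i)$ under \eqref{eq:side-decomp}, reads the pairings with $\rho$ and $q_t$ as cross-entropies, and sums over $t$. The consistency check via subtracting \eqref{eq:seq-id-crossent} from \eqref{eq:seq-id} is unnecessary, and it would also need $\KL(\rho\|q_t)<\infty$ for the subtraction to be legitimate, so rely on the direct computation.
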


\subsection{The one-step identity and the retempered chain}
\label{sec:adaptive}

For the prior-retempered update, the cumulative question is answered by recomputing the posterior from the original prior at the current temperature and keeping track of the resulting temperature-change drift.

We feed the composite losses into the standard variable-rate exponential-weights update
\begin{equation}\label{eq:retempered-hedge}
p_t(i)
= \frac{\pi(i) e^{-\eta_t C_{t-1}(i)}}{\sum_{j=1}^K \pi(j) e^{-\eta_t C_{t-1}(j)}}
\qquad
C_t(i) := \sum_{s=1}^t c_s(i)\end{equation}
where $\pi\in\Delta([K])$ is a fixed prior and the schedule $(\eta_t)$ is predictable. 
The current distribution is always recomputed from the prior at the current temperature, so the cumulative mix-loss analysis remains valid. 
This is the prior-retempered update of the framework; Section~\ref{sec:pressure} develops the second update---the local recursion $p_{t+1}(i)\propto p_t(i)e^{-\eta_t c_t(i)}$---which agrees with \eqref{eq:retempered-hedge} only at fixed temperature but produces its own parallel decomposition.
The identities below hold for \emph{every} predictable positive schedule; the second-order rule is one useful equalizer among the schedules they cover.

Define
\begin{align*}
m_t &:= -\eta_t^{-1}\log\!\left(\sum_{i=1}^K p_t(i)e^{-\eta_t c_t(i)}\right),\\
\delta_t(c) &:= \ip{p_t}{c_t}-m_t,\\
\psi_t(\lambda) &:= \log \E_{i\sim p_t} \left[ \exp\!\left(\lambda(c_t(i)-\ip{p_t}{c_t})\right) \right]
\end{align*}
Here $\psi_t$ is the cumulant generating function of the centered one-round loss $c_t(I_t)-\ip{p_t}{c_t}$, $I_t\sim p_t$.
We also use the scaled cumulant
\[
\phi_t(\eta) := \frac{\psi_t(-\eta)}{\eta}
= \eta^{-1} \log \E_{i\sim p_t} \left[ \exp\!\left(-\eta(c_t(i) - \ip{p_t}{c_t})\right) \right]
\]
At the played temperature, $\phi_t(\eta_t)=\delta_t(c)$, so $\phi_t(\eta_t)$ is the one-round mixability gap and $Q_t(c)=\eta_t^{-1}\phi_t(\eta_t)=\eta_t^{-2}\psi_t(-\eta_t)$ is its finite-rate intrinsic-time density.

Equation~\eqref{eq:retempered-hedge} is the formal version of the generic Bayes-rule template previewed in the introduction.
If $g_s(i)>0$ is any sequence of evidence factors and $u_s(i)$ any predictable side offset, then the updated posterior
\[
p_t(i)\propto \pi(i)\left(\prod_{s=1}^{t-1} g_s(i)\right)^{\eta_t}
\exp\!\left(-\eta_t\sum_{s=1}^{t-1} u_s(i)\right)
\]
is exactly of the form \eqref{eq:retempered-hedge} after setting $c_s(i):=-\log g_s(i)+u_s(i)$.
Thus the analysis below applies to any online algorithm that accumulates expertwise evidence by Bayes' rule and then chooses the strength of the update through a predictable learning rate.
Ordinary Hedge is the canonical special case $g_s(i)=e^{-\ell_s(i)}$, but the algebra itself is agnostic about whether the evidence came from losses, likelihood ratios, side information, or some other Bayesian score.
Once an easy-instance notion suggests a particular intrinsic-time functional,
the same calculus can be used both to analyze that Bayes-rule update and to design bespoke learning-rate schedules for it.

The PAC-Bayes variational reduction is most transparent in exact form. The usual change-of-measure inequality is recovered only after dropping a nonnegative relative-entropy remainder.

\begin{lemma}[Gibbs variational identity]\label{lem:dv}
Let $\pi\in\Delta([K])$ and $X\in\R^K$.
Define $q_X^\star(i) := \frac{\pi(i)e^{X(i)}}{\sum_{j=1}^K \pi(j)e^{X(j)}}$.
Then for every posterior $\rho\in\Delta([K])$ with $\rho\ll \pi$,
\begin{equation}\label{eq:dv}
\log\!\left(\sum_{i=1}^K \pi(i)e^{X(i)}\right)
=
\ip{\rho}{X}-\KL(\rho\|\pi)+\KL(\rho\|q_X^\star)\end{equation}
Equivalently, for every $\eta>0$ and every $x\in\R^K$, if $q_{\eta,x}^\star(i) := \frac{\pi(i)e^{-\eta x(i)}}{\sum_{j=1}^K \pi(j)e^{-\eta x(j)}}$,
then
\begin{equation}\label{eq:dv-eta}
-\eta^{-1}\log\!\left(\sum_{i=1}^K \pi(i)e^{-\eta x(i)}\right)
=
\ip{\rho}{x}+\eta^{-1}\KL(\rho\|\pi)-\eta^{-1}\KL(\rho\|q_{\eta,x}^\star)\end{equation}
In particular, the familiar PAC-Bayes inequality follows by dropping the last term, and equality is attained at $\rho=q_{\eta,x}^\star$.
\end{lemma}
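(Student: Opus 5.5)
The plan is to prove \eqref{eq:dv} by expanding $\KL(\rho\|q_X^\star)$ directly from the definition of the Gibbs tilt, and then to obtain \eqref{eq:dv-eta} by substitution.

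\textbf{Step 1: expand the remainder.} Write $Z:=\sum_j \pi(j)e^{X(j)}$, so that $q_X^\star(i)=\pi(i)e^{X(i)}/Z$ and $\log q_X^\star(i)=\log\pi(i)+X(i)-\log Z$. Since $\pi$ is strictly positive wherever $\rho$ puts mass (by $\rho\ll\pi$), and $q_X^\star$ has the same support as $\pi$, every logarithm below is finite on the support of $\rho$. Summing against $\rho$ with the convention $0\log 0=0$ gives
\[
\KL(\rho\|q_X^\star)=\sum_i\rho(i)\log\frac{\rho(i)}{\pi(i)}-\ip{\rho}{X}+\log Z
=\KL(\rho\|\pi)-\ip{\rho}{X}+\log Z .
\]
Rearranging yields \eqref{eq:dv}. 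This is the same telescoping used in Theorem~\ref{thm:finite-mixed} with a single measure and an exponential factor, so one could also cite that result. The direct computation is shorter, though, and it makes the support condition explicit.

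\textbf{Step 2: the temperature form.} Take $X=-\eta x$. Then $q_X^\star=q^\star_{\eta,x}$ and $\ip{\rho}{X}=-\eta\ip{\rho}{x}$. Multiplying \eqref{eq:dv} by $-\eta^{-1}<0$ gives \eqref{eq:dv-eta} term by term.

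\textbf{Step 3: consequences.} Because $\KL(\rho\|q^\star_{\eta,x})\ge 0$, dropping it from \eqref{eq:dv-eta} gives the Donsker--Varadhan / PAC-Bayes inequality
\[
-\eta^{-1}\log\sum_i\pi(i)e^{-\eta x(i)}\le \ip{\rho}{x}+\eta^{-1}\KL(\rho\|\pi).
\]
Equality holds exactly when $\rho=q^\star_{\eta,x}$, since $\KL$ vanishes only when its two arguments coincide.

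The only point requiring care is the support convention. When $\rho\ll\pi$ fails, the left side of \eqref{eq:dv} is finite while $\KL(\rho\|\pi)=\KL(\rho\|q_X^\star)=+\infty$, which produces the undefined form $\infty-\infty$; this is why the statement includes $\rho\ll\pi$ as a hypothesis. Under that hypothesis, restricting every sum to the support of $\rho$ keeps all terms finite, and no real obstacle remains.
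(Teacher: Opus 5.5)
Your proof is correct and follows the paper's argument essentially line for line: you expand $\log(q_X^\star/\pi)$ from the definition of the Gibbs tilt, average against $\rho$ to obtain \eqref{eq:dv}, then substitute $X=-\eta x$ and multiply through by $-\eta^{-1}$. Your explicit treatment of the support condition $\rho\ll\pi$, and of the equality case, fills in details the paper leaves implicit.
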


The terminal potential is quantified by specializing the same variational identity to the cumulative loss.
It is recorded separately because it closes the adaptive proof and records how much comparator information remains in the final posterior.

\begin{lemma}[Terminal potential]
\label{lem:terminal-potential}
For $t\ge 0$ and $\eta>0$, define $A_t (\eta) := -\eta^{-1} \log\!\left( \sum_{i=1}^K \pi(i) e^{-\eta C_t(i)} \right)$ and $q_{t,\eta}(i) := \pi(i)e^{-\eta C_t(i)} \big/ \sum_{j=1}^K \pi(j)e^{-\eta C_t(j)}$.
Then for every $q\in\Delta([K])$,
\begin{equation}
\label{eq:At-exact}
A_t(\eta) + \eta^{-1} \KL(q \| q_{t,\eta})
= \ip{q}{C_t} + \eta^{-1} \KL(q \| \pi)
\end{equation}
In particular,
\begin{equation}
\label{eq:At-var}
A_t(\eta) = \min_{q \in \Delta([K])}\left\{\ip{q}{C_t} + \eta^{-1} \KL (q \| \pi) \right\}
\end{equation}
and $A_t(\eta)$ is nonincreasing in $\eta$.
\end{lemma}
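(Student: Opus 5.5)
The plan is to read Lemma~\ref{lem:terminal-potential} as the special case $x=C_t$ of the Gibbs variational identity, Lemma~\ref{lem:dv}. Applying \eqref{eq:dv-eta} with $x=C_t$ gives $q^\star_{\eta,C_t}=q_{t,\eta}$, and the left-hand side of \eqref{eq:dv-eta} is exactly $A_t(\eta)$. Hence, for every $q\ll\pi$,
\[
A_t(\eta)=\ip{q}{C_t}+\eta^{-1}\KL(q\|\pi)-\eta^{-1}\KL(q\|q_{t,\eta}),
\]
which rearranges to \eqref{eq:At-exact}.

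For $q$ not absolutely continuous with respect to $\pi$, both sides of \eqref{eq:At-exact} equal $+\infty$. This holds because $q_{t,\eta}$ has the same support as $\pi$, so both relative entropies are infinite together. For this reason the identity holds for every $q\in\Delta([K])$ under the paper's convention.

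The variational formula \eqref{eq:At-var} follows at once. Since $\KL(q\|q_{t,\eta})\ge 0$, \eqref{eq:At-exact} gives $A_t(\eta)\le \ip{q}{C_t}+\eta^{-1}\KL(q\|\pi)$ for every $q$. Equality holds at $q=q_{t,\eta}$, so the minimum is attained and equals $A_t(\eta)$.

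For monotonicity, I use \eqref{eq:At-var}: $A_t$ is a pointwise minimum over $q$ of the functions $\eta\mapsto \ip{q}{C_t}+\eta^{-1}\KL(q\|\pi)$. Each of these is nonincreasing in $\eta>0$, because $\KL(q\|\pi)\ge 0$ and $\eta^{-1}$ decreases. If $\eta\le\eta'$, then every objective at $\eta'$ is at most the same objective at $\eta$, and taking minima gives $A_t(\eta')\le A_t(\eta)$.

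As a cross-check, one can differentiate: $\frac{d}{d\eta}A_t(\eta)=-\eta^{-2}\KL(q_{t,\eta}\|\pi)\le 0$. This is a standard log-partition computation, but the minimum-of-monotone-functions argument avoids it.

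I expect no real obstacle. The only points needing care are the bookkeeping of $+\infty$ values when $q\not\ll\pi$, and the observation that the minimizer $q_{t,\eta}$ has full support on $\mathrm{supp}(\pi)$, so the minimum is actually attained rather than being only an infimum.
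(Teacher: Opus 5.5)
Your proof is correct and follows essentially the paper's argument. The paper expands $\KL(q\|q_{t,\eta})$ directly instead of citing Lemma~\ref{lem:dv}, but that expansion is exactly the proof of Lemma~\ref{lem:dv} at $x=C_t$, and the variational formula and monotonicity are argued identically (nonnegativity of the residual relative entropy, then a pointwise minimum of nonincreasing functions). Your separate treatment of $q\not\ll\pi$ is a small bookkeeping point the paper leaves implicit: there both sides are $+\infty$ because $q_{t,\eta}$ and $\pi$ share support.
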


\begin{theorem}[Exact cumulant decomposition for variable-temperature Bayes updating]
\label{thm:cumulant-chain}
Let $(\eta_t)_{t=1}^T$ be any predictable positive schedule, and let $(p_t)$ be generated by \eqref{eq:retempered-hedge}.
Then for every posterior $\rho\in\Delta([K])$ with $\rho\ll\pi$,
\begin{equation}\label{eq:exact-chain}
R_T^c(\rho)
=
\sum_{t=1}^T \phi_t(\eta_t)
+
\sum_{t=1}^{T-1}\left(A_t(\eta_t)-A_t(\eta_{t+1})\right)
+ \frac{\KL(\rho \| \pi) - \KL(\rho \| q_{T,\eta_T}) }{\eta_T}
\end{equation}
where $R_T^c(\rho) := \sum_{t=1}^T \ip{p_t}{c_t} - \ip{\rho}{C_T}$.
If, moreover, $(\eta_t)$ is nonincreasing, then the temperature-change drift satisfies
\begin{align*}
\label{eq:temp-change-drift}
\sum_{t=1}^{T-1} \left( A_t (\eta_t)-A_t (\eta_{t+1}) \right) \leq 0
\end{align*}
\end{theorem}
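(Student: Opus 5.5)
The argument is a telescoping computation with the potentials $A_t(\eta)$ of Lemma~\ref{lem:terminal-potential}, followed by one application of the exact identity \eqref{eq:At-exact} at the terminal time.

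\textbf{Step 1: one round of mix loss.} Since $p_t=q_{t-1,\eta_t}$ by \eqref{eq:retempered-hedge}, we have
\[
\sum_i p_t(i)e^{-\eta_t c_t(i)}=\frac{\sum_i \pi(i)e^{-\eta_t C_t(i)}}{\sum_i \pi(i)e^{-\eta_t C_{t-1}(i)}}.
\]
Taking $-\eta_t^{-1}\log$ gives
\[
m_t=A_t(\eta_t)-A_{t-1}(\eta_t).
\]
Next, expand $\phi_t(\eta_t)=\eta_t^{-1}\log\E_{p_t}[e^{-\eta_t c_t}]+\ip{p_t}{c_t}$. This gives $\phi_t(\eta_t)=\ip{p_t}{c_t}-m_t=\delta_t(c)$, which is consistent with the remark preceding the theorem. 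Hence
\[
\ip{p_t}{c_t}=\phi_t(\eta_t)+A_t(\eta_t)-A_{t-1}(\eta_t).
\]

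\textbf{Step 2: telescope with drift.} Summing over $t=1,\dots,T$ gives
\[
\sum_t\ip{p_t}{c_t}=\sum_t\phi_t(\eta_t)+\sum_{t=1}^T\bigl(A_t(\eta_t)-A_{t-1}(\eta_t)\bigr).
\]
Since $C_0=0$, we have $A_0(\eta_1)=0$. Re-indexing the subtracted terms as $A_{t}(\eta_{t+1})$ for $t=0,\dots,T-1$ turns the second sum into
\[
A_T(\eta_T)+\sum_{t=1}^{T-1}\bigl(A_t(\eta_t)-A_t(\eta_{t+1})\bigr).
\]
This is the only place where care with indices is needed. The drift term arises because $A_{t-1}$ is evaluated at $\eta_t$ rather than at $\eta_{t-1}$.

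\textbf{Step 3: terminal term.} Apply \eqref{eq:At-exact} with $t=T$, $\eta=\eta_T$ and $q=\rho$. This requires $\rho\ll\pi$, and then $\rho\ll q_{T,\eta_T}$ because $q_{T,\eta_T}$ has the same support as $\pi$. It gives
\[
A_T(\eta_T)=\ip{\rho}{C_T}+\eta_T^{-1}\bigl(\KL(\rho\|\pi)-\KL(\rho\|q_{T,\eta_T})\bigr).
\]
Subtracting $\ip{\rho}{C_T}$ from both sides of the summed identity then yields \eqref{eq:exact-chain}.

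\textbf{Step 4: sign of the drift.} Suppose $(\eta_t)$ is nonincreasing, so $\eta_{t+1}\le\eta_t$. Lemma~\ref{lem:terminal-potential} states that $A_t$ is nonincreasing in $\eta$, so $A_t(\eta_t)\le A_t(\eta_{t+1})$. Each term of the drift sum is therefore $\le0$.

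This monotonicity can also be seen directly from \eqref{eq:At-var}. $A_t(\eta)$ is a minimum of functions $\ip{q}{C_t}+\eta^{-1}\KL(q\|\pi)$. Each of these is nonincreasing in $\eta$, because $\KL\ge0$, so their minimum is as well.

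The main obstacle is just the index bookkeeping in Step 2. Everything else follows directly from Lemma~\ref{lem:terminal-potential}.
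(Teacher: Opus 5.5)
Your proposal is correct and is essentially the paper's proof: the same split of $R_T^c(\rho)$ into the gap sum $\sum_t\phi_t(\eta_t)=\sum_t(\ip{p_t}{c_t}-m_t)$ plus the mix-loss sum, the same identity $m_t=A_t(\eta_t)-A_{t-1}(\eta_t)$ regrouped into $A_T(\eta_T)$ plus the drift, \eqref{eq:At-exact} applied at $(T,\eta_T,\rho)$, and the monotonicity of $A_t$ from Lemma~\ref{lem:terminal-potential} for the drift's sign. You also state explicitly that $A_0\equiv 0$ because $C_0=0$; the paper's regrouping uses this step without stating it.
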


For fixed temperature, the adaptive chain collapses to a single terminal relative-entropy remainder.

\begin{corollary}[Fixed-rate exact PAC-Bayes identity]
\label{cor:fixed-cumulant}
If $\eta_t \equiv \eta > 0$, then for every posterior $\rho \in \Delta([K])$ with $\rho\ll\pi$,
\begin{equation}
\label{eq:fixed-cumulant}
R_T^c(\rho) =
\sum_{t=1}^T \frac{\psi_t(-\eta)}{\eta}
+ \frac{\KL(\rho\|\pi) - \KL(\rho \| p_{T+1})}{\eta}
\end{equation}
where $p_{T+1}(i) := \pi(i)e^{-\eta C_T(i)} \big/\sum_{j=1}^K \pi(j)e^{-\eta C_T(j)}$.
\end{corollary}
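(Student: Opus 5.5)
The plan is to specialize Theorem~\ref{thm:cumulant-chain} to the constant schedule $\eta_t\equiv\eta$ and read off each of its three terms. Two things need checking: that the drift term disappears, and that the first and last terms take the form stated in \eqref{eq:fixed-cumulant}.

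\textbf{The drift vanishes.} With $\eta_t=\eta_{t+1}=\eta$, each summand $A_t(\eta_t)-A_t(\eta_{t+1})=A_t(\eta)-A_t(\eta)$ is zero. Hence the whole sum $\sum_{t=1}^{T-1}$ vanishes. This is exact; the monotonicity clause of the theorem is not needed.

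\textbf{The intrinsic-time term.} By definition $\phi_t(\eta)=\psi_t(-\eta)/\eta$, so the first sum in \eqref{eq:exact-chain} is already $\sum_{t=1}^T \psi_t(-\eta)/\eta$. Here $\psi_t$ is computed from the played $p_t$. At constant temperature this $p_t$ is exactly $q_{t-1,\eta}$ from Lemma~\ref{lem:terminal-potential}, so nothing further needs identifying.

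\textbf{The terminal term.} Theorem~\ref{thm:cumulant-chain} gives $(\KL(\rho\|\pi)-\KL(\rho\|q_{T,\eta_T}))/\eta_T$. With $\eta_T=\eta$, the distribution $q_{T,\eta}$ is by definition $\pi(i)e^{-\eta C_T(i)}$ normalized. This is precisely the $p_{T+1}$ defined in the corollary, i.e.\ the retempered update \eqref{eq:retempered-hedge} evaluated one step ahead at the same temperature. Substituting gives \eqref{eq:fixed-cumulant}.

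\textbf{Independent check.} As a consistency check, the identity also follows without the chain. Sum the centered one-step balance of Corollary~\ref{cor:centered-seq}, in which the $\KL$ terms telescope at fixed $\eta$. Then note that at fixed temperature the local recursion and the prior-retempered update coincide, with $q_1=\pi$ and $q_{t+1}=p_{t+1}$. Finally, use $\delta_t(c)=\phi_t(\eta)$.

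There is no real obstacle in either route. The only point needing care is the identification $q_{T,\eta}=p_{T+1}$ and, for the second route, the identification of the two recursions at constant $\eta$. Both follow by a one-line induction from the multiplicative form $\pi(i)e^{-\eta C_{t-1}(i)}e^{-\eta c_t(i)}=\pi(i)e^{-\eta C_t(i)}$.
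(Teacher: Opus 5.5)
Your proposal is correct and matches the paper's proof: specialize Theorem~\ref{thm:cumulant-chain} to $\eta_t\equiv\eta$, observe that each drift summand $A_t(\eta)-A_t(\eta)$ vanishes, and identify $q_{T,\eta}$ with $p_{T+1}$ by definition. Your second route, which telescopes the one-step balance of Corollary~\ref{cor:centered-seq}, is also valid, because at fixed temperature the local and prior-retempered recursions coincide (the paper uses the same observation in proving Proposition~\ref{prop:pressure-fixed-var}).
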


The chain becomes especially transparent once the cumulant itself is treated as the intrinsic quadratic increment.

\begin{theorem}[Exact PAC-Bayes second-order identity]
\label{thm:pacbayes-second-order}
Let $(\eta_t)_{t=1}^T$ be any predictable positive schedule, and let $(p_t)$ be generated by \eqref{eq:retempered-hedge}.
Define $Q_t(c) := \phi_t(\eta_t)/\eta_t$ and $V_T(c) := \sum_{t=1}^T Q_t(c)$,
and set $D_T := \sum_{t=1}^{T-1}\left(A_t(\eta_t)-A_t(\eta_{t+1})\right)$ and $B_T (\rho) := \left(\KL(\rho\|\pi)-\KL (\rho \| q_{T,\eta_T}) \right) / \eta_T$.
Then $Q_t(c)\ge 0$ for every $t$, and for every posterior $\rho\in\Delta([K])$ with $\rho\ll\pi$ (automatic when $\pi$ is strictly positive),
\begin{equation}\label{eq:main-pacbayes}
R_T^c(\rho)=D_T+B_T(\rho)+\sum_{t=1}^T \eta_t Q_t(c)
\end{equation}
If $\eta_t \equiv \eta$, then $D_T=0$ and
\begin{equation}
\label{eq:main-pacbayes-reg}
R_T^c(\rho) = B_T(\rho) + \eta V_T(c)
\end{equation}
\end{theorem}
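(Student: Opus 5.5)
The theorem is essentially a rewriting of Theorem~\ref{thm:cumulant-chain} plus a sign statement, so the plan is to take the chain identity \eqref{eq:exact-chain} as given and substitute. By definition $Q_t(c)=\phi_t(\eta_t)/\eta_t$, so $\eta_t Q_t(c)=\phi_t(\eta_t)$ exactly. The second and third sums in \eqref{eq:exact-chain} are by definition $D_T$ and $B_T(\rho)$. Substituting these three terms gives \eqref{eq:main-pacbayes} at once, for every predictable positive schedule and every $\rho\ll\pi$. When $\pi$ is strictly positive, absolute continuity is automatic, so the parenthetical remark needs no extra work.

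The one substantive claim is $Q_t(c)\ge 0$, equivalently $\psi_t(-\eta_t)\ge 0$, since $Q_t(c)=\eta_t^{-2}\psi_t(-\eta_t)$ and $\eta_t>0$.
\begin{itemize}
\item Apply Jensen's inequality to the convex function $\exp$ under $i\sim p_t$:
\[
\E_{i\sim p_t}\bigl[e^{-\eta_t(c_t(i)-\ip{p_t}{c_t})}\bigr]\ \ge\ \exp\bigl(-\eta_t\,\E_{i\sim p_t}[c_t(i)-\ip{p_t}{c_t}]\bigr)=e^0=1.
\]
\item Taking logarithms gives $\psi_t(-\eta_t)\ge 0$, hence $Q_t(c)\ge0$.
\item Equivalently, $\delta_t(c)=\ip{p_t}{c_t}-m_t\ge 0$, because the mix loss never exceeds the mean loss.
\end{itemize}
This uses that $p_t$ is a genuine probability vector. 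That holds because \eqref{eq:retempered-hedge} normalizes, and the normalization is finite since $K$ is finite and all $c_t$ are real.

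For the fixed-rate specialization with $\eta_t\equiv\eta$:
\begin{itemize}
\item Each drift summand $A_t(\eta_t)-A_t(\eta_{t+1})=A_t(\eta)-A_t(\eta)$ vanishes, so $D_T=0$.
\item $\sum_t \eta_t Q_t(c)=\eta\sum_t Q_t(c)=\eta V_T(c)$.
\end{itemize}
Together these give \eqref{eq:main-pacbayes-reg}. As a consistency check, $q_{T,\eta}$ coincides with $p_{T+1}$ of Corollary~\ref{cor:fixed-cumulant}, so $B_T(\rho)$ matches that corollary's terminal term.

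No step is a real obstacle once Theorem~\ref{thm:cumulant-chain} is granted. The only points needing care are the centering identity $\phi_t(\eta_t)=\delta_t(c)$, which follows by factoring $e^{\eta_t\ip{p_t}{c_t}}$ out of $Z_t$, and keeping the index conventions of $D_T$ (sum to $T-1$) aligned with \eqref{eq:exact-chain}.
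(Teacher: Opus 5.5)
Your proposal is correct and follows essentially the same route as the paper: substitute $\phi_t(\eta_t)=\eta_t Q_t(c)$ into the chain identity of Theorem~\ref{thm:cumulant-chain}, obtain $Q_t(c)\ge 0$ from Jensen's inequality (the paper phrases it as $m_t\le\ip{p_t}{c_t}$, which is equivalent to your $\psi_t(-\eta_t)\ge 0$), and note that $D_T=0$ at a fixed rate. Nothing further is needed.
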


The hypothesis $\rho\ll\pi$ is the right one and not merely a convenient one.
Both relative-entropy terms in $B_T(\rho)$ can diverge separately, so what the identity needs is that their difference be defined; the condition for that is $\rho\ll q_{T,\eta_T}$, and under the softmax update the terminal posterior and the prior share support, so the two conditions coincide.
A comparator outside it makes $B_T(\rho)$ infinite against a finite regret, which is exactly the additive ambiguity the hypothesis excludes.

\paragraph{Three structural pieces of the chain.}
Equation \eqref{eq:main-pacbayes} separates adaptive regret into three orthogonal objects for the retempered update; Section~\ref{sec:pressure} will show that the local update admits a parallel three-piece structure with different cumulative bookkeeping. 
First, the intrinsic time $V_T(c) = \sum_{t \le T} Q_t(c)$ is computed under the learner's own current distribution $p_t$, so it is the algorithm-defined variance scale of the realized trajectory. This per-round cost $Q_t(c)$ is computed by the same functional in both updates---it is the common per-round information functional in the one-step information balance, evaluated at whatever $p_t$ each recursion actually plays.
Second, the terminal posterior-mismatch term $B_T(\rho)$ is the unique place where structured comparator classes enter; if $\pi$ is read as a decoder, this term is also the remaining description cost of the comparator after the realized sequence has been processed. 
Quantile and other uniform quantifiers are obtained precisely by choosing $\rho$ to spread mass over a restricted set and thereby lowering the posterior complexity.
Third, the temperature-change drift $D_T$ is the exact bookkeeping term for arbitrary learning-rate changes.
Nothing is approximated there: for a nonmonotone schedule the drift is part of the identity, whereas for a nonincreasing schedule it becomes nonpositive and may be discarded.

\paragraph{An optional game-theoretic reading.}
The same three objects also admit a complementary game-theoretic interpretation, which the rest of the paper does not require. A Bayes/exponential-weights round can be viewed as a one-step game in which the learner commits to a distribution $p_t$ and a temperature $\eta_t$. In that game, nature reveals a centered score vector with a constrained one-scale cumulant generating function, and the terminal payoff is the support function of a relative-entropy ball around the prior. Under that reading, the per-round cumulant increment is the intrinsic-time increment $Q_t(c)$, the log-partition value function is the terminal potential $A_t(\eta)$, and the retempering drift $D_T$ is the cost of switching between fixed-scale games as the temperature changes. The present development can be read purely as the pathwise information-accounting calculus; the game formulation is an optional dual viewpoint, developed in \cite{balsubramani2026concentration}. There the one-round statement is made precise: by Donsker--Varadhan duality, the centered log-partition value $\eta_t^2 Q_t(c)$ is the maximum of a relative-entropy-regularized linear payoff over the simplex, attained uniquely at the one-step Gibbs update. The minimax theorem upgrades the maximum to a saddle point at which the Gibbs weights are the unique Bellman equalizer of a zero-sum game between learner and nature. Chaining the per-round games reproduces the three-part decomposition as the repeated game's value identity, so the intrinsic-time increment $Q_t(c)$ is the per-round loss term in both the primal information-accounting and the dual game framings.

\paragraph{Reading the scaled cumulant and the intrinsic time.}
Write $X_t := c_t(I_t) - \ip{p_t}{c_t}$ for $I_t\sim p_t$, and for a hypothetical temperature $\eta>0$ write $m_t(\eta) := -\eta^{-1}\log \sum_i p_t(i)e^{-\eta c_t(i)}$.
Then $\psi_t(\lambda)=\log \E[e^{\lambda X_t}]$ is the ordinary log-mgf of the centered one-round loss,
while $\phi_t(\eta)=\eta^{-1}\log \E[e^{-\eta X_t}]=\ip{p_t}{c_t}-m_t(\eta)$ is the mean-to-mix-loss gap that would be paid at temperature $\eta$ if the current distribution were frozen. 
Thus $Q_t(c)$ is the finite-rate curvature cost on round $t$.

The connection with the familiar quadratic variation appears only as a small-temperature interpretation.
Since $\psi_t(0)=\psi_t'(0)=0$ and $\psi_t''(0)=\Var_{i\sim p_t}(c_t(i))$, we have $\phi_t(\eta)=\frac{\eta}{2}\Var_{i\sim p_t}(c_t(i))+o(\eta)$ and therefore $Q_t(c)=\frac{1}{2} \Var_{i\sim p_t}(c_t(i))+o(1)$ as $\eta\downarrow0$.
The analysis below keeps $Q_t(c)=\eta_t^{-2}\psi_t(-\eta_t)$ itself throughout, so the variance picture is only a limit intuition and not a step in the proof.

\begin{proposition}[Tilted-variance representation of the intrinsic time \cite{balsubramani2026concentration}]
\label{prop:tilted-var}
Fix a round $t$ and a temperature $\eta>0$.
Let $\mu_t := \ip{p_t}{c_t}$ and, for $s\in[0,1]$, define the exponentially tilted distribution
\[
p_{t,s}^{(\eta)}(i)
 := \frac{p_t(i)\exp\!\left(-s\eta(c_t(i)-\mu_t)\right)}
{\sum_{j=1}^K p_t(j)\exp\!\left(-s\eta(c_t(j)-\mu_t)\right)}\]
Then
\[
\phi_t(\eta)=\eta\int_0^1 (1-s)\Var_{i\sim p_{t,s}^{(\eta)}}(c_t(i))\,ds,
\qquad
Q_t(c)=\int_0^1 (1-s)\Var_{i\sim p_{t,s}^{(\eta_t)}}(c_t(i))\,ds\]
In particular, $Q_t(c)\ge 0$, and $Q_t(c)=0$ if and only if $c_t(i)$ is $p_t$-almost surely constant.
\end{proposition}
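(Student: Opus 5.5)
The plan is to apply Taylor's theorem with integral remainder to the centered log-mgf $\psi_t$ along the segment from $0$ to $-\eta$. Set $g(s):=\psi_t(-s\eta)$ for $s\in[0,1]$. Because $K$ is finite, $g$ is a finite log-sum-exp of affine functions of $s$, so it is smooth on a neighborhood of $[0,1]$. Taylor's formula with integral remainder then gives
\[
g(1)=g(0)+g'(0)+\int_0^1(1-s)\,g''(s)\,ds.
\]
We have $g(0)=\psi_t(0)=0$. Also $g'(0)=-\eta\,\psi_t'(0)=-\eta\,\E_{p_t}[c_t(i)-\mu_t]=0$, since the loss is centered at $\mu_t=\ip{p_t}{c_t}$. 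So $\psi_t(-\eta)=\int_0^1(1-s)g''(s)\,ds$.

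The next step identifies $g''(s)$. I would use the standard fact that the second derivative of a log-partition function is the variance under the tilted law. Write $g(s)=\log\sum_j p_t(j)e^{-s\eta(c_t(j)-\mu_t)}$. Then
\[
g'(s)=-\eta\,\E_{p^{(\eta)}_{t,s}}[c_t-\mu_t],
\qquad
g''(s)=\eta^2\Var_{p^{(\eta)}_{t,s}}(c_t-\mu_t)=\eta^2\Var_{p^{(\eta)}_{t,s}}(c_t),
\]
because variance is invariant under the constant shift $\mu_t$. Substituting gives $\psi_t(-\eta)=\eta^2\int_0^1(1-s)\Var_{p^{(\eta)}_{t,s}}(c_t)\,ds$. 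Dividing by $\eta$ yields the formula for $\phi_t(\eta)=\psi_t(-\eta)/\eta$. Specializing to $\eta=\eta_t$ and dividing once more by $\eta_t$ yields the formula for $Q_t(c)=\eta_t^{-2}\psi_t(-\eta_t)$.

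Nonnegativity is immediate, since the integrand is $(1-s)\times$ a variance, and both factors are $\ge 0$.

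For the characterization of $Q_t(c)=0$, I would argue through supports. Each tilted law $p^{(\eta_t)}_{t,s}$ has the same support as $p_t$, because the tilting factor is strictly positive. Hence $\Var_{p^{(\eta_t)}_{t,s}}(c_t)=0$ for some (equivalently every) $s$ if and only if $c_t$ is constant on $\mathrm{supp}(p_t)$, that is, $p_t$-almost surely constant.

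For the forward direction: if $c_t$ is not $p_t$-a.s.\ constant, the integrand is strictly positive on $[0,1)$. Together with continuity in $s$, this makes the integral strictly positive. For the converse: if $c_t$ is $p_t$-a.s.\ constant, then $\psi_t\equiv 0$, so $Q_t(c)=0$.

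The only mildly delicate point is justifying differentiation under the sum and continuity of the variance in $s$. In the finite setting this is trivial. I expect no real obstacle beyond that.
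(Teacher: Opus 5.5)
Your proof is correct: Taylor's formula with integral remainder applied to $s\mapsto\psi_t(-s\eta)$, with $g(0)=g'(0)=0$ and $g''(s)=\eta^2\Var_{p_{t,s}^{(\eta)}}(c_t)$, followed by the common-support argument for the equality case, establishes every claim. The paper defers this proof to \cite{balsubramani2026concentration}, but its proof of Proposition~\ref{prop:surrogate-tightness} uses exactly your identification $\Var_{p_{t,s}^{(\eta_t)}}(c_t)=\psi_t''(-s\eta_t)$, so your route is essentially the intended one.
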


\paragraph{Intrinsic time as revealed structure.}
Proposition~\ref{prop:tilted-var} explains why both updates literally track the intrinsic time of the realized path.
For each round, $Q_t(c)$ is an exact average of tilted variances generated by the actual Bayes-rule update; no external variance proxy has to be guessed. 
In that sense, the algorithm can be considered an unsupervised revealer of structure: once the update rule is fixed, the realized loss sequence itself determines the exact intrinsic-time path $V_T(c)=\sum_{t\le T}Q_t(c)$. 
This viewpoint also clarifies why changing the proposal distribution can matter so much in adaptive algorithms: it changes the variance geometry being measured. Improving a loose upper bound would leave that geometry untouched.

\section{Schedules and intrinsic time}\label{sec:schedules}

The cumulant chain becomes concrete pathwise control once the schedule is fixed.
Algorithm~\ref{alg:tempo-family} lays out the full $2\times2$ controller--update design space: one axis is how evidence is composed across rounds, from the cumulative scores or from the current score alone, and the other is how the learning rate is set.
With all four cells in view the analysis is modular. The two off-diagonal cells, in which each update is driven by the controller matched to the other update (Section~\ref{sec:offdiag-combinations}), separate the simplifications due to the update geometry from those due to the rate controller.
The two main pairings are the ones whose cumulative identities collapse most cleanly.
The first is the monotone square-root schedule for the prior-retempered update (Section~\ref{sec:second-order}), which interfaces directly with Theorem~\ref{thm:cumulant-chain} and yields pathwise, high-probability, and anytime second-order envelopes.
The second is the pressure-target line search for the local update (Section~\ref{sec:pressure}), whose reward is finer roundwise calibration in place of monotone drift control, and whose native cumulative identity is an exact terminal-mass partition.
Among the four cells, \textsc{Ret-Sqrt} driven by the exact clock is the recommended default (\S\ref{sec:intro-which}); \textsc{Loc-Press} is the specialist for round-calibrated and nonstationary play.

\begin{algorithm}[t]
\caption{The $2\times2$ adaptive Bayes design space. Rows choose the update geometry; columns choose the learning-rate controller.}
\label{alg:tempo-family}
\small
\begin{tabularx}{\linewidth}{@{}p{0.18\linewidth}XX@{}}
\toprule
& \textbf{Square-root clock} & \textbf{Pressure target} \\
\midrule
\textbf{Prior-retempered} &
\textsc{Ret-Sqrt}: play $p_t=q_{t-1,\eta_t}$ with
$\eta_t=1$ if $U_{t-1}=0$ and
$\eta_t=\min\{1,C\sqrt{\Gamma/U_{t-1}}\}$ otherwise, where $U_{t-1}\in\{V_{t-1},W_{t-1}\}$.
&
\textsc{Ret-Press}: play $p_t=q_{t-1,\eta_t}$; after observing $c_t$, choose the next temperature $\eta_{t+1}$ from the cumulative pressure equation
$A_t(\eta_{t+1})-A_{t-1}(\eta_{t+1})=a_t$, equivalently
$\sum_i q_{t-1,\eta_{t+1}}(i)e^{-\eta_{t+1}(c_t(i)-a_t)}=1$.
\\[2ex]
\textbf{Local} &
\textsc{Loc-Sqrt}: play the current recursive weights $p_t$; choose
$\eta_t=1$ if $U_{t-1}^{\rm loc}=0$ and
$\eta_t=\min\{1,C\sqrt{\Gamma/U_{t-1}^{\rm loc}}\}$ otherwise; update
$p_{t+1}(i)\propto p_t(i)e^{-\eta_t c_t(i)}$.
&
\textsc{Loc-Press}: play the current recursive weights $p_t$; after observing $c_t$, choose $\eta_t$ from
$\sum_i p_t(i)e^{-\eta_t(c_t(i)-a_t)}=1$; update
$p_{t+1}(i)\propto p_t(i)e^{-\eta_t c_t(i)}$.
\\
\bottomrule
\end{tabularx}
\[
q_{t,\eta}(i):=\frac{\pi(i)e^{-\eta C_t(i)}}{\sum_j\pi(j)e^{-\eta C_t(j)}}
\qquad \qquad
A_t(\eta):=-\eta^{-1}\log\sum_i\pi(i)e^{-\eta C_t(i)}
\]
\end{algorithm}

Here $\Gamma>0$ is a comparator-complexity budget and, unless a different constant is stated, the square-root controller uses the optimized value $C=1/\sqrt2$ from Theorem~\ref{thm:scaling-time}.
The exact intrinsic increment $Q_t(c)$ is defined in Section~\ref{sec:adaptive}; $V_t(c)$ is the exact cumulant clock, while $W_t(c)$ denotes a quadratic relaxation used only as an optional controller or concentration scale.
For the pressure cells, the target $a_t$ is a one-round free-energy level.
In the local row the line search is causal because $p_t$ is already fixed before $c_t$ is observed.
In the prior-retempered row the same line search naturally sets the next temperature $\eta_{t+1}$, since the current action distribution itself depends on the temperature.

The same lens organizes the Table~\ref{tab:tradeoffs} rows on ``parameter-free vs.\ adaptive'' and ``horizon'': both schedules replace the horizon $T$ by a quantity revealed by the realized path, but the retempered clock is a cumulative budget, while the pressure target is a one-round calibration.

\subsection{The two off-diagonal controller--update pairings}
\label{sec:offdiag-combinations}

The two axes in Algorithm~\ref{alg:tempo-family} are independent.
The square-root controller asks for a predictable global clock, while the pressure controller asks for a one-round free-energy target.
The prior-retempered update composes evidence by recomputing a Gibbs posterior from the original prior at the current temperature, while the local update composes evidence by multiplying the current weights by the current likelihood factor.
The main pairings, \textsc{Ret-Sqrt} and \textsc{Loc-Press} (analyzed in Sections~\ref{sec:second-order} and~\ref{sec:pressure}), are clean because the controller and the cumulative identity are matched.
The off-diagonal pairings drive the retempered update by the pressure target and the local update by the square-root clock. Both are legitimate algorithms, and both are useful diagnostics, because they show which part of the analysis is due to the controller and which part is due to the update geometry.

\paragraph{Pressure-targeted cumulative retempering.}
For the cumulative, prior-retempered geometry recall $q_{t,\eta}$ and $A_t(\eta)$ from Algorithm~\ref{alg:tempo-family}. After round $t$ has revealed $c_t$, the cumulative analogue of the local pressure curve (defined in Section~\ref{sec:pressure}) is
\[
m_t^{\rm ret}(\eta)
:=-\eta^{-1}\log\sum_i q_{t-1,\eta}(i)e^{-\eta c_t(i)}
=A_t(\eta)-A_{t-1}(\eta).
\]
A pressure-retempered schedule chooses the next temperature by solving
\begin{equation}
\label{eq:ret-press-target}
m_t^{\rm ret}(\eta_{t+1})=a_t,
\qquad\text{equivalently}\qquad
\sum_i q_{t-1,\eta_{t+1}}(i)e^{-\eta_{t+1}(c_t(i)-a_t)}=1,
\end{equation}
and then sets the next posterior to $p_{t+1}=q_{t,\eta_{t+1}}$. This is the precise cumulative-update version of the pressure-target rule. The one-step delay is important: using \eqref{eq:ret-press-target} to choose the already-played $p_t$ would require knowing $c_t$ before acting, while choosing $\eta_{t+1}$ after observing $c_t$ is causal.

Unlike the local pressure curve, $m_t^{\rm ret}(\eta)$ is not a fixed-distribution mix-loss curve, because the distribution $q_{t-1,\eta}$ also changes with $\eta$, so existence and uniqueness of the root are not automatic. They are settled exactly by the monotonicity of $m_t^{\rm ret}$, which an information identity makes explicit.

\begin{proposition}[Well-posedness of the pressure-retempered temperature]
\label{prop:ret-press-wellposed}
Let $\Delta_t(\eta):=\KL(q_{t,\eta}\|\pi)-\KL(q_{t-1,\eta}\|\pi)$ measure how much further absorbing $c_t$ moves the Gibbs posterior from the prior. The free energy satisfies $A_t'(\eta)=-\eta^{-2}\KL(q_{t,\eta}\|\pi)\le0$, so the retempered curve obeys the identity
\begin{equation}\label{eq:ret-press-deriv}
\frac{d}{d\eta}\,m_t^{\rm ret}(\eta)=-\frac{\Delta_t(\eta)}{\eta^{2}}
\end{equation}
with endpoints $m_t^{\rm ret}(0^+)=\ip{\pi}{c_t}$ and $m_t^{\rm ret}(\infty)=\min_i C_t(i)-\min_j C_{t-1}(j)$. Consequently $m_t^{\rm ret}$ is monotone exactly when $\Delta_t$ keeps one sign. The equation $m_t^{\rm ret}(\eta)=a_t$ has a positive root whenever $a_t$ lies strictly between the least and greatest values of $m_t^{\rm ret}$ on $(0,\infty)$; this range always contains the two endpoint values and coincides with their interval precisely when $m_t^{\rm ret}$ is monotone. The root is then unique. A sign change of $\Delta_t$ instead makes $m_t^{\rm ret}$ non-monotone, so its range overshoots one endpoint and several positive roots may occur.
\end{proposition}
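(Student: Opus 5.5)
The plan is to differentiate the free energy $A_t(\eta)$, read off the derivative of $m_t^{\rm ret}$, compute the two limits, and then finish with an intermediate-value and monotonicity argument.

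\emph{Derivative of $A_t$.} Write $A_t(\eta)=-\eta^{-1}\log Z_t(\eta)$ with $Z_t(\eta)=\sum_i\pi(i)e^{-\eta C_t(i)}$. Differentiating gives
\[
A_t'(\eta)=\eta^{-2}\log Z_t(\eta)+\eta^{-1}\ip{q_{t,\eta}}{C_t}.
\]
Next, specialize Lemma~\ref{lem:terminal-potential} to $q=q_{t,\eta}$, where the left-hand relative entropy vanishes. This gives $A_t(\eta)=\ip{q_{t,\eta}}{C_t}+\eta^{-1}\KL(q_{t,\eta}\|\pi)$, that is, $\eta^{-1}\log Z_t=-\ip{q_{t,\eta}}{C_t}-\eta^{-1}\KL(q_{t,\eta}\|\pi)$. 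Substituting, the cross terms cancel and we obtain $A_t'(\eta)=-\eta^{-2}\KL(q_{t,\eta}\|\pi)\le 0$.

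\emph{The identity \eqref{eq:ret-press-deriv}.} First check that $m_t^{\rm ret}=A_t-A_{t-1}$. Indeed, $\sum_i q_{t-1,\eta}(i)e^{-\eta c_t(i)}=Z_t(\eta)/Z_{t-1}(\eta)$. Differentiating and using the previous step yields $-\Delta_t(\eta)/\eta^2$ directly.

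\emph{Endpoints.}
\begin{itemize}[nosep]
\item As $\eta\downarrow0$, $q_{t-1,\eta}\to\pi$ and $-\eta^{-1}\log\ip{\pi}{e^{-\eta c_t}}\to\ip{\pi}{c_t}$ by a first-order expansion of the log-mgf. This is the same small-temperature expansion used for $\phi_t$.
\item As $\eta\to\infty$, the Laplace principle gives $A_t(\eta)\to\min_i C_t(i)$, because $\pi(i^\star)e^{-\eta\min C_t}\le Z_t\le e^{-\eta\min C_t}$ and $\eta^{-1}\log\pi(i^\star)\to0$. Applying this at times $t$ and $t-1$ gives the stated limit. This presumes $\pi$ has full support on the minimizers; with general $\pi$, the minima are taken over $\operatorname{supp}\pi$, which I would note.
\end{itemize}

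\emph{Range and roots.} Monotonicity when $\Delta_t$ keeps one sign follows from \eqref{eq:ret-press-deriv}. Since $\eta\mapsto q_{t,\eta}$ is smooth, $m_t^{\rm ret}$ is continuous on $(0,\infty)$, and it has the two finite limits above. Its image is therefore an interval whose closure contains both endpoint values.
\begin{itemize}[nosep]
\item By the intermediate value theorem, every $a_t$ strictly between the infimum and the supremum is attained.
\item Under strict monotonicity (the derivative is nonzero except on isolated points), the range is exactly the open interval between the endpoints and the root is unique.
\item When $\Delta_t$ changes sign, $m_t^{\rm ret}$ has an interior extremum, so the range exceeds one endpoint. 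Levels just inside that overshoot are then attained at least twice.
\end{itemize}

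\emph{Main obstacle.} The delicate point is the uniqueness claim. If $\Delta_t$ keeps one sign but vanishes on an interval, $m_t^{\rm ret}$ is constant there. I would rule this out using analyticity in $\eta$: $\Delta_t$ is real-analytic, so it either vanishes identically or has isolated zeros. If it vanishes identically, $m_t^{\rm ret}$ is constant, the open interval between the endpoints is empty, and the claim is vacuous. Otherwise the isolated zeros leave $m_t^{\rm ret}$ strictly monotone, which gives a unique root.
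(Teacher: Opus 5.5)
Up to your final bullet this is the paper's proof. You compute $A_t'$ the same way, except that you get the $\log Z_t$ term from Lemma~\ref{lem:terminal-potential} at $q=q_{t,\eta}$, where the paper expands $\KL(q_{t,\eta}\|\pi)$ directly. The differencing, the endpoint limits and the intermediate-value step are also the paper's. Your real-analyticity argument is a useful refinement: the paper's ``single-signed, hence strictly monotone'' skips the case where $\Delta_t$ keeps one sign but has zeros. Your support caveat at $\eta\to\infty$ is also correct.

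\textbf{The gap.} The step that fails is ``$\Delta_t$ changes sign, so $m_t^{\rm ret}$ has an interior extremum, so its range exceeds one endpoint.'' A local extremum need not lie beyond the boundary limits.

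\emph{When it holds.} If $\Delta_t$ changes sign exactly once, $m_t^{\rm ret}$ is unimodal, and its single interior extremum strictly passes both limits.

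\emph{When it fails.} With two sign changes the inference breaks down, and a concrete instance exists because $C_{t-1}$ and $C_t$ are unconstrained. Take $\pi$ uniform on $K=1000$ experts, with $h=100\delta$ and $H=100h$. Let $C_{t-1}$ be $0$ on $99$ experts and $h$ on the rest. Let $C_t$ be $0$ on one expert, $\delta$ on $99$, and $H$ on $900$.
\begin{itemize}
\item $\KL(q_{t,\eta}\|\pi)$ sits on its plateau $\log 10$ while $\KL(q_{t-1,\eta}\|\pi)$ climbs to $\log(1000/99)$. Only near $\eta\delta\approx1$ does the former start its climb to $\log 1000$.
\item So $\Delta_t$ is positive, then negative but no smaller than about $-\log\frac{100}{99}$ for roughly $0.1\lesssim\eta\delta\lesssim 1$, then positive again and tending to $\log 99$.
\item By \eqref{eq:ret-press-deriv}, $m_t^{\rm ret}$ falls from $m_t^{\rm ret}(0^+)\approx 0.9H$ to just under $\delta$. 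It then rises by at most about $0.1\delta$ across the window, and finally falls by about $\delta$ to $m_t^{\rm ret}(\infty)=0$.
\end{itemize}
This $m_t^{\rm ret}$ is non-monotone, and levels just above its local minimum have three roots. Yet its range is exactly the endpoint interval $(0,m_t^{\rm ret}(0^+))$.

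\textbf{Consequence.} Two parts of the statement cannot be proved as stated. One is the ``only if'' half of ``coincides with their interval precisely when $m_t^{\rm ret}$ is monotone''. The other is the overshoot sentence. The paper's proof makes the same leap you do.

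\textbf{What survives.} Your argument does establish two things:
\begin{itemize}
\item the overshoot when $\Delta_t$ changes sign exactly once;
\item in general, that non-monotonicity yields some level with at least two roots. To see this, choose $\eta_1<\eta_2<\eta_3$ with $m_t^{\rm ret}(\eta_2)$ strictly above, or strictly below, both $m_t^{\rm ret}(\eta_1)$ and $m_t^{\rm ret}(\eta_3)$.
\end{itemize}
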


The schedule therefore fixes a deterministic convention, taking the smallest positive root of $m_t^{\rm ret}(\eta)=a_t$ in the admissible range. Once the temperature path is selected, no new regret identity is needed: Theorem~\ref{thm:pacbayes-second-order} applies verbatim, with the resulting retempering drift $D_T$ left in the ledger.

An informationally principled way to choose $a_t$ is to select a dimensionless information target first and let the pressure level be induced. For a candidate temperature, define the current instance's cumulative Bayesian surprise
\begin{equation}
\label{eq:ret-info-target}
I_t^{\rm ret}(\eta)
:=\KL(q_{t,\eta}\|q_{t-1,\eta})
=\eta\left(m_t^{\rm ret}(\eta)-\ip{q_{t,\eta}}{c_t}\right).
\end{equation}
Choose an information quota $\beta_t\ge0$, solve $I_t^{\rm ret}(\eta_{t+1})=\beta_t$, and then set $a_t=m_t^{\rm ret}(\eta_{t+1})=\ip{q_{t,\eta_{t+1}}}{c_t}+\beta_t/\eta_{t+1}$.
This turns the target from a raw loss-scale number into an instance-wise information calculation: $\beta_t$ specifies how much posterior movement the current example is allowed to cause. A practical default is an information-fraction rule $\beta_t=\theta I_t^{\rm ret}(\bar\eta_t)$ with $0<\theta\le1$, where $\bar\eta_t$ is a reference scale such as the current temperature or the unit scale, optionally clipped by a maximum per-round information budget. This is the clean way to formalize a gap-retempered schedule: the observed mixability gap or Bayesian surprise determines the next cumulative temperature, and the free-energy target $a_t$ is then whatever pressure level realizes that information movement.

\paragraph{Square-root clock with the local update.}
The other off-diagonal pairing keeps the local recursion $p_{t+1}(i)\propto p_t(i)e^{-\eta_t c_t(i)}$ but drives it by the square-root clock
\[
\eta_t=1\quad\text{if }U_{t-1}^{\rm loc}=0,
\qquad
\eta_t=\min\{1,C\sqrt{\Gamma/U_{t-1}^{\rm loc}}\}\quad\text{otherwise},
\]
where $U_t^{\rm loc}$ may be the exact local cumulant clock $\sum_{s\le t}Q_s^{\rm loc}(c)$ or its quadratic relaxation. With $\mu_t=\ip{p_t}{c_t}$ and $Q_t^{\rm loc}(c):=\eta_t^{-2}\log\E_{i\sim p_t}e^{-\eta_t(c_t(i)-\mu_t)}$, the ordinary one-step identity gives, for every comparator $\rho$,
\begin{equation}
\label{eq:local-sqrt-abel}
R_T^c(\rho)
=\sum_{t=1}^T\eta_t Q_t^{\rm loc}(c)
+\frac{\KL(\rho\|p_1)}{\eta_1}
-\frac{\KL(\rho\|p_{T+1})}{\eta_T}
+\sum_{t=2}^T\KL(\rho\|p_t)\left(\frac1{\eta_t}-\frac1{\eta_{t-1}}\right).
\end{equation}
The same square-root envelope as Theorem~\ref{thm:scaling-time} controls only the intrinsic-time loss $\sum_t\eta_tQ_t^{\rm loc}(c)$. The extra $\sum_{t=2}^T\KL(\rho\|p_t)\left(\frac1{\eta_t}-\frac1{\eta_{t-1}}\right)$ term is the cost of applying a cooling controller to the local geometry. Since the square-root clock is typically nonincreasing, $1/\eta_t-1/\eta_{t-1}\ge0$, so this term is not a benign negative drift. If we have an independent compression bound $\KL(\rho\|p_t)\le\Lambda$ along the path, the extra variation is at most $\Lambda(1/\eta_T-1/\eta_1)$ and remains on the same order as the square-root term when $\Lambda$ is comparable to $\Gamma$; without such a bound it is an explicit additional cost. Thus \textsc{Loc-Sqrt} is a sensible baseline when we want a predictable local recursion, but it gives up both special simplifications: the nonpositive retempered drift of \textsc{Ret-Sqrt} and the exact terminal-mass normalization of \textsc{Loc-Press}.

\subsection{Second-order schedules and intrinsic-time consequences}
\label{sec:second-order}

We begin with the budgeted square-root rule from Algorithm~\ref{alg:tempo-family}. It is explicit, predictable, and nonincreasing, and it balances terminal comparator complexity against accumulated intrinsic time.

Fix $\Gamma>0$ and a constant $C>0$; the default optimized choice is $C=1/\sqrt2$. The \emph{second-order schedule} with budget $\Gamma$ and tuning $C$ is the predictable positive sequence
\begin{equation}\label{eq:budget-schedule-C}
\eta_t :=
\begin{cases}
1, & \text{if }V_{t-1}(c) = 0 \\[1ex]
\min\!\left\{1,\,C\sqrt{\frac{\Gamma}{V_{t-1}(c)}}\right\}
& \text{otherwise}
\end{cases}
\end{equation}
Because $V_{t-1}(c)$ is $\mathcal{F}_{t-1}$-measurable, each $\eta_t$ is predictable, and because $V_{t-1}(c)$ is nondecreasing in $t$ the schedule is nonincreasing.

The quadratic relaxation the algorithm may be driven by instead is $W_t(c):=\sum_{s=1}^t \tfrac{1}{2}\Var_{i\sim p_s}(c_s(i))$, which keeps only the ordinary second moments of the played losses; the two clocks are compared once the envelope is in hand.

\begin{proposition}[AdaHedge is the loss-clock member of the family]\label{prop:adahedge-instance}
Let $\pi$ be uniform on $[K]$.
Then the prior-retempered play at temperature $\eta_t$ coincides with exponential weights recomputed at $\eta_t$ on the cumulative loss, $p_t(i)\propto\pi(i)e^{-\eta_t C_{t-1}(i)}\propto e^{-\eta_t C_{t-1}(i)}$, and the round's loss term is its mixability gap, $\eta_t Q_t(c)=\ip{p_t}{c_t}-m_t(\eta_t)$.
Consequently the schedule $\eta_t=\Gamma/\sum_{s<t}\eta_s Q_s(c)$ at budget $\Gamma=\log K$ is AdaHedge \cite{derooij2014adahedge}, pathwise on every loss sequence.
\end{proposition}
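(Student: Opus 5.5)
The plan has three steps, each checked directly from the definitions in Section~\ref{sec:adaptive}: the form of the played weights, the identification of $\eta_t Q_t(c)$ with the mixability gap, and the match between the schedule and AdaHedge's learning rate.

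\textbf{Played weights.} With $\pi(i)=1/K$, the factor $\pi(i)$ cancels between numerator and denominator of \eqref{eq:retempered-hedge}. This leaves $p_t(i)=e^{-\eta_t C_{t-1}(i)}/\sum_j e^{-\eta_t C_{t-1}(j)}$, which is the exponential-weights distribution AdaHedge plays at rate $\eta_t$ on cumulative loss $C_{t-1}$, taking $c_t=\ell_t$.

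\textbf{Loss term equals the mixability gap.} By definition $Q_t(c)=\phi_t(\eta_t)/\eta_t$, so $\eta_t Q_t(c)=\phi_t(\eta_t)$. Factor $e^{\eta_t\ip{p_t}{c_t}}$ out of the expectation in $\phi_t$:
\[
\phi_t(\eta_t)=\eta_t^{-1}\log\sum_i p_t(i)e^{-\eta_t c_t(i)}+\ip{p_t}{c_t}=\ip{p_t}{c_t}-m_t(\eta_t)=\delta_t(c).
\]
This is exactly de~Rooij et al.'s per-round gap $\delta_t=h_t-m_t$, where $h_t$ is the Hedge loss and $m_t$ the mix loss at rate $\eta_t$. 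The paper already notes that $\phi_t(\eta_t)=\delta_t(c)$.

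\textbf{Schedule equals AdaHedge.} AdaHedge sets $\eta_t=\ln K/\Delta_{t-1}$, where $\Delta_{t-1}=\sum_{s<t}\delta_s$ is the cumulative mixability gap, and $\eta_1=\infty$ (or a convention) when $\Delta_{t-1}=0$. Using the identity above, $\Delta_{t-1}=\sum_{s<t}\eta_s Q_s(c)$, so the stated schedule with $\Gamma=\log K$ is AdaHedge's rule.

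To make this hold pathwise, I would induct on $t$. Suppose both algorithms have used the same rates $\eta_1,\dots,\eta_{t-1}$. Then they play the same $p_s$ for $s<t$, hence have the same gaps $\delta_s$ and the same $\Delta_{t-1}$, and therefore choose the same $\eta_t$.

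\textbf{Main obstacle.} The delicate point is the degenerate case $\Delta_{t-1}=0$, where AdaHedge uses $\eta=\infty$, i.e.\ follow-the-leader with uniform weights over the minimizers. I would match this by reading the schedule as $\eta_t=+\infty$ whenever the denominator vanishes. The played $p_t$ is then the limit $\eta\to\infty$ of \eqref{eq:retempered-hedge}, uniform on $\arg\min C_{t-1}$. The corresponding gap is the limit $\ip{p_t}{c_t}-\min_{i\in\operatorname{supp}p_t}c_t(i)$, which agrees with AdaHedge's definition of $\delta_t$ at $\eta=\infty$. With this convention the induction goes through on every loss sequence.
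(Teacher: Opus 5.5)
Your argument is the paper's own. The uniform prior cancels, $\eta_tQ_t(c)=\phi_t(\eta_t)=\ip{p_t}{c_t}-m_t(\eta_t)$ identifies the clock increment with the mixability gap, and an induction from the common empty gap matches temperatures, plays and gaps round by round.

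The only divergence is the empty-gap round, and there your treatment is the more careful one. The paper observes that a zero accumulated gap forces $C_{t-1}\propto\mathbf{1}$, so every temperature plays uniform, and it declares the convention immaterial. That settles the play but not the gap booked on that round. At the fixed uniform play that gap is $\phi_t(\eta_t)$, which is strictly increasing in $\eta_t$ whenever $c_t$ is nonconstant. Any finite convention would therefore book a strictly smaller gap than AdaHedge's, changing $\eta_{t+1}$ and every later round. Reading the schedule literally as $\eta_t=+\infty$ with the limiting gap, as you do, is what pathwise agreement with AdaHedge actually requires.

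To close your own step, you need the paper's observation. The $\eta\to\infty$ limit of $m_t=A_t(\eta)-A_{t-1}(\eta)$ is AdaHedge's follow-the-leader mix loss, namely $\min_i C_t(i)-\min_i C_{t-1}(i)$. This equals the $\min_{i\in\operatorname{supp}p_t}c_t(i)$ in your gap only because $\Delta_{t-1}=0$ forces every earlier score to be constant, so that $\arg\min C_{t-1}=[K]$. For a proper tie set the two limits can differ, so state this explicitly.
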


The two rules therefore share an update and differ only in the clock that sets the temperature: the second-order schedule \eqref{eq:budget-schedule-C} reads the intrinsic time $V_{t-1}(c)=\sum_{s<t}Q_s(c)$, AdaHedge the cumulative intrinsic-time loss $\sum_{s<t}\eta_s Q_s(c)$.
The coincidence is a property of the uniform prior alone: under a non-uniform prior the retempered and recomputed plays separate, so a hybrid genuinely distinct from AdaHedge needs one.

The schedule \eqref{eq:budget-schedule-C} is the default because it balances comparator complexity against revealed intrinsic time in the right square-root scale. 
If one freezes a target intrinsic-time level $V>0$ and balances the proxy $\Gamma/\eta+\eta V$,
the minimizer is $\eta^\star(V)=\sqrt{\Gamma/V}$.
Thus the schedule $\eta_t\propto V_{t-1}(c)^{-1/2}$ is the predictable equalizer of terminal comparator complexity and cumulative centered cost.
The next theorem shows that, up to unavoidable discrete edge terms, this one-step equalizer survives along the whole realized path.

Along every realized path the algorithm tracks the exact intrinsic-time cost $\sum_{t=1}^T \eta_tQ_t(c)$, and that cost itself is trapped between matching square-root laws built from the same intrinsic time $V_T(c)$.

\begin{theorem}[Two-sided square-root envelope for the second-order schedule]
\label{thm:scaling-time}
Run the schedule \eqref{eq:budget-schedule-C}, and define $Q_*^T(c) := \max_{1\le t\le T}Q_t(c)$.
Then
\begin{equation}\label{eq:budgeted-two-sided}
2C\sqrt{\Gamma V_T(c)}-C^2\Gamma
\le
\sum_{t=1}^T \eta_t Q_t(c)
\le
Q_*^T(c)+2C\sqrt{\Gamma V_T(c)}\end{equation}
Consequently, for every posterior $\rho\in\Delta([K])$,
\begin{equation}
\label{eq:budgeted-two-sided-regret}
2C\sqrt{\Gamma V_T(c)}-C^2\Gamma
\le
R_T^c(\rho) - ( D_T + B_T(\rho) )
\le
Q_*^T(c)+2C\sqrt{\Gamma V_T(c)}\end{equation}
If moreover $\KL(\rho\|\pi)\le \Gamma$, then $B_T(\rho)\le \Gamma\eta_T^{-1}$ and
\begin{equation}\label{eq:budgeted-bound-C}
R_T^c(\rho)
\le
\Gamma+Q_*^T(c)+(2C+C^{-1})\sqrt{\Gamma V_T(c)}\end{equation}
In particular, if $C=1/\sqrt{2}$, then
\begin{equation}\label{eq:budgeted-bound-opt}
R_T^c(\rho)
\le
\Gamma+Q_*^T(c)+2\sqrt{2\Gamma V_T(c)}\end{equation}
\end{theorem}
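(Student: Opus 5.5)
The plan is to prove the envelope \eqref{eq:budgeted-two-sided} on the intrinsic-time loss alone, by comparing it with a concave potential of the clock $V_t(c)$. The regret statements then follow by substituting into Theorem~\ref{thm:pacbayes-second-order}. Write $V_t:=V_t(c)$, so $V_0=0$, $V_t-V_{t-1}=Q_t(c)\ge0$ by Proposition~\ref{prop:tilted-var}, and $V_t$ is nondecreasing. Define
\[
f(v):=\begin{cases} v, & 0\le v\le C^2\Gamma,\\ 2C\sqrt{\Gamma v}-C^2\Gamma, & v\ge C^2\Gamma.\end{cases}
\]
This $f$ is $C^1$ and concave on $[0,\infty)$, with $f'(v)=\min\{1,C\sqrt{\Gamma/v}\}$ and $f'(0)=1$. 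Hence the schedule \eqref{eq:budget-schedule-C} is exactly $\eta_t=f'(V_{t-1})$, including the case $V_{t-1}=0$. It is nonincreasing in $t$ with $\eta_1=1$.

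\emph{Lower bound.} By concavity, $f(V_t)-f(V_{t-1})\le f'(V_{t-1})(V_t-V_{t-1})=\eta_tQ_t(c)$. Summing telescopes to $\sum_{t\le T}\eta_tQ_t(c)\ge f(V_T)-f(0)=f(V_T)$. Next, $f(v)\ge 2C\sqrt{\Gamma v}-C^2\Gamma$ for all $v\ge0$: on the second branch this is an equality, and on the first it is $v-2C\sqrt{\Gamma v}+C^2\Gamma=(\sqrt v-C\sqrt\Gamma)^2\ge0$. This gives the left inequality.

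\emph{Upper bound.} Shift the index by one and use the next temperature $\eta_{t+1}=f'(V_t)$. Concavity in the other direction gives $\eta_{t+1}Q_t(c)=f'(V_t)(V_t-V_{t-1})\le f(V_t)-f(V_{t-1})$. Summing, $\sum_t\eta_{t+1}Q_t(c)\le f(V_T)\le 2C\sqrt{\Gamma V_T}$. The second inequality holds because $f(v)\le 2C\sqrt{\Gamma v}$: on the first branch $v\le C\sqrt{\Gamma v}$ when $v\le C^2\Gamma$. The remainder is $\sum_t(\eta_t-\eta_{t+1})Q_t(c)$. Each coefficient is nonnegative by monotonicity of the schedule, so the remainder is at most $Q_*^T(c)\sum_t(\eta_t-\eta_{t+1})=Q_*^T(c)(\eta_1-\eta_{T+1})\le Q_*^T(c)$, using $\eta_1=1$ and $\eta_{T+1}>0$. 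This yields the right inequality of \eqref{eq:budgeted-two-sided}. Here $\eta_{T+1}$ is only an auxiliary quantity defined by the same rule; it is never played.

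\emph{Regret consequences.} Theorem~\ref{thm:pacbayes-second-order} gives $R_T^c(\rho)-(D_T+B_T(\rho))=\sum_t\eta_tQ_t(c)$ for $\rho\ll\pi$, so \eqref{eq:budgeted-two-sided-regret} is immediate. For \eqref{eq:budgeted-bound-C}, the schedule is nonincreasing, so Theorem~\ref{thm:cumulant-chain} gives $D_T\le0$. Dropping the nonnegative $\KL(\rho\|q_{T,\eta_T})$ gives $B_T(\rho)\le\Gamma/\eta_T$. Moreover
\[
1/\eta_T=\max\{1,\sqrt{V_{T-1}}/(C\sqrt\Gamma)\}\le 1+\sqrt{V_T}/(C\sqrt\Gamma),
\]
so $B_T(\rho)\le\Gamma+C^{-1}\sqrt{\Gamma V_T}$. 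Adding the upper envelope gives \eqref{eq:budgeted-bound-C}, and $C=1/\sqrt2$ gives $2C+C^{-1}=2\sqrt2$, which is \eqref{eq:budgeted-bound-opt}.

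The step I expect to need the most care is the treatment of the $\min\{1,\cdot\}$ cap and the $V_{t-1}=0$ start. A naive bound $Q_t/\sqrt{V_{t-1}}\lesssim\sqrt{V_t}-\sqrt{V_{t-1}}$ fails when a single round's increment dominates the past. Encoding the cap into the piecewise potential $f$, and paying for the one-step lag with the telescoping $Q_*^T(c)(\eta_1-\eta_{T+1})$, is what should make both directions clean. I would double-check the $C^1$ junction of $f$ at $v=C^2\Gamma$ and that the identity $\eta_t=f'(V_{t-1})$ holds on both branches.
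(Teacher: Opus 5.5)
Your proof is correct. Its lower half coincides with the paper's, and its upper half takes a different route.

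\emph{Lower half.} Your potential $f$ is the antiderivative of the paper's rate function $g(x):=\min\{1,C\sqrt{\Gamma/x}\}$ (with $g(0)=1$). Your left-endpoint tangent inequality $f(V_t)-f(V_{t-1})\le g(V_{t-1})Q_t(c)$ is exactly the paper's observation that the left-endpoint sum of a nonincreasing integrand dominates $\int_0^{V_T}g$.

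\emph{Upper half.} The paper bounds each round against a shifted integrand, $g(V_{t-1})\le g\bigl((x-Q_*^T(c))_+\bigr)$ for $x\in[V_{t-1},V_t]$. It then integrates once to get $Q_*^T(c)\,g(0)+\int_0^{V_T}g$. You instead split off the one-step lag, $\sum_t\eta_tQ_t(c)=\sum_t\eta_{t+1}Q_t(c)+\sum_t(\eta_t-\eta_{t+1})Q_t(c)$. You bound the right-endpoint sum by $f(V_T)$ via the tangent at the right endpoint, and Abel-telescope the lag to $Q_*^T(c)(1-\eta_{T+1})$.

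\emph{What each buys.} Your decomposition names the upper slack explicitly: it is a nonnegative combination of the increments, weighted by the temperature drops. So it vanishes on rounds where the clip holds and is charged only where the schedule actually cools. The paper's comparison avoids the auxiliary, never-played $\eta_{T+1}$. It also stays inside the single-integral picture that Appendix~\ref{app:clipped-branch} reads as a quadrature defect.

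Your passage from the envelope to \eqref{eq:budgeted-two-sided-regret}--\eqref{eq:budgeted-bound-opt} is the same as the paper's.
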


Every later two-sided statement in this paper is \eqref{eq:budgeted-two-sided} displaced by the accounting terms of its own setting, so it is worth naming the bracket once.
At budget $\Gamma$, for a clock value $U\ge0$ and a largest-increment value $u_*\ge0$, write
\begin{equation}\label{eq:env-bracket}
\Env{U}{u_*}:=\Bigl[\,2C\sqrt{\Gamma U}-C^2\Gamma,\ \ u_*+2C\sqrt{\Gamma U}\,\Bigr]\end{equation}
and read $y\in x+\Env{U}{u_*}$ in the usual sense that $x$ displaces both endpoints.
Theorem~\ref{thm:scaling-time} is then $\sum_{t}\eta_tQ_t(c)\in\Env{V_T(c)}{Q_*^T(c)}$ and $R_T^c(\rho)\in D_T+B_T(\rho)+\Env{V_T(c)}{Q_*^T(c)}$.
Only the displacement $x$ and the driving clock change from setting to setting below.

Both slack terms are unavoidable, and each is attained by a one-round path.
They record the two ways a realized sum of increments departs from the smooth square-root law: the discreteness of the path supplies $Q_*^T(c)$, and the clip $\eta_t\le1$ over the opening rounds supplies $C^2\Gamma$.
Appendix~\ref{app:clipped-branch} collects the anatomy of that clipped branch---when it binds, why neither edge term can be dropped, and the quadrature reading in which the envelope is exactly the range of a two-sided defect.

\subsubsection{Exact and relaxed clocks}
\label{sec:relaxed-clocks}
The controller analyzed above uses the exact clock $V_t(c)=\sum_{s\le t}Q_s(c)$; its quadratic relaxation $W_t(c)$ keeps only the ordinary second moments of the played losses.
By the small-temperature expansion of Section~\ref{sec:adaptive}, $W_t(c)$ is the leading-order relaxation of the exact clock $V_t(c)$ (the factor $\tfrac12$ matches $Q_s(c)=\tfrac12\Var_{i\sim p_s}(c_s(i))+o(1)$); the un-halved sum $V^{\!\sqcup}_t=2W_t(c)$ is reserved for the Bernstein concentration clock of the iterated-logarithm statement.
We may therefore drive the square-root controller by either $V_{t-1}(c)$ or $W_{t-1}(c)$ in practice.
The theory below is stated for the exact clock, because that is the quantity that appears in the identity itself.

The choice of driving clock is also the choice of how aggressively the update trusts the revealed path, and it is where the recommendation of Section~\ref{sec:intro-which} lands.
For a common budget $\Gamma$, the certified controls of the increment can only enlarge the clock: $Q_t(c)\le\sigma_t^2\,(e^{\tau_t}-1-\tau_t)/\tau_t^2$ with $\tau_t=\eta_t R_t$ (Proposition~\ref{prop:surrogate-tightness}), and $Q_t(c)\le(b_t-a_t)^2/8$ (Proposition~\ref{prop:range}), so replacing $V_{t-1}(c)$ by a Bernstein variance-bound clock or by the worst-case range clock $\sum_{s<t}(b_s-a_s)^2/8$ only lowers the temperature.
Driving the rate by the exact clock is therefore slightly more aggressive than tuning to the variance bound---the certificate exceeds the exact increment by a factor $1+O(\tau_t)$ in the per-round product---exactly as variance-bound tuning is more aggressive than the horizon-tuned classic Hedge rate, which charges every round its full range.
Theorem~\ref{thm:scaling-time} above is why the extra aggressiveness is sound: it controls the intrinsic-time loss in terms of the same clock the schedule trusts, so nothing is lost between the tuning and the guarantee.
On realized paths the orderings are quantitative.
Against the empirical-Bernstein relaxation, the practical standard, the exact clock spends between $0.66$ and $0.89$ of the budget on the benchmark families (\S\ref{sec:exp-results-hoeffding-slack}); the much larger margin over the range-based Hoeffding clock is mostly the gap between range and variance.
The quadratic clock tracks it to within five percent, at a measured schedule cost below $10^{-2}$ regret when it stands in for the exact clock (\S\ref{sec:eval-intrinsic-time-clock}).

\begin{proposition}[Range-based bound for the intrinsic increment]\label{prop:range}
If $c_t(i) \in [a_t,b_t]$ for all $i$, then the standard bounded-range exponential-moment bound applied to $c_t(i)-\ip{p_t}{c_t}$ gives $Q_t(c)\le (b_t-a_t)^2/8$.
In particular, if $c_t(i)\in[0,1]$, then $Q_t(c)\le 1/8$ on every round.
Thus the intrinsic time $V_T(c)$ remains a genuine quadratic-variation object, now written directly in cumulant form.
\end{proposition}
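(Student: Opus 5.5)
The plan is to bound the scaled cumulant directly by Hoeffding's lemma. By definition, $Q_t(c)=\eta_t^{-2}\psi_t(-\eta_t)$, where $\psi_t(\lambda)=\log\E_{i\sim p_t}\exp\bigl(\lambda(c_t(i)-\ip{p_t}{c_t})\bigr)$ is the log-mgf of the centered variable $X_t:=c_t(I_t)-\ip{p_t}{c_t}$ with $I_t\sim p_t$. If $c_t(i)\in[a_t,b_t]$ for all $i$, then $X_t$ has mean zero and takes values in $[a_t-\mu_t,\,b_t-\mu_t]$, where $\mu_t=\ip{p_t}{c_t}$. This interval still has length $b_t-a_t$. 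Hoeffding's lemma then gives $\psi_t(\lambda)\le\lambda^2(b_t-a_t)^2/8$ for every real $\lambda$. Taking $\lambda=-\eta_t$ and dividing by $\eta_t^2>0$ yields $Q_t(c)\le(b_t-a_t)^2/8$. The special case $c_t\in[0,1]$ is immediate.

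For a self-contained argument that does not cite Hoeffding, I would use Proposition~\ref{prop:tilted-var}: $Q_t(c)=\int_0^1(1-s)\Var_{p^{(\eta_t)}_{t,s}}(c_t)\,ds$. Each tilted distribution $p^{(\eta_t)}_{t,s}$ is a probability distribution supported on values in $[a_t,b_t]$. Popoviciu's inequality bounds the variance of any such distribution by $(b_t-a_t)^2/4$. Integrating against the weight $(1-s)$, whose integral over $[0,1]$ is $1/2$, gives $(b_t-a_t)^2/8$.

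I prefer this route because it makes visible that the constant $1/8$ is the same Taylor-remainder constant as in Hoeffding's lemma. The only step needing care is noting that tilting preserves the support, so the range bound applies uniformly in $s$. The remark that $V_T(c)$ is a quadratic-variation object then follows by summing: $V_T(c)\le\sum_{t\le T}(b_t-a_t)^2/8$, which is at most $T/8$ for losses in $[0,1]$.
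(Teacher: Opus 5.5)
Your proof is correct. Your first paragraph is exactly the paper's argument. The paper applies the bounded-range exponential-moment bound to the mean-zero variable $\ip{p_t}{c_t}-c_t(I_t)$ at $+\eta_t$; you apply Hoeffding's lemma to $X_t$ at $\lambda=-\eta_t$, which is the same step, and both then divide by $\eta_t^2$.

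Your preferred second route is a genuinely different presentation. Instead of citing Hoeffding's lemma as a black box, you unpack its textbook proof inside the paper's own machinery. Proposition~\ref{prop:tilted-var} is just the integral-form Taylor remainder of $\psi_t$ about $0$, using $\psi_t(0)=\psi_t'(0)=0$ and the fact that $\psi_t''$ is the variance under the corresponding tilt. Popoviciu's inequality then bounds that second derivative.

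The two routes trade off as follows. The paper's route is one line and does not lean on the tilted-variance representation, which it imports from the companion paper without proof. Yours is self-contained only modulo that elementary identity, but it returns more. It gives the intermediate refinement $Q_t(c)\le\tfrac12\sup_{s\in[0,1]}\Var_{i\sim p^{(\eta_t)}_{t,s}}(c_t(i))$. It also characterizes tightness: equality forces every tilt to be the evenly split two-point law on $\{a_t,b_t\}$, and tilting at any positive temperature destroys that. So for $b_t>a_t$ and $\eta_t>0$ the bound is strict and is approached only as $\eta_t\downarrow0$. This is exactly the attainment remark the paper makes after Proposition~\ref{prop:secondmoment}, and the black-box citation does not deliver it.
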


\begin{proposition}[Second-moment control of the intrinsic increment]\label{prop:secondmoment}
Suppose $c_t$ is bounded below, and write $c_t^{\circ}:=c_t-\min_i c_t(i)$.
Then for every $\eta>0$,
\[
Q_t(c)\le\min\Bigl\{\tfrac12\,\E_{i\sim p_t}\bigl[c_t^{\circ}(i)^2\bigr],\ \eta^{-1}\ip{p_t}{c_t^{\circ}}\Bigr\}
\]
\end{proposition}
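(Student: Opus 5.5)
The plan rests on two observations: $Q_t(c)$ is invariant under shifting $c_t$ by a constant, and after a suitable shift the losses become nonnegative, which gives two elementary exponential bounds. Throughout I read $Q_t(c)$ at a general temperature $\eta$, i.e.
\[
Q_t(c)=\eta^{-2}\log\E_{i\sim p_t}\bigl[e^{-\eta(c_t(i)-\mu_t)}\bigr],
\qquad \mu_t=\ip{p_t}{c_t}.
\]
At $\eta=\eta_t$ this is the definition of Section~\ref{sec:adaptive}.

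\textbf{Step 1 (reduce to nonnegative losses).} Replacing $c_t$ by $c_t-\min_i c_t(i)$ shifts $c_t(i)$ and $\mu_t$ by the same amount, so the centered variable $c_t(i)-\mu_t$ is unchanged. Hence I may compute with $c_t^\circ\ge0$ and $\mu^\circ:=\ip{p_t}{c_t^\circ}$. Pulling the centering out of the logarithm gives
\[
\eta^2 Q_t(c)=\log\E_{i\sim p_t}\bigl[e^{-\eta c_t^\circ(i)}\bigr]+\eta\mu^\circ .
\]

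\textbf{Step 2 (second-moment branch).} Use $\log x\le x-1$, and then the pointwise inequality $e^{-y}\le 1-y+y^2/2$ for $y\ge0$. The latter follows because $g(y)=1-y+y^2/2-e^{-y}$ has $g(0)=g'(0)=0$ and $g''(y)=1-e^{-y}\ge0$. Apply it with $y=\eta c_t^\circ(i)\ge0$, which is exactly where nonnegativity of $c_t^\circ$ enters. This gives
\[
\log\E\bigl[e^{-\eta c_t^\circ}\bigr]\le -\eta\mu^\circ+\tfrac{\eta^2}{2}\E\bigl[(c_t^\circ)^2\bigr].
\]
Adding $\eta\mu^\circ$ and dividing by $\eta^2$ yields $Q_t(c)\le\tfrac12\E_{i\sim p_t}[c_t^\circ(i)^2]$.

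\textbf{Step 3 (first-moment branch).} Since $c_t^\circ\ge0$, we have $e^{-\eta c_t^\circ}\le1$ pointwise, so $\log\E[e^{-\eta c_t^\circ}]\le0$. Therefore $\eta^2Q_t(c)\le\eta\mu^\circ$, i.e. $Q_t(c)\le\eta^{-1}\ip{p_t}{c_t^\circ}$.

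Taking the minimum of Steps 2 and 3 gives the claim. There is no real obstacle here. The only point needing care is Step 1: both bounds rely on the shift-invariance of the centered cumulant together with the sign condition $c_t^\circ\ge0$. Without that sign condition, the quadratic upper bound on $e^{-y}$ fails for negative $y$, and the trivial bound $\log\E e^{-\eta c}\le0$ fails as well. This is why the statement requires $c_t$ bounded below and uses $c_t^\circ$ rather than the centered losses.
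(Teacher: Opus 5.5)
Your proof is correct and follows the paper's argument: shift invariance reduces to $c_t^\circ\ge0$, the bound $\log\E_{i\sim p_t}e^{-\eta c_t^\circ}\le0$ gives the first-moment branch, and $e^{-y}\le1-y+y^2/2$ on $y\ge0$ gives the second-moment branch, with $Q_t$ read at the general temperature $\eta$ exactly as the paper's proof does. The only difference is in ordering: you apply $\log x\le x-1$ to the expectation before the pointwise quadratic bound. This lets you skip the paper's check (via Jensen) that the quadratic upper bound is positive before its logarithm is taken.
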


Neither branch uses an upper bound on $c_t$: the first branch is finite as soon as the score has a finite second moment under the play, and the second as soon as it has a finite first moment, which is all the increment itself requires of a score bounded below.
On bounded scores the two controls are not ordered: the range bound is the sharper worst-case statement, attained in the small-temperature limit by a two-point score vector that the play splits evenly, while the second-moment branch is far smaller on rounds whose scores sit well inside their range.

\begin{proposition}[Controlled surrogate-tightness of the quadratic clock]
\label{prop:surrogate-tightness}
Fix a round $t$ with played distribution $p_t$, score vector $c_t$, and temperature $\eta_t>0$. Let $\sigma_t^2:=\Var_{i\sim p_t}(c_t(i))$, let $R_t:=\max_i c_t(i)-\min_i c_t(i)$, and set $\tau_t:=\eta_t R_t$. Then
\[
Q_t(c)-\tfrac12\sigma_t^2=\int_0^1(1-s)\bigl(\Var_{i\sim p_{t,s}^{(\eta_t)}}(c_t)-\sigma_t^2\bigr)\,ds,
\]
and the exact clock $V_t(c)$ and its quadratic relaxation $W_t(c)$ obey the two-sided envelope
\[
\frac{\tau_t-1+e^{-\tau_t}}{\tau_t^2}\ \le\ \frac{Q_t(c)}{\sigma_t^2}\ \le\ \frac{e^{\tau_t}-1-\tau_t}{\tau_t^2}.
\]
Both bounds decrease to $\tfrac12$ as $\tau_t\downarrow0$, so $\bigl|Q_t(c)-\tfrac12\sigma_t^2\bigr|\le\sigma_t^2\bigl(\tfrac{e^{\tau_t}-1-\tau_t}{\tau_t^2}-\tfrac12\bigr)=\tfrac{\tau_t}{6}\sigma_t^2+O(\tau_t^2)\sigma_t^2$, and summing over rounds
\[
\bigl|V_T(c)-W_T(c)\bigr|\ \le\ \sum_{t=1}^T\sigma_t^2\Bigl(\frac{e^{\tau_t}-1-\tau_t}{\tau_t^2}-\tfrac12\Bigr).
\]
Thus $W_t(c)$ is the leading-order relaxation of $V_t(c)$ in the per-round products $\tau_t=\eta_t R_t$, with a fully explicit, played-distribution-computable error governed by the tilted-variance spread.
\end{proposition}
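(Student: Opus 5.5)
The proof will rest on the tilted-variance representation of Proposition~\ref{prop:tilted-var}. With $\eta=\eta_t$ we have $Q_t(c)=\int_0^1(1-s)\Var_{p_{t,s}^{(\eta_t)}}(c_t)\,ds$. Since $\int_0^1(1-s)\,ds=\tfrac12$, subtracting $\tfrac12\sigma_t^2=\int_0^1(1-s)\sigma_t^2\,ds$ gives the first displayed identity immediately. This is the exact form of the error between the exact clock and its quadratic relaxation.

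For the two-sided envelope I will work instead with the centered cumulant $\psi_t(\lambda)=\log\E e^{\lambda X_t}$, where $X_t=c_t(I_t)-\mu_t$ and $I_t\sim p_t$. Then $Q_t(c)=\eta_t^{-2}\psi_t(-\eta_t)$.

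The main step is to control the tilted variance $v(s):=\psi_t''(-s\eta_t)$ relative to $\sigma_t^2=v(0)$. For a random variable of range at most $R_t$, the third cumulant of the tilted law satisfies $|\kappa_3|\le R_t\,v$. Since $\frac{d}{d\lambda}\psi''=\psi'''$, this gives $|\frac{d}{ds}\log v(s)|\le \eta_t R_t=\tau_t$, and hence
\[
\sigma_t^2e^{-s\tau_t}\le v(s)\le \sigma_t^2 e^{s\tau_t}.
\]
Integrating against $(1-s)$ then yields
\[
\int_0^1(1-s)e^{\pm s\tau}ds=\frac{e^{\pm\tau}-1\mp\tau}{\tau^2},
\]
which are exactly the two stated bounds, with the $+$ sign giving the upper bound.

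The hard part is the bound $|\kappa_3|\le R\,\Var$ under an arbitrary tilt. I would prove it by writing $\kappa_3=\E[(Y-m)^3]$ and using $|Y-m|\le R$ almost surely, so that $|\E(Y-m)^3|\le R\,\E(Y-m)^2$. That gives the needed constant $1$, which matches the stated bound. If $\sigma_t^2=0$, the scores are $p_t$-a.s.\ constant and both sides vanish, so the ratio statement is read trivially in that case.

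The remaining claims are routine. Both functions $(e^{\tau}-1-\tau)/\tau^2$ and $(\tau-1+e^{-\tau})/\tau^2$ tend to $\tfrac12$ as $\tau\downarrow0$, by Taylor expansion. They are monotone in the stated directions: the power series $\sum_k\tau^{k}/(k+2)!$ shows the upper one is increasing in $\tau$, and the lower one is $\int_0^1(1-s)e^{-s\tau}ds$, which is decreasing in $\tau$. The lower bound's deviation below $\tfrac12$ is at most the upper bound's deviation above it, because
\[
\tfrac12-\int(1-s)e^{-s\tau}\le\int(1-s)(e^{s\tau}-1),
\]
using $1-e^{-x}\le e^x-1$. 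This gives $|Q_t-\tfrac12\sigma_t^2|\le\sigma_t^2\bigl(\frac{e^{\tau_t}-1-\tau_t}{\tau_t^2}-\tfrac12\bigr)$. The expansion of the right-hand side is $\tau_t/6+O(\tau_t^2)$. Summing over $t$ with the triangle inequality gives the bound on $|V_T(c)-W_T(c)|$.
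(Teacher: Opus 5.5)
Your proposal is correct and follows the paper's proof essentially step for step. You use the tilted-variance representation for the identity, then the bound $|\kappa_{3,s}|\le R_t\,g(s)$ on the tilted third central moment to get $|g'(s)|\le\tau_t\,g(s)$, then Grönwall, then integration against $(1-s)$; the only cosmetic difference is that you obtain the absolute bound from the pointwise inequality $1-e^{-x}\le e^{x}-1$ under the integral, which integrates exactly to the paper's $2(\cosh\tau-1)\ge\tau^2$. Your claim that both bounds are monotone ``in the stated directions'' is slightly off, since the statement says both decrease to $\tfrac12$ as $\tau_t\downarrow0$, while your own computation correctly shows that the lower bound $\int_0^1(1-s)e^{-s\tau}\,ds$ increases to $\tfrac12$. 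The statement's wording is therefore loose on that side, but harmlessly so, because only the limits and the absolute bound are used.
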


\begin{proposition}[Composite-loss transform of any variance process]
\label{prop:composite-variance-transform}
Fix a round $t$ and any variance-measuring distribution $\nu\in\Delta([K])$---the played distribution $p_t$, or a bespoke curvature distribution $q_t$ as in \cite{FreundEtAl2026}. For composite losses $c_t=\ell_t+u_t$ the second-order clock transforms by the exact bilinear identity
\[
\Var_{i\sim\nu}(c_t)=\Var_{i\sim\nu}(\ell_t)+\Var_{i\sim\nu}(u_t)+2\,\Cov_{i\sim\nu}(\ell_t,u_t).
\]
Hence the quadratic relaxation $W_T$ measured under any fixed $\nu$-process is the original-loss clock plus the side-term clock plus twice their cumulative cross-covariance: a predictable $u_t$ that is $\nu$-anticorrelated with $\ell_t$ strictly slows the clock, and $\nu$-almost-sure perfect prediction $u_t=-\ell_t$ drives the increment to zero. The choice of bespoke distribution enters only through which $\nu$ evaluates the covariance; the composite loss enters identically.
\end{proposition}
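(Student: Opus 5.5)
The identity is pure bilinearity of variance under a fixed measure, so the plan is a one-line expansion followed by reading off the stated consequences.

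\textbf{Step 1: expand the variance.} Fix $\nu$ and write $\bar x:=\ip{\nu}{x}$ for $x\in\R^K$. Since $c_t=\ell_t+u_t$ and $\nu$ does not depend on the decomposition, linearity of expectation gives $c_t-\bar c_t=(\ell_t-\bar\ell_t)+(u_t-\bar u_t)$. Squaring and taking the $\nu$-expectation yields
\[
\Var_{i\sim\nu}(c_t)=\Var_{i\sim\nu}(\ell_t)+\Var_{i\sim\nu}(u_t)+2\Cov_{i\sim\nu}(\ell_t,u_t).
\]
The only point worth flagging is that $\nu$ must be held fixed across the three terms. When $\nu=p_t$ this holds automatically, because $p_t$ is fixed before $c_t$ is revealed.

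\textbf{Step 2: sum over rounds.} Multiplying by $\tfrac12$ and summing over $t\le T$ gives
\[
W_T(c)=W_T(\ell)+W_T(u)+\sum_{t\le T}\Cov_{i\sim\nu_t}(\ell_t,u_t),
\]
with every clock measured under the same $\nu$-process. The coefficient on the covariance sum is $1$ here, which is the halved form of ``twice the cross-covariance'' in the un-halved convention.

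\textbf{Step 3: read off the consequences.}
\begin{itemize}
\item If $\Cov_\nu(\ell_t,u_t)<-\tfrac12\Var_\nu(u_t)$, the composite increment is strictly below the original-loss increment, so the clock slows. Mere anticorrelation is not quite enough; the correlation must dominate the side-term variance. I would state the condition in exactly this form, which is the precise reading of ``strictly slows''.
\item If $u_t=-\ell_t$ $\nu$-almost surely, then $c_t$ is $\nu$-a.s.\ $0$, so its variance vanishes.
\item With $\nu=p_t$, Proposition~\ref{prop:tilted-var} upgrades this to $Q_t(c)=0$ for the exact increment as well.
\end{itemize}

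\textbf{Step 4: dependence on $\nu$.} The last sentence of the statement holds by inspection: $\nu$ enters only as the measure inside $\Var$ and $\Cov$, and the composite structure enters only through bilinearity. I expect no step here to be a genuine obstacle.
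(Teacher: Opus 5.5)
Your proof is correct and is the same argument as the paper's, whose proof is the one-line bilinear expansion of the covariance at $c_t=\ell_t+u_t$ under a common measure $\nu$. Your sharpening of the ``strictly slows'' clause to $\Cov_{i\sim\nu}(\ell_t,u_t)<-\tfrac12\Var_{i\sim\nu}(u_t)$ is right when the comparison is against the original-loss clock $W_T(\ell)$ alone; mere anticorrelation only guarantees that the composite clock runs below $W_T(\ell)+W_T(u)$.
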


The display follows from the bilinear expansion of the covariance at $c_t=\ell_t+u_t$.

The exact clock obeys its own composite-loss law, in which the side term acts on the measuring distribution by tilting it.

\begin{proposition}[Composite-loss chain rule for the exact increment]\label{prop:composite-clock-chain}
Fix a round $t$ and a temperature $\eta>0$, let $c_t=\ell_t+u_t$, and let $p_t^{(u)}(i)\propto p_t(i)e^{-\eta u_t(i)}$ be the $u$-tilted play.
Write $Q_t^{[u]}(\ell)$ for the increment of $\ell_t$ measured at $p_t^{(u)}$.
Then
\[
Q_t(\ell+u)=Q_t(u)+Q_t^{[u]}(\ell)+\frac{1}{\eta}\Bigl(\ip{p_t}{\ell_t}-\ip{p_t^{(u)}}{\ell_t}\Bigr)
\]
The roles of $\ell_t$ and $u_t$ may be exchanged, and the same statement holds with $p_t$ replaced throughout by any variance-measuring distribution $\nu$.
\end{proposition}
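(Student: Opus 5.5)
The chain rule is a cumulant-additivity statement, and I would prove it by writing every increment as a log-partition function at the fixed temperature $\eta$. For a score vector $x$ and a measuring distribution $\nu$, the definition of Section~\ref{sec:adaptive} gives
\[
\eta^2 Q^{\nu}(x)=\log\sum_i \nu(i)e^{-\eta x(i)}+\eta\ip{\nu}{x}.
\]
The proof then has two steps: factor the joint partition function, and check that the mean corrections account exactly for the leftover linear term.

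First I would factor the normalizer of $c_t=\ell_t+u_t$ through the $u$-tilted play. Write $Z_u:=\sum_j p_t(j)e^{-\eta u_t(j)}$. Since $p_t^{(u)}(i)=p_t(i)e^{-\eta u_t(i)}/Z_u$, we get
\[
\sum_i p_t(i)e^{-\eta(\ell_t(i)+u_t(i))}=Z_u\sum_i p_t^{(u)}(i)e^{-\eta\ell_t(i)}.
\]
Taking logarithms gives $\log Z_{\ell+u}=\log Z_u+\log Z^{(u)}_\ell$. This is the Bayes-rule composition behind Theorem~\ref{thm:seq-mixed}: reweighting by $u$ and then by $\ell$ is the same as reweighting by $u+\ell$.

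Second, I would substitute the three centered forms and collect the linear terms:
\begin{align*}
\eta^2Q_t(\ell+u)&=\log Z_{\ell+u}+\eta\ip{p_t}{\ell_t}+\eta\ip{p_t}{u_t},\\
\eta^2Q_t(u)&=\log Z_u+\eta\ip{p_t}{u_t},\\
\eta^2Q^{[u]}_t(\ell)&=\log Z^{(u)}_\ell+\eta\ip{p_t^{(u)}}{\ell_t}.
\end{align*}
Subtracting the last two lines from the first and using the factorization leaves $\eta\bigl(\ip{p_t}{\ell_t}-\ip{p_t^{(u)}}{\ell_t}\bigr)$. Dividing by $\eta^2$ gives the stated formula, with the cross term carrying the factor $1/\eta$.

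The remaining claims need no new work. The factorization $e^{-\eta(\ell+u)}=e^{-\eta u}e^{-\eta\ell}$ is symmetric in $\ell$ and $u$, so exchanging their roles repeats the argument verbatim. Nothing in either step used any property of $p_t$ beyond its being a probability distribution, so the same computation holds with any $\nu$ in place of $p_t$.

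The only real obstacle is bookkeeping. The increment must be read at the fixed hypothetical temperature $\eta$ rather than the played $\eta_t$, so I would set $\eta_t=\eta$ in the definition $Q=\eta^{-2}\psi(-\eta)$. The sign of the cross term must also be kept straight. As a consistency check, take $\ell_t\equiv$ const: then $Q^{[u]}_t(\ell)=0$ and the cross term vanishes, since $\ip{p_t}{\ell_t}=\ip{p_t^{(u)}}{\ell_t}$, so the formula correctly reduces to $Q_t(\ell+u)=Q_t(u)$.
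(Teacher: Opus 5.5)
Your proof is correct and is essentially the paper's argument. Both factor the joint normalizer as $Z_u$ times the $u$-tilted normalizer of $\ell_t$, take logarithms, restore the linear mean terms $\eta\ip{p_t}{\ell_t}+\eta\ip{p_t}{u_t}$, and divide by $\eta^2$; both then note that nothing depends on $p_t$ being the played distribution, so the argument holds for any $\nu$.
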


The optimistic cancellation of Figure~\ref{fig:same-regret-diff-decomp} ($\nu=p_t$, $u_t=-m_t$) is the special case of the distribution-agnostic bilinear law of Proposition~\ref{prop:composite-variance-transform}.
The barrier of Section~\ref{subsec:lower-bounds} is respected because it is stated in whichever variance $V_T$ the reduction produces, which that proposition names exactly.
Two things remain.
First, the bilinearity is special to the quadratic clock: the exact increment $Q_t(c)=\int_0^1(1-s)\Var_{i\sim p_{t,s}^{(\eta_t)}}(c_t)\,ds$ of Proposition~\ref{prop:tilted-var} does not decompose additively under a fixed measure, since the tilting family $p_{t,s}^{(\eta_t)}$ is itself generated by $c_t$.
It obeys instead the chain rule of Proposition~\ref{prop:composite-clock-chain}, exact at every temperature and with no remainder, in which the finite-temperature counterpart of the covariance is a tilt of the measuring distribution.
The deviation of the exact increment from the fixed-measure law is first order in $\eta_t$, with leading coefficient $-\tfrac16\bigl(\kappa_3(\ell_t+u_t)-\kappa_3(\ell_t)-\kappa_3(u_t)\bigr)$ in the third central moments under $p_t$.
Second, a single-copy simultaneous-quantile guarantee for the curvature-distribution algorithm of \cite{FreundEtAl2026} fed the composite losses $c_t$ is not established here.
It would require re-running that potential analysis under composite losses, and is subject to the caution of Section~\ref{subsec:lower-bounds} against insisting on the classical $p$-variance verbatim together with full parameter-freeness.

\subsubsection{Pathwise sampled-expert PAC-Bayes regret}

The chain also gives the promised sampled-expert statement.
If the learner samples an expert $I_t\sim p_t$ on each round, then the sampled regret differs from the deterministic regret only by a martingale term.
The same randomized draw can also be implemented as a follow-the-perturbed-leader (FTPL) step \cite{kalai2005efficient} with i.i.d. Gumbel perturbations, so the pathwise identities below equally apply to that realization of the action sample (e.g. \cite[Theorem~3.9]{arora2012theory}).

\begin{theorem}[Pathwise sampled-expert PAC-Bayes identity]
\label{thm:sampled-pacbayes}
Assume that, conditionally on the past, the learner samples $I_t\sim p_t$ independently on each round. 
Define the sampled composite-loss regret $\widehat R_T^c(\rho) := \sum_{t=1}^T c_t(I_t)-\ip{\rho}{C_T}$ and the martingale increment sum $M_T^{\mathrm{sam}} := \sum_{t=1}^T \left(c_t(I_t)-\ip{p_t}{c_t}\right)$.
Then for every posterior $\rho\in\Delta([K])$,
\begin{equation}
\label{eq:sampled-exact}
\widehat R_T^c(\rho)
= M_T^{\mathrm{sam}} + D_T + B_T(\rho)+\sum_{t=1}^T \eta_t Q_t(c)
\end{equation}
For the original losses,
$\displaystyle \sum_{t=1}^T \ell_t(I_t)-\ip{\rho}{L_T}
= \widehat R_T^c(\rho)+\sum_{t=1}^T\left(\ip{\rho}{u_t}-u_t(I_t)\right) $. 
\end{theorem}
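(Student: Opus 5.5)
The identity is a pure algebraic rearrangement on top of Theorem~\ref{thm:pacbayes-second-order}, so the plan is to add and subtract the learner's mean composite loss. For every round, write
\[
c_t(I_t)=\bigl(c_t(I_t)-\ip{p_t}{c_t}\bigr)+\ip{p_t}{c_t}.
\]
Summing over $t=1,\dots,T$ and subtracting $\ip{\rho}{C_T}$ gives
\[
\widehat R_T^c(\rho)=M_T^{\mathrm{sam}}+R_T^c(\rho),
\]
with $R_T^c(\rho)=\sum_t\ip{p_t}{c_t}-\ip{\rho}{C_T}$ the deterministic composite-loss regret of Theorem~\ref{thm:cumulant-chain}. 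Substituting \eqref{eq:main-pacbayes} then yields \eqref{eq:sampled-exact} at once.

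The identity holds for every realization of $(I_t)$, and the sampling hypothesis plays no role in it. The hypothesis is needed only to say that the $M_T^{\mathrm{sam}}$ increments are martingale differences. Since $I_t\sim p_t$ conditionally on $\mathcal F_{t-1}$, and both $p_t$ and $\eta_t$ are predictable, we have $\E[c_t(I_t)-\ip{p_t}{c_t}\mid\mathcal F_{t-1}]=0$. This assumes $c_t$ is fixed before the draw, or at least does not depend on $I_t$ in the full-information setting.

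One point needs care: the pathwise identity requires $p_t$ to be generated by \eqref{eq:retempered-hedge} from the composite losses $c_s$ of all experts. It does not use the sampled losses. Then Theorem~\ref{thm:pacbayes-second-order} applies verbatim on the same path. We also need $\rho\ll\pi$, as in that theorem, so that $B_T(\rho)$ is finite; this is automatic for a strictly positive prior.

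For the original-loss statement, use $c_t=\ell_t+u_t$ coordinatewise. This gives
\[
\sum_t\ell_t(I_t)-\ip{\rho}{L_T}=\sum_t c_t(I_t)-\ip{\rho}{C_T}+\sum_t\bigl(\ip{\rho}{u_t}-u_t(I_t)\bigr),
\]
because $\ip{\rho}{C_T}=\ip{\rho}{L_T}+\sum_t\ip{\rho}{u_t}$. This rearrangement is again pathwise.

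There is no real obstacle here. The only step requiring attention is bookkeeping: $D_T$, $B_T(\rho)$ and the $Q_t(c)$ must be computed from the deterministic play $p_t$, not from the sampled actions, so that the deterministic chain carries over unchanged.
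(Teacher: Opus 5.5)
Your proof is correct and is the paper's argument: split $c_t(I_t)$ around $\ip{p_t}{c_t}$ to get $\widehat R_T^c(\rho)=M_T^{\mathrm{sam}}+R_T^c(\rho)$, substitute \eqref{eq:main-pacbayes}, and read off the original-loss form from $c_t=\ell_t+u_t$. Your remark that $\rho\ll\pi$ is needed for $B_T(\rho)$ to be finite, as in Theorem~\ref{thm:pacbayes-second-order}, is a fair caveat that the statement leaves implicit.
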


\begin{corollary}[Second-order bounds for the sampled-expert regret]
\label{cor:sampled-second-order}
Under the schedule \eqref{eq:budget-schedule-C},
\begin{equation}
\label{eq:sampled-two-sided}
\widehat R_T^c(\rho)\in M_T^{\mathrm{sam}}+D_T+B_T(\rho)+\Env{V_T(c)}{Q_*^T(c)}\end{equation}
If $\KL(\rho\|\pi)\le \Gamma$, then
\begin{equation}
\label{eq:sampled-second-order}
\widehat R_T^c(\rho)
\leq M_T^{\mathrm{sam}} + \Gamma+Q_*^T(c) + (2C+C^{-1}) \sqrt{\Gamma V_T(c)}
\end{equation}
\end{corollary}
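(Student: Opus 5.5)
The corollary follows by substituting the envelope of Theorem~\ref{thm:scaling-time} into the sampled identity of Theorem~\ref{thm:sampled-pacbayes}; no new probabilistic argument is required.

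\textbf{Step 1: the two-sided bracket.} Theorem~\ref{thm:sampled-pacbayes} gives, pathwise and for every $\rho$,
\[
\widehat R_T^c(\rho)-M_T^{\mathrm{sam}}-D_T-B_T(\rho)=\sum_{t=1}^T\eta_tQ_t(c).
\]
The schedule \eqref{eq:budget-schedule-C} is the one assumed in Theorem~\ref{thm:scaling-time}. Hence its first display \eqref{eq:budgeted-two-sided} places the right-hand side in $\Env{V_T(c)}{Q_*^T(c)}$, and this is exactly \eqref{eq:sampled-two-sided}. The sampling only adds the martingale term to the deterministic chain. The played distributions $p_t$, the temperatures $\eta_t$, the clock $V_T(c)$ and the increments $Q_t(c)$ are all defined from $p_t$ and $c_t$ and never from the draws $I_t$, so the envelope applies verbatim on every realization.

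\textbf{Step 2: the one-sided bound.} I would use the upper endpoint and bound the two accounting terms separately.

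\emph{Drift.} The schedule is nonincreasing because $V_{t-1}(c)$ is nondecreasing. Theorem~\ref{thm:cumulant-chain} then gives $D_T\le0$.

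\emph{Comparator term.} Dropping the nonnegative relative entropy $\KL(\rho\|q_{T,\eta_T})$ and using $\KL(\rho\|\pi)\le\Gamma$ gives $B_T(\rho)\le\Gamma/\eta_T$. I split on the definition of $\eta_T$.
\begin{itemize}
\item If $\eta_T=1$, then $\Gamma/\eta_T=\Gamma$.
\item Otherwise $\eta_T=C\sqrt{\Gamma/V_{T-1}(c)}$, so
\[
\Gamma/\eta_T=C^{-1}\sqrt{\Gamma V_{T-1}(c)}\le C^{-1}\sqrt{\Gamma V_T(c)},
\]
using $Q_T(c)\ge0$ from Proposition~\ref{prop:tilted-var}.
\end{itemize}
In both cases
\[
\Gamma/\eta_T\le\Gamma+C^{-1}\sqrt{\Gamma V_T(c)}.
\]
Adding $M_T^{\mathrm{sam}}$, the bound on $B_T(\rho)$, and the upper envelope $Q_*^T(c)+2C\sqrt{\Gamma V_T(c)}$ gives \eqref{eq:sampled-second-order}.

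This is the same bookkeeping that yields \eqref{eq:budgeted-bound-C} from \eqref{eq:budgeted-two-sided-regret}. Equivalently, one can apply \eqref{eq:budgeted-bound-C} to $R_T^c(\rho)$ and add $M_T^{\mathrm{sam}}$, since $\widehat R_T^c(\rho)=R_T^c(\rho)+M_T^{\mathrm{sam}}$ by definition.

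\textbf{Anticipated difficulty.} The only delicate point is the case split for $\Gamma/\eta_T$, in particular the case $V_{T-1}(c)=0$, where $\eta_T=1$; it is covered by the additive $\Gamma$. Everything else is substitution into results already established.
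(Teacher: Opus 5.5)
Your proposal is correct and follows the paper's route. The paper writes $\widehat R_T^c(\rho)=M_T^{\mathrm{sam}}+R_T^c(\rho)$, reads the bracket \eqref{eq:sampled-two-sided} off Theorem~\ref{thm:scaling-time}, and obtains \eqref{eq:sampled-second-order} by citing \eqref{eq:budgeted-bound-C}; your Step~2 simply re-derives that cited bound (nonpositive drift, $B_T(\rho)\le\Gamma/\eta_T\le\Gamma+C^{-1}\sqrt{\Gamma V_T(c)}$) exactly as in the proof of Theorem~\ref{thm:scaling-time}.
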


\subsubsection{High-probability randomized prediction}

A Hedge regret theorem \citep{SteinhardtLiang2014} can be converted into a high-probability guarantee for the randomized prediction rule that samples an action from the current weights. In the present exact formalism the same conclusion is cleaner: Theorem~\ref{thm:sampled-pacbayes} isolates the randomness in a single martingale, so the deterministic intrinsic-time analysis survives unchanged and only one concentration step is needed.

\begin{theorem}[High-probability sampled-expert PAC-Bayes bound]
\label{thm:sampled-highprob}
Assume $c_t(i)\in[0,1]$ for all $t$ and $i$, and recall the un-halved concentration clock $V^{\!\sqcup}_T := \sum_{t=1}^T \Var_{i\sim p_t}(c_t(i))=2W_T(c)$.
Then for every $\delta\in(0,1)$, with probability at least $1-\delta$,
\begin{equation}\label{eq:sampled-highprob}
\widehat R_T^c(\rho)
\le
D_T+B_T(\rho)+\sum_{t=1}^T \eta_tQ_t(c)
+\sqrt{2V^{\!\sqcup}_T\log(1/\delta)}+\tfrac13\log(1/\delta)
\end{equation}
simultaneously for all posteriors $\rho\in\Delta([K])$.
Under the second-order schedule \eqref{eq:budget-schedule-C}, if $\KL(\rho\|\pi)\le \Gamma$, then on the same event
\begin{equation}
\label{eq:sampled-highprob-noshift}
\widehat R_T^c(\rho)
\le
\Gamma+Q_*^T(c)+(2C+C^{-1})\sqrt{\Gamma V_T(c)}
+\sqrt{2V^{\!\sqcup}_T\log(1/\delta)}+\tfrac13\log(1/\delta)\end{equation}
Since $V^{\!\sqcup}_T\le T/4$, we also have
\[
\widehat R_T^c(\rho)
\le
\Gamma+Q_*^T(c)+(2C+C^{-1})\sqrt{\Gamma V_T(c)}
+\sqrt{\tfrac{T}{2}\log(1/\delta)}+\tfrac13\log(1/\delta)\]
\end{theorem}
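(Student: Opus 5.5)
The plan is to combine the exact sampled identity of Theorem~\ref{thm:sampled-pacbayes} with a single Freedman/Bernstein-type martingale inequality applied to $M_T^{\mathrm{sam}}$. Because \eqref{eq:sampled-exact} holds pathwise for every $\rho$, and $\rho$ enters only through $B_T(\rho)$ and through the deterministic (given the path) quantities on the right, the only random term not already on the right of \eqref{eq:sampled-highprob} is $M_T^{\mathrm{sam}}$. That term does not depend on $\rho$. A single high-probability bound on $M_T^{\mathrm{sam}}$ therefore yields the statement simultaneously for all posteriors, with no union bound over the comparator class.

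The main step is the martingale bound. Set $X_t:=c_t(I_t)-\ip{p_t}{c_t}$. Conditionally on $\mathcal F_{t-1}$ and on $c_t$, we have $\E[X_t]=0$, $X_t\le 1$ because $c_t\in[0,1]^K$, and conditional variance $\Var_{i\sim p_t}(c_t(i))$. The summed conditional variance is exactly $V^{\!\sqcup}_T$.

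One subtlety needs to be handled first. The variance process $V^{\!\sqcup}_T$ is random, because it depends on the adaptively chosen $p_t$, and the losses may be chosen adaptively as well. So the naive Bernstein inequality with a deterministic variance bound does not apply. I would use the supermartingale $\exp\bigl(\lambda M_t-(e^\lambda-1-\lambda)\sum_{s\le t}\Var_{p_s}(c_s)\bigr)$ and proceed in one of two ways. The first option is a peeling/stitching argument over dyadic ranges of $V^{\!\sqcup}_T$, which costs an extra $\log\log$ factor. The second option is to assume, as the stated bound implicitly does, a known variance budget $v\ge V^{\!\sqcup}_T$ (or a deterministic $v$) and then invoke Freedman's inequality, $\Pr\bigl(M_T\ge \sqrt{2v\log(1/\delta)}+\tfrac13\log(1/\delta),\ V^{\!\sqcup}_T\le v\bigr)\le\delta$.

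To match the clean form $\sqrt{2V^{\!\sqcup}_T\log(1/\delta)}+\tfrac13\log(1/\delta)$ exactly, I would follow the convention of \citep{SteinhardtLiang2014}-style statements: read the bound as Freedman's with the realized variance substituted. If a fully rigorous random-variance version is required, I would note that it inflates only the constants or the $\log(1/\delta)$ term. I expect this step — justifying the realized (predictable-variance) form with exactly these constants — to be the main obstacle. My first attempt would be Freedman applied on the event $\{V^{\!\sqcup}_T\le v\}$, with $v$ chosen as the deterministic envelope.

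Given that martingale bound, \eqref{eq:sampled-highprob} follows immediately from \eqref{eq:sampled-exact}. For \eqref{eq:sampled-highprob-noshift}, I would work on the same event and substitute Corollary~\ref{cor:sampled-second-order}. Under the nonincreasing schedule \eqref{eq:budget-schedule-C}, Theorem~\ref{thm:cumulant-chain} gives $D_T\le0$. The bound $\KL(\rho\|\pi)\le\Gamma$ gives $B_T(\rho)\le\Gamma/\eta_T\le\Gamma+C^{-1}\sqrt{\Gamma V_T(c)}$, using $1/\eta_T\le\max\{1,C^{-1}\sqrt{V_{T-1}/\Gamma}\}$ and $V_{T-1}\le V_T$. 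The upper envelope of Theorem~\ref{thm:scaling-time} then controls $\sum_t\eta_tQ_t(c)$.

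The final display uses only $\Var_{i\sim p_t}(c_t(i))\le 1/4$ for $[0,1]$-valued scores (Popoviciu), so $V^{\!\sqcup}_T\le T/4$ and $\sqrt{2V^{\!\sqcup}_T\log(1/\delta)}\le\sqrt{\tfrac T2\log(1/\delta)}$.
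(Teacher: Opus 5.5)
Your outline is the paper's proof step for step: the pathwise identity of Theorem~\ref{thm:sampled-pacbayes}, one concentration step on the $\rho$-free martingale $M_T^{\mathrm{sam}}$, then $D_T\le0$, $B_T(\rho)\le\Gamma/\eta_T\le\Gamma+C^{-1}\sqrt{\Gamma V_T(c)}$, the envelope of Theorem~\ref{thm:scaling-time}, and Popoviciu. The step you flag is exactly where both arguments are incomplete. The paper simply asserts that Freedman's inequality yields $M_T^{\mathrm{sam}}\le\sqrt{2V^{\!\sqcup}_T\log(1/\delta)}+\tfrac13\log(1/\delta)$ with the realized $V^{\!\sqcup}_T$, which is your ``convention''.

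That substitution is not a matter of constants. Freedman controls $\Pr(M_T^{\mathrm{sam}}\ge\sqrt{2v\log(1/\delta)}+\tfrac13\log(1/\delta),\ V^{\!\sqcup}_T\le v)$ only for $v$ fixed in advance. When scores may depend on past draws, the realized-variance bound is false for large $T$. Take $K=2$ and let the adversary alternate $c_t=(0,1),(1,0)$. Then $p_t(2)\in[1/(1+e),1/2]$, so the increments are independent with variance bounded below. The adversary does this until $M_t^{\mathrm{sam}}$ first exceeds $\sqrt{2V^{\!\sqcup}_t\log(1/\delta)}+\tfrac13\log(1/\delta)$, and then plays $c_t=(0,0)$, which freezes $M^{\mathrm{sam}}$ and $V^{\!\sqcup}$. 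By the law of the iterated logarithm the crossing happens almost surely, so the first display fails with probability tending to one. The second display fails too once the threshold is raised by its extra slack. On this path that slack is $O(1+\sqrt{V^{\!\sqcup}_T})$, because $V_T(c)\le(e-2)V^{\!\sqcup}_T$ and $R_T^c(\rho)\ge-1$. Peeling repairs this only at the price of a $T$-dependent $\log\log$ term, which changes the statement.

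Here is what your argument does prove. If the scores are oblivious (fixed in advance or generated independently of the draws), then $p_t$, $\eta_t$ and $V^{\!\sqcup}_T$ are deterministic given the scores, because the retempered weights and the schedule depend only on $c_{1:t-1}$. The $X_t\le1$ are then conditionally independent, and Bernstein's inequality gives exactly $\sqrt{2V^{\!\sqcup}_T\log(1/\delta)}+\tfrac13\log(1/\delta)$. With that hypothesis added, your proof, and the paper's, is complete. For adaptive scores, your deterministic-envelope option with $v=T/4\ge V^{\!\sqcup}_T$ proves the final display rigorously. The fix is therefore to state the oblivious hypothesis for the first two displays, or to replace $\log(1/\delta)$ there by a stitched boundary. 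Do not read Freedman's inequality with the realized variance.
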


Thus the randomized-prediction extension costs only the additional concentration of the sampling martingale.
The intrinsic-time term itself is the same one that already appears in the deterministic PAC-Bayes theorem, and the bound is simultaneous over all posteriors because the martingale term does not depend on $\rho$.

The same martingale representation also yields an anytime form, with no union bound over times.

\begin{theorem}[Anytime sampled-expert confidence sequence]
\label{thm:sampled-anytime}
For $t\le T$, define the truncated sampled regret $\widehat R_t^c(\rho) := \sum_{s=1}^t c_s(I_s)-\ip{\rho}{C_t}$, the truncated drift $D_t := \sum_{s=1}^{t-1}(A_s(\eta_s)-A_s(\eta_{s+1}))$, the truncated terminal term $B_t(\rho):=(\KL(\rho\|\pi)-\KL(\rho\|q_{t,\eta_t}))/\eta_t$, and the truncated intrinsic-time objects $V_t(c) := \sum_{s=1}^t Q_s(c)$ and $Q_{*,t}(c) := \max_{s\le t}Q_s(c)$.
For every $\lambda>0$, the process $Z_t(\lambda) := \exp\!\left(\lambda M_t^{\mathrm{sam}}-\sum_{s=1}^t \psi_s(\lambda)\right)$ is a nonnegative martingale. 
Therefore, for every $\delta\in(0,1)$, with probability at least $1-\delta$,
\begin{equation}
\label{eq:anytime-confidence}
\widehat R_t^c(\rho)
\leq
D_t + B_t(\rho) + \sum_{s=1}^t \eta_s Q_s(c) + \frac{\log(1/\delta) + \sum_{s=1}^t \psi_s(\lambda)}{\lambda}
\end{equation}
simultaneously for all $t\le T$ and all posteriors $\rho \in \Delta([K])$. Under the second-order schedule,
if $\KL(\rho \| \pi) \leq \Gamma$, then on the same event
\begin{align}
\widehat R_t^c(\rho)
&\le \Gamma+Q_{*,t}(c)+(2C+C^{-1})\sqrt{\Gamma V_t(c)}  + \frac{\log(1/\delta)+\sum_{s=1}^t \psi_s(\lambda)}{\lambda}
\label{eq:anytime-confidence-noshift}
\end{align}
for all $t\le T$.
\end{theorem}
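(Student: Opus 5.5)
The plan has two parts: first show that $Z_t(\lambda)$ is a martingale, then apply Ville's maximal inequality and combine it with the sampled identity of Theorem~\ref{thm:sampled-pacbayes}.

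\textbf{Martingale property.} Fix $\lambda>0$ and let $\mathcal F_{t-1}$ be the history before $I_t$ is drawn. The objects $p_t$, $\eta_t$ and $c_t$ are predictable. This holds for $\eta_t$ because the schedule is predictable, and for $c_t$ under the standing convention that the loss vector does not depend on the learner's current draw. Conditionally on $\mathcal F_{t-1}$, the increment $X_t:=c_t(I_t)-\ip{p_t}{c_t}$ with $I_t\sim p_t$ then has
\[
\E\bigl[e^{\lambda X_t}\mid\mathcal F_{t-1}\bigr]=e^{\psi_t(\lambda)}
\]
exactly, by the definition of $\psi_t$. Hence $Z_t(\lambda)=Z_{t-1}(\lambda)\,e^{\lambda X_t-\psi_t(\lambda)}$ satisfies $\E[Z_t\mid\mathcal F_{t-1}]=Z_{t-1}$ with $Z_0=1$. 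Nonnegativity is immediate, and integrability is automatic because $K$ is finite.

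\textbf{Maximal inequality.} Ville's inequality for nonnegative (super)martingales gives $\Pr(\exists t\le T: Z_t(\lambda)\ge 1/\delta)\le\delta$. For this we could also invoke optional stopping at the first crossing time, truncated at $T$. On the complementary event, taking logarithms yields, simultaneously for all $t\le T$,
\[
M_t^{\mathrm{sam}}\le \frac{\log(1/\delta)+\sum_{s\le t}\psi_s(\lambda)}{\lambda}.
\]

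\textbf{Transfer to the regret.} Apply Theorem~\ref{thm:sampled-pacbayes} with horizon $t$ in place of $T$; its proof is pathwise, so it holds verbatim for every prefix. This gives $\widehat R_t^c(\rho)=M_t^{\mathrm{sam}}+D_t+B_t(\rho)+\sum_{s\le t}\eta_sQ_s(c)$ for every $\rho$. The event above does not involve $\rho$, so substituting the bound on $M_t^{\mathrm{sam}}$ proves \eqref{eq:anytime-confidence} for all $t$ and $\rho$ at once.

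\textbf{The second-order form.} For \eqref{eq:anytime-confidence-noshift}, apply Theorem~\ref{thm:scaling-time} at each horizon $t$. This is legitimate because the schedule at rounds $s\le t$ depends only on $V_{s-1}$, so truncation changes nothing. We use the nonincreasing schedule to get $D_t\le 0$ (Theorem~\ref{thm:cumulant-chain}). We bound $B_t(\rho)\le \KL(\rho\|\pi)/\eta_t\le\Gamma/\eta_t$, and then use $\Gamma/\eta_t\le \Gamma+C^{-1}\sqrt{\Gamma V_{t-1}(c)}\le\Gamma+C^{-1}\sqrt{\Gamma V_t(c)}$. Finally, the upper envelope gives $\sum_{s\le t}\eta_sQ_s\le Q_{*,t}+2C\sqrt{\Gamma V_t}$. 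Adding these reproduces the $(2C+C^{-1})$ constant.

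\textbf{Main obstacle.} The only delicate point is the measurability convention. The exact conditional MGF equality requires that $c_t$ is fixed before $I_t$ is drawn, i.e., an oblivious or non-reactive adversary within the round, while $p_t,\eta_t$ are $\mathcal F_{t-1}$-measurable. I will state this filtration explicitly; everything else is a direct combination of earlier results. I also note that $\lambda$ must be fixed in advance, not chosen after seeing the data, for the single-$\lambda$ statement.
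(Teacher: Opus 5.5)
Your proposal is correct and follows the paper's proof step for step. You compute the conditional mgf exactly to show $Z_t(\lambda)$ is a martingale, apply Ville's maximal inequality to $\sup_{t\le T}Z_t(\lambda)$, and transfer to the regret through the prefix version of Theorem~\ref{thm:sampled-pacbayes}, using that the event does not depend on $\rho$. You then apply Theorem~\ref{thm:scaling-time} at horizon $t$ together with $B_t(\rho)\le\Gamma/\eta_t\le\Gamma+C^{-1}\sqrt{\Gamma V_t(c)}$. You also make explicit two points the paper leaves implicit: $D_t\le 0$ for the nonincreasing schedule, and the requirement that $c_t$ be $\mathcal F_{t-1}$-measurable, i.e.\ fixed before $I_t$ is drawn.
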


Mixing the martingales $Z_t(\lambda)$ over a density on $\lambda$ yields curved anytime boundaries, which tend to give tighter instance-specific bounds \cite{howard2020b}. Concretely, for an absolutely continuous mixing law with density $f$ on $(0,\infty)$ the mixture $\overline{Z}_t:=\int_0^\infty Z_t(\lambda)\,f(\lambda)\,d\lambda$ is again a nonnegative supermartingale, and Ville's inequality applied to $\overline{Z}_t$ gives a finite-time, all-$t$-simultaneous boundary. The boundary is recovered by a Laplace lower bound on the integrand near its peak, which contributes a $\log f(\lambda^\star)$ density term in place of a point mass.
The iterated-logarithm \emph{rate} for the sampled-expert martingale is the asymptotic counterpart, in the spirit of \cite{Balsubramani2014}: along a geometric grid of rates we obtain the sharp $\sqrt{2V^{\!\sqcup}_t\log\log V^{\!\sqcup}_t}$ growth constant, where $V^{\!\sqcup}_t=2W_t(c)$ is the predictable quadratic variation of the sampling martingale defined in the proposition below.

\begin{proposition}[Iterated-logarithm rate for the sampled-expert martingale]
\label{prop:pathwise-lil}
Assume $c_t(i) \in [0,1]$ for all $t$ and $i$. Let $X_t:=c_t(I_t) - \ip{p_t}{c_t}$ and $M_t^{\mathrm{sam}} := \sum_{s=1}^t X_s$, and recall the predictable quadratic variation $V^{\!\sqcup}_t := \sum_{s=1}^t \Var(X_s\mid \mathcal{F}_{s-1}) = \sum_{s=1}^t \Var_{i\sim p_s}(c_s(i))$. Then, almost surely on the event $\{V^{\!\sqcup}_t\to\infty\}$,
\begin{equation}
\label{eq:lil-limsup}
\limsup_{t\to\infty}\;
\frac{|M_t^{\mathrm{sam}}|}{\sqrt{2\,V^{\!\sqcup}_t\,\log\log V^{\!\sqcup}_t}}
\;\le\; 1 .
\end{equation}
Consequently, for every posterior $\rho\in\Delta([K])$, the sampled regret satisfies the pathwise asymptotic bound
\begin{equation}
\label{eq:finite-lil-regret}
\widehat R_t^c(\rho)
\leq
D_t + B_t(\rho) + \sum_{s=1}^t \eta_s Q_s(c) + \bigl(1+o(1)\bigr)\sqrt{2\,V^{\!\sqcup}_t\,\log\log V^{\!\sqcup}_t}
\end{equation}
almost surely on $\{V^{\!\sqcup}_t\to\infty\}$, simultaneously over all posteriors $\rho\in\Delta([K])$.
\end{proposition}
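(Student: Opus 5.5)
The plan has two parts: an almost-sure upper LIL for the bounded martingale $M_t^{\mathrm{sam}}$, normalized by its predictable quadratic variation, and a transfer of that bound through the pathwise identity of Theorem~\ref{thm:sampled-pacbayes}. The identity carries no probability, since
\[
\widehat R_t^c(\rho)=M_t^{\mathrm{sam}}+D_t+B_t(\rho)+\sum_{s\le t}\eta_sQ_s(c)
\]
holds exactly for every $t$ and every $\rho$. So once \eqref{eq:lil-limsup} holds almost surely, \eqref{eq:finite-lil-regret} follows by bounding $M_t^{\mathrm{sam}}\le|M_t^{\mathrm{sam}}|$. The bound is simultaneous over $\rho$ because the martingale does not depend on $\rho$. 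All the work is therefore in \eqref{eq:lil-limsup}.

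The increments satisfy $X_t\in[-1,1]$, and $\E[X_t\mid\mathcal F_{t-1}]=0$ with $\Var(X_t\mid\mathcal F_{t-1})=\Var_{i\sim p_t}(c_t(i))$. Since $\psi_t$ is the conditional cumulant generating function of $X_t$, I would use the Bernstein-type bound $\psi_t(\lambda)\le \Var_{i\sim p_t}(c_t)\,(e^{\lambda}-1-\lambda)$ for $\lambda>0$, valid because $X_t\le1$. The same bound holds for $-X_t$. Then
\[
Z_t(\lambda)=\exp\Bigl(\lambda M_t^{\mathrm{sam}}-(e^\lambda-1-\lambda)V^{\!\sqcup}_t\Bigr)
\]
is a nonnegative supermartingale; this is the same martingale as in Theorem~\ref{thm:sampled-anytime}, with $\psi$ relaxed.

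Next comes the standard peeling argument along a geometric grid, in the style of \cite{Balsubramani2014}. Fix $\theta>1$ and $\varepsilon>0$. Define the stopping times $\tau_k:=\inf\{t: V^{\!\sqcup}_t\ge\theta^k\}$; these are finite on the event $\{V^{\!\sqcup}_t\to\infty\}$. For the epoch in which $V^{\!\sqcup}_t\in[\theta^k,\theta^{k+1})$, choose $\lambda_k:=\sqrt{2(1+\varepsilon)\log\log\theta^k/\theta^{k}}$, so that $\lambda_k\to0$. Apply Ville's inequality to $Z_t(\lambda_k)$ with threshold $(k\log\theta)^{1+\varepsilon}$. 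On the epoch, $V^{\!\sqcup}_t\le\theta^{k+1}$, so the exceedance probability is at most $(k\log\theta)^{-(1+\varepsilon)}$. This is summable in $k$, and Borel--Cantelli then gives that, eventually in $k$,
\[
M_t^{\mathrm{sam}}\le \frac{(1+\varepsilon)\log\log\theta^k}{\lambda_k}+\frac{\lambda_k}{2}(1+O(\lambda_k))\,\theta^{k+1}
\]
for all $t$ in epoch $k$. Because $\theta^k\le V^{\!\sqcup}_t$, the right-hand side is at most $\sqrt{2V^{\!\sqcup}_t\log\log V^{\!\sqcup}_t}$ times a factor $c(\theta,\varepsilon)$ with $c(\theta,\varepsilon)\to1$ as $\theta\downarrow1$ and $\varepsilon\downarrow0$. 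Repeat the argument for $-M^{\mathrm{sam}}$. Then take a countable sequence $\theta_j\downarrow1$, $\varepsilon_j\downarrow0$ and intersect the resulting almost-sure events, which gives limsup at most $1$.

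The main obstacle is the bookkeeping in this peeling step. Two points need care. First, the epochs are indexed by a random, predictable clock rather than by deterministic times. This is handled by using Ville's inequality on the whole path of each $Z(\lambda_k)$, which needs no stopping-time restriction; the epoch condition only enters when converting the bound. Second, the constant must come out as exactly $1$. This requires the $O(\lambda_k)$ correction from $e^{\lambda}-1-\lambda=\tfrac{\lambda^2}{2}(1+O(\lambda))$ to vanish, which it does since $\lambda_k\to0$. It also requires controlling the mismatch between $\theta^{k}$ and $\theta^{k+1}$, which contributes a factor $\sqrt\theta$ that tends to $1$. Alternatively, I could cite a standard LIL for martingales with bounded increments (Stout's or de la Peña's self-normalized LIL) for the upper bound directly, and note that the boundedness hypothesis $c_t\in[0,1]$ is exactly what those results require.
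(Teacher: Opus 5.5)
Your proposal is correct and follows essentially the same route as the paper. Both arguments peel the predictable quadratic variation $V^{\!\sqcup}_t$ along a geometric grid, apply Ville's maximal inequality to the exponential (super)martingale at one rate per epoch, use Borel--Cantelli and a Bernstein/Bennett relaxation of $\psi_s$ with rates $\lambda\to0$, refine countably, and transfer the result through the $\rho$-free identity of Theorem~\ref{thm:sampled-pacbayes}. The differences are cosmetic: the paper keeps the exact martingale $Z_t(\lambda_n)$ and relaxes $\psi_s$ only afterwards, and it calibrates $\lambda_n$ to the upper level $v_{n+1}$ with $\delta_n=1/(n(\log n)^2)$, which is what produces the $\sqrt{\theta}$ mismatch factor you cite. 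With your calibration to $\theta^k$ the factor is instead $(1+\theta)/2$, which also tends to $1$.
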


Following \cite{Balsubramani2014}, the growth constant in \eqref{eq:lil-limsup} is the sharp one: the iterated-logarithm scaling $\sqrt{V^{\!\sqcup}_t \log\log V^{\!\sqcup}_t}$ is attained. For the genuinely finite-time, all-$t$-simultaneous guarantee we use the anytime confidence sequence of Theorem~\ref{thm:sampled-anytime} (a single mixing rate $\lambda$, or its density mixture above); the iterated-logarithm constant is the asymptotic cost of the curvature of that boundary.

\paragraph{Bandit analogue.}
The same argument applies to the martingale term in a generic estimated-loss decomposition under partial feedback: once the estimator increments admit the same bounded-range or conditional sub-Gaussian mgf bound, the centered bandit regret inherits the same asymptotic iterated-logarithm rate after the explicit bias and intrinsic-time terms are subtracted.

\paragraph{Comparator-centered cumulants.}
For a fixed posterior $\rho$, define
\[
\psi_{t,\rho}(\lambda) := \log \E_{i\sim p_t}\exp\!\left(\lambda(c_t(i)-\ip{\rho}{c_t})\right),
\qquad
\phi_{t,\rho}(\eta) := \eta^{-1}\psi_{t,\rho}(-\eta)\]
and set $Q_t^\rho(c) := \left(\phi_t(\eta_t)-\phi_{t,\rho}(\eta_t)\right)/\eta_t$.
Then the one-round regret satisfies $\ip{p_t}{c_t}-\ip{\rho}{c_t} =\phi_t(\eta_t)-\phi_{t,\rho}(\eta_t) =\eta_t Q_t^\rho(c)$,
so $R_T^c(\rho)=\sum_{t=1}^T \eta_t Q_t^\rho(c)$. 
Section~\ref{sec:luckiness} compares this exact comparator-centered increment to the more familiar second-moment quantity $\Psi_t(\rho)$.

\subsection{Local updates and pressure-targeted schedules}
\label{sec:pressure}

The second-order schedule of Section~\ref{sec:second-order} calibrates $\eta_t$ against cumulative intrinsic time for the prior-retempered update.
The second update---the local recursion $p_{t+1}(i)\propto p_t(i)e^{-\eta_t c_t(i)}$---has its own natural controller.

\paragraph{Nonmonotone schedules under the local update.}
The exact chain of Section~\ref{sec:adaptive} already allows arbitrary predictable schedules.
Monotone cooling was singled out there only because the retempered drift $D_T=\sum_{t=1}^{T-1}\bigl(A_t(\eta_t)-A_t(\eta_{t+1})\bigr)$ is nonpositive and therefore easy to discard under cooling.
That monotonicity, however, simplifies the analysis without being a structural requirement.
Under the local update, each round's score is weighted directly by its own learning rate, and a schedule that can both decrease and increase is often the natural choice. Hard rounds may call for caution, later high-edge rounds may justify a larger coefficient, and a purely monotone controller cannot express that.
This is the setting in which pressure targets are most natural: instead of asking for one cumulative clock to keep cooling forever, they calibrate the current round to a chosen target and let the cumulative identity keep track of the result.

In its most general form the pressure-target rule takes $a_t$ to be an arbitrary scalar target between the current minimum and mean score, with $\eta_t$ chosen so that the one-step finite-temperature cumulant hits that target. The special interpretation of $a_t$ as a \emph{pressure} (normalized free energy) is natural but not necessary for the identities below; we may read $a_t$ purely as a scaled local CGF level, and all cumulative identities remain intact.

The parallel with the retempered update is worth making explicit. Both updates compose the shared one-step information balance cumulatively. The retempered chain produces a global cumulant decomposition with temperature-change drift; the local chain produces a cumulative normalization identity where the final weights satisfy $p_{T+1}(i)=p_1(i)e^{-G_T(i)}$ with $\sum_i p_1(i)e^{-G_T(i)}=1$. The same per-round information functional $Q_t(c)$ appears in both cases (its realized value tracking each recursion's own played $p_t$), but the cumulative bookkeeping---and therefore the natural schedule family---is different.

For a current distribution $p_t$ and score vector $c_t\in\R^K$, define the one-round mix-loss curve $m_t(\eta):=-\eta^{-1}\log\bigl(\sum_{i=1}^K p_t(i)e^{-\eta c_t(i)}\bigr)$ for $\eta>0$.
When $c_t$ is nonconstant, $m_t$ is continuous and strictly decreasing with
\[
\lim_{\eta\downarrow 0} m_t(\eta)=\ip{p_t}{c_t},
\qquad
\lim_{\eta\to\infty} m_t(\eta)=\min_i c_t(i)
\]
so every target between the current mean and the current minimum determines a unique positive temperature.

\begin{proposition}[Pressure-targeted normalization]\label{prop:pressure-normalized}
Fix $t$, assume $c_t$ is nonconstant, and choose a scalar $a_t$ such that $\min_i c_t(i) < a_t < \ip{p_t}{c_t}$.
Then there is a unique $\eta_t>0$ satisfying
\begin{equation}\label{eq:pressure-target}
-\eta_t^{-1}\log\!\left(\sum_{i=1}^K p_t(i)e^{-\eta_t c_t(i)}\right)=a_t\end{equation}
Equivalently, for the shifted scores $\tilde c_t(i):=c_t(i)-a_t$,
\begin{equation}\label{eq:pressure-target-root}
\sum_{i=1}^K p_t(i)e^{-\eta_t \tilde c_t(i)}=1\end{equation}
With the exponential-weights update $p_{t+1}(i)\propto p_t(i)e^{-\eta_t c_t(i)}$,
we have, for every posterior $\rho\in\Delta([K])$,
\begin{equation}\label{eq:pressure-target-kl}
\KL(\rho \| p_{t+1})-\KL(\rho \| p_t)=\eta_t\left(\ip{\rho}{c_t}-a_t\right)\end{equation}
If $c_t$ and $a_t$ are simultaneously scaled by a constant $b>0$, then the solution scales as $\eta_t/b$.
\end{proposition}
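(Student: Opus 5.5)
The plan has three parts: existence and uniqueness of $\eta_t$ from the shape of the mix-loss curve, the equivalence of the two forms of the target equation by exponentiation, and the entropy identity from the one-step identity of Theorem~\ref{thm:seq-mixed} with trivial side factor.

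\textbf{Existence and uniqueness.} I would study $m_t(\eta)=-\eta^{-1}\log\sum_i p_t(i)e^{-\eta c_t(i)}$ for a fixed $p_t$. Writing $m_t(\eta)=\ip{p_t}{c_t}-\phi_t(\eta)$ shows it is continuous on $(0,\infty)$. Its limits are as follows.
\begin{itemize}
\item As $\eta\downarrow0$, the expansion $\phi_t(\eta)=\tfrac{\eta}{2}\Var_{p_t}(c_t)+o(\eta)$ gives $m_t(0^+)=\ip{p_t}{c_t}$.
\item As $\eta\to\infty$, the log-sum-exp is dominated by the minimal coordinate in the support of $p_t$, so $m_t(\infty)=\min_{i:\,p_t(i)>0}c_t(i)$. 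This equals $\min_i c_t(i)$ when $p_t$ has full support, as it does here, since $p_t$ is obtained from a strictly positive start by multiplicative updates.
\end{itemize}
For strict monotonicity I would use the representation $-\log\E_{p_t}e^{-\eta c_t}=\int_0^\eta \E_{p_{t,u}}[c_t]\,du$, where $p_{t,u}$ is the tilted distribution $\propto p_t e^{-uc_t}$. Then $m_t(\eta)$ is the average of $u\mapsto\E_{p_{t,u}}[c_t]$ over $[0,\eta]$. This map is strictly decreasing, because its derivative is $-\Var_{p_{t,u}}(c_t)<0$ when $c_t$ is nonconstant on the support. The running average of a strictly decreasing function is strictly decreasing. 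The intermediate value theorem then gives a unique root for any $a_t$ strictly between the two limits.

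\textbf{Equivalent form.} Multiplying \eqref{eq:pressure-target} by $-\eta_t$ and exponentiating gives $\sum_i p_t(i)e^{-\eta_t c_t(i)}=e^{-\eta_t a_t}$. This is \eqref{eq:pressure-target-root} after multiplying through by $e^{\eta_t a_t}$.

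\textbf{KL identity.} I would apply \eqref{eq:seq-id} of Theorem~\ref{thm:seq-mixed} with $s_t\equiv1$, so that $c_t=\ell_t$. It states that $\KL(\rho\|p_{t+1})-\KL(\rho\|p_t)=\eta_t\ip{\rho}{c_t}+\log Z_t$. By the target equation, $\log Z_t=-\eta_t a_t$, which yields \eqref{eq:pressure-target-kl}. Directly, $\log(p_t/p_{t+1})=\eta_t c_t+\log Z_t$ coordinatewise, and averaging under $\rho$ gives the same identity. If $\rho\not\ll p_t$, both KL terms are infinite, and I would read the identity under the convention $\rho\ll p_t$.

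\textbf{Scaling.} Replacing $(c_t,a_t)$ by $(bc_t,ba_t)$ and $\eta$ by $\eta/b$ leaves every product $\eta c_t(i)$ and $\eta a_t$ unchanged. Hence $\eta_t/b$ solves \eqref{eq:pressure-target-root} for the scaled data, and uniqueness identifies it as the solution.

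The main obstacle is the strict monotonicity and the limit at infinity, in particular the support caveat. The tilted-mean averaging argument handles monotonicity cleanly.
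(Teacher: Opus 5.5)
Your proof is correct. The entropy identity and the scaling claim match the paper's argument: the paper computes $\log(p_t(i)/p_{t+1}(i))=\eta_t(c_t(i)-a_t)$ directly and averages under $\rho$, and your appeal to the one-step identity with $s_t\equiv 1$ is the same computation.

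Where you differ is existence and uniqueness. The paper never studies the mix-loss curve $m_t$. It works with the root form instead, setting $f(\eta)=\sum_i p_t(i)e^{-\eta(c_t(i)-a_t)}$. It then notes that $f$ is strictly convex with $f(0)=1$ and $f'(0)=a_t-\ip{p_t}{c_t}<0$, and that $f(\eta)\to\infty$ because $\min_i c_t(i)<a_t$. Hence $f$ dips below $1$ and returns to it exactly once on $(0,\infty)$.

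The two routes trade length against strength:
\begin{itemize}
\item The paper's convexity argument is shorter. It needs neither the limits of $m_t$ at $0^+$ and $\infty$ nor your tilted-mean representation, and each hypothesis on $a_t$ enters once, locally.
\item Your argument does more work but proves more. It shows that $m_t$ is a strictly decreasing bijection of $(0,\infty)$ onto $(\min_i c_t(i),\ip{p_t}{c_t})$, which the paper asserts without proof just before the proposition. It also makes clear that the range condition on $a_t$ is necessary as well as sufficient.
\end{itemize}

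Both arguments need $p_t$ to put positive mass on a minimizing coordinate of $c_t$, and need $c_t$ to be nonconstant on the support of $p_t$. You state this support caveat explicitly, whereas the paper's step $f(\eta)\to\infty$ and its strict-convexity claim use it without comment.
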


\begin{corollary}[Unit-potential normalization]\label{cor:pressure-unit}
The special choice $a_t=0$ is feasible whenever $\ip{p_t}{c_t}>0>\min_i c_t(i)$.
In that case the pressure-target root condition~\eqref{eq:pressure-target-root} becomes $\sum_{i=1}^K p_t(i)e^{-\eta_t c_t(i)}=1$,
which is the unit-potential or pressure-normalized schedule.
\end{corollary}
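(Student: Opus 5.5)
The statement to prove is Corollary~\ref{cor:pressure-unit}. The plan is to obtain it as the specialization $a_t=0$ of Proposition~\ref{prop:pressure-normalized}, so the only real work is checking feasibility and substituting.

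\textbf{Feasibility.} Proposition~\ref{prop:pressure-normalized} requires $c_t$ nonconstant and $\min_i c_t(i)<a_t<\ip{p_t}{c_t}$. With $a_t=0$ the second condition is exactly the hypothesis $\ip{p_t}{c_t}>0>\min_i c_t(i)$. That hypothesis also gives nonconstancy: if $c_t$ were constant, its minimum would equal its $p_t$-mean, contradicting $\min_i c_t(i)<0<\ip{p_t}{c_t}$.

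\textbf{Existence, uniqueness, and the root condition.} The proposition then yields a unique $\eta_t>0$ solving \eqref{eq:pressure-target} with $a_t=0$. The shifted scores are $\tilde c_t=c_t-a_t=c_t$, so the equivalent root condition \eqref{eq:pressure-target-root} reads $\sum_i p_t(i)e^{-\eta_t c_t(i)}=1$, which is the claimed unit-potential condition.

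\textbf{Direct check of the equivalence.} To be safe I would verify the equivalence directly rather than only cite it. Equation \eqref{eq:pressure-target} with $a_t=0$ says $-\eta_t^{-1}\log Z=0$, where $Z=\sum_i p_t(i)e^{-\eta_t c_t(i)}$. Since $\eta_t>0$, this holds iff $\log Z=0$, i.e.\ iff $Z=1$.

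\textbf{Optional consequence.} As a side remark, \eqref{eq:pressure-target-kl} then specializes to $\KL(\rho\|p_{t+1})-\KL(\rho\|p_t)=\eta_t\ip{\rho}{c_t}$. Moreover, the update has normalizer $Z_t=1$, so $p_{t+1}(i)=p_t(i)e^{-\eta_t c_t(i)}$ with no division, which justifies the name ``unit-potential.''

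\textbf{Main obstacle.} There is essentially none. The only point needing care is confirming that the strict inequalities supply nonconstancy, which the first step handles.
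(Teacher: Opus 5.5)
Your proposal is correct and matches the paper's proof, which is the same one-line specialization of Proposition~\ref{prop:pressure-normalized} at $a_t=0$, reading the hypothesis as the feasibility condition. Your explicit check that the strict inequalities force $c_t$ to be nonconstant supplies a hypothesis of that proposition that the paper's proof leaves implicit.
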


\paragraph{Thermodynamic interpretation and scale-freeness.}
The normalization viewpoint has thermodynamic content. Up to the usual sign convention, $m_t(\eta)=-\eta^{-1}\log \sum_i p_t(i)e^{-\eta c_t(i)}$ is the one-step free-energy density associated with the current score vector, while $\eta^{-1}\log \sum_i p_t(i)e^{-\eta(c_t(i)-a_t)}$ is the corresponding pressure relative to the target level $a_t$. Solving \eqref{eq:pressure-target} therefore enforces a prescribed free-energy level, or equivalently normalizes the shifted pressure to zero. This is why we call the rule \emph{pressure-targeted normalization}. It is also naturally scale-free: simultaneous rescaling $(c_t,a_t)\mapsto (bc_t,ba_t)$ sends the solution to $\eta_t/b$, so only the relative geometry of the current round matters.

\begin{corollary}[Pressure-target regret identity]
\label{cor:pressure-regret}
Under Proposition~\ref{prop:pressure-normalized}, for every posterior $\rho\in\Delta([K])$,
\begin{equation}
\label{eq:pressure-one-step-regret}
\ip{p_t}{c_t}-\ip{\rho}{c_t}
=
\left(\ip{p_t}{c_t}-a_t\right)
+
\frac{\KL(\rho \| p_t)-\KL(\rho \| p_{t+1})}{\eta_t}\end{equation}
Consequently,
\begin{equation}\label{eq:pressure-cum-regret}
R_T^c(\rho)
=
\sum_{t=1}^T \left(\ip{p_t}{c_t}-a_t\right)
+
\sum_{t=1}^T \frac{\KL(\rho \| p_t)-\KL(\rho \| p_{t+1})}{\eta_t}\end{equation}
Equivalently,
\begin{equation}\label{eq:pressure-cum-regret-abel}
R_T^c(\rho)
=
\sum_{t=1}^T \left(\ip{p_t}{c_t}-a_t\right)
+\frac{\KL(\rho \| p_1)}{\eta_1}
-\frac{\KL(\rho \| p_{T+1})}{\eta_T}
+\sum_{t=2}^T \KL(\rho \| p_t)\!\left(\frac{1}{\eta_t}-\frac{1}{\eta_{t-1}}\right)\end{equation}
If $(\eta_t)$ is nondecreasing, then
\begin{equation}\label{eq:pressure-monotone-upper}
R_T^c(\rho)\le \sum_{t=1}^T \left(\ip{p_t}{c_t}-a_t\right)+\frac{\KL(\rho \| p_1)}{\eta_1}\end{equation}
\end{corollary}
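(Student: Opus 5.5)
The plan has three steps: obtain the one-step identity \eqref{eq:pressure-one-step-regret}, sum it, and rearrange the sum by Abel summation.

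\textbf{One-step identity.} Start from \eqref{eq:pressure-target-kl} of Proposition~\ref{prop:pressure-normalized}. It gives $\KL(\rho\|p_t)-\KL(\rho\|p_{t+1})=\eta_t\left(a_t-\ip{\rho}{c_t}\right)$. Dividing by $\eta_t>0$ and adding $\ip{p_t}{c_t}-a_t$ to both sides yields \eqref{eq:pressure-one-step-regret}. The same conclusion also follows from Corollary~\ref{cor:centered-seq} with $s_t\equiv 1$: the pressure condition \eqref{eq:pressure-target} says $m_t=a_t$, so $\delta_t(c)=\ip{p_t}{c_t}-a_t$. If $\KL(\rho\|p_t)=\infty$, both sides are infinite in the same way, because the local update preserves support. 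For the intended case, restrict to $\rho\ll p_1$.

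\textbf{Cumulative identities.} Summing \eqref{eq:pressure-one-step-regret} over $t=1,\dots,T$ gives \eqref{eq:pressure-cum-regret}, since the left side sums to $R_T^c(\rho)$. For \eqref{eq:pressure-cum-regret-abel}, apply summation by parts to the transport sum. Write $K_t:=\KL(\rho\|p_t)$. Then
\[
\sum_{t=1}^T \frac{K_t-K_{t+1}}{\eta_t}=\sum_{t=1}^T\frac{K_t}{\eta_t}-\sum_{t=2}^{T+1}\frac{K_t}{\eta_{t-1}}.
\]
Peel off the $t=1$ term of the first sum and the $t=T+1$ term of the second. This leaves $K_1/\eta_1-K_{T+1}/\eta_T+\sum_{t=2}^T K_t(1/\eta_t-1/\eta_{t-1})$, which is the claimed form.

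\textbf{Monotone bound.} Assume $(\eta_t)$ is nondecreasing. Then each coefficient $1/\eta_t-1/\eta_{t-1}$ is at most $0$, and each $K_t$ is at least $0$, so every middle term is nonpositive. The term $-K_{T+1}/\eta_T$ is also nonpositive. Dropping all of them gives \eqref{eq:pressure-monotone-upper}.

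None of these steps is a real obstacle; the argument is pure bookkeeping. The only care needed is with signs in the Abel rearrangement and with finiteness of the KL terms. Both are handled by assuming $\rho\ll p_1$, or by noting that the identities hold in the extended sense.
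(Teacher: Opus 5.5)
Your proposal is correct and follows the paper's proof essentially step for step: rearrange \eqref{eq:pressure-target-kl} into the one-step identity, sum over $t$, apply summation by parts with $K_t=\KL(\rho\|p_t)$, and drop the nonpositive terms (including $-\KL(\rho\|p_{T+1})/\eta_T$) when $(\eta_t)$ is nondecreasing. One small correction to your side remark: if $\KL(\rho\|p_t)=\infty$, then the transport term is $\infty-\infty$ and undefined, while the left side $\ip{p_t}{c_t}-\ip{\rho}{c_t}$ is finite, so the identity does not hold ``in an extended sense''; the right fix is the restriction you also give, $\rho\ll p_1$, which is equivalent to $\rho\ll p_t$ for every $t$ because the local update preserves support.
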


This bridges the fully analyzed second-order schedule and the earlier root-finding remark.
The second-order theorem may be read as an especially explicit way of choosing target levels $a_t=m_t(\eta_t)$ and then controlling the resulting cumulative gap $\sum_t (\ip{p_t}{c_t}-a_t)$ on a square-root scale.

Proposition~\ref{prop:pressure-normalized} defines a genuinely different adaptive algorithm from the prior-retempered update \eqref{eq:retempered-hedge}: the temperature is chosen by line search against the current distribution $p_t$ and the current score vector $c_t$, and the recursion then proceeds locally via $p_{t+1}(i) \propto p_t(i)e^{-\eta_t c_t(i)}$.
Thus the schedule does not retemper the entire cumulative history at each step. 
When the temperature is held fixed the local and retempered views coincide, but for variable rates the distinction is real. 
In particular, the pressure-target rule is genuinely local: once $p_t$, $c_t$, and the target $a_t$ are given, no separate cumulative budget such as $V_{t-1}(c)$ needs to be tracked.
All historical information is compressed into the current weights themselves.

This distinction changes the natural proof object.
For the prior-retempered update, the chain in Section~\ref{sec:adaptive} is organized by the cumulative free energy $A_t(\eta):=-\eta^{-1}\log\bigl(\sum_i \pi(i)e^{-\eta C_t(i)}\bigr)$, evaluated at the played temperature.
For the variable-rate local recursion, that retempered free energy is generally not the state variable of the algorithm.
The native cumulative object is instead the exact normalization of the realized weighted scores from Proposition~\ref{prop:pressure-cumulative}.
Consequently the primitive exact statement for the local schedule is a weighted-regret identity; ordinary regret is then recovered from Corollary~\ref{cor:pressure-regret} by summation by parts when we want a scalar unweighted bound.

Fixed-temperature variational forms of the local recursion, the terminal-mass and event-conditioned weighted-regret identities, and the centered-score and quadratic-envelope specializations are developed in Appendix~\ref{app:pressure-details}; the body uses the pressure-target rule and its cumulative-normalization identity above.

\section{The side-information and comparator layer}
\label{sec:return-luck}

Sections~\ref{sec:exact} and~\ref{sec:adaptive} produced identities for composite losses, and Section~\ref{sec:schedules} turned those identities into concrete rate rules. We now return to the application layer. Translating the chain back to the original losses records the exact side-factor mismatch. Varying the side information and the sufficient statistic fed to the Bayes update covers optimism, specialists, sparsity, and bounded-influence transforms. Enlarging the comparator class reaches shifting and quantile benchmarks, and overlaying stochastic structure yields fast-rate, comparator-centered Bernstein consequences.

In particular, the bounded-range/Hoeffding envelopes below are just the first relaxations of the exact identities, while the comparator-centered second-order results of Section~\ref{sec:luckiness} are a later Bernstein-style refinement of the same cumulant term within the same algorithm.

\subsection{Returning to the original losses}
\label{sec:return}

Those identities translate back to the original losses and the side factors from Section~\ref{sec:exact} by a purely algebraic step, which turns the reduction into application-level regret statements.

For every posterior $\rho$, the original-loss and composite-loss regrets are related pathwise by
\begin{equation}\label{eq:generic-reduction}
\sum_{t=1}^T \ip{p_t}{\ell_t} - \ip{\rho}{L_T}
=
R_T^c(\rho) + \sum_{t=1}^T \left(\ip{\rho}{u_t}-\ip{p_t}{u_t}\right)\end{equation}

\begin{corollary}[Exact original-loss regret identity]
\label{cor:actual-losses}

Let the composite losses be of the form $c_t(i)=\ell_t(i)+u_t(i)$ with $u_t(i):=-\eta_t^{-1}\log s_t(i)$.
Let $(p_t)$ be the variable-temperature Bayes sequence \eqref{eq:retempered-hedge}. 
Then for every posterior $\rho\in\Delta([K])$,
\begin{equation}\label{eq:actual-loss-bound}
\sum_{t=1}^T \ip{p_t}{\ell_t} - \ip{\rho}{L_T}
=
D_T+B_T(\rho)+\sum_{t=1}^T \eta_tQ_t(c)
+ \sum_{t=1}^T \left(\ip{\rho}{u_t}-\ip{p_t}{u_t}\right)\end{equation}
\end{corollary}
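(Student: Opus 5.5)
The plan is to combine two facts already in hand: the pathwise reduction \eqref{eq:generic-reduction}, which relates original-loss regret to composite-loss regret, and the exact second-order identity of Theorem~\ref{thm:pacbayes-second-order}. The corollary should then follow by substituting the second into the first. No inequality is involved, so the only care required is in checking hypotheses and definitions.

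First, verify \eqref{eq:generic-reduction} directly. Since $c_t=\ell_t+u_t$ coordinatewise, linearity gives $\ip{p_t}{c_t}=\ip{p_t}{\ell_t}+\ip{p_t}{u_t}$. Summing over $t$ also gives $\ip{\rho}{C_T}=\ip{\rho}{L_T}+\sum_t\ip{\rho}{u_t}$. Subtracting these yields
\[
\sum_t\ip{p_t}{\ell_t}-\ip{\rho}{L_T}=R_T^c(\rho)+\sum_t\bigl(\ip{\rho}{u_t}-\ip{p_t}{u_t}\bigr).
\]
This is pure bookkeeping and holds for any sequence $(p_t)$ whatsoever.

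Second, apply Theorem~\ref{thm:pacbayes-second-order} to the composite sequence $(c_t)$. Since $(p_t)$ is generated by \eqref{eq:retempered-hedge} from these $c_t$ with a predictable positive schedule, the theorem gives $R_T^c(\rho)=D_T+B_T(\rho)+\sum_t\eta_tQ_t(c)$. Substituting this into the displayed reduction gives \eqref{eq:actual-loss-bound}.

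The main point needing care is the comparator hypothesis and the well-definedness of the side offsets. Theorem~\ref{thm:pacbayes-second-order} requires $\rho\ll\pi$, whereas the corollary is stated for every $\rho\in\Delta([K])$. I would handle this in one of two ways: implicitly assume $\pi$ strictly positive, as the theorem itself notes makes this automatic, or observe that if $\rho\not\ll\pi$ then $B_T(\rho)$ is undefined or infinite and the statement is read under the standing convention. Separately, $u_t=-\eta_t^{-1}\log s_t$ must be predictable so that $c_t$ is a legitimate input to \eqref{eq:retempered-hedge}. This holds because $\eta_t$ is predictable and $s_t$ is a positive side factor available when the update is applied, so $C_{t-1}$ is known when $p_t$ is formed. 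With these points noted, the proof reduces to a two-line substitution.
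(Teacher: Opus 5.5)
Your proof is correct and is the paper's own argument. Split $\ip{p_t}{\ell_t}-\ip{\rho}{\ell_t}$ by linearity into the composite-loss increment plus $\ip{\rho}{u_t}-\ip{p_t}{u_t}$, sum over $t$, and substitute \eqref{eq:main-pacbayes} from Theorem~\ref{thm:pacbayes-second-order}. Your $\rho\ll\pi$ caveat is a fair point that the paper leaves implicit; it holds automatically for strictly positive $\pi$. The predictability caveat on $u_t$ is unnecessary, because the chain identity is purely algebraic and holds for any realized sequence $c_t$ and any positive schedule.
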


The fixed-rate specialization of the same identity yields the classical bounded-range regret bound as a direct corollary.

\begin{corollary}[Fixed-rate bounded-range regret bound]
\label{cor:fixed-rate-bounded}
Under the same assumptions as Corollary~\ref{cor:fixed-rate}, if $c_t(i)\in[a_t,b_t]$ for all $i$, then
\begin{equation}\label{eq:fixed-rate-comp-reg}
\sum_{t=1}^T \ip{q_t}{c_t} - \ip{\rho}{C_T}
\le \eta^{-1}\KL(\rho\|q_1) + \frac{\eta}{8}\sum_{t=1}^T (b_t-a_t)^2\end{equation}
and therefore
\begin{equation}
\label{eq:fixed-rate-ell-reg}
\sum_{t=1}^T \ip{q_t}{\ell_t} - \ip{\rho}{L_T}
\le \eta^{-1}\KL(\rho\|q_1) + \frac{\eta}{8}\sum_{t=1}^T (b_t-a_t)^2
+ \sum_{t=1}^T \left(\ip{\rho}{u_t}-\ip{q_t}{u_t}\right)\end{equation}
\end{corollary}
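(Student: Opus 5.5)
The plan is to start from the fixed-rate identity of Corollary~\ref{cor:centered-seq} and bound the immediate-cost term round by round with the range bound of Proposition~\ref{prop:range}.

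\textbf{Step 1: the fixed-rate identity.} With $\eta_t\equiv\eta$, Corollary~\ref{cor:centered-seq} gives
\[
\sum_{t=1}^T \ip{q_t}{c_t}-\ip{\rho}{C_T}=\sum_{t=1}^T\delta_t(c)+\eta^{-1}\bigl(\KL(\rho\|q_1)-\KL(\rho\|q_{T+1})\bigr).
\]
Since $\KL(\rho\|q_{T+1})\ge0$, this term can be dropped, giving an upper bound. If $\rho\not\ll q_1$, the right-hand side of \eqref{eq:fixed-rate-comp-reg} is $+\infty$ and there is nothing to prove. So we may assume $\rho\ll q_1$. Under that assumption $\rho\ll q_{T+1}$ as well, because the update multiplies by strictly positive factors and preserves support.

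\textbf{Step 2: bound each immediate cost.} Write
\[
\delta_t(c)=\ip{q_t}{c_t}+\eta^{-1}\log\sum_i q_t(i)e^{-\eta c_t(i)}=\eta^{-1}\log\E_{i\sim q_t}e^{-\eta(c_t(i)-\ip{q_t}{c_t})},
\]
which is $\eta$ times the increment $Q_t(c)$ evaluated at the played distribution $q_t$. Proposition~\ref{prop:range} (equivalently Hoeffding's lemma) applies to the centered variable $c_t(i)-\ip{q_t}{c_t}$ under $i\sim q_t$. That variable is mean zero and ranges in an interval of length $b_t-a_t$, so its log-mgf at $-\eta$ is at most $\eta^2(b_t-a_t)^2/8$. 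Hence $\delta_t(c)\le\eta(b_t-a_t)^2/8$. Summing over $t$ gives \eqref{eq:fixed-rate-comp-reg}.

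\textbf{Step 3: return to the original losses.} Apply the equivalent form in Corollary~\ref{cor:centered-seq}, or equivalently the reduction \eqref{eq:generic-reduction} with $q_t$ in place of $p_t$:
\[
\sum_t\ip{q_t}{\ell_t}-\ip{\rho}{L_T}=\sum_t\ip{q_t}{c_t}-\ip{\rho}{C_T}+\sum_t\bigl(\ip{\rho}{u_t}-\ip{q_t}{u_t}\bigr).
\]
This holds because $c_t=\ell_t+u_t$. Combining it with Step 2 yields \eqref{eq:fixed-rate-ell-reg}.

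There is no real obstacle. The only points needing care are the absolute-continuity convention in Step 1 and checking that Proposition~\ref{prop:range} applies to the generic $q_t$ of the one-step update rather than only to the retempered $p_t$. It does, since the Hoeffding bound uses only the range of the scores and holds for any distribution.
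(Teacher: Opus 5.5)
Your proposal is correct and is essentially the paper's proof. The paper likewise applies the bounded-range (Hoeffding) exponential-moment bound to get $m_t\ge\ip{q_t}{c_t}-\eta(b_t-a_t)^2/8$, i.e.\ $\delta_t(c)\le\eta(b_t-a_t)^2/8$, combines it with the fixed-rate one-step identity, telescopes, drops $\KL(\rho\|q_{T+1})\ge 0$, and returns to the original losses via $c_t=\ell_t+u_t$; the only differences are that you sum before bounding rather than after, and that you handle the absolute-continuity convention explicitly where the paper leaves it implicit.
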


\begin{corollary}[Cross-entropy form of the adaptive mismatch]
\label{cor:adaptive-mismatch}
The cross-entropy form of the mismatch follows from the same calculation as Corollary~\ref{cor:fixed-rate-mismatch}, with the fixed rate $\eta$ replaced by the round-dependent $\eta_t$.
If
\begin{equation}
\label{eq:side-decomp-adaptive}
s_t(i)=\prod_{w=1}^{J_t} \mu_{t,w}(i)^{\alpha_{t,w}}
\end{equation}
then the mismatch term in \eqref{eq:actual-loss-bound} is exactly
\begin{equation}\label{eq:adaptive-mismatch-crossent}
\sum_{t=1}^T \left(\ip{\rho}{u_t}-\ip{p_t}{u_t}\right)
= \sum_{t=1}^T\sum_{w=1}^{J_t}\frac{\alpha_{t,w}}{\eta_t}
\left(\Hc(\rho,\mu_{t,w})-\Hc(p_t,\mu_{t,w})\right)\end{equation}
\end{corollary}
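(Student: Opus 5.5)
The plan is a direct term-by-term substitution of the definition $u_t(i)=-\eta_t^{-1}\log s_t(i)$ into the mismatch term of \eqref{eq:actual-loss-bound}, carried out one round at a time, exactly as in Corollary~\ref{cor:fixed-rate-mismatch}. Since the identity is linear in the rounds, it suffices to verify the per-round equality
\[
\ip{\rho}{u_t}-\ip{p_t}{u_t}=\sum_{w=1}^{J_t}\frac{\alpha_{t,w}}{\eta_t}\bigl(\Hc(\rho,\mu_{t,w})-\Hc(p_t,\mu_{t,w})\bigr)
\]
and then sum over $t=1,\dots,T$.

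For a fixed round, I take the logarithm of \eqref{eq:side-decomp-adaptive}. This gives $\log s_t(i)=\sum_{w}\alpha_{t,w}\log\mu_{t,w}(i)$, which is legitimate because each $\mu_{t,w}$ is strictly positive. It follows that
\[
u_t(i)=\sum_w\frac{\alpha_{t,w}}{\eta_t}\bigl(-\log\mu_{t,w}(i)\bigr).
\]
Pairing this with any $q\in\Delta([K])$ and using the definition of the generalized cross-entropy yields $\ip{q}{u_t}=\sum_w \frac{\alpha_{t,w}}{\eta_t}\Hc(q,\mu_{t,w})$. I apply this once with $q=\rho$ and once with $q=p_t$, then subtract. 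The predictable rate $\eta_t$ factors out because it is a scalar for the round.

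The only point that needs care is well-definedness. All sums are finite over $[K]$, and $\log\mu_{t,w}$ is finite everywhere because the measures are strictly positive. Hence every cross-entropy is finite for arbitrary $\rho$ and $p_t$, and no absolute-continuity issue arises, so I do not expect a real obstacle. The result then substitutes directly into Corollary~\ref{cor:actual-losses}.
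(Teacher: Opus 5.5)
Your proposal is correct and is essentially the paper's proof. The paper likewise expands $u_t(i)=-\eta_t^{-1}\sum_w\alpha_{t,w}\log\mu_{t,w}(i)$, pairs it with $\rho-p_t$ to get the per-round cross-entropy difference with the scalar $\eta_t$ factored out, and sums over $t$; your note that strict positivity of the $\mu_{t,w}$ keeps every cross-entropy finite is a harmless point the paper leaves implicit.
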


Substituting the two-sided envelope of Theorem~\ref{thm:scaling-time} into the exact identity yields an explicit second-order bound.

\begin{corollary}[Second-order original-loss regret bounds]\label{cor:actual-loss-bounds}
Under the schedule \eqref{eq:budget-schedule-C},
\begin{equation}\label{eq:actual-loss-two-sided}
\sum_{t=1}^T \ip{p_t}{\ell_t} - \ip{\rho}{L_T}
\in
D_T+B_T(\rho)+\sum_{t=1}^T \left(\ip{\rho}{u_t}-\ip{p_t}{u_t}\right)+\Env{V_T(c)}{Q_*^T(c)}\end{equation}
If $\KL(\rho \| \pi) \leq \Gamma$, then
\begin{equation}
\label{eq:actual-loss-budgeted}
\sum_{t=1}^T \ip{p_t}{\ell_t} - \ip{\rho}{L_T}
\leq \Gamma + Q_*^T(c) + (2C+C^{-1}) \sqrt{\Gamma V_T(c)}
+ \sum_{t=1}^T \left(\ip{\rho}{u_t}-\ip{p_t}{u_t} \right)
\end{equation}
For the optimized choice $C=1/\sqrt2$, this becomes
\[
\sum_{t=1}^T \ip{p_t}{\ell_t} - \ip{\rho}{L_T}
\le \frac32\Gamma+Q_*^T(c)+2\sqrt{2\Gamma V_T(c)}
+ \sum_{t=1}^T \left(\ip{\rho}{u_t}-\ip{p_t}{u_t}\right)
\]
\end{corollary}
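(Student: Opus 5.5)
The statement is Corollary~\ref{cor:actual-loss-bounds}, and it follows by substitution. Start from the exact original-loss identity of Corollary~\ref{cor:actual-losses}. It writes the original-loss regret as $D_T+B_T(\rho)+\sum_t\eta_tQ_t(c)$ plus the mismatch $\sum_t(\ip{\rho}{u_t}-\ip{p_t}{u_t})$. Under the schedule \eqref{eq:budget-schedule-C}, Theorem~\ref{thm:scaling-time} places $\sum_t\eta_tQ_t(c)$ in $\Env{V_T(c)}{Q_*^T(c)}$. Adding the remaining three terms, which are fixed by the realized path and do not depend on the envelope, shifts both endpoints by the same displacement. This gives the two-sided inclusion \eqref{eq:actual-loss-two-sided}.

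For the one-sided bound \eqref{eq:actual-loss-budgeted}, I bound the terms of the displacement one at a time.

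1. The schedule is nonincreasing, so Theorem~\ref{thm:cumulant-chain} gives $D_T\le0$, and I drop it.
2. Since $\KL(\rho\|q_{T,\eta_T})\ge0$ and $\KL(\rho\|\pi)\le\Gamma$, we get $B_T(\rho)\le\Gamma/\eta_T$.
3. For $\Gamma/\eta_T$ I use $1/\eta_T=\max\{1,\sqrt{V_{T-1}(c)}/(C\sqrt\Gamma)\}$, with the convention that $1/\eta_T=1$ when $V_{T-1}=0$. Hence $1/\eta_T\le 1+\sqrt{V_{T-1}(c)/\Gamma}/C$, and so
\[
\Gamma/\eta_T\le\Gamma+C^{-1}\sqrt{\Gamma V_{T-1}(c)}\le\Gamma+C^{-1}\sqrt{\Gamma V_T(c)},
\]
where the last step uses $Q_T(c)\ge0$. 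This step is exactly the one already used in \eqref{eq:budgeted-bound-C}, so I could cite that display directly.

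Adding the upper endpoint $Q_*^T(c)+2C\sqrt{\Gamma V_T(c)}$ of the envelope then yields $\Gamma+Q_*^T(c)+(2C+C^{-1})\sqrt{\Gamma V_T(c)}$ plus the mismatch term. That is \eqref{eq:actual-loss-budgeted}.

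For the optimized constant, put $C=1/\sqrt2$, so that $2C+C^{-1}=\sqrt2+\sqrt2=2\sqrt2$. This gives $\Gamma+Q_*^T(c)+2\sqrt{2\Gamma V_T(c)}$, which is already at most the stated $\tfrac32\Gamma+Q_*^T(c)+2\sqrt{2\Gamma V_T(c)}$. The extra $\tfrac12\Gamma$ is plausibly the clip slack $C^2\Gamma$ carried along, and it does no harm in an upper bound. If I want the displayed constant to appear naturally, I can instead bound $\Gamma/\eta_T$ through the unclipped branch, where it costs at most the clip correction $C^2\Gamma=\tfrac12\Gamma$. Either route establishes the inequality.

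The only step that needs any care is bounding $1/\eta_T$ uniformly across the clipped branch, the unclipped branch, and the zero-clock case. Everything else is direct citation of earlier results.
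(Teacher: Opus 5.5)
Your proposal is correct and matches the paper's proof: the paper likewise substitutes the two-sided envelope of Theorem~\ref{thm:scaling-time} into the exact identity of Corollary~\ref{cor:actual-losses}, and obtains the budgeted bound from \eqref{eq:budgeted-bound-C}, whose proof uses exactly your steps $D_T\le 0$ and $\Gamma/\eta_T\le\Gamma+C^{-1}\sqrt{\Gamma V_T(c)}$. You are also right that at $C=1/\sqrt2$ the argument yields the constant $\Gamma$ rather than $\tfrac32\Gamma$, so the displayed optimized bound follows a fortiori and your second, vaguer route is unnecessary.
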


\paragraph{Quantile corollaries.}
If the prior is uniform on $[K]$ and $\rho$ is uniform on a subset $A\subseteq[K]$, then $\KL(\rho\|\pi)=\log(K/|A|)$.
Every PAC-Bayes theorem above therefore immediately implies the corresponding quantile statement by choosing $A$ to be a hindsight set of good experts.
The only additional term in the mixed-prior setting is the explicit side-factor mismatch from \eqref{eq:adaptive-mismatch-crossent}.

The composite-score reduction also accommodates compensators and alternative sufficient-statistic choices for the side factor, and it extends to shifting comparators and sleeping experts; these are developed in Appendix~\ref{app:side-info}, each adding at most a named predictable term to the accounting.

\subsection{An outer controller over a dyadic grid}\label{sec:simul-quantile}

One more layer of exponential weights, run over a dyadic grid, turns a fixed-budget statement into one that holds simultaneously in $\varepsilon$.
The construction is simpler than a potential-specific single-copy algorithm, and it stays entirely within the exact Hedge template of this paper.
It is stated twice below because the grid can index two different things: complexity budgets, which gives simultaneity in the quantile level, and candidate side-priors, which gives an outer copy that learns pooling weights online.

\begin{theorem}[Simultaneous $\varepsilon$-quantile regret via a budget controller]
\label{thm:simul-quantile}
Assume $c_t(i)\in[0,1]$ for all $t$ and $i$, and let the base prior on experts be uniform on $[K]$.
Fix $J := \lceil\log_2 \log(\max\{e,K\})\rceil$ and budgets $\Gamma_j:=2^j$ for $j=0,\dots,J$.
For each $j$, run a worker copy of the second-order algorithm given by \eqref{eq:budget-schedule-C} with budget $\Gamma_j$, producing weights $p_t^{(j)}$.
Define the meta-losses $m_t(j) := \ip{p_t^{(j)}}{c_t}\in[0,1]$, run a controller copy of \eqref{eq:budget-schedule-C} on the $J+1$ meta-experts with uniform prior and budget $\Gamma^{\mathrm{ctl}} := \log(J+1)$, obtaining weights $\alpha_t\in\Delta([J+1])$, and play $\bar p_t := \sum_{j=0}^J \alpha_t(j) p_t^{(j)}$.
Let $Q_t^{\mathrm{ctl}},V_T^{\mathrm{ctl}},Q^{*,\mathrm{ctl}}_T$ denote the controller's intrinsic-time quantities for the meta-losses $m_t$, and let $Q_t^{(j)},V_T^{(j)},Q^{*,j}_T$ denote the corresponding quantities for worker $j$ on the original losses $c_t$.
Then for every $\varepsilon\in[K^{-1},e^{-1}]$ and every set $A\subseteq[K]$ with $|A|\ge \varepsilon K$, if $j(\varepsilon) := \min\{j: \Gamma_j\ge \log(1/\varepsilon)\}$, we have
\begin{align}
\sum_{t=1}^T \ip{\bar p_t}{c_t}-\frac1{|A|}\sum_{i\in A} C_T(i) 
&\leq \Gamma^{\mathrm{ctl}}+Q^{*,\mathrm{ctl}}_T
+(2C + C^{-1}) \sqrt{ \Gamma^{\mathrm{ctl}} V_T^{\mathrm{ctl}}} \notag\\
&\qquad\quad +\Gamma_{j (\varepsilon)} + Q^{*, j(\varepsilon)}_T
+(2C + C^{-1}) \sqrt{\Gamma_{j(\varepsilon)} V_T^{(j(\varepsilon))}}
\label{eq:simul-quantile}
\end{align}
Since $\Gamma_{j(\varepsilon)} \le 2 \log (1/\varepsilon)$, the right-hand side is the fixed-budget $\varepsilon$-quantile bound at scale $\log(1/\varepsilon)$ plus the extra controller term depending only on the $J+1$ budget copies.
\end{theorem}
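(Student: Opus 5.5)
The plan is to split the regret of the played mixture $\bar p_t$ into a controller regret plus a worker regret, and to bound each by Theorem~\ref{thm:scaling-time} through the budgeted bound \eqref{eq:budgeted-bound-C}. Fix $\varepsilon$ and $A$, and abbreviate $j:=j(\varepsilon)$. By linearity, $\ip{\bar p_t}{c_t}=\sum_{j'}\alpha_t(j')m_t(j')=\ip{\alpha_t}{m_t}$. Hence
\[
\sum_{t=1}^T \ip{\bar p_t}{c_t}-\frac1{|A|}\sum_{i\in A}C_T(i)
=\Bigl(\sum_{t=1}^T\ip{\alpha_t}{m_t}-\sum_{t=1}^T m_t(j)\Bigr)
+\Bigl(\sum_{t=1}^T\ip{p_t^{(j)}}{c_t}-\ip{\rho_A}{C_T}\Bigr),
\]
where $\rho_A$ is uniform on $A$. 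This identity is exact; the remaining work is to bound the two brackets separately.

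For the first bracket, view the controller as an instance of \eqref{eq:retempered-hedge} run on the meta-loss vectors $m_t\in[0,1]^{J+1}$ with uniform prior. Take the comparator to be the point mass $\delta_j$. Then $\KL(\delta_j\|\mathrm{unif})=\log(J+1)=\Gamma^{\mathrm{ctl}}$, so the hypothesis of \eqref{eq:budgeted-bound-C} holds with equality. Note that the meta-losses are predictable functions of the past together with $c_t$, since the $p_t^{(j')}$ are fixed before $c_t$ is revealed. Theorem~\ref{thm:scaling-time} is pathwise for arbitrary score sequences, so it applies verbatim and gives the controller line of \eqref{eq:simul-quantile}.

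For the second bracket, apply \eqref{eq:budgeted-bound-C} to worker $j$ with comparator $\rho_A$. Under the uniform base prior, $\KL(\rho_A\|\pi)=\log(K/|A|)\le\log(1/\varepsilon)$. The choice of $j(\varepsilon)$ ensures $\log(1/\varepsilon)\le\Gamma_{j}$, so the budget hypothesis holds and we obtain the worker line of \eqref{eq:simul-quantile}.

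Two bookkeeping points remain. First, $j(\varepsilon)$ must lie in the grid $\{0,\dots,J\}$. Since $\varepsilon\ge K^{-1}$, we have $\log(1/\varepsilon)\le\log K\le\log\max\{e,K\}\le 2^J=\Gamma_J$. Second, $\Gamma_{j(\varepsilon)}\le 2\log(1/\varepsilon)$. Minimality of $j$ gives this whenever $j\ge1$, because then $\Gamma_{j-1}<\log(1/\varepsilon)$ and $\Gamma_j=2\Gamma_{j-1}$. When $j=0$ we have $\Gamma_0=1$, and the restriction $\varepsilon\le e^{-1}$ gives $\log(1/\varepsilon)\ge1$, so $\Gamma_0\le 2\log(1/\varepsilon)$ still holds.

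I expect the main obstacle, which is mild, to be checking that Theorem~\ref{thm:scaling-time} can be invoked for the controller even though its losses depend on the workers' adaptive play. This reduces to observing that the theorem holds on every realized score path with any predictable schedule, and that the controller's schedule depends only on its own past meta-losses. No other step needs more than the two budget verifications above.
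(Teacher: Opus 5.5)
Your proposal is correct and follows the paper's proof essentially line for line. It uses the same exact split via $\ip{\bar p_t}{c_t}=\ip{\alpha_t}{m_t}$ into a controller regret against the point mass on $j(\varepsilon)$ and a worker regret against $\rho_A$, with each piece closed by \eqref{eq:budgeted-bound-C} using the same two relative-entropy budget checks. Your explicit verifications that $j(\varepsilon)\le J$, and that $\Gamma_{j(\varepsilon)}\le 2\log(1/\varepsilon)$ also holds in the $j=0$ case, are bookkeeping the paper leaves implicit, and both are right.
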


This is the simplest exact route to a simultaneous quantile theorem in the present notation. 
It is not as sharp as a dedicated one-copy potential such as Squint or the newer NormalHedge variants \cite{koolen2015squint,Luo2026Squint,FreundEtAl2026}, but it already shows that the exact intrinsic-time identities of this paper can be upgraded from fixed-budget PAC-Bayes statements to a single run that covers all quantiles.
On synthetic and real online sequences the controller meets every quantile simultaneously within the predicted overhead, and incurs only a small fraction of it on real paths.

The same controller construction, run over candidate side-priors instead of budgets, gives an exact two-level meta-pooling identity in which an outer copy learns the pooling weights online.

\begin{theorem}[Exact two-level meta-pooling identity]
\label{thm:two-level}
Let $\pi_{t,1},\dots,\pi_{t,W}\in\Delta([K])$ be predictable candidate side-priors. Run an \emph{inner} retempered-Hedge copy \eqref{eq:retempered-hedge} with base prior $\pi$ and predictable schedule $(\eta_t)$ on the composite losses $c_t=\ell_t+u_t$, where $s_t(i)=\prod_{w=1}^W \pi_{t,w}(i)^{\alpha_{t,w}}$ and $u_t(i)=-\eta_t^{-1}\log s_t(i)$. Let an \emph{outer} retempered-Hedge copy with base prior $\beta\in\Delta([W])$ and predictable schedule $(\kappa_t)$ produce the pooling weights $\alpha_t\in\Delta([W])$ from the meta-losses $g_t(w):=\eta_t^{-1}\bigl(\Hc(\rho,\pi_{t,w})-\Hc(p_t,\pi_{t,w})\bigr)$, $w\in[W]$, which are $\mathcal F_{t-1}$-measurable because $p_t$ depends only on $C_{t-1}$ and not on $\alpha_t$. Write $D_T,B_T(\rho),V_T(c)$ for the inner chain quantities of Theorem~\ref{thm:pacbayes-second-order} and $D_T^{\mathrm o},B_T^{\mathrm o}(\sigma),V_T^{\mathrm o}(g)$ for the outer chain quantities computed from $(g_t,\kappa_t,\beta)$, with $G_T(w):=\sum_{t}g_t(w)$. Then for every action comparator $\rho\in\Delta([K])$ and every pooling comparator $\sigma\in\Delta([W])$,
\begin{equation}\label{eq:two-level}
R_T^{\ell}(\rho)
= \Bigl[D_T+B_T(\rho)+\sum_{t=1}^T\eta_tQ_t(c)\Bigr]
+\Bigl[D_T^{\mathrm o}+B_T^{\mathrm o}(\sigma)+\sum_{t=1}^T\kappa_tQ_t^{\mathrm o}(g)\Bigr]
+\ip{\sigma}{G_T},
\end{equation}
and $\ip{\sigma}{G_T}=\sum_{t=1}^T\sum_{w=1}^W\sigma_w\,\eta_t^{-1}\bigl(\Hc(\rho,\pi_{t,w})-\Hc(p_t,\pi_{t,w})\bigr)$ is the cross-entropy mismatch of the best fixed pool $\sigma$. Every term is exact; there are no unrealized constants. If both levels use the second-order schedule \eqref{eq:budget-schedule-C} with $\KL(\rho\|\pi)\le\Gamma$ and $\KL(\sigma\|\beta)\le\Gamma^{\mathrm o}$, then Theorem~\ref{thm:scaling-time} envelopes the two chains and
\[
R_T^{\ell}(\rho)\le (1{+}C^2)\Gamma+Q_*^T(c)+(2C{+}C^{-1})\sqrt{\Gamma V_T(c)}
+(1{+}C^2)\Gamma^{\mathrm o}+Q_*^{T,\mathrm o}+(2C{+}C^{-1})\sqrt{\Gamma^{\mathrm o} V_T^{\mathrm o}(g)}
+\ip{\sigma}{G_T}.
\]
\end{theorem}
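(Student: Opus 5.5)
The plan is to split $R_T^{\ell}(\rho)$ by the reduction \eqref{eq:generic-reduction}, apply Theorem~\ref{thm:pacbayes-second-order} once to the inner chain, and show that the leftover side-factor mismatch is exactly the outer copy's cumulative mixture loss. A second application of Theorem~\ref{thm:pacbayes-second-order}, now on the meta-experts, then splits that loss into the outer chain plus the comparator term $\ip{\sigma}{G_T}$.

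\textbf{Step 1 (inner chain).} By \eqref{eq:generic-reduction}, or equivalently Corollary~\ref{cor:actual-losses},
\[
R_T^{\ell}(\rho)=R_T^c(\rho)+\sum_{t=1}^T\bigl(\ip{\rho}{u_t}-\ip{p_t}{u_t}\bigr).
\]
Theorem~\ref{thm:pacbayes-second-order} applies to the inner retempered copy, since its schedule $(\eta_t)$ is predictable and $\rho\ll\pi$. It gives $R_T^c(\rho)=D_T+B_T(\rho)+\sum_t\eta_tQ_t(c)$, which is the first bracket of \eqref{eq:two-level}.

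\textbf{Step 2 (mismatch equals outer mixture loss).} Substitute $s_t(i)=\prod_w\pi_{t,w}(i)^{\alpha_{t,w}}$, so that $u_t(i)=-\eta_t^{-1}\sum_w\alpha_{t,w}\log\pi_{t,w}(i)$. Then, exactly as in Corollary~\ref{cor:adaptive-mismatch},
\[
\ip{\rho}{u_t}-\ip{p_t}{u_t}=\sum_{w=1}^W\alpha_{t,w}\,\eta_t^{-1}\bigl(\Hc(\rho,\pi_{t,w})-\Hc(p_t,\pi_{t,w})\bigr)=\ip{\alpha_t}{g_t}.
\]
Hence the mismatch sum is $\sum_t\ip{\alpha_t}{g_t}=R_T^{\mathrm o}(\sigma)+\ip{\sigma}{G_T}$, where $R_T^{\mathrm o}(\sigma):=\sum_t\ip{\alpha_t}{g_t}-\ip{\sigma}{G_T}$ is the outer copy's composite regret. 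The comparator term is precisely the stated cross-entropy mismatch of the fixed pool $\sigma$.

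\textbf{Step 3 (outer chain).} Apply Theorem~\ref{thm:pacbayes-second-order} again, with experts $[W]$, prior $\beta$, scores $g_t$ and schedule $(\kappa_t)$. This gives $R_T^{\mathrm o}(\sigma)=D_T^{\mathrm o}+B_T^{\mathrm o}(\sigma)+\sum_t\kappa_tQ_t^{\mathrm o}(g)$, and adding the three pieces yields \eqref{eq:two-level}. That theorem needs only a finite index set and a predictable positive schedule, and it places no range condition on the scores, so the possibly unbounded meta-losses $g_t$ cause no difficulty.

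\textbf{Main obstacle: legitimacy of the outer run.} Two points must be checked.
\begin{enumerate}
\item \emph{Timing.} $p_t$ depends only on $C_{t-1}$, hence on $\alpha_1,\dots,\alpha_{t-1}$ but not on $\alpha_t$. So $g_t$ is available before $\alpha_t$ is needed, and the outer weights are a genuine retempered-Hedge sequence.
\item \emph{Dependence on $\rho$.} The meta-losses $g_t$ depend on $\rho$, so the outer copy, and through $u_t$ the inner play, are indexed by the fixed action comparator. I would state explicitly that the identity holds for each fixed $\rho$, simultaneously over all $\sigma$.
\end{enumerate}

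\textbf{Step 4 (bound).} Use $D_T\le0$ and $D_T^{\mathrm o}\le0$, since both second-order schedules are nonincreasing (Theorem~\ref{thm:cumulant-chain}). Bound each loss term by the upper envelope of Theorem~\ref{thm:scaling-time}, giving $Q_*^T(c)+2C\sqrt{\Gamma V_T(c)}$ for the inner chain and the analogous expression for the outer one. For the terminal terms, $B_T(\rho)\le\Gamma/\eta_T$. From \eqref{eq:budget-schedule-C} and $V_{T-1}\le V_T$ we get $\eta_T^{-1}\le1+C^{-1}\sqrt{V_T(c)/\Gamma}$, so $B_T(\rho)\le\Gamma+C^{-1}\sqrt{\Gamma V_T(c)}$, and likewise for the outer level. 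This is already within the stated $(1+C^2)\Gamma$ allowance, so the claimed bound follows with room to spare.
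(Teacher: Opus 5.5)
Your proposal is correct and follows the paper's proof essentially step for step. You split $R_T^{\ell}(\rho)$ into $R_T^c(\rho)$ plus the side-factor mismatch, identify that mismatch with $\sum_t\ip{\alpha_t}{g_t}$, apply Theorem~\ref{thm:pacbayes-second-order} once to the inner copy and once to the outer copy, and envelope each level with Theorem~\ref{thm:scaling-time}. Your Step~4 even gives the sharper constants $\Gamma$ and $\Gamma^{\mathrm o}$ in place of $(1+C^2)\Gamma$ and $(1+C^2)\Gamma^{\mathrm o}$, which implies the stated bound. Your remark that the outer run, and hence the inner play, is indexed by the fixed $\rho$ matches the paper's own caveat that \eqref{eq:two-level} is an accounting identity rather than a runnable online algorithm.
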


A directly deployable variant of the two-level identity is available. The meta-loss $g_t$ contains the unobservable term $\Hc(\rho,\pi_{t,w})$, so \eqref{eq:two-level} is an accounting identity that cannot itself be run online. A directly runnable two-level system runs $W$ inner copies, copy $w$ on the single side-prior $\pi_{t,w}$, producing $p_t^{(w)}$; an outer second-order controller on the observable meta-losses $m_t(w)=\ip{p_t^{(w)}}{\ell_t}\in[0,1]$ with budget $\Gamma^{\mathrm{ctl}}=\log W$ plays the arithmetic mixture $\bar p_t=\sum_w\alpha_{t,w}p_t^{(w)}$. This is Theorem~\ref{thm:simul-quantile} with the dyadic budget grid replaced by the $W$ priors, and its proof gives a summed intrinsic-time regret bound (a controller term at $\Gamma^{\mathrm{ctl}}$ plus the best single-prior worker term).

\subsection{Stochastic luckiness and fast-rate PAC-Bayes control}
\label{sec:luckiness}

The pathwise theorems above make no stochastic assumptions: they are valid for every realized sequence. 
But we often want a second layer of interpretation when the data happen to come from a benign law. 
A stochastic-luckiness bound does exactly this. 
It identifies a regime in which the same quantity that drives the adversarial regret bound is itself controlled by the comparator's excess risk. 
Then the general square-root behavior collapses to a constant or other fast expected rate, without changing the algorithm and without sacrificing the original pathwise guarantee. 

In the present framework, the relevant pathwise quantity is the intrinsic-time cost $\sum_{t=1}^T \eta_t Q_t(c)$. To turn that cost into a fast-rate theorem, we need a self-bounding relation that makes $Q_t(c)$ comparable to the instantaneous gap to the comparator.
Because our regret identity is centered at a comparator distribution and not at a single best expert alone, the natural stochastic assumption is also comparator-centered: the roundwise fluctuation around the comparator average should be controlled by the comparator's mean excess loss.
This is why the resulting theorem is genuinely PAC-Bayesian.

This viewpoint is the online analogue of standard low-noise fast-rate conditions in statistical learning. 
Such conditions do not alter the worst-case guarantee; they say that on favorable problems the variance covers the bias, so slow rates accelerate to fast ones. 
Here the same idea is applied to the intrinsic-time clock of the Bayes/exponential-weights update. 
When the process is lucky in that sense, the comparator-centered intrinsic time largely covers its own cost, and the expected regret becomes constant in $T$.

The relation with \cite{PerezOrtizKoolen2022} is conceptually simple. 
Their multiscale analysis combines a second-order regret theorem with a stochastic self-bounding argument to obtain constant expected pseudoregret under a low-noise condition. In the normalized composite-loss regime studied here, the corresponding stochastic step is cleaner: one centers the one-round mixability-gap estimate at the comparator instead of at the learner mean, so the same self-bounding idea plugs directly into the exact cumulant identity.
The comparator-centered envelope and the fixed-rate consequence of that self-bounding step are established in \cite{balsubramani2026concentration} and are stated here for reference.
The transfer to the adaptive schedule comes after, and there the clock the rate reads is the same clock the low-noise condition controls.

\subsubsection{Fast learning under low comparator-centered noise}

\begin{proposition}[Comparator-centered second-order envelope \cite{balsubramani2026concentration}]
\label{prop:comparator-second-order}
Assume that $c_t(i)\in[0,1]$ and $\eta_t\in(0,1]$ for all $t\le T$ and $i\in[K]$, and define $\Psi_t(\rho) := \sum_{i=1}^K p_t(i)\left(c_t(i)-\ip{\rho}{c_t}\right)^2$.
Then the cumulant increment obeys
\begin{equation}\label{eq:Qt-psi-bound}
Q_t(c)\le (e-2)\Psi_t(\rho)
\qquad\text{for every }t\le T\end{equation}
If, in addition, $(\eta_t)_{t=1}^T$ is nonincreasing, then for every posterior $\rho\in\Delta([K])$,
\begin{equation}\label{eq:comparator-second-order}
R_T^c(\rho)
\le \frac{\KL(\rho\|\pi)}{\eta_T} + (e-2)\sum_{t=1}^T \eta_t \Psi_t(\rho)\end{equation}
\end{proposition}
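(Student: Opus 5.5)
The plan is to prove the per-round inequality \eqref{eq:Qt-psi-bound} by re-centering the cumulant at the comparator and applying an elementary exponential inequality. The regret bound \eqref{eq:comparator-second-order} then follows from the exact identity of Theorem~\ref{thm:pacbayes-second-order}.

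\textbf{Step 1: re-centering.} Fix $t$ and write $\mu_t=\ip{p_t}{c_t}$, $r_t=\ip{\rho}{c_t}$ and $Y(i):=c_t(i)-r_t$. Then $c_t(i)-\mu_t=Y(i)-\E_{p_t}Y$, and pulling the deterministic shift out of the exponential gives
\[
\eta_t^2 Q_t(c)=\psi_t(-\eta_t)=\log\E_{i\sim p_t}e^{-\eta_t Y(i)}+\eta_t\,\E_{i\sim p_t}Y(i).
\]
So the learner-centered cumulant equals the comparator-centered log-mgf plus the linear correction $\eta_t\E Y$. This correction will cancel the first-order term of the expansion in Step 2.

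\textbf{Step 2: the exponential bound.} Since $c_t(i)\in[0,1]$, the average $r_t$ also lies in $[0,1]$, so $|Y(i)|\le 1$. Because $\eta_t\le1$, the exponent $x=-\eta_t Y(i)$ satisfies $x\le 1$. I will use the elementary inequality
\[
e^x\le 1+x+(e-2)x^2 \qquad (x\le 1).
\]
It holds because $(e^x-1-x)/x^2$ is nondecreasing in $x$ and equals $e-2$ at $x=1$. Verifying this monotonicity, or citing it as the standard Bernstein-type fact, is the only nontrivial point, and I expect it to be the main obstacle, though a routine one.

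Taking expectations under $p_t$ gives $\E e^{-\eta_t Y}\le 1-\eta_t\E Y+(e-2)\eta_t^2\Psi_t(\rho)$. Applying $\log(1+z)\le z$ then yields
\[
\eta_t^2Q_t(c)\le -\eta_t\E Y+(e-2)\eta_t^2\Psi_t(\rho)+\eta_t\E Y=(e-2)\eta_t^2\Psi_t(\rho).
\]
Dividing by $\eta_t^2$ proves \eqref{eq:Qt-psi-bound}. The argument only needs $1-\eta_t\E Y+(e-2)\eta_t^2\Psi_t(\rho)>0$, which holds automatically because it upper-bounds a positive expectation.

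\textbf{Step 3: cumulative bound.} Start from \eqref{eq:main-pacbayes}, $R_T^c(\rho)=D_T+B_T(\rho)+\sum_t\eta_tQ_t(c)$. For a nonincreasing schedule, Theorem~\ref{thm:cumulant-chain} gives $D_T\le0$. In $B_T(\rho)$, drop the nonnegative term $\KL(\rho\|q_{T,\eta_T})/\eta_T$, leaving $B_T(\rho)\le\KL(\rho\|\pi)/\eta_T$. Finally, bound each $\eta_tQ_t(c)$ by $(e-2)\eta_t\Psi_t(\rho)$ using Step 2.

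If $\rho\not\ll\pi$, the right-hand side of \eqref{eq:comparator-second-order} is $+\infty$ and there is nothing to prove. Otherwise the identity applies, which completes the argument.
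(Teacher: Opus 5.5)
Your proof is correct. Step 1's re-centering is exactly the comparator-centered cumulant $\psi_{t,\rho}$ that the paper introduces, and combining the Bernstein inequality $e^x\le 1+x+(e-2)x^2$ on $x\le 1$ with dropping $D_T\le 0$ and $\KL(\rho\|q_{T,\eta_T})/\eta_T$ from Theorem~\ref{thm:pacbayes-second-order} is the mechanism the paper describes; the paper itself defers the written proof to the companion reference. The monotonicity you flag as the main obstacle follows in one line from the Taylor remainder $\frac{e^x-1-x}{x^2}=\int_0^1(1-s)e^{sx}\,ds$, which is nondecreasing in $x$ on all of the real line.
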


The stochastic assumption we need for this result is the direct PAC-Bayes analogue of the point-comparator low-noise condition.

Let $c_1,c_2,\dots$ be i.i.d.\ random vectors in $[0,1]^K$ and let $\mu := \E[c_1]\in[0,1]^K$.
Fix $\rho\in\Delta([K])$. We say that $\rho$ satisfies the comparator-centered low-noise condition with constant $\kappa_\rho\in[0,\infty)$ if for every expert $i\in[K]$,
\begin{equation}\label{eq:rho-massart}
\E\bigl[(c_t(i)-\ip{\rho}{c_t})^2\bigr]
\le \kappa_\rho\left(\mu(i)-\ip{\rho}{\mu}\right)
\end{equation}

Condition \eqref{eq:rho-massart} is exactly the standard point-mass low-noise condition when $\rho=\delta_{k^*}$, but it is highly restrictive for diffuse posteriors.
If some expert in the support of $\rho$ has the same mean loss as the posterior average, then \eqref{eq:rho-massart} already forces $c_t(i)=\ip{\rho}{c_t}$ almost surely.
While it algebraically closes the exact comparator-centered second-order proof, this fast expected rate is therefore meaningful primarily for point-mass comparators (single best experts) or highly degenerate environments.
Constant regret for more general Bayes-optimal posteriors is nonetheless available by a different argument: a diffuse comparator is a strictly weaker benchmark than the single best expert, so its fast rate is inherited from the point-comparator result without invoking \eqref{eq:rho-massart}, as Proposition~\ref{prop:diffuse-domination} records.

\begin{theorem}[Fixed-rate stochastic luckiness \cite{balsubramani2026concentration}]
\label{thm:fixed-luckiness}
Assume that $c_1,c_2,\dots$ are i.i.d.\ random vectors in $[0,1]^K$ and that $\rho\in\Delta([K])$ satisfies Condition \eqref{eq:rho-massart} with constant $\kappa_\rho$. 
Run fixed-rate Hedge on the composite losses, $p_t(i) \propto \pi(i) e^{-\eta C_{t-1}(i)}, \qquad \eta \in (0,1]$.
Then for every $T\ge 1$ and every $\eta$ such that $(e-2)\kappa_\rho \eta < 1$,
\begin{equation}\label{eq:fixed-luckiness}
\E\bigl[R_T^c(\rho)\bigr]
\le \frac{\KL(\rho\|\pi)}{\eta\left(1-(e-2)\kappa_\rho\eta\right)}
\end{equation}
In particular, with the choice $\displaystyle \eta := \min\! \left\{ 1,\frac{1}{2(e-2) \kappa_\rho} \right\} $, 
we obtain the time-uniform bound
\begin{equation}
\label{eq:fixed-luckiness-explicit}
\E\bigl[R_T^c(\rho)\bigr]
\le 2\left(1 + 2 (e-2) \kappa_\rho \right) \KL(\rho\|\pi)
\end{equation}
\end{theorem}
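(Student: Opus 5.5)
The plan is to start from the fixed-rate comparator-centered envelope of Proposition~\ref{prop:comparator-second-order} and close it with a self-bounding step in expectation. With $\eta_t\equiv\eta\in(0,1]$ the schedule is trivially nonincreasing, so \eqref{eq:comparator-second-order} gives, pathwise,
\[
R_T^c(\rho)\le \frac{\KL(\rho\|\pi)}{\eta}+(e-2)\,\eta\sum_{t=1}^T\Psi_t(\rho).
\]
Taking expectations reduces the theorem to controlling $\E\sum_t\Psi_t(\rho)$ by a multiple of $\E R_T^c(\rho)$.

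The key step is the conditional computation. Since $p_t$ is $\mathcal F_{t-1}$-measurable and $c_t$ is independent of $\mathcal F_{t-1}$ with the law of $c_1$, we have
\[
\E[\Psi_t(\rho)\mid\mathcal F_{t-1}]=\sum_i p_t(i)\,\E\bigl[(c_t(i)-\ip{\rho}{c_t})^2\bigr].
\]
By \eqref{eq:rho-massart} this is at most
\[
\kappa_\rho\sum_i p_t(i)\bigl(\mu(i)-\ip{\rho}{\mu}\bigr)=\kappa_\rho\,\E\bigl[\ip{p_t}{c_t}-\ip{\rho}{c_t}\mid\mathcal F_{t-1}\bigr].
\]
Summing over $t$ and applying the tower rule gives $\E\sum_t\Psi_t(\rho)\le\kappa_\rho\,\E R_T^c(\rho)$. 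Everything is bounded, since losses lie in $[0,1]$ and $K$ is finite, so all expectations are finite. This is the main step to verify carefully, mainly the predictability of $p_t$ and the identity of the conditional mean of the one-round regret.

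Substituting back gives
\[
\E R_T^c(\rho)\le \frac{\KL(\rho\|\pi)}{\eta}+(e-2)\kappa_\rho\eta\,\E R_T^c(\rho).
\]
Because $(e-2)\kappa_\rho\eta<1$ and $\E R_T^c(\rho)$ is finite, we can rearrange to obtain \eqref{eq:fixed-luckiness}. The rearrangement is valid for either sign of $\E R_T^c(\rho)$: the inequality $(1-x)y\le a$ with $x<1$ is just a linear rearrangement.

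For the explicit choice, first take $\eta=\min\{1,1/(2(e-2)\kappa_\rho)\}$. This gives $(e-2)\kappa_\rho\eta\le\tfrac12$, so the bound is at most $2\KL(\rho\|\pi)/\eta$. Next, $1/\eta=\max\{1,2(e-2)\kappa_\rho\}\le 1+2(e-2)\kappa_\rho$, which yields \eqref{eq:fixed-luckiness-explicit}. The case $\kappa_\rho=0$ is covered with $\eta=1$. The bound does not depend on $T$, which gives the time-uniform claim.
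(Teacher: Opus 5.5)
Your proof is correct and takes the route the paper relies on. You use the comparator-centered envelope of Proposition~\ref{prop:comparator-second-order} at constant $\eta$, then the conditional self-bounding step $\E[\Psi_t(\rho)\mid\mathcal F_{t-1}]\le\kappa_\rho\,\E[\ip{p_t}{c_t}-\ip{\rho}{c_t}\mid\mathcal F_{t-1}]$ (the same step as in the paper's proof of Corollary~\ref{cor:budget-luckiness}), then a linear rearrangement using $(e-2)\kappa_\rho\eta<1$. The paper defers the proof of this fixed-rate statement to \cite{balsubramani2026concentration}, and your explicit-$\eta$ calculation is right, including the $\kappa_\rho=0$ case.
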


The transfer to the adaptive schedule is the point of this subsection: the same fast rate survives when the temperature is driven by the realized clock, and the learner never needs the value of $\kappa_\rho$ to get it.

\begin{corollary}[Predictable-rate stochastic luckiness]
\label{cor:budget-luckiness}
Assume the same stochastic setting as in Theorem~\ref{thm:fixed-luckiness}, and run the second-order schedule \eqref{eq:budget-schedule-C} with budget $\Gamma>0$.
If $\KL(\rho\|\pi)\le \Gamma$, then for every $T\ge 1$,
\begin{equation}\label{eq:budget-luckiness-raw-C}
\E\bigl[R_T^c(\rho)\bigr]
\le 2\Gamma + 2\,\E\bigl[Q_*^T(c)\bigr]
+ (2C+C^{-1})^2(e-2)\kappa_\rho\Gamma\end{equation}
In particular, if $c_t(i)\in[0,1]$ almost surely for all $t,i$, then Proposition~\ref{prop:range} gives $Q_*^T(c)\le 1/8$ and therefore, for example, if $C=1/\sqrt{2}$, then $\E\bigl[R_T^c(\rho)\bigr] \le 2\Gamma + \frac{1}{4} + 8(e-2) \kappa_\rho \Gamma$.
So the same variable-rate algorithm enjoys constant expected regret in $T$ under the comparator-centered low-noise condition while retaining the cumulant intrinsic time in the pathwise theorem.
\end{corollary}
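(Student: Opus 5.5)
Start from the exact identity of Theorem~\ref{thm:pacbayes-second-order}, $R_T^c(\rho)=D_T+B_T(\rho)+\sum_t\eta_tQ_t(c)$. The schedule \eqref{eq:budget-schedule-C} is nonincreasing, so Theorem~\ref{thm:cumulant-chain} gives $D_T\le0$. Since $\KL(\rho\|q_{T,\eta_T})\ge0$, we also have $B_T(\rho)\le\Gamma/\eta_T$. The upper half of Theorem~\ref{thm:scaling-time} then yields the pathwise bound
\[
R_T^c(\rho)\le \frac{\Gamma}{\eta_T}+Q_*^T(c)+2C\sqrt{\Gamma V_T(c)} .
\]
The task is to take expectations and control the two $\sqrt{V_T}$-type terms by the comparator's own regret. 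Use $1/\eta_T\le 1+\sqrt{V_{T-1}(c)/(C^2\Gamma)}\le 1+C^{-1}\sqrt{V_T(c)/\Gamma}$. This gives $R_T^c(\rho)\le \Gamma+Q_*^T(c)+(2C+C^{-1})\sqrt{\Gamma V_T(c)}$, which is exactly \eqref{eq:budgeted-bound-C}.

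Next, bound the clock. Proposition~\ref{prop:comparator-second-order} gives $Q_t(c)\le(e-2)\Psi_t(\rho)$, so $V_T(c)\le(e-2)\sum_t\Psi_t(\rho)$. Under the i.i.d.\ assumption, $p_t$ is $\mathcal F_{t-1}$-measurable and $c_t$ is independent of the past. Condition \eqref{eq:rho-massart} then gives
\[
\E[\Psi_t(\rho)\mid\mathcal F_{t-1}]\le\kappa_\rho\,\ip{p_t}{\mu-\ip{\rho}{\mu}\one}=\kappa_\rho\,\E[\ip{p_t}{c_t}-\ip{\rho}{c_t}\mid\mathcal F_{t-1}].
\]
Summing over rounds, $\E[V_T(c)]\le(e-2)\kappa_\rho\,\E[R_T^c(\rho)]$. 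This is the self-bounding relation. Note that the learner never uses $\kappa_\rho$; it enters only through this expectation bound.

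To close the argument, write $x:=\E[R_T^c(\rho)]$, $a:=\Gamma+\E[Q_*^T(c)]$ and $b:=(2C+C^{-1})\sqrt{\Gamma(e-2)\kappa_\rho}$. Taking expectations of the pathwise bound and applying Jensen ($\E\sqrt{V}\le\sqrt{\E V}$) gives $x\le a+b\sqrt{x}$. Apply AM--GM in the form $b\sqrt{x}\le x/2+b^2/2$ and rearrange. The result is $x\le 2a+b^2=2\Gamma+2\E[Q_*^T(c)]+(2C+C^{-1})^2(e-2)\kappa_\rho\Gamma$, which is \eqref{eq:budget-luckiness-raw-C}.

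For the specialization, Proposition~\ref{prop:range} gives $Q_*^T\le1/8$. At $C=1/\sqrt2$ we have $(2C+C^{-1})^2=(\sqrt2+\sqrt2)^2=8$, which yields the stated constant.

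The main obstacle is the self-bounding step. There are three things to check. First, the conditioning argument is legitimate, since $p_t$ is predictable and $c_t$ is fresh. Second, $x$ is finite, which holds because regret is bounded by $T$ for losses in $[0,1]$, so the rearrangement is valid. Third, the possibly negative $x$ case causes no trouble: if $x<0$ the bound is trivial, and the self-bounding inequality shows $\E V\ge0$ forces consistency. Proposition~\ref{prop:comparator-second-order} requires $\eta_t\le1$, and the clip in the schedule guarantees this.
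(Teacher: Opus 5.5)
Your proposal is correct and follows the paper's proof essentially step for step. The steps are the pathwise bound \eqref{eq:budgeted-bound-C}, Jensen on the square root, the self-bounding relation $\E[V_T(c)]\le(e-2)\kappa_\rho\,\E[R_T^c(\rho)]$ obtained from Proposition~\ref{prop:comparator-second-order} by conditioning on $\mathcal F_{t-1}$, and the AM--GM rearrangement. Your re-derivation of \eqref{eq:budgeted-bound-C} and your remarks on the finiteness and sign of $\E[R_T^c(\rho)]$ are small details the paper leaves implicit.
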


\begin{corollary}[Point-comparator low-noise and gap corollaries]
\label{cor:point-luckiness}
Let $k^*\in[K]$ and set $\rho=\delta_{k^*}$.

\begin{enumerate}[label=\textup{(\roman*)},leftmargin=2.4em]
\item 
If the ordinary point-comparator low-noise condition holds, namely
\begin{equation}\label{eq:point-massart}
\E\bigl[ (c_t(i)-c_t(k^*))^2 \bigr]
\le c_* \left( \mu(i)-\mu(k^*) \right)
\qquad \text{for all } i\in[K]\end{equation}
then Theorem~\ref{thm:fixed-luckiness} yields $\E\bigl[ R_T^c (k^*) \bigr]
\le 2\left( 1 + 2(e-2) c_* \right) \log \frac{1}{\pi(k^*)}$.

\item 
If, more concretely, there is a mean gap $d_{\min} := \min_{i\neq k^*}\left(\mu(i)-\mu(k^*)\right)>0$, then \eqref{eq:point-massart} holds with $c_*=1/d_{\min}$, and therefore $\E\bigl[R_T^c(k^*)\bigr]\le 2\left(1+\frac{2(e-2)}{d_{\min}}\right)\log\frac{1}{\pi(k^*)}$.
For the uniform prior this is $O\left((1+d_{\min}^{-1})\log K\right)$.
\end{enumerate}
\end{corollary}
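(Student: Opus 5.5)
Both parts reduce to Theorem~\ref{thm:fixed-luckiness} applied with the point comparator $\rho=\delta_{k^*}$. For part (i), first check that with $\rho=\delta_{k^*}$ we have $\ip{\rho}{c_t}=c_t(k^*)$ and $\ip{\rho}{\mu}=\mu(k^*)$. Condition \eqref{eq:rho-massart} then reads exactly as \eqref{eq:point-massart}, so it holds with $\kappa_\rho=c_*$. Next compute $\KL(\delta_{k^*}\|\pi)=\log\frac{1}{\pi(k^*)}$; it is finite since $\pi(k^*)>0$ is implicit, which is exactly what $\rho\ll\pi$ needs. Substituting both into the explicit bound \eqref{eq:fixed-luckiness-explicit}, with the stated choice $\eta=\min\{1,1/(2(e-2)c_*)\}$, gives (i) directly. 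If $c_*=0$, the rate is $\eta=1$ and \eqref{eq:fixed-luckiness} still applies, since $(e-2)\kappa_\rho\eta=0<1$.

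For part (ii), the only real work is verifying \eqref{eq:point-massart} with $c_*=1/d_{\min}$. For $i=k^*$ both sides vanish. For $i\neq k^*$, since $c_t\in[0,1]^K$ we have $(c_t(i)-c_t(k^*))^2\le 1$, so $\E[(c_t(i)-c_t(k^*))^2]\le 1$. By the definition of $d_{\min}$, $\mu(i)-\mu(k^*)\ge d_{\min}$, so $1\le(\mu(i)-\mu(k^*))/d_{\min}$. Chaining these two inequalities gives \eqref{eq:point-massart} with $c_*=d_{\min}^{-1}$. Part (i) then gives the displayed bound.

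For the uniform prior, $\log\frac1{\pi(k^*)}=\log K$, and $2(1+2(e-2)/d_{\min})=O(1+d_{\min}^{-1})$, which gives the $O((1+d_{\min}^{-1})\log K)$ statement.

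No step is a real obstacle. The point to watch is the $i=k^*$ case and the degenerate $c_*=0$ case, where the bound-by-one step is simply not invoked.
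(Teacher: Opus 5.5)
Your proof is correct and follows the paper's argument essentially line for line. Specialize Theorem~\ref{thm:fixed-luckiness} to $\rho=\delta_{k^*}$ with $\kappa_\rho=c_*$ and $\KL(\delta_{k^*}\|\pi)=\log(1/\pi(k^*))$, then verify \eqref{eq:point-massart} with $c_*=1/d_{\min}$ via $(c_t(i)-c_t(k^*))^2\le 1\le(\mu(i)-\mu(k^*))/d_{\min}$ for $i\neq k^*$, and trivially for $i=k^*$; your remarks on the degenerate case $c_*=0$ and on $\pi(k^*)>0$ are harmless extra care that the paper leaves implicit.
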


The same point-comparator guarantee already delivers the diffuse case anticipated above, by a domination argument.

\begin{proposition}[Fast rates transfer to diffuse comparators by domination]
\label{prop:diffuse-domination}
Let $c_1,c_2,\dots$ be i.i.d.\ in $[0,1]^K$ with mean $\mu$, and let $k^\star\in\arg\min_i \mu(i)$. For every algorithm and every $\rho\in\Delta([K])$,
\[
\E\!\left[R_T^c(\rho)\right]=\E\!\left[R_T^c(k^\star)\right]-T\bigl(\ip{\rho}{\mu}-\mu(k^\star)\bigr)\ \le\ \E\!\left[R_T^c(k^\star)\right],
\]
since $\ip{\rho}{\mu}\ge \mu(k^\star)$. Consequently, whenever the point-comparator low-noise condition \eqref{eq:point-massart} holds at $k^\star$---in particular whenever there is a mean gap $d_{\min}>0$---Corollary~\ref{cor:point-luckiness} already yields, for every comparator $\rho$ (diffuse or not),
\[
\E\!\left[R_T^c(\rho)\right]\ \le\ 2\Bigl(1+\tfrac{2(e-2)}{d_{\min}}\Bigr)\log\tfrac{1}{\pi(k^\star)}=O(1).
\]
\end{proposition}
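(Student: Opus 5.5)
The plan is to establish the identity by linearity, and then obtain the bound by combining the identity with Corollary~\ref{cor:point-luckiness}.

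\textbf{The identity.} By definition, $R_T^c(\rho)=\sum_{t}\ip{p_t}{c_t}-\ip{\rho}{C_T}$ and $R_T^c(k^\star)=\sum_t\ip{p_t}{c_t}-C_T(k^\star)$. The learner's term $\sum_t\ip{p_t}{c_t}$ is the same in both, whatever algorithm generated the $p_t$. Subtracting therefore gives, pathwise,
\[
R_T^c(\rho)=R_T^c(k^\star)-\bigl(\ip{\rho}{C_T}-C_T(k^\star)\bigr).
\]
Taking expectations and using linearity, $\E[\ip{\rho}{C_T}]=\sum_{t}\ip{\rho}{\E c_t}=T\ip{\rho}{\mu}$, since $\rho$ is a fixed deterministic distribution and the $c_t$ are identically distributed with mean $\mu$. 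Likewise $\E[C_T(k^\star)]=T\mu(k^\star)$. Integrability is automatic because all losses lie in $[0,1]$. This yields the displayed equality. Note that no independence between $p_t$ and $c_t$ is needed, because the learner's term cancels before any expectation is taken.

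\textbf{The inequality.} Since $k^\star$ minimizes $\mu$, we have $\ip{\rho}{\mu}=\sum_i\rho(i)\mu(i)\ge\mu(k^\star)$, so the subtracted term is nonnegative. Hence $\E[R_T^c(\rho)]\le\E[R_T^c(k^\star)]$.

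\textbf{The bound.} For the consequence, I would invoke Corollary~\ref{cor:point-luckiness}, applied to the algorithm it concerns: fixed-rate Hedge at the rate of Theorem~\ref{thm:fixed-luckiness} with $\rho=\delta_{k^\star}$. Part~(ii) states that a mean gap $d_{\min}>0$ implies \eqref{eq:point-massart} with $c_*=1/d_{\min}$, and gives
\[
\E[R_T^c(k^\star)]\le 2\bigl(1+2(e-2)/d_{\min}\bigr)\log(1/\pi(k^\star)).
\]
Chaining this with the inequality above gives the claim for every $\rho$, diffuse or not. When only \eqref{eq:point-massart} is assumed, part~(i) gives the analogous bound with $c_*$ in place of $1/d_{\min}$.

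The main obstacle is not analytic but one of bookkeeping. The first display holds for every algorithm, whereas the $O(1)$ conclusion holds only for the algorithm, and the learning rate, to which Corollary~\ref{cor:point-luckiness} applies. I would state that restriction explicitly. I would also remark that a mean gap forces $k^\star$ to be unique, so the choice of minimizer is unambiguous.
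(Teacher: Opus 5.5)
Your proposal is correct and follows the paper's proof essentially step for step: the learner's term cancels pathwise, linearity of expectation gives $T\ip{\rho}{\mu}$ and $T\mu(k^\star)$, the inequality uses $\ip{\rho}{\mu}\ge\mu(k^\star)$, and the bound comes from Corollary~\ref{cor:point-luckiness}\textup{(ii)} at $k^\star$, or part~\textup{(i)} with $c_*$ in place of $1/d_{\min}$. Your added remark is a worthwhile clarification that the statement leaves implicit: the identity holds for every algorithm, but the $O(1)$ bound holds only for the algorithm and rate covered by that corollary.
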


A diffuse comparator is a strictly weaker benchmark, so constant expected regret against it is inherited from the point-comparator result and never needs \eqref{eq:rho-massart}. Two points sharpen the picture. First, \eqref{eq:rho-massart} is more restrictive than a single equal-mean coincidence: because its left side is nonnegative, it is infeasible at \emph{every} expert with $\mu(i)\le\ip{\rho}{\mu}$ unless that expert's loss is deterministic, so for any diffuse $\rho$ not concentrated on one mean-level the condition forces determinism on all weakly-below-average experts. Second, when the means are distinct the Bayes-optimal posterior is a point mass, so a genuinely diffuse Bayes-optimal posterior arises only under a tie: on a tied optimal set $S$ we have $\E[R_T^c(\rho)]=\E[R_T^c(i)]$ for every $i\in S$, and constant regret follows from the gap $\min_{i\notin S}(\mu(i)-\mu^\star)>0$ to the strictly suboptimal experts. The sharper constant is a separate matter: the domination bound scales with $\log(1/\pi(k^\star))=\KL(\delta_{k^\star}\|\pi)$, whereas a diffuse comparator has strictly smaller complexity $\KL(\rho\|\pi)=\log K-\mathrm{H}(\rho)$; obtaining $\E[R_T^c(\rho)]=O(\KL(\rho\|\pi))$ is a fast-rate question for the central-condition machinery; the comparator-centered second-order envelope does not decide it.

The fixed-rate theorem is a clean theoretical way to understand the luckiness mechanism, but in practice the variable-rate algorithm from Sections~\ref{sec:adaptive} and~\ref{sec:second-order} admits an analogous conclusion, without requiring advance knowledge of the noise level parameter $\kappa_{\rho}$.

\subsubsection{Transfer back to the original losses}

Combining Corollary~\ref{cor:actual-losses} with any of the bounds above shows that the expected regret for the original losses $\ell_t$ equals the composite-loss constant plus the expected mismatch term $\E\Biggl[ \sum_{t=1}^T \left( \ip{\rho}{u_t} - \ip{p_t}{u_t} \right) \Biggr]$.
Hence any application in which this term is uniformly bounded above inherits a constant expected actual-loss regret bound as well. 
When the side factors are decomposed as in \eqref{eq:side-decomp-adaptive}, this mismatch is exactly the cumulative cross-entropy difference in \eqref{eq:adaptive-mismatch-crossent}.

The relation with \cite{PerezOrtizKoolen2022} is now transparent. 
Their multiscale theorems show that a carefully tuned multiscale entropy regularizer can have both scale-sensitive worst-case regret and constant expected pseudoregret under the same low-noise hypothesis. 
The relative-entropy-geometry results in the present section do \emph{not} by themselves subsume that scale-sensitive guarantee: standard Hedge on common-scale composite losses does not automatically deliver regret that tracks expert-dependent ranges, the guarantee the multiscale experts problem was introduced to provide \cite{bubeck2019onlinemultiscale}.
The accounting itself is not restricted to ordinary relative-entropy geometry, as the common-scale setting here shows and Appendix~\ref{sec:weighted-multiscale} shows more directly. 
The stochastic-luckiness step becomes much simpler and more PAC-Bayesian once one works with comparator-centered intrinsic time, while the weighted-entropy variant changes only the transport geometry and terminal potential. 
In that sense, Proposition~\ref{prop:comparator-second-order} is the common-scale counterpart of the more delicate regularizer-specific variance conversion carried out in \cite{PerezOrtizKoolen2022}.

\section{Applications and extensions}
\label{sec:applications-summary}

The composite-loss reduction of Section~\ref{sec:exact} makes the same information accounting portable: once side information, optimism, and partial feedback are folded into a composite score, each setting below is the one-step balance and its two cumulative forms, specialized. Each is developed in full, with proofs, in the appendices; here we record only what the accounting yields in each case.

In the full-information settings (Appendix~\ref{sec:further}), ordinary Hedge inherits an exact variational sharpening of the classical entropic regret bound, with the slack named and not discarded (Appendix~\ref{sec:variational-sharpening}). Weighted negative entropy and multiscale expert sets are absorbed into the same one-step cumulant (Appendix~\ref{sec:weighted-multiscale}), and multiscale forgetting arises from a continuum mixture of restart priors. Repeated games and robust opponent modeling read the decomposition as a value-of-side-information accounting. Continuum priors and hyperparameter mixtures yield the PAC-Bayes form directly (Appendix~\ref{sec:continuum}). Continuous-action online convex optimization is covered by a density-valued update (Appendix~\ref{sec:oco}), and boosting is the local update at a pressure target---the line search recovers the AdaBoost coefficient in closed form, and the cumulative normalization identity reproduces the exponential-loss decay exactly (Appendix~\ref{sec:boosting}).

Partial feedback is outside the scope of this paper. The learner there never observes the loss vector the accounting is written in, so the identity holds on an \emph{estimated} score and what the setting adds is a bounded and explicitly named set of terms relating those estimates to the losses actually incurred. That is a development of its own and is not undertaken here. Within full information the framework is unchanged across every application above; only the composite score differs.
\section{Related work}
\label{sec:discussion-related}

Table~\ref{tab:tradeoffs} is meant as a reading guide for the surrounding literature.
The online-learning literature is by now large and well surveyed \cite{hoi2021online}. Many of its papers appear to study the apparent tradeoffs of Table~\ref{tab:tradeoffs}, but they often differ mainly in which part of the same information balance is emphasized or relaxed.
We organize the discussion below by those themes.

\begingroup
\centering
\small
\renewcommand{\arraystretch}{1.35}

\begin{longtable}{@{}p{0.22\textwidth}|p{0.32\textwidth}|p{0.4\textwidth}@{}}
\caption{Supposed dichotomies of online learning, and their explanations through the lens of information.}
\label{tab:tradeoffs} \\

\toprule
\textbf{Apparent tradeoff} & \textbf{Conventional manifestation} & \textbf{Exact informational form} \\
\midrule
\endfirsthead

\multicolumn{3}{c}%
{{\bfseries Table \thetable\ continued from previous page}} \\
\toprule
\textbf{Apparent tradeoff} & \textbf{Conventional manifestation} & \textbf{informational form} \\
\midrule
\endhead

\bottomrule
\endfoot

\bottomrule
\endlastfoot

Hard vs.\ easy sequences &
Worst-case and stochastic analyses require separate theorems. &
Hardness is relative to a comparator: the same identity yields worst-case $O(\sqrt{V_T})$ or fast rates depending on how much the intrinsic time self-bounds. Pure noise remains hard across comparator choices. \\
Second-order vs.\ first-order bounds &
Second-order (variance-sensitive) bounds need extra assumptions or different algorithms. &
First-, second-, and higher-order bounds are successive Taylor relaxations of the same exact cumulant term; the order is chosen by the analyst while the algorithm is unchanged. \\
Different comparator classes &
Point experts, mixtures, and/or sleeping/shifting comparators need separate analyses. &
Every comparator $\rho$ incurs regret according to its information-theoretic cost $\KL(\rho\|\pi)$; richer classes have larger terminal terms in the same identity. \\
Parameter-free vs.\ adaptive &
``Parameter-free'' methods are presented as removing tuning, while adaptive methods still expose learning-rate or regularization choices. &
Budgets, temperatures, and targets are observable constraints on comparator complexity or revealed difficulty. Outer controllers may learn them online, but the same identity underlies both the tuned and simultaneous versions. \\
Algorithmic upper vs.\ lower envelopes &
Algorithmic penalty upper bounds and actual incurred penalties are usually analyzed separately, and the gap between them is hidden inside worst-case inequalities. &
For any realized path, the algorithm's internal intrinsic-time loss decomposes exactly. There is no slack at the identity level, and any later gap comes only from deliberately relaxing exact terms such as $Q_t(c)$ or $D_T$. \\
Variance definitions &
Multiple variance notions (empirical, conditional, quadratic) appear in different results. &
All approximate the exact intrinsic-time density: the algorithm's own play-conditional cumulant $Q_t(c)$. \\
Time horizon &
Bounds depend on the horizon $T$, which must be known or guessed. &
The intrinsic-time process $V_T(c)$ replaces the horizon; adaptive schedules track it online without knowing $T$. \\
Bounded vs.\ unbounded losses &
Bounded losses (e.g.\ $c_t(i)\in[0,1]$) are needed for low regret; boundedness and Taylor expansions go hand in hand to control the cumulant generating function. &
The identity requires only that the one-step log-normalizer is finite at the chosen temperature.  Boundedness enters only when relaxing the exact cumulant $Q_t(c)$ via a Taylor or range-based bound; without such relaxations, the pathwise identity itself remains meaningful whenever the exponential moment exists. \\
Full vs.\ partial feedback &
Bandit and partial-feedback settings require fundamentally different algorithms and analyses. &
The same identity applies to estimated losses; the only new terms are an explicit predictable bias correction and martingale terms from the estimation procedure. The intrinsic time is computed on the estimates, and graph or sparsity structure enters only through the observation ratios that govern those corrections. \\

\end{longtable}
\endgroup

One illustration makes the point of the table concrete. The three ledger terms, normalized at each prefix to shares $\omega^{\mathrm{pay}}_t,\omega^{\mathrm{drift}}_t,\omega^{\mathrm{info}}_t$ that sum to one, turn the decomposition into a diagnostic read directly off a run (Figure~\ref{fig:same-regret-diff-decomp}). Two sequences with the same terminal regret can spend it on very different terms, so the apparent trade-offs of Table~\ref{tab:tradeoffs} are different readings of the same three quantities.

\subsection{Difficulty measures and schedule design}

The first group of rows in Table~\ref{tab:tradeoffs} concerns how one measures the difficulty of a realized sequence.
The original weighted-majority and Hedge analyses yield the familiar $\widetilde O(\sqrt{T\log K})$ scale \cite{LW94,FreundSchapire97,CesaBianchiMansourStoltz2007}, and the standard monograph on prediction with expert advice collects this classical worst-case theory \cite{CesaBianchiLugosi2006}.
A long subsequent line of work replaced the horizon $T$ by more sensitive first- or second-order quantities, including quadratic excess losses, quantile-sensitive second-order terms, mixability-gap schedules, and parameter-free potentials \cite{gaillard2014secondorderhedge,derooij2014adahedge,koolen2015squint,luo2015achieving,orabona-pal}.
Freund's 2016 scaling-time problem sharpened this agenda by asking specifically for regret guarantees driven by the variance of the realized loss under the learner's own current distribution \cite{Freund2016}; the recent NormalHedge result of \cite{FreundEtAl2026} shows that a closely related curvature process can support simultaneous quantile adaptation.

A useful way to place these works side by side is to distinguish several common ``difficulty'' scales.
Let $e_t(i):=\ell_t(i)-\ell_t(k)$ denote excess loss relative to a fixed comparator expert $k$, and let $\mu_t:=\ip{p_t}{\ell_t}$.
For losses in $[0,1]$, the standard quantities satisfy
\begin{equation}
\label{eq:variancenotions}
\underbrace{\sum_{t=1}^T \Var_{i\sim p_t}(\ell_t(i))}_{\textup{variance across actions}}
\le
\underbrace{\sum_{t=1}^T \sum_{i=1}^K p_t(i)e_t(i)^2}_{\textup{quadratic excess loss}}
\le
\underbrace{\sum_{t=1}^T \sum_{i=1}^K p_t(i)|e_t(i)|}_{\textup{first-order magnitude}}
\end{equation}
Different algorithms and analyses are tuned to different points on this spectrum \cite{CesaBianchiMansourStoltz2007,gaillard2014secondorderhedge,koolen2015squint,luo2015achieving,FreundEtAl2026}.
AdaHedge and SafeBayes are especially relevant for the present paper because they tune the learning rate using the cumulative mixability gap itself \cite{derooij2014adahedge,the2012safebayesian}.

Our relation to this literature is twofold.
First, the quantity tracked here, $Q_t(c)$, is the finite-temperature loss term in the one-step identity; it is defined after the composite-loss reduction has already absorbed side information.
Proposition~\ref{prop:tilted-var} shows that it is an average of tilted variances, so the familiar quadratic terms arise only after an additional relaxation.
Second, the schedule question splits naturally into two geometries.
For the prior-retempered update, intrinsic-time schedules play the role of second-order, horizon-free clocks.
For the local update, pressure-target line searches stabilize a one-step normalization condition instead of a cumulative variance budget.
This is why the rows of Table~\ref{tab:tradeoffs} concerning hard versus easy sequences, first versus second order, variance definitions, and horizon dependence can be treated inside one exact accounting, with no switching between unrelated proofs.

The same perspective clarifies the bounded-versus-unbounded row of the table.
This is also why bounded-influence score transforms fit naturally beside recent work on heavy-tailed or unbounded losses \cite{neyman2023nologpooling}: the primitive object is still the log-normalizer, with no bounded-range assumption.

\paragraph{Safe Bayes as adaptive Bayes seen on the wrong clock.}
The Safe-Bayesian programme (\cite{the2012safebayesian} and the misspecification analysis of \cite{grunwald2017inconsistency}) tempers the likelihood by an exponent $\eta\in(0,1]$ to repair posterior consistency under model misspecification, choosing $\eta^*$ to track the relative-entropy-optimal concentration rate.
In the present framework the same exponent $\eta_t$ enters as the per-round inverse-step-size, so Safe Bayes can be read as Bayes computed on a clock that ticks slower than the wall clock.
The two halves of the discipline align cleanly: the cumulative clock $\tau_T:=\sum_t \eta_t^2 Q_t$, which at $\eta_t\equiv\eta^*$ equals $\eta^{*2}$ times the intrinsic time $V_T=\sum_t Q_t$ of \S\ref{sec:schedules}, is the integrated Safe-Bayesian clock, and the misspecification-driven prescription $\eta^*<1$ is the wall-clock face of a positive intrinsic-time growth rate.
Conversely, our identities recover the Safe-Bayesian guarantee at the choice $\eta_t=\eta^*$ as a corollary of the one-step decomposition, but generalize to time-varying schedules and to settings (online, adversarial, partial feedback) outside the i.i.d.\ misspecification frame Safe Bayes was originally designed for.
The structural distinction is that Safe Bayes asks ``which constant $\eta$ makes Bayes calibrated under misspecification?'' while the present framework asks ``what is the information cost of any chosen $\eta_t$, calibrated or not?''---and the former is recovered as the calibration-respecting subset of the latter.

\subsection{Comparator classes and the priors that encode them}

Another cluster of rows in Table~\ref{tab:tradeoffs} concerns the comparator.
Classical regret compares to the best single expert, but many later results compare to quantiles, mixtures, specialists, sleeping experts, or switching sequences.
Parameter-free hedging \cite{chaudhuri-freund-hsu}, Squint \cite{koolen2015squint}, AdaNormalHedge \cite{luo2015achieving}, and the new NormalHedge scaling-time result \cite{FreundEtAl2026} are central examples in the quantile or prior-mass direction.
Specialist and sleeping-expert methods allow abstention or confidence weighting \cite{FreundSchapireSingerWarmuth1997}, and the Bayesian reinterpretation of \cite{KoolenAdamskiyWarmuth2012} shows how naturally such structure fits with posterior averaging.
Tracking or switching regret compares to a sequence of experts with limited changes, in the tradition of fixed share \cite{herbster1998tracking}.
Internal and swap regret, especially important in game-theoretic applications, are classically obtained by reductions from external regret \cite{blum2007external}.

The present paper's viewpoint is that these are all changes of comparator class before they are changes of proof technique.
Once the comparator is encoded as a prior or a structured prior space, the same information balance applies.
This is explicit in the exact shifting-comparator theorem of Appendix~\ref{sec:shifting}, which writes dynamic regret exactly before any switch-count relaxation, and in the sleeping/quantile discussions of Appendix~\ref{sec:shifting} and Section~\ref{sec:simul-quantile}.
Conceptually, structured priors turn many ``new'' comparator notions into ordinary PAC-Bayesian regret on a richer space.

This also reframes the parameter-free versus adaptive row of Table~\ref{tab:tradeoffs}.
In the present notation, a budget $\Gamma$, a temperature schedule, or a pressure target $a_t$ is an explicit constraint or objective relative to which information is being measured, where the parameter-free literature treats such quantities as nuisance parameters to be removed.
We may still learn such quantities online.
Our dyadic controller in Section~\ref{sec:simul-quantile} is one example.
Single-copy simultaneous methods such as AdaNormalHedge, coin betting, and the NormalHedge scaling-time construction remain important comparison points \cite{luo2015achieving,orabona-pal,FreundEtAl2026}.
The difference is one of aim: this paper keeps the underlying information accounting exact, while those methods remove the need for manual tuning.

\subsection{Predictable structure and partial feedback in online optimization}

A separate row of Table~\ref{tab:tradeoffs} concerns predictable structure.
Optimistic and variation-based algorithms exploit the fact that part of the next loss vector is foreseeable from the past \cite{HazanKale2010,ChiangEtAl2012,SteinhardtLiang2014}.
The composite-loss reduction in Section~\ref{sec:exact} is a direct way of expressing that idea: predictable information is inserted through the side term $u_t$, and the exact intrinsic-time loss is then computed only on the residual sequence $c_t=\ell_t+u_t$.
In other words, optimism enters as one choice of what counts as already-explained information.

The full-versus-partial-feedback row is similar.
Bandit and feedback-graph algorithms introduce estimated losses, implicit exploration, and predictable bias corrections \cite{auer2002siam,neu2015neurips,alon2017journalb,RouyerEtAl2022FeedbackGraphs}; the standard references for the bandit setting collect this theory \cite{bubeck2012regret,lattimore2020bandit}.
From the present viewpoint, these are front-end changes to the score sequence; the back-end information identity is untouched.
Once we write down the estimated composite losses, the same one-step balance applies, with explicit martingale and bias terms tracking what the estimator added.
This is the reason the partial-information section can parallel the full-information one so closely.

\subsection{Local normalization and boosting coefficients}

The local pressure-target update belongs to a different related-work cluster from the retempered second-order schedules.
Its closest relatives are one-step coefficient choices in boosting and other multiplicative-normalization procedures.
Already in the original game-theoretic and boosting formulations of multiplicative weights \cite{freund1996game,FreundSchapire97}, the current round is summarized by a one-step normalizer, and choosing the coefficient amounts to deciding how aggressively to trade margin gain against normalization.
The textbook AdaBoost coefficient is exactly the minimizer of that one-step normalizer in the binary setting \cite{FreundSchapire97,SchapireFreund12BoostingBook}, and the confidence-rated generalization chooses each coefficient by directly minimizing that same normalization factor \cite{schapire1999improved}.
The drifting-games analysis of \cite{SteinhardtLiang2014} makes this normalization viewpoint especially explicit.

The local results in Section~\ref{sec:pressure} isolate the cumulative consequence of that viewpoint.
The schedule $\sum_i p_t(i)e^{-\eta_t(c_t(i)-a_t)}=1$ fixes a target level for the current normalized loss, and the cumulative identity records the resulting class-conditioned terminal mass exactly.
This is why the natural applications of the local update are different from those of the retempered update.
The retempered schedule is the right choice when we want anytime second-order control from a cumulative clock.
The local schedule is the right choice when we want to calibrate the present step itself, as in boosting or other procedures driven by one-step margin or loss targets.

This distinction is important conceptually.
Without it, the literature can look as if ``learning-rate tuning'' were one topic.
In fact, the rows of Table~\ref{tab:tradeoffs} split into two different questions: cumulative intrinsic-time equalization for the retempered update, and one-step pressure calibration for the local update.
Keeping these apart helps explain why the two adaptive updates in this paper share a one-step information cost but lead to different terminal objects.

\subsection{Individual-sequence prediction and pathwise PAC-Bayes}

The paper sits inside the individual-sequence traditions collected in \S\ref{sec:mirror-descent} below: prequential statistics, game-theoretic probability, predictive complexity, and universal coding.

Our use of PAC-Bayes language is therefore intentionally pathwise.
The quantity $\KL(\rho\|\pi)$ enters here as a comparator description cost, where the PAC-Bayes literature deploys it as an i.i.d.\ generalization penalty.
This is also why the row of Table~\ref{tab:tradeoffs} about algorithmic upper versus lower envelopes matters so much.

The concentration results fit neatly into that picture.
The equivalence theorems of \cite{RakhlinSridharan2017} show that deterministic regret inequalities and martingale tail bounds are often two views of the same statement.
Our sampled-expert and confidence-sequence theorems (Theorems~\ref{thm:sampled-pacbayes} and~\ref{thm:sampled-anytime}) follow the same pattern.
The deterministic part of the accounting is kept exact first; probabilistic control is added only to the explicit martingale term afterwards.
This is why the high-probability and anytime results in Section~\ref{sec:schedules} feel like genuine extensions of the decomposition.

\subsection{Scope of the scaling-time reduction}
\label{subsec:lower-bounds}

The most direct external comparison is with the scaling-time question of \cite{Freund2016} and its recent NormalHedge resolution in \cite{FreundEtAl2026}.
Our fixed-budget theorem gives an exact PAC-Bayes-style counterpart: for any chosen budget $\Gamma$, one run controls all comparators with $\KL(\rho\|\pi)\le \Gamma$, and the dyadic controller of Section~\ref{sec:simul-quantile} converts those fixed-budget guarantees into simultaneous quantile control with an explicit meta term.
This places the paper on the same conceptual map as the scaling-time literature, though simultaneous validity and simultaneous adaptivity are distinct.
A single run with fixed budget $\Gamma$ is valid for all simpler comparators, yet its learning rate is still anchored to that budget.
The strongest single-copy simultaneous adaptation results remain a separate benchmark \cite{luo2015achieving,FreundEtAl2026}.

The lower-bound story is unchanged by the composite-loss reduction.
Our setting strictly contains ordinary expert advice as the special case $u_t\equiv 0$, so every impossibility result for experts is inherited verbatim.
In particular, any future attempt to strengthen the scaling-time guarantees here must still confront the random-walk lower bound emphasized in \cite{FreundEtAl2026}:
\[
\operatorname{Reg}_{\epsilon}(T)
=
\Omega\!\left(\sqrt{\left(\sum_{t=1}^T \sigma_t^2\right)\log(1/\epsilon)}\right)
\]
and, when the intrinsic-time variable is the self-variance under the learner's own weights, the same discussion yields the sharper barrier
\[
\operatorname{Reg}_{\epsilon}(T)
=
\Omega\!\left(\sqrt{V_T\log(1/\epsilon)}\right)
\]
Mixed coincidence changes which loss sequence and which intrinsic-time functional are natural to analyze; it does not make the hard instances disappear.

\subsubsection{Exponential weights as a mirror-descent and FTRL step}
\label{sec:mirror-descent}

Exponential weights is simultaneously an entropic mirror-descent step and a follow-the-regularized-leader (FTRL) update, and the online-optimization literature made that equivalence central in both theory and practice \cite{arora2012theory,Zinkevich2003,DuchiHazanSinger2011,hazan2016introduction}.
Read as a Bayes update, the same step is an instance of the broader Bayesian learning rule, which casts natural-gradient posterior updates as mirror descent on exponential-family parameters through the underlying Legendre duality \cite{khan2023bayesian}.
Adaptive variants \citep{DuchiHazanSinger2011} make the same philosophical move as our predictable-rate schedules: the update magnitude is chosen from a cumulative geometry revealed by the realized data. 
In the present paper that geometry is the finite-temperature cumulant of the Bayes/exponential-weights update. 
We are therefore not proposing a new regularizer; rather, we isolate the information decomposition that any entropic exponential-weights update already satisfies before it is upper bounded. 
This is also why the same calculus survives the passage to continuous-action OCO in Appendix~\ref{sec:oco}.

Historically, the same update also entered the subject through early game-theoretic and boosting formulations \cite{freund1996game,freund1999games}. 
These ideas are rooted in the broader tradition of individual-sequence prediction, where algorithms are evaluated against the realized path with no assumed data-generating process.
The foundational work of \cite{CesaBianchiEtAl1997} established sharp worst-case bounds for prediction with expert advice, and \cite{HausslerKivinenWarmuth1998} extended the individual-sequence viewpoint to general loss functions with tight minimax characterizations. 
This has been largely unified from an information-theoretic perspective, connecting universal prediction to source coding and the minimum description length principle \citep{MerhavFeder1998}. 
A closely related probabilistic tradition is the prequential approach \cite{Dawid1984}, which argues that statistical procedures should be evaluated by their sequential predictive performance on each individual realization.
A long tradition of work in this spirit \citep{CesaBianchiLugosi1999,vovk2005algorithmic,ShaferVovk2019} has analyzed prediction of individual sequences through a minimax lens, characterizing the best achievable regret without distributional assumptions. 

Another nearby tradition is game-theoretic probability and defensive forecasting, which emphasizes sequential calibration, betting interpretations, and adversarial game structure over stochastic assumptions \citep{FosterVohra1998,vovk2005algorithmic,ShaferVovk2019}. 
The individual-sequence interpretation through comparator description length is also close to the predictive-complexity viewpoint developed around the aggregating algorithm \cite{Vovk1998AA,Kalnishkan2002}, and to the information-theoretic tradition of universal coding, where the difficulty of an individual sequence is measured by its code length relative to a model class \cite{Rissanen1984,Shtarkov1987,grunwald2007minimum}. The context-tree weighting method is a canonical sequential universal code of this kind, attaining the Rissanen bound over tree sources by a recursive mixture over models \cite{willems1995context}.
Cover's universal portfolio theory \cite{cover1991} applies the same philosophy to sequential investment, measuring individual-sequence wealth in place of expected growth.
A game-theoretic derivation in this vein obtains the exponential-weights update from the value of a repeated zero-sum game between learner and nature \cite{balsubramani2026concentration}.
Our use of PAC-Bayes language is different from these works, but the underlying aim is similar: keep the sequential game explicit in the absence of guiding structure of the sequence, and understand exactly how the learner's update transfers information from one round to the next.

\subsubsection{Status of the scaling-time question}

The 2016 question in \cite{Freund2016} asked for a second-order regret theorem parameterized by the \emph{variance across actions} and by the top $\varepsilon$ quantile of experts: $\reg_{\varepsilon}(T)\lesssim\sqrt{V_T\log(1/\varepsilon)}$ simultaneously for all $\varepsilon$, with $V_T$ a cumulative variance process of the instantaneous regrets under the learner's current weights.
That question is substantially resolved.
The NormalHedge scaling-time theorem of \cite{FreundEtAl2026} attains it up to an additive offset $t_0=\Theta(B^2\log N)$ contributing lower-order logarithmic terms, with a matching lower bound of order $\sqrt{V_T\log(1/\varepsilon)}$; their $V_T$ is computed under a curvature distribution induced by the NormalHedge potential, while ours is computed under the play distribution.

The present paper contributes a PAC-Bayes-flavored perspective on the same landscape.
For a uniform prior and $\rho_A$ uniform on a set of size $|A|=\varepsilon K$, Theorem~\ref{thm:pacbayes-second-order} gives the quantile identity
\begin{equation}\label{eq:quantile-known-V}
\sum_{t=1}^T \ip{p_t}{c_t} - \frac1{|A|}\sum_{i\in A} C_T(i)
=
D_T+B_T(\rho_A)+\sum_{t=1}^T \eta_t Q_t(c)\end{equation}
The second-order schedule at budget $\Gamma=\log(1/\varepsilon)$ therefore solves the fixed-budget analogue for ordinary exponential weights and the cumulant intrinsic time. Two edge terms intrinsic to the discrete path survive, the initialization cost $C^2\Gamma$ and the largest single-round jump $Q_*^T(c)$, and neither can be removed uniformly (Appendix~\ref{app:clipped-branch}).
A single run at budget $\Gamma$ is simultaneously \emph{valid} for every comparator with $\KL(\rho\|\pi)\le\Gamma$ but not simultaneously \emph{adaptive}, since $\eta_t$ stays anchored to $\Gamma$; Section~\ref{sec:simul-quantile} recovers simultaneity through a logarithmic-budget controller over dyadic $\Gamma$, at one extra meta-regret term of the same intrinsic-time form.
A genuine single-copy bound of order $\sqrt{V_T^\sharp(\rho)(\KL(\rho\|\pi)+1)}$ remains out of reach, and the lower-bound discussion in \cite{FreundEtAl2026} together with the contextual-bandit impossibility of \cite{MarinovZimmert2021} counsels against insisting on the classical $p$-variance verbatim alongside full parameter-freeness.

\section{Empirical diagnostics and numerical results}
\label{sec:experiments}

Because Theorem~\ref{thm:pacbayes-second-order} is an exact identity, the natural empirical object is the cumulative split of the regret into its three accounting terms. The same low terminal regret can be reached for very different reasons: a path may have been easy (small intrinsic time), the comparator may have been simple relative to the prior (small comparator-information term), or the schedule itself may have been forgiving (favorable temperature drift). The decomposition exposes which of these mechanisms each algorithm was actually using on a given run, and it computes on real data exactly as it does on synthetic families.

\subsection{Reading conventions and the prefix decomposition}
\label{sec:exp-conventions}\label{sec:exp-decomp}

Fix a horizon $T$ and a comparator $\rho\in\Delta([K])$. For the prior-retempered update of \S\ref{sec:adaptive}, define for every prefix $t\le T$
\begin{equation}
\label{eq:exp-prefix}
P_t(c) := \sum_{s=1}^t \eta_s Q_s(c)
\quad
D_t := \sum_{s=1}^{t-1}\bigl(A_s(\eta_s)-A_s(\eta_{s+1})\bigr)
\quad
B_t(\rho) := \frac{\KL(\rho\|\pi)-\KL(\rho\|q_{t,\eta_t})}{\eta_t}.
\end{equation}
The three terms are the prefix \emph{intrinsic-time loss}, \emph{temperature drift}, and \emph{terminal comparator information}. Theorem~\ref{thm:pacbayes-second-order} states that they sum exactly to the composite-loss prefix regret:
\begin{equation}
\label{eq:exp-prefix-decomp}
R_t^c(\rho)\;=\;P_t(c)\;+\;D_t\;+\;B_t(\rho).
\end{equation}
With side information $u_t$, the original-loss regret gains an additional predictable-mismatch term $M_t(\rho)=\sum_{s\le t}(\ip{\rho}{u_s}-\ip{p_s}{u_s})$. The normalized shares
\begin{equation}
\label{eq:exp-decomp-shares}
S_t(\rho):=P_t(c)+|D_t|+B_t(\rho),
\quad
\omega_t^{\mathrm{pay}}\!:=\!\frac{P_t(c)}{S_t(\rho)},
\;
\omega_t^{\mathrm{drift}}\!:=\!\frac{|D_t|}{S_t(\rho)},
\;
\omega_t^{\mathrm{info}}\!:=\!\frac{B_t(\rho)}{S_t(\rho)}
\end{equation}
lie in $[0,1]$ and sum to $1$ at every $t$. We plot them stacked on a primary axis and overlay the signed prefix regret $R_t^c(\rho)$ on a secondary axis to obtain the \emph{regret-decomposition plot} used throughout. We report seed-aggregated quantities with BCa $95\%$ bootstrap confidence intervals (1000 resamples); identity verifications report the maximum absolute residual over the grid. Figures use one fixed color system throughout: blue for the intrinsic-time loss, orange for the temperature drift, green for the comparator-information term, dark navy for realized trajectories, and, in comparison panels, blue for the paper's schedules against orange for external baselines. The local update of \S\ref{sec:pressure} satisfies a parallel identity (Corollary~\ref{cor:pressure-regret}) with $D_t$ replaced by a transport drift $D_t^{\mathrm{loc}}=\sum_{s\ge 2}\KL(\rho\|p_s)\,(1/\eta_s-1/\eta_{s-1})$ and the same numerical residual envelope; both forms underlie the figures below.

\subsection{Confirmation from the experiments}
\label{sec:exp-eight-properties}

The battery in Appendix~\ref{app:experiments} bears out the framework's claims across synthetic families and on real financial paths (\S\ref{sec:eval-olps-realdata}). The benchmark and real-data results establish the headline, and the identity checks stand behind them.

The schedules are competitive on benchmarks. Against current adaptive-online-learning baselines, each run from its own published update rule, the two schedules are competitive on the predictable families, improve on local-gap methods on cycling-adversarial paths with chaseable within-segment structure, and beat the best fixed expert on planted change-point sequences with no explicit tracking machinery (Figure~\ref{fig:sota-comparison}). That last advantage is a property of the construction and does not survive contact with real streams. On four of them---two power-demand series, a regional-demand benchmark, and an equity panel---no algorithm beats the best fixed forecaster, and the shifting comparator degenerates to the fixed one. The tracking gain, computable before any algorithm runs, is two hundred times smaller on the real streams than on the construction and accounts for the difference (\S\ref{sec:exp-results-shifting}). On the canonical universal-portfolio benchmark datasets read as open financial paths, matching the schedule's temperature to the information budget the identity exposes beats the worst-case $\log K$ wherever the worst case is mismatched and near-ties it where $\log K$ is already the right budget (\S\ref{sec:eval-olps-realdata}). Figure~\ref{fig:same-regret-realdata} shows the diagnostic itself on one such stream: three schedules finish with comparable regret while spending it in visibly different ways, and the identity closes along the whole path.

\begin{figure}[tbp]
\centering
\includegraphics[width=\textwidth]{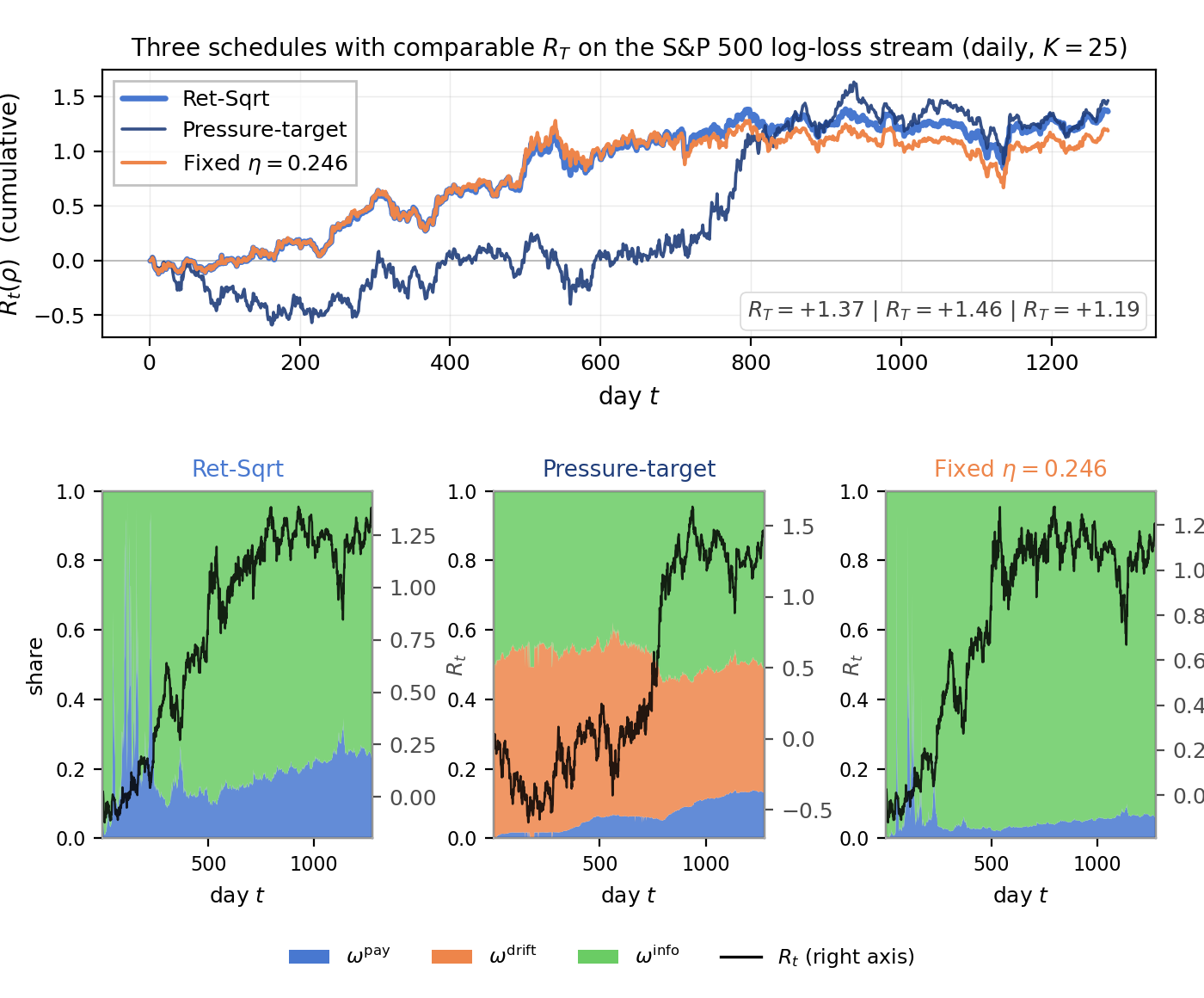}
\caption{\textbf{On a real financial stream the three schedules' decompositions differ far more than their regrets do.} The diagnostic of Figure~\ref{fig:same-regret-diff-decomp} runs on S\&P~500 constituents ($T=1275$ daily rounds, $K=25$; log-loss experts are single assets, comparator the best asset in hindsight). \textbf{Top:} cumulative regret for the intrinsic-time square-root schedule, the pressure-target line search, and a horizon-tuned fixed rate $\eta=0.246$; the three finish between $R_T=+1.19$ and $+1.46$. \textbf{Bottom:} the same three-share split. The fixed rate is comparator-information-dominated ($\omega^{\mathrm{info}}_T\approx 0.94$), the square-root schedule allocates a larger clock share ($\omega^{\mathrm{pay}}_T\approx 0.23$), and the pressure-target schedule runs a temperature-drift gain ($D_T<0$) that appears as the largest drift share ($\omega^{\mathrm{drift}}_T\approx 0.36$).}
\label{fig:same-regret-realdata}
\end{figure}

The identities hold, on real data as on synthetic. On every run the prefix decomposition closes, in both the prior-retempered form of Theorem~\ref{thm:pacbayes-second-order} and the local-update form of Corollary~\ref{cor:pressure-regret}, at every member $\rho_\alpha$ of the one-parameter comparator family \eqref{eq:exp-rho-alpha} including the best fixed expert; the same holds on the universal-portfolio return streams. The exactness survives the full range of alphabet sizes and horizons tested, the single-step Bellman-equalizer form, and the passage from a finite expert set to a continuous action set.
 The variance-driven schedules are a faithful leading-order relaxation of the exact intrinsic-time clock, agreeing with it to first order in the temperature.

The two-sided envelope is empirically tight. On cycling-adversarial paths the realized intrinsic-time loss approaches the lower envelope $2C\sqrt{\Gamma V_T(c)}$ from below as the intrinsic time accumulates, the finite-horizon gap accounted for by the envelope's bounded initialization tax (Appendix~\ref{app:clipped-branch}). On a single-spike construction the upper envelope of Theorem~\ref{thm:scaling-time} is approached as the maximal one-round loss term grows, confirming that the maximum-jump correction it includes is necessary.

The decomposition attributes difficulty correctly. The matrix-game side-information recipe moves mass off the intrinsic-time clock exactly when the opponent is genuinely predictable: as the forecast improves the intrinsic time falls smoothly toward zero, and the residual original-loss regret lands in the predictable-mismatch term.

Fast rates are visible at finite horizon. When the comparator-centered low-noise condition holds, the regret predicted by Theorem~\ref{thm:fixed-luckiness} and Corollary~\ref{cor:budget-luckiness} is essentially constant in the horizon, while the same algorithm reverts cleanly to the worst-case $\sqrt T$ rate on a misspecified family with no separation---one schedule tracking both regimes with no reconfiguration.

The budget is path-dependent where the range bound is uniform. The realized intrinsic time sits below the empirical-Bernstein clock on every family and far below the worst-case Hoeffding budget $T/8$, though the second margin is largely the range-against-variance gap that any variance-adaptive quantity also clears (\S\ref{sec:exp-results-hoeffding-slack}). The real financial streams are the extreme case: on the S\&P~500 stream the realized clock accumulates to $V_T=0.32$ against a worst-case budget near $159$. That path-dependent budget makes the full second-order story possible, and how far a run has moved along it decides whether the second-order envelope brackets it tightly or vacuously (Figure~\ref{fig:envelope-informative}).

Appendix~\ref{app:experiments} reports the battery in detail: the synthetic families and schedule panel, the side-information recipe, envelope tightness, the fast-rate and shifting-comparator checks, the head-to-head against current adaptive baselines, and the real-data evaluation on the open universal-portfolio suite.
The partial-information extensions run there on real substrates as well: supervised-to-bandit reductions and feedback graphs, drift-stream restart and stopping rules built on the ledger's computable terms, and the boosting, structured matrix-game, and sampling-estimator extensions.

\section{Discussion}

The exponential-weights update is an instance of a general identity about geometric mixtures and entropic duality---the dual-flat geometry of exponential families studied in information geometry \cite{amari2016informationb}.
The mixed-coincidence perspective determines exactly how several priors or side factors become composite losses.
The two branches of the variable-temperature update then compose the shared one-step information balance in structurally parallel ways: the retempered chain produces a global cumulant decomposition, the local chain produces a cumulative normalization identity.
These two different-looking objects share the same front end, the same per-round cumulant functional as their unit of account, and the same terminal comparator-information structure.

The cumulant theorem of Section~\ref{sec:adaptive} isolates three pieces of the retempered regret: the one-round centered cumulants, the temperature-change drift, and the terminal PAC-Bayes complexity term. 
The pressure-target identities of Section~\ref{sec:pressure} isolate a parallel three-piece structure for the local recursion: the one-round cumulant cost, the cumulative normalization (in place of drift), and the terminal partition (in place of retempered free energy).
Proposition~\ref{prop:tilted-var} shows that the intrinsic-time increment itself is an exact average of tilted variances in both cases, so both branches record the curvature revealed by the realized path. 
The familiar square-root intrinsic-time bound is then a corollary of the retempered chain, while the exact event-restricted partition identities are a corollary of the local chain.

The luckiness results sharpen that picture in the same-scale stochastic regime. 
For point comparators they give a short route to constant expected regret; for posterior comparators they isolate the exact additional stochastic condition needed by the comparator-centered argument. 
The sampled-expert theorem shows that the same chain persists pathwise once the martingale term is written explicitly.

The analysis is not specific to Hedge in the narrow sense. 
The Bayes-rule template of Section~\ref{sec:adaptive} only assumes that expertwise evidence is accumulated multiplicatively and then updated at a predictable temperature. 
That viewpoint keeps the cumulant costs visible, which variance-first Taylor-expansion analyses discard, and the same proof layer can be reused across a whole family of predictable schedules and Bayes-rule update designs.

Section~\ref{sec:applications-summary} and the appendices show that the same algebra is not confined to one theorem family: once the composite-loss reduction is in place, each of those settings becomes another instance of the same identity, and of the machinery around it.
For ordinary exponential weights with the cumulant intrinsic time, the fixed-budget PAC-Bayes counterpart of the scaling-time question is therefore solved here up to two unavoidable edge terms (Theorem~\ref{thm:scaling-time}), and the dyadic controller of Section~\ref{sec:simul-quantile} converts that fixed-budget guarantee into simultaneous quantile control with one explicit meta term. As discussed in Section~\ref{subsec:lower-bounds}, this is \emph{simultaneously valid} but not \emph{single-copy simultaneously adaptive}; the strongest single-copy results remain a separate benchmark \cite{luo2015achieving,FreundEtAl2026}. The composite-loss view, the pressure-target schedule, and the multi-prior side-factor reduction nonetheless give concrete routes to further algorithm design \cite{HazanKale2010,ChiangEtAl2012,SteinhardtLiang2014,PerezOrtizKoolen2022,RouyerEtAl2022FeedbackGraphs,FreundEtAl2026}, and the same accounting governs whatever new schedule, prior, or feedback model is plugged into the template.

The exact decompositions of Sections~\ref{sec:adaptive} and~\ref{sec:pressure} are telescoping identities in the finite-temperature log-normalizer, so they hold verbatim for unbounded losses whenever the one-step log-partition is finite; only the range control $Q_t\le\tfrac18$ of Proposition~\ref{prop:range} breaks, and there $Q_t$ may be unbounded per round. Proposition~\ref{prop:secondmoment} supplies the replacement: for scores bounded below the increment is controlled by the second moment of the score under the play in place of its range.
The second-order schedule---and with it scaling-time adaptation---therefore transfers to heavy-tailed and unbounded losses whenever that second moment is finite, with no clipping, no bounded-influence transform, and no predictable bias term. The envelope is then stated in the realized intrinsic time, and depends on $T$ only insofar as that clock is itself controlled. A bounded-influence transform of the raw score remains the route when even the second moment is unavailable: it keeps every identity exact, computes the intrinsic time from the robustified scores, and restores a finite per-round increment, at the cost of one predictable bias term. A calibration guarantee in the unclipped calibrated-expert regime does not follow, and cannot. The difficulty of \cite{neyman2023nologpooling} is an impossibility---no logarithmic pool is calibrated---orthogonal to schedule adaptivity, so the envelope holds while calibration is beyond the reach of any temperature schedule. For clipped experts the pooled log-loss is bounded and exp-concave, giving constant $\log N$ regret directly.

\subsection{Consequences of the exact accounting}
\label{subsec:new-connections}

\paragraph{Two-level composite-loss architecture for online meta-learning.}
Combining the multi-prior side-factor reduction (Section~\ref{sec:exact}) with the dyadic budget controller (Section~\ref{sec:simul-quantile}) suggests a natural two-level architecture. An inner algorithm runs retempered Hedge on composite losses $c_t=\ell_t+u_t$ where the side factor $s_t=\prod_w \pi_{t,w}^{\alpha_{t,w}}$ pools several candidate priors, and an outer controller updates the pooling weights $\alpha_t$ by another instance of the same identity. Because both levels inherit the same accounting, the end-to-end regret is an identity---a sum of two explicit intrinsic-time terms plus an explicit cross-entropy mismatch, without any unrealized constants (Theorem~\ref{thm:two-level}).

\paragraph{Pathwise regret as a variational dual.}
Theorem~\ref{thm:pacbayes-second-order} writes the pathwise regret as a sum of per-round equalizer values, where each round's payoff is the support function of a relative-entropy ball. This matches the strong-duality pattern of variational-form two-player games: the retempered chain is the \emph{primal} accounting of information spent on each round, and a corresponding min-max problem on relative-entropy-ball strategies is the \emph{dual} whose value equals that per-round loss term. This duality is made rigorous in \cite{balsubramani2026concentration}: Gibbs strategies are the equalizers of the per-round game, and the intrinsic-time increment is the per-round loss term in both the primal and dual framings. Repeating the same payoff across rounds turns the per-round statement into a repeated game whose value reproduces the accounting above.

\appendix
\appendixtitle

\section{The clipped branch of the second-order schedule}
\label{app:clipped-branch}

The second-order schedule \eqref{eq:budget-schedule-C} runs on two branches.
While the realized clock satisfies $V_{t-1}(c)<C^2\Gamma$ the square-root rule would ask for a temperature above one and the clip binds, so the schedule plays $\eta_t=1$; once $V_{t-1}(c)\ge C^2\Gamma$ the clip releases and the rate follows $C\sqrt{\Gamma/V_{t-1}(c)}$ for the rest of the path.
The clip is therefore an opening-rounds phenomenon on any path whose intrinsic time grows, and a permanent one on a path so easy that the clock never reaches $C^2\Gamma$.
Everything below concerns the two edge terms of Theorem~\ref{thm:scaling-time}; the identity itself, and the schedule's guarantees, are unaffected by which branch is active.

\paragraph{Necessity of both edge terms.}
For the upper side, the maximal jump $Q_*^T(c)$ is necessary: with $T=1$ and $Q_1(c)=q$, we have $\eta_1=1$, so $\sum_t \eta_t Q_t(c)=q$ and $V_T(c)=q$.
No bound of the form $\sum_t\eta_tQ_t(c)\le A+2C\sqrt{\Gamma V_T(c)}$ can hold for all $q$, because the right-hand side grows only like $\sqrt q$.
A correction of order the largest single jump is therefore indispensable.
For the lower side, the initialization tax $C^2\Gamma$ is forced by a second one-round example.
If $T=1$ and $Q_1(c)=C^2\Gamma$, then still $\eta_1=1$ and $\sum_t \eta_tQ_t(c)=C^2\Gamma$, whereas $2C\sqrt{\Gamma V_T(c)}=2C^2\Gamma$.
Therefore, any universal lower bound of the form $2C\sqrt{\Gamma V_T(c)}-A\le \sum_t \eta_tQ_t(c)$ must satisfy $A\ge C^2\Gamma$.
The tax is the price of entering the square-root regime from a clipped start, and it is paid once.

\paragraph{The envelope as a quadrature defect.}
Writing the schedule as a function of the clock, $\eta(v)=\min\{1,C\sqrt{\Gamma/v}\}$, exhibits the intrinsic-time loss as the left-endpoint Stieltjes sum $\sum_{t}\eta(V_{t-1})(V_t-V_{t-1})$ of a nonincreasing integrand against the increments of the intrinsic time.
That reading, and the resulting bracketing of the sum by its integral, are developed in \cite{balsubramani2026concentration}.
Since $\int_0^{v}C\sqrt{\Gamma/u}\,du=2C\sqrt{\Gamma v}$, the shared main term of \eqref{eq:budgeted-two-sided} is exactly the integral of the unclipped integrand, and
\[
\sum_{t=1}^T\eta_tQ_t(c)=2C\sqrt{\Gamma V_T(c)}-\mathcal D_T,
\qquad
\mathcal D_T:=2C\sqrt{\Gamma V_T(c)}-\sum_{t=1}^T\eta_tQ_t(c),
\]
is an identity in which $\mathcal D_T$ is the quadrature defect of the schedule.
On the square-root branch the integrand is decreasing, so the left-endpoint sum overshoots the integral by at most the largest increment $Q_*^T(c)$, which is the upper slack.
On the clipped branch the schedule plays $\eta=1<C\sqrt{\Gamma/V_{t-1}}$ and undershoots by at most $C^2\Gamma$, which is the lower slack.
The two branches together give $\mathcal D_T$ both signs, with endpoints $-Q_*^T(c)$ and $+C^2\Gamma$ each attained by a one-round path.
Hence $\sum_t\eta_tQ_t(c)$ is not any function of $V_T(c)$ alone, and \eqref{eq:budgeted-two-sided} is exactly the range of the quadrature defect around the exact integral.

\paragraph{Scale invariance off the clip.}
Scaling every score vector by $a>0$ scales $Q_t$ and $V_T$ by $a^2$ and rescales the second-order rate by $1/a$, so on the unclipped branch the schedule is scale invariant in the intrinsic-time sense recorded in Proposition~\ref{prop:regret-matching}.
The clipped rounds break that invariance, since $\eta_t=1$ is a fixed temperature in whatever units the scores arrive; they contribute the same $C^2\Gamma$ initialization tax at every scale.

\section{Local pressure-target schedules: further identities}
\label{app:pressure-details}

\subsection{Fixed-temperature and direct-adaptive variational forms}
\label{sec:pressure-local}

The fixed-temperature profile is the common-temperature specialization of the same update, while the direct-adaptive statements below are the native identities for the realized pressure-target path itself. 
No bounded-interval restriction on the temperatures is built into these formulas; we need only the relevant one-step log-partition to be finite at the chosen value.

\begin{proposition}[Fixed-temperature variational identity for the local recursion]
\label{prop:pressure-fixed-var}
Fix $\eta>0$ and run the local exponential-weights recursion $p_1^{(\eta)} := \pi$, $p_{t+1}^{(\eta)}(i)\propto p_t^{(\eta)}(i)e^{-\eta c_t(i)}$.
For each round define
\[
\psi_t^{(\eta)}(\lambda)
:=
\log \E_{i \sim p_t^{(\eta)}} \left[ \exp\!\left( \lambda\bigl(c_t(i) - \ip{p_t^{(\eta)}}{c_t}\bigr) \right) \right]
\]
Then for every posterior $\rho\in\Delta([K])$,
\begin{equation}
\label{eq:pressure-fixed-var}
R_T^{c,(\eta)}(\rho)
=
\sum_{t=1}^T \frac{\psi_t^{(\eta)}(-\eta)}{\eta}
+
\frac{\KL(\rho\|\pi)-\KL(\rho \| p_{T+1}^{(\eta)})}{\eta}
\end{equation}
where $R_T^{c,(\eta)}(\rho) := \sum_{t=1}^T \ip{p_t^{(\eta)}}{c_t}-\ip{\rho}{C_T}$.
\end{proposition}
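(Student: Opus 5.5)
The plan is to obtain \eqref{eq:pressure-fixed-var} by summing the centered one-step balance of Corollary~\ref{cor:centered-seq} over rounds, with $q_t:=p_t^{(\eta)}$, constant rate $\eta_t\equiv\eta$, and trivial side factors $s_t\equiv 1$ (so $u_t\equiv0$). Alternatively, we can first collapse the local recursion to the prior-retempered form and quote Corollary~\ref{cor:fixed-cumulant}. I would present the first route and note the second as a check.

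First, observe by induction that at fixed temperature the local recursion started from $p_1^{(\eta)}=\pi$ satisfies
\[
p_t^{(\eta)}(i)=\frac{\pi(i)e^{-\eta C_{t-1}(i)}}{\sum_j\pi(j)e^{-\eta C_{t-1}(j)}}.
\]
The reason is that the product of the per-round normalizers cancels, leaving the cumulative exponent. This shows that $p_t^{(\eta)}$ is exactly the retempered play \eqref{eq:retempered-hedge} with $\eta_t\equiv\eta$. Corollary~\ref{cor:fixed-cumulant} then applies verbatim, and its $\psi_t$ is our $\psi_t^{(\eta)}$ because both are cumulant generating functions of the same centered loss under the same distribution. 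Its terminal posterior $p_{T+1}$ also coincides with $p_{T+1}^{(\eta)}$. This already yields the claim.

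For a self-contained derivation through the one-step identity, apply \eqref{eq:centered-seq} at each round $t$ with $q_t=p_t^{(\eta)}$. The immediate cost is $\delta_t(c)=\ip{p_t^{(\eta)}}{c_t}+\eta^{-1}\log\sum_i p_t^{(\eta)}(i)e^{-\eta c_t(i)}$. Factoring out $e^{-\eta\ip{p_t^{(\eta)}}{c_t}}$ inside the logarithm shows that $\delta_t(c)=\eta^{-1}\psi_t^{(\eta)}(-\eta)$. Summing over $t=1,\dots,T$, the relative-entropy differences telescope because the prefactor $1/\eta$ is constant. This leaves $\eta^{-1}\bigl(\KL(\rho\|p_1^{(\eta)})-\KL(\rho\|p_{T+1}^{(\eta)})\bigr)$, and the fact that $p_1^{(\eta)}=\pi$ gives \eqref{eq:pressure-fixed-var}.

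The only delicate point is the handling of infinite relative entropies when $\rho\not\ll\pi$. Since every $p_t^{(\eta)}$ has the same support as $\pi$, both sides are either finite or the identity should be read under $\rho\ll\pi$, exactly as in Theorem~\ref{thm:pacbayes-second-order}. I would state that convention explicitly. Finiteness of each log-partition holds automatically here because $[K]$ is finite.
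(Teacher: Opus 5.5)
Your proof is correct. The route you keep only as a check is exactly the paper's proof: unroll the fixed-temperature local recursion to $\pi(i)e^{-\eta C_{t-1}(i)}/\sum_j\pi(j)e^{-\eta C_{t-1}(j)}$, then quote Corollary~\ref{cor:fixed-cumulant}.

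Your primary route is genuinely different, because it never uses the unrolling. At fixed $\eta$ the local recursion is literally the update \eqref{eq:one-step-update} with $s_t\equiv1$. So the fixed-rate clause of Corollary~\ref{cor:centered-seq} applies to it directly. The only work left is the identification $\delta_t(c)=\eta^{-1}\psi_t^{(\eta)}(-\eta)$, and you do that correctly by factoring $e^{-\eta\ip{p_t^{(\eta)}}{c_t}}$ out of the normalizer.

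The two routes buy different things. The paper's route makes the structural point the paper wants at that spot: at a common temperature the local and prior-retempered updates are one algorithm. The local identity is then a specialization of the retempered chain, whose proof (Theorem~\ref{thm:cumulant-chain}) telescopes the free energy $A_t(\eta)$ and closes with Lemma~\ref{lem:terminal-potential}. Your route is more economical and stays in the local geometry: it telescopes the relative entropies to the comparator directly. That is the same bookkeeping that survives for variable rates (Proposition~\ref{prop:direct-adaptive-pathwise}, Corollary~\ref{cor:pressure-regret}), where the retempered free energy is no longer the algorithm's state.

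Your caveat about $\rho\ll\pi$ is also apt. The statement quantifies over every $\rho\in\Delta([K])$, but both routes need the relative entropies to be finite for their differences to make sense. The paper's route silently inherits that hypothesis from Corollary~\ref{cor:fixed-cumulant}.
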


\begin{corollary}[Self-correcting fixed-temperature envelopes]
\label{cor:pressure-self-correcting}
Under Proposition~\ref{prop:pressure-fixed-var}, let $v_t(\eta)$ be any quantities satisfying $v_t(\eta)\ge \psi_t^{(\eta)}(-\eta)/\eta$ for all $t$.
Then
\begin{equation}
\label{eq:pressure-self-correcting}
R_T^{c,(\eta)}(\rho)
=
\frac{\KL(\rho\|\pi)}{\eta}
+
\sum_{t=1}^T v_t(\eta)
-
\Delta_T^{\mathrm{loc}}(\rho,\eta)
\end{equation}
with the exact nonnegative gap
\[
\Delta_T^{\mathrm{loc}}(\rho,\eta)
:=
\frac{\KL(\rho \| p_{T+1}^{(\eta)})}{\eta}
+
\sum_{t=1}^T\left(
v_t(\eta)-\frac{\psi_t^{(\eta)}(-\eta)}{\eta}
\right)\ge 0
\]
\end{corollary}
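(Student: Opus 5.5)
The corollary is a purely algebraic rearrangement of Proposition~\ref{prop:pressure-fixed-var}, so I will take that identity as given and add and subtract the surrogate sum. Starting from \eqref{eq:pressure-fixed-var}, I will write
\[
\sum_{t=1}^T \frac{\psi_t^{(\eta)}(-\eta)}{\eta}
=\sum_{t=1}^T v_t(\eta)-\sum_{t=1}^T\Bigl(v_t(\eta)-\frac{\psi_t^{(\eta)}(-\eta)}{\eta}\Bigr).
\]
Substituting this into the identity and grouping the negative terms, $-\KL(\rho\|p_{T+1}^{(\eta)})/\eta$ together with minus the surrogate slack, gives exactly $\KL(\rho\|\pi)/\eta+\sum_t v_t(\eta)-\Delta_T^{\mathrm{loc}}(\rho,\eta)$. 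This is \eqref{eq:pressure-self-correcting}, with $\Delta_T^{\mathrm{loc}}$ matching the stated definition term for term.

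Nonnegativity of the gap then follows summand by summand. The terminal term $\KL(\rho\|p_{T+1}^{(\eta)})/\eta$ is nonnegative because relative entropy is nonnegative and $\eta>0$. Each slack $v_t(\eta)-\psi_t^{(\eta)}(-\eta)/\eta$ is nonnegative by the hypothesis on $v_t$.

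The only care point is finiteness. For $\rho\ll\pi$ the local recursion keeps $p_{T+1}^{(\eta)}$ with the same support as $\pi$, so every term is finite and the rearrangement is legitimate. For $\rho\not\ll\pi$ both sides are infinite in the same direction, so I will adopt the convention of Proposition~\ref{prop:pressure-fixed-var}. Since the result is bookkeeping on top of that proposition, I expect no real obstacle.
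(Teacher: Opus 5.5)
Your proposal is correct and is exactly the paper's proof: add and subtract $\sum_{t=1}^T v_t(\eta)$ in the identity of Proposition~\ref{prop:pressure-fixed-var}, regroup, and read off nonnegativity of $\Delta_T^{\mathrm{loc}}(\rho,\eta)$ from $\KL\ge 0$, $\eta>0$, and the hypothesis on $v_t$. One small correction to your finiteness aside: for $\rho\not\ll\pi$ the left side $R_T^{c,(\eta)}(\rho)$ is a finite sum, while the right side is the indeterminate form $\infty-\infty$, so the identity should be read for $\rho\ll\pi$ only, not as an equality of two infinities.
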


This is the clean variational form of the fixed-temperature profile.
Any chosen surrogate for the centered cumulant, whether quadratic, self-normalized, or otherwise, differs from the truth by an explicit slack variable.
We may insert a bounded-range Hoeffding proxy when available.

\begin{proposition}[Cumulative normalization and direct weighted identity for the realized pressure-target path]
\label{prop:pressure-cumulative}
Under Proposition~\ref{prop:pressure-normalized}, define $G_T(i) := \sum_{t=1}^T \eta_t\bigl(c_t(i)-a_t\bigr)$.
Then
\begin{equation}
\label{eq:pressure-cumulative-normalization}
p_{T+1}(i)=p_1(i)e^{-G_T(i)}
\qquad\text{and}\qquad
\sum_{i=1}^K p_1(i)e^{-G_T(i)}=1
\end{equation}
Consequently, for every posterior $\rho\in\Delta([K])$,
\begin{equation}
\label{eq:pressure-cumulative-kl}
\sum_{t=1}^T \eta_t \bigl( \ip{\rho}{c_t} - a_t \bigr)
=
\KL(\rho \| p_{T+1})-\KL(\rho \| p_1)
\end{equation}
Equivalently, writing
\[
\mu_t := \ip{p_t}{c_t}
\qquad
\psi_t(\lambda)
:=
\log \E_{i\sim p_t} \left[ \exp\! \left( \lambda \bigl( c_t(i)-\mu_t \bigr) \right) \right]
\qquad
Q_t^{\mathrm{loc}}(c)
:=
\eta_t^{-2}\psi_t(-\eta_t)
=
\eta_t^{-1} \bigl( \mu_t - a_t \bigr)
\]
we have
\begin{equation}
\label{eq:pressure-cumulative-regret}
\sum_{t=1}^T \eta_t\bigl(\mu_t-\ip{\rho}{c_t}\bigr)
=
\sum_{t=1}^T \eta_t^2 Q_t^{\mathrm{loc}}(c)
+
\KL(\rho \| p_1)-\KL(\rho \| p_{T+1})
\end{equation}
\end{proposition}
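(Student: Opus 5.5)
The plan is to prove the normalization statement \eqref{eq:pressure-cumulative-normalization} first, by induction on the round index, and then read off the two information identities from it. Proposition~\ref{prop:pressure-normalized} gives \eqref{eq:pressure-target-root}, namely $\sum_i p_t(i)e^{-\eta_t(c_t(i)-a_t)}=1$. Hence the normalizer of the local step is $Z_t:=\sum_j p_t(j)e^{-\eta_t c_t(j)}=e^{-\eta_t a_t}$. The update $p_{t+1}(i)=p_t(i)e^{-\eta_t c_t(i)}/Z_t$ therefore becomes exactly $p_{t+1}(i)=p_t(i)e^{-\eta_t(c_t(i)-a_t)}$, with no separate renormalization. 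Iterating from $p_1$ gives $p_{T+1}(i)=p_1(i)e^{-G_T(i)}$. Summing over $i$ gives $\sum_i p_1(i)e^{-G_T(i)}=\sum_i p_{T+1}(i)=1$, since $p_{T+1}$ is a probability vector.

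Next comes the KL identity \eqref{eq:pressure-cumulative-kl}. Fix $\rho\ll p_1$; otherwise both sides are $+\infty$ in the same way, because $p_{T+1}$ and $p_1$ share support. From the closed form, $\log\frac{p_1(i)}{p_{T+1}(i)}=G_T(i)$ on the support. Then
\[
\KL(\rho\|p_{T+1})-\KL(\rho\|p_1)=\sum_i\rho(i)\log\frac{p_1(i)}{p_{T+1}(i)}=\ip{\rho}{G_T}=\sum_{t=1}^T\eta_t\bigl(\ip{\rho}{c_t}-a_t\bigr),
\]
which is the claim. The same conclusion also follows by summing the one-step relation \eqref{eq:pressure-target-kl} over $t$, which serves as a cross-check.

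For the regret form \eqref{eq:pressure-cumulative-regret}, first verify the identification $Q_t^{\mathrm{loc}}(c)=\eta_t^{-1}(\mu_t-a_t)$. By definition $\psi_t(-\eta_t)=\log\sum_i p_t(i)e^{-\eta_t c_t(i)}+\eta_t\mu_t=\log Z_t+\eta_t\mu_t$. Substituting $\log Z_t=-\eta_t a_t$ gives $\psi_t(-\eta_t)=\eta_t(\mu_t-a_t)$, and dividing by $\eta_t^2$ yields the stated formula. Consequently $\eta_t^2Q_t^{\mathrm{loc}}(c)=\eta_t(\mu_t-a_t)$. Now split $\eta_t(\mu_t-\ip{\rho}{c_t})=\eta_t(\mu_t-a_t)-\eta_t(\ip{\rho}{c_t}-a_t)$ and sum over $t$. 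The first sum is $\sum_t\eta_t^2Q_t^{\mathrm{loc}}(c)$. The second sum is $\KL(\rho\|p_{T+1})-\KL(\rho\|p_1)$ by the previous step, and its negation produces the claimed terms.

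No step here is a real obstacle. The only point needing care is the observation that the pressure target makes $Z_t=e^{-\eta_t a_t}$ exactly, so the normalizers telescope into the shifted scores. The remaining subtlety is bookkeeping of supports in the KL terms, which is handled by the shared-support remark.
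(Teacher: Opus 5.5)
Your proof is correct and follows the paper's argument essentially step for step. You read $Z_t=e^{-\eta_t a_t}$ off the pressure root, unroll to $p_{T+1}=p_1e^{-G_T}$, take $\rho$-averaged logarithms for the KL identity, and use $\psi_t(-\eta_t)=\eta_t(\mu_t-a_t)$ to get the regret form (one small wording fix: when $\rho\not\ll p_1$ it is the two KL terms that are infinite, which makes the right-hand side an undefined $\infty-\infty$ against a finite left-hand side, so the identity should simply be read under $\rho\ll p_1$).
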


Equation~\eqref{eq:pressure-cumulative-normalization} is the global effect of the local line search:
the schedule renormalizes the final exponentially weighted score exactly.
This is the local update's counterpart of the intrinsic-time loss in Theorem~\ref{thm:pacbayes-second-order}: both are exact information-tracking statements, but they track different cumulative objects.
The second-order schedule for the retempered update controls its cumulative intrinsic-time loss by a square-root law in a monotone budget,
whereas the pressure-target schedule for the local update enforces a chosen one-step target and therefore accumulates a cumulative normalization identity instead.
In both cases, the same exact-cumulant functional $Q_t(c)$ supplies the per-round information cost---evaluated at each recursion's own played distribution---and the terminal object measures how much comparator information remains in the final posterior.
This coincidence of the two updates' increments at a shared rate, and their controlled divergence under adaptive rates, are confirmed numerically in Appendix~\ref{sec:eval-intrinsic-time-clock}: the two updates' realized increments agree at a common temperature and separate monotonically with the total-variation gap between their played distributions.

The cumulative normalization of Proposition~\ref{prop:pressure-cumulative} is a set of exact fluctuation relations in the sense of stochastic thermodynamics \cite{jarzynski1997,crooks1999entropy,seifert2012stochastic}.
The setting there is a system prepared in equilibrium and then driven by a time-dependent protocol, each realization of the drive doing a random amount of work.
Such relations hold exactly at every driving speed, however far from equilibrium the protocol takes the system.
The dictionary to the present setting reads the expert index $i$ as the realization and the temperature schedule $(\eta_t)$ as the protocol.
The prior $\pi$ is then the initial equilibrium state, the terminal posterior $p_{T+1}$ the state the protocol leaves behind, an expert's accumulated tempered loss its work, and a log-normalizer a free energy.
With the expert-indexed \emph{work} $W(i):=\sum_{t=1}^T\eta_t c_t(i)$, the round normalizers $Z_t:=\sum_i p_t(i)e^{-\eta_t c_t(i)}$, and the free-energy difference $\Delta F:=-\log\sum_i\pi(i)e^{-W(i)}$, we have the Jarzynski identity $\prod_{t=1}^T Z_t=\E_{i\sim\pi}[e^{-W(i)}]=e^{-\Delta F}$ (with $\Delta F=\sum_t\eta_t a_t$ under the pressure-target schedule), the Crooks detailed relation $\pi(i)/p_{T+1}(i)=e^{W(i)-\Delta F}$, and the integral fluctuation theorem $\E_{i\sim\pi}[e^{-\Sigma}]=1$ with $\E_{i\sim\pi}[\Sigma]=\KL(\pi\|p_{T+1})\ge0$ for the entropy production $\Sigma(i):=W(i)-\Delta F$. Each follows immediately from $p_{T+1}(i)=\pi(i)e^{-W(i)+\Delta F}$.
The second law of thermodynamics is the integral fluctuation theorem read through Jensen's inequality: mean entropy production is nonnegative because it equals a relative entropy, and here that relative entropy is the exact separation $\KL(\pi\|p_{T+1})$ between the prior the run started from and the posterior it ended at.

\subsection{Terminal-mass identities and event-conditioned weighted regret}
\label{sec:pressure-event}

The local normalization also has a finite-scale coincidence interpretation.
Fix a round $t$ and set
\[
Z_t(\eta):=\sum_{i=1}^K p_t(i)e^{-\eta c_t(i)},
\qquad
\mu_t:=\ip{p_t}{c_t},
\qquad
r_t(i):=\frac{p_t(i)e^{-c_t(i)}}{Z_t(1)}
\]
Thus $r_t$ is the unit-temperature posterior formed from the current weights and the current score vector, whether or not the algorithm actually plays temperature $1$.

\begin{proposition}[One-round coincidence factorization of the local normalizer]
\label{prop:pressure-coincidence}
For every $\eta>0$,
\begin{equation}
\label{eq:pressure-coinc-factor}
Z_t(\eta)=Z_t(1)^\eta \cC_t(\eta),
\qquad
\cC_t(\eta):=\sum_{i=1}^K p_t(i)^{1-\eta}r_t(i)^\eta
\end{equation}
Consequently, with $\delta_t^{\mathrm{loc}}(\eta):=\mu_t+\eta^{-1}\log Z_t(\eta)$,
we have
\begin{equation}
\label{eq:pressure-coinc-gap}
\delta_t^{\mathrm{loc}}(\eta)
=
\delta_t^{\mathrm{loc}}(1)+\eta^{-1}\log \cC_t(\eta)
=
\delta_t^{\mathrm{loc}}(1)-\frac{1-\eta}{\eta}D_\eta(r_t\|p_t)
\end{equation}
where $D_\eta(r\|p):=(\eta-1)^{-1}\log\bigl(\sum_{i=1}^K r(i)^\eta p(i)^{1-\eta}\bigr)$ is the order-$\eta$ R\'enyi divergence.
For $0<\eta<1$, $\cC_t(\eta)\in[0,1]$ is the Hellinger/R\'enyi overlap between $p_t$ and $r_t$.
\end{proposition}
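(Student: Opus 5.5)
The plan is a direct algebraic substitution built around the definition of $r_t$. From $r_t(i)=p_t(i)e^{-c_t(i)}/Z_t(1)$ we get $e^{-c_t(i)}=Z_t(1)\,r_t(i)/p_t(i)$ on the support of $p_t$. Raising this to the power $\eta$ and inserting it into $Z_t(\eta)=\sum_i p_t(i)e^{-\eta c_t(i)}$ gives
\[
Z_t(\eta)=Z_t(1)^\eta\sum_i p_t(i)\Bigl(\tfrac{r_t(i)}{p_t(i)}\Bigr)^{\eta}=Z_t(1)^\eta\sum_i p_t(i)^{1-\eta}r_t(i)^\eta ,
\]
which is \eqref{eq:pressure-coinc-factor}. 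Indices with $p_t(i)=0$ contribute zero to both sides, since $r_t(i)=0$ there as well, so $r_t\ll p_t$ and the convention $0^{1-\eta}0^\eta=0$ is consistent.

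Next, take $\eta^{-1}\log$ of the factorization:
\[
\eta^{-1}\log Z_t(\eta)=\log Z_t(1)+\eta^{-1}\log\cC_t(\eta).
\]
Adding $\mu_t$ to both sides and noting that $\delta_t^{\mathrm{loc}}(1)=\mu_t+\log Z_t(1)$ yields the first equality in \eqref{eq:pressure-coinc-gap}. For the R\'enyi form, the definition gives $\log\cC_t(\eta)=(\eta-1)D_\eta(r_t\|p_t)$. Hence $\eta^{-1}\log\cC_t(\eta)=-\frac{1-\eta}{\eta}D_\eta(r_t\|p_t)$. At $\eta=1$ both sides reduce to $0$, since $\cC_t(1)=1$; that value is read as the limit.

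The last claim is that $\cC_t(\eta)\in[0,1]$ for $0<\eta<1$. Nonnegativity is obvious. For the upper bound, apply Hölder's inequality with exponents $1/(1-\eta)$ and $1/\eta$, or equivalently concavity of $x\mapsto x^\eta$ via Jensen under $p_t$:
\[
\sum_i p_t(i)\bigl(r_t(i)/p_t(i)\bigr)^\eta\le\Bigl(\sum_i r_t(i)\Bigr)^\eta=1 .
\]
This is the Hellinger/R\'enyi affinity.

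No step is a real obstacle. The only points needing care are the support conventions (zero-mass experts) and the $\eta=1$ endpoint of the R\'enyi expression, which I would handle by stating that $D_1$ is understood as the limit $\KL$ and that the coefficient $(1-\eta)/\eta$ vanishes there anyway.
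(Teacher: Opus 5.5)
Your proposal is correct and follows the paper's proof essentially step for step. Both substitute the definition of $r_t$ into the summand to get $\cC_t(\eta)=Z_t(\eta)/Z_t(1)^\eta$, then take logarithms and apply the definition of $D_\eta$; the differences are minor. You prove $\cC_t(\eta)\le 1$ for $0<\eta<1$ directly by H\"older/Jensen, whereas the paper cites nonnegativity of $D_\eta(r_t\|p_t)$, which is the same inequality. Your remarks on zero-mass experts and the $\eta=1$ endpoint are extra care the paper omits.
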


Thus the local finite-temperature loss term is governed by a coincidence factor in the one-step normalizer; the retempered free energy of Section~\ref{sec:adaptive} does not enter.
At any fixed round, changing the temperature changes the one-step loss term exactly through the overlap term $\cC_t(\eta)$, or equivalently through the R\'enyi divergence between the current weights and the unit-temperature posterior.

\begin{proposition}[Terminal mass and event-conditioned weighted regret]
\label{prop:pressure-event}
Under the notation of Proposition~\ref{prop:pressure-cumulative}, let $E\subseteq[K]$ be nonempty and define
\[
\Delta(E):=\{\rho\in\Delta([K]):\rho(E)=1\},
\qquad
\mathcal Z_T(E):=\sum_{i\in E} p_1(i)e^{-G_T(i)}=p_{T+1}(E)
\]
If $\mathcal Z_T(E)>0$, let $p_{T+1}^E$ denote the restriction of $p_{T+1}$ to $E$, namely $p_{T+1}^E(i):=p_{T+1}(i)/p_{T+1}(E)$ for $i\in E$.
Then for every posterior $\rho\in\Delta(E)$,
\begin{equation}
\label{eq:pressure-event-kl}
-\log \mathcal Z_T(E)+\KL(\rho\|p_{T+1}^E)
=
\sum_{t=1}^T \eta_t\bigl(\ip{\rho}{c_t}-a_t\bigr)+\KL(\rho\|p_1)
\end{equation}
Equivalently,
\begin{equation}
\label{eq:pressure-event-pointwise}
\sum_{t=1}^T \eta_t\bigl(\mu_t-\ip{\rho}{c_t}\bigr)-\KL(\rho\|p_1)
=
\sum_{t=1}^T \eta_t^2 Q_t^{\mathrm{loc}}(c)
+
\log p_{T+1}(E)
-
\KL(\rho\|p_{T+1}^E)
\end{equation}
Consequently,
\begin{equation}
\label{eq:pressure-event-var}
-\log \mathcal Z_T(E)
=
\min_{\rho\in\Delta(E)}
\left\{
\sum_{t=1}^T \eta_t\bigl(\ip{\rho}{c_t}-a_t\bigr)+\KL(\rho\|p_1)
\right\}
\end{equation}
and
\begin{equation}
\label{eq:pressure-event-sup}
\sup_{\rho\in\Delta(E)}
\left\{
\sum_{t=1}^T \eta_t\bigl(\mu_t-\ip{\rho}{c_t}\bigr)-\KL(\rho\|p_1)
\right\}
=
\sum_{t=1}^T \eta_t^2 Q_t^{\mathrm{loc}}(c)
+
\log p_{T+1}(E)
\end{equation}
\end{proposition}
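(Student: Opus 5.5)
The plan is to derive everything from the cumulative normalization of Proposition~\ref{prop:pressure-cumulative}, namely $p_{T+1}(i)=p_1(i)e^{-G_T(i)}$ with $\sum_i p_1(i)e^{-G_T(i)}=1$. The argument then applies the Gibbs variational identity (Lemma~\ref{lem:dv}) with the base measure restricted to $E$.

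First, summing the normalization over $i\in E$ gives $\mathcal Z_T(E)=p_{T+1}(E)$ directly. Next, for $i\in E$ the restricted posterior is $p_{T+1}^E(i)=p_1(i)e^{-G_T(i)}/\mathcal Z_T(E)$. Take $\rho\in\Delta(E)$; if $\rho\not\ll p_1$, both sides of \eqref{eq:pressure-event-kl} are $+\infty$, so assume $\rho\ll p_1$. Taking logs and integrating against $\rho$ gives
\[
\KL(\rho\|p_{T+1}^E)=\KL(\rho\|p_1)+\ip{\rho}{G_T}+\log\mathcal Z_T(E).
\]
Since $\ip{\rho}{G_T}=\sum_t\eta_t(\ip{\rho}{c_t}-a_t)$, this is exactly \eqref{eq:pressure-event-kl}. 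Equivalently, this step is Lemma~\ref{lem:dv} with $\pi$ replaced by the unnormalized measure $p_1\one_E$ and $X=-G_T$.

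For \eqref{eq:pressure-event-pointwise}, rewrite $a_t$ using the definition $Q_t^{\mathrm{loc}}(c)=\eta_t^{-1}(\mu_t-a_t)$, so that $\eta_t a_t=\eta_t\mu_t-\eta_t^2Q_t^{\mathrm{loc}}(c)$. This gives
\[
\sum_t\eta_t(\ip{\rho}{c_t}-a_t)=\sum_t\eta_t(\ip{\rho}{c_t}-\mu_t)+\sum_t\eta_t^2Q_t^{\mathrm{loc}}(c).
\]
Substituting into \eqref{eq:pressure-event-kl} and rearranging signs yields \eqref{eq:pressure-event-pointwise}. The identity $\eta_t^{-2}\psi_t(-\eta_t)=\eta_t^{-1}(\mu_t-a_t)$ is itself just the pressure-target condition \eqref{eq:pressure-target}, which says $\log\sum_ip_t(i)e^{-\eta_tc_t(i)}=-\eta_ta_t$; this is already recorded in Proposition~\ref{prop:pressure-cumulative}.

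Finally, the variational statements \eqref{eq:pressure-event-var} and \eqref{eq:pressure-event-sup} follow from nonnegativity of $\KL(\rho\|p_{T+1}^E)$, which vanishes exactly at $\rho=p_{T+1}^E\in\Delta(E)$. In \eqref{eq:pressure-event-kl} the right-hand side is therefore minimized at $\rho=p_{T+1}^E$ with value $-\log\mathcal Z_T(E)$. In \eqref{eq:pressure-event-pointwise} the left-hand side is maximized there, with value $\sum_t\eta_t^2Q_t^{\mathrm{loc}}(c)+\log p_{T+1}(E)$.

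The only delicate point is support and infinity bookkeeping. We need $p_{T+1}^E\ll p_1$ on $E$, which holds because the factors $e^{-G_T}$ are positive. Comparators with $\rho\not\ll p_1$ contribute $+\infty$ (respectively $-\infty$) and do not affect the min or sup. The hypothesis $\mathcal Z_T(E)>0$ is needed only so that $p_{T+1}^E$ is well defined; it holds whenever $p_1(E)>0$.
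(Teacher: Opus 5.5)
Your proof is correct and is essentially the paper's argument. The paper writes $\KL(\rho\|p_{T+1}^E)=\KL(\rho\|p_{T+1})+\log p_{T+1}(E)$ and then substitutes \eqref{eq:pressure-cumulative-kl}, whereas you expand $\KL(\rho\|p_{T+1}^E)$ directly against $p_1$ using $p_{T+1}=p_1e^{-G_T}$, which is the same computation done in one step. Your derivation of the pointwise form via $\eta_t a_t=\eta_t\mu_t-\eta_t^2Q_t^{\mathrm{loc}}(c)$, and of the extremal forms via nonnegativity of $\KL(\rho\|p_{T+1}^E)$ with equality at $\rho=p_{T+1}^E$, matches the paper, and your support and infinity bookkeeping is a harmless addition.
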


When $E=[K]$, we have $\log p_{T+1}(E)=0$, so \eqref{eq:pressure-event-sup} reduces to Proposition~\ref{prop:pressure-cumulative}.
For smaller classes, the extra term $\log p_{T+1}(E)\le 0$ is an exact class correction: any class that ends with very small terminal mass under the final posterior incurs the cost of that scarcity.
Typical choices of $E$ include an $\epsilon$-quantile set of experts, a sparsity pattern, a model family, or an agreement set singled out by side information.
If we also want an ordinary unweighted regret bound, the same correction survives summation by parts: for every $\rho\in\Delta(E)$,
\[
R_T^c(\rho)
\le
\sum_{t=1}^T \bigl(\mu_t-a_t\bigr)
+\frac{\KL(\rho\|p_1)}{\eta_1}
+\frac{\log p_{T+1}(E)}{\eta_T}
+\sum_{t=2}^T \KL(\rho\|p_t)\!\left(\frac{1}{\eta_t}-\frac{1}{\eta_{t-1}}\right)
\]
by combining Corollary~\ref{cor:pressure-regret} with $\KL(\rho\|p_{T+1})=\KL(\rho\|p_{T+1}^E)-\log p_{T+1}(E)$.
Thus the terminal-mass term is genuinely part of the regret accounting; the only extra ingredient needed to pass from weighted to ordinary regret is the reciprocal-temperature variation term.

If the composite scores come from the mixed-coincidence reduction of Section~\ref{sec:exact}, namely $c_t(i)=\ell_t(i)-\eta_t^{-1}\log s_t(i)$ with $s_t$ of the form \eqref{eq:side-decomp}, then
\[
p_{T+1}(E)
=
\exp\!\Bigl(\sum_{t=1}^T \eta_t a_t\Bigr)
\sum_{i\in E} p_1(i)\exp\!\Bigl(-\sum_{t=1}^T \eta_t \ell_t(i)\Bigr)
\prod_{t=1}^T\prod_{w=1}^{J_t}\mu_{t,w}(i)^{\alpha_{t,w}}
\]
Thus the terminal mass of $E$ is, up to the explicit scalar factor $\exp(\sum_{t=1}^T \eta_t a_t)$, a restricted mixed-coincidence quantity.
The correction $\log p_{T+1}(E)$ in \eqref{eq:pressure-event-pointwise} therefore measures how much cumulative geometric agreement remains inside the class $E$.

\subsection{Centered scores and quadratic envelopes}

In many settings the observed scores are already centered excess losses or gains, with $\ip{p_t}{g_t}=0$. The zero free-energy target then collapses to the zero-temperature limit, so the natural control variable becomes a positive gain target; the following specialization records the resulting exact regret identity and its quadratic envelope.

\begin{corollary}[Centered-score specialization and exact adaptive regret theorem]
\label{cor:pressure-centered}
Assume that on each round one observes a score vector $g_t\in\R^K$ satisfying $\ip{p_t}{g_t}=0$.
If $g_t$ is nonconstant, choose a target $b_t$ with $0<b_t<\max_i g_t(i)$ and let $\eta_t>0$ be the unique solution of
\begin{equation}
\label{eq:pressure-centered-root}
\log\!\left(\sum_{i=1}^K p_t(i)e^{\eta_t g_t(i)}\right)=\eta_t b_t
\end{equation}
If $g_t\equiv 0$, set $b_t:=0$ and choose any $\eta_t>0$. Update $p_{t+1}(i)\propto p_t(i)e^{\eta_t g_t(i)}$.
Then for every posterior $\rho\in\Delta([K])$,
\begin{equation}
\label{eq:pressure-centered-potential}
\sum_{i=1}^K p_1(i)\exp\!\left(\sum_{t=1}^T \eta_t\bigl(g_t(i)-b_t\bigr)\right)=1
\end{equation}
and
\begin{equation}
\label{eq:pressure-centered-regret}
\sum_{t=1}^T \eta_t\bigl(\ip{\rho}{g_t}-b_t\bigr)
=
\KL(\rho \| p_1)-\KL(\rho \| p_{T+1})
\end{equation}
Hence
\begin{equation}
\label{eq:pressure-centered-upper}
\sum_{t=1}^T \eta_t\,\ip{\rho}{g_t}
\le
\KL(\rho \| p_1)+\sum_{t=1}^T \eta_t b_t
\end{equation}
If moreover $g_t(i)=\ip{p_t}{\ell_t}-\ell_t(i)$,
then $\ip{\rho}{g_t}=\ip{p_t}{\ell_t}-\ip{\rho}{\ell_t}$, so \eqref{eq:pressure-centered-regret} is an identity for the $\eta$-weighted regret $\sum_t\eta_t(\ip{p_t}{\ell_t}-\ip{\rho}{\ell_t})$ of the realized line-search schedule; Corollary~\ref{cor:pressure-regret} recovers the ordinary unweighted regret by summation by parts.
\end{corollary}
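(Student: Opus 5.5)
This corollary is the centered-score case of Propositions~\ref{prop:pressure-normalized} and~\ref{prop:pressure-cumulative}. The plan is to set $c_t:=-g_t$ and $a_t:=-b_t$, so that the update $p_{t+1}(i)\propto p_t(i)e^{\eta_t g_t(i)}$ is exactly the local recursion $p_{t+1}(i)\propto p_t(i)e^{-\eta_t c_t(i)}$. I would then check three things in order: the root condition, the cumulative normalization, and the comparator identity.

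\textbf{Well-posedness of the root.} Under the substitution, \eqref{eq:pressure-centered-root} reads
\[
-\eta_t^{-1}\log\sum_i p_t(i)e^{-\eta_t c_t(i)}=a_t,
\]
which is \eqref{eq:pressure-target}. Its admissibility range is $\min_i c_t(i)<a_t<\ip{p_t}{c_t}$. Since $\ip{p_t}{g_t}=0$, we have $\ip{p_t}{c_t}=0$ and $\min_i c_t(i)=-\max_i g_t(i)$, so the range becomes exactly $0<b_t<\max_i g_t(i)$. Proposition~\ref{prop:pressure-normalized} then gives existence and uniqueness of $\eta_t$ when $g_t$ is nonconstant. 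Note that a nonconstant $g_t$ with $p_t$-mean zero has $\max_i g_t(i)>0$, provided $p_t$ has full support, which holds whenever $p_1$ does, since the update preserves support. In the degenerate round $g_t\equiv0$ with $b_t=0$, the root equation $\sum_i p_t(i)e^{\eta_t(g_t(i)-b_t)}=1$ holds trivially for every $\eta_t$, and $p_{t+1}=p_t$.

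\textbf{Normalization and identity.} In all rounds we now have $\sum_i p_t(i)e^{\eta_t(g_t(i)-b_t)}=1$. Hence $p_{t+1}(i)=p_t(i)e^{\eta_t(g_t(i)-b_t)}$ holds with no further normalizer. Iterating (the telescoping of Proposition~\ref{prop:pressure-cumulative}, which I would rerun directly so as to include the degenerate rounds) gives
\[
p_{T+1}(i)=p_1(i)\exp\Bigl(\sum_t\eta_t(g_t(i)-b_t)\Bigr),
\]
and summing over $i$ yields \eqref{eq:pressure-centered-potential}. Next I would take $\ip{\rho}{\log(p_{T+1}/p_1)}$, i.e.\ $\KL(\rho\|p_1)-\KL(\rho\|p_{T+1})$ whenever $\rho\ll p_1$. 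This produces \eqref{eq:pressure-centered-regret}, the signs matching \eqref{eq:pressure-cumulative-kl} under $c=-g$, $a=-b$. If $\rho\not\ll p_1$, both KL terms are infinite and I would adopt the paper's convention that the identity is read for $\rho\ll p_1$.

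\textbf{Upper bound and regret specialization.} Dropping $-\KL(\rho\|p_{T+1})\le0$ and moving $\sum_t\eta_tb_t$ to the right-hand side gives \eqref{eq:pressure-centered-upper}. For the final claim, substituting $g_t(i)=\ip{p_t}{\ell_t}-\ell_t(i)$ gives $\ip{\rho}{g_t}=\ip{p_t}{\ell_t}-\ip{\rho}{\ell_t}$, because $\rho$ sums to one. The centering hypothesis $\ip{p_t}{g_t}=0$ is automatic for this choice. The unweighted statement is then exactly Corollary~\ref{cor:pressure-regret} applied with $c_t=\ell_t$, whose pressure level is $a_t=\ip{p_t}{\ell_t}-b_t$; this is the same temperature, since shifting scores by a constant leaves the root and the update unchanged.

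\textbf{Main obstacle.} The only nonroutine point is bookkeeping rather than analysis: confirming that the admissible target interval transforms correctly under $c=-g$, $a=-b$, and handling the degenerate rounds $g_t\equiv0$ consistently within the telescoping product.
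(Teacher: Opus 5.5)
Your proposal is correct and follows the paper's proof essentially step for step. The paper also substitutes $c_t=-g_t$, $a_t=-b_t$, translates the window $\min_i c_t(i)<a_t<\ip{p_t}{c_t}$ into $0<b_t<\max_i g_t(i)$, invokes Propositions~\ref{prop:pressure-normalized} and~\ref{prop:pressure-cumulative}, and drops $\KL(\rho\|p_{T+1})\ge 0$. Your explicit treatment of the degenerate rounds $g_t\equiv 0$ (which Proposition~\ref{prop:pressure-normalized} excludes) and of the full-support requirement on $p_1$ tightens bookkeeping that the paper's proof leaves implicit.
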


Combining Corollary~\ref{cor:pressure-centered} with Proposition~\ref{prop:pressure-event} and taking $c_t=-g_t$, $a_t=-b_t$ gives, for every nonempty $E\subseteq[K]$ with $p_{T+1}(E)>0$,
\[
\sup_{\rho\in\Delta(E)}
\left\{
\sum_{t=1}^T \eta_t \ip{\rho}{g_t} - \KL(\rho \| p_1)
\right\}
=
\sum_{t=1}^T \eta_t b_t+\log p_{T+1}(E)
\]
Thus the exact cost of forcing the comparator to lie in a gain-bearing class $E$ is again the negative logarithm of that class's final posterior mass.

\begin{corollary}[Exact quadratic-penalty representation of the local line search]
\label{cor:pressure-quadratic}
Under Corollary~\ref{cor:pressure-centered}, define
\[
\kappa_t
:=
\frac{b_t}{\eta_t}
=
\frac{1}{\eta_t^2}\log\!\left(\sum_{i=1}^K p_t(i)e^{\eta_t g_t(i)}\right)
\]
Then the target equation is equivalently
\begin{equation}
\label{eq:pressure-quadratic-root}
\sum_{i=1}^K p_t(i)\exp\!\left(\eta_t g_t(i)-\kappa_t\eta_t^2\right)=1
\end{equation}
and for every posterior $\rho\in\Delta([K])$,
\begin{equation}
\label{eq:pressure-quadratic-regret}
\sum_{t=1}^T \eta_t\,\ip{\rho}{g_t}
=
\sum_{t=1}^T \kappa_t\eta_t^2
+
\KL(\rho \| p_1)-\KL(\rho \| p_{T+1})
\end{equation}
Consequently, if $(\bar\kappa_t)$ is any predictable sequence with $\bar\kappa_t\ge \kappa_t$ for all $t$, then
\begin{equation}
\label{eq:pressure-quadratic-upper}
\sum_{t=1}^T \eta_t\,\ip{\rho}{g_t}
\le
\KL(\rho \| p_1)+\sum_{t=1}^T \bar\kappa_t\eta_t^2
\end{equation}
\end{corollary}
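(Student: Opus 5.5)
The target equation is just a rescaling of the centered root condition. Since $\kappa_t = b_t/\eta_t$, we have $\kappa_t\eta_t^2=\eta_t b_t$. So \eqref{eq:pressure-centered-root}, namely $\log\sum_i p_t(i)e^{\eta_t g_t(i)}=\eta_t b_t$, is the same as
\[
\log\sum_i p_t(i)e^{\eta_t g_t(i)}=\kappa_t\eta_t^2 .
\]
Exponentiating and moving the constant factor $e^{-\kappa_t\eta_t^2}$ inside the sum gives \eqref{eq:pressure-quadratic-root}. This also confirms the displayed formula $\kappa_t=\eta_t^{-2}\log\sum_i p_t(i)e^{\eta_t g_t(i)}$. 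In the degenerate case $g_t\equiv 0$ we have $b_t=0$ and $\kappa_t=0$, and both sides equal $1$ trivially.

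For the regret identity I will start from \eqref{eq:pressure-centered-regret} in Corollary~\ref{cor:pressure-centered}:
\[
\sum_t \eta_t\bigl(\ip{\rho}{g_t}-b_t\bigr)=\KL(\rho\|p_1)-\KL(\rho\|p_{T+1}).
\]
Moving $\sum_t\eta_t b_t$ to the right-hand side and substituting $\eta_t b_t=\kappa_t\eta_t^2$ yields \eqref{eq:pressure-quadratic-regret} directly. This needs $\rho\ll p_1$ for the relative entropies to be meaningful. If $\KL(\rho\|p_1)=\infty$, both sides are read in the same extended sense as in the cited corollary, and I will state it under that corollary's conventions.

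For the upper bound \eqref{eq:pressure-quadratic-upper}, I will drop the nonpositive term $-\KL(\rho\|p_{T+1})\le 0$ and use $\bar\kappa_t\ge\kappa_t$ together with $\eta_t^2>0$ termwise. Predictability of $\bar\kappa_t$ is not used in the inequality itself; it matters only for interpreting the bound as a controller certificate.

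The only point needing care, and the nearest thing to an obstacle, is the sign convention. The local update here is $p_{t+1}\propto p_t e^{+\eta_t g_t}$, a gain update. So I should double-check that \eqref{eq:pressure-centered-regret} is indeed the specialization of Proposition~\ref{prop:pressure-cumulative} with $c_t=-g_t$ and $a_t=-b_t$. Substituting into \eqref{eq:pressure-cumulative-kl} gives
\[
\sum_t\eta_t\bigl(-\ip{\rho}{g_t}+b_t\bigr)=\KL(\rho\|p_{T+1})-\KL(\rho\|p_1),
\]
which is the same identity after negation. With that checked, the proof is a two-line substitution.
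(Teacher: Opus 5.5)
Your proposal is correct and follows the paper's proof essentially line for line. You identify $\kappa_t\eta_t^2=\eta_t b_t$ to rewrite the root condition \eqref{eq:pressure-centered-root} as \eqref{eq:pressure-quadratic-root}, substitute into \eqref{eq:pressure-centered-regret}, and then drop $-\KL(\rho\|p_{T+1})\le 0$ while using $\bar\kappa_t\ge\kappa_t$ termwise; your sign check against Proposition~\ref{prop:pressure-cumulative} and your handling of the degenerate round $g_t\equiv 0$ are also correct.
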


This is the clean repair of the conventional quadratic-penalty proof strategy.
The coefficient $\kappa_t$ is not an externally imposed surrogate but the realized finite-temperature cumulant coefficient selected by the line search itself.
Replacing it by any predictable upper envelope $\bar\kappa_t$ converts the exact identity into the familiar supermartingale-style inequality, but that surrogate is then a proof device; the definition of the algorithm is unchanged.

\paragraph{Small-temperature relation to second order.}
When the one-step cgf of $g_t$ is three times differentiable at the origin,
\[
\log \E_{i\sim p_t} e^{\eta g_t(i)}
=
\frac{\eta^2}{2}\Var_{i\sim p_t}(g_t(i)) + O(\eta^3)
\qquad
(\eta\downarrow 0)
\]
so the exact coefficient in Corollary~\ref{cor:pressure-quadratic} satisfies
\[
\kappa_t
=
\frac{1}{2}\Var_{i\sim p_t}(g_t(i)) + O(\eta_t),
\qquad
b_t
=
\frac{\eta_t}{2}\Var_{i\sim p_t}(g_t(i)) + O(\eta_t^2)
\]
Thus the usual quadratic-variation penalties are the small-temperature shadow of the exact pressure-target line search.

\paragraph{Choice of the target.} 
The target $a_t$ is problem dependent.
For general score vectors with positive current mean, the unit-potential rule $a_t=0$ is scale invariant and yields $Z_t=1$ exactly.
For centered excess scores with $\ip{p_t}{g_t}=0$, use instead the positive gain target $b_t=-a_t$ of Corollary~\ref{cor:pressure-centered}.
A fixed-fraction target $a_t=\lambda_t\ip{p_t}{c_t}$ with $0<\lambda_t<1$ keeps a chosen fraction of the current mean score at the mix-loss level.
In optimistic or model-based settings we can instead target a predictable benchmark, so that $\ip{p_t}{c_t}-a_t$ measures only the residual part that the learner failed to explain away at round $t$.
The information-targeted version of Section~\ref{sec:offdiag-combinations} can also be used locally: for a candidate $\eta$, let $p_{t+1}^{(\eta)}(i)\propto p_t(i)e^{-\eta c_t(i)}$ and set $I_t^{\rm loc}(\eta):=\KL(p_{t+1}^{(\eta)}\|p_t)=\eta(m_t(\eta)-\ip{p_{t+1}^{(\eta)}}{c_t})$; choosing a quota $\beta_t$ and solving $I_t^{\rm loc}(\eta_t)=\beta_t$ induces the pressure target $a_t=m_t(\eta_t)$.

\paragraph{Nonmonotone and other nonstandard rate schedules.}
The calibrated temperature in Proposition~\ref{prop:pressure-normalized} need not move monotonically in time.
Even at fixed weights $p_t=(1/2,1/2)$, the positive solutions of \eqref{eq:pressure-target-root} for the zero target $a_t=0$ and the score vectors $(-1,2)$,
$(-1,4)$, and $(-1,3/2)$ are approximately $0.48$, $0.66$, and $0.33$, respectively.
Thus an exact pressure-target schedule can increase and later decrease in response to the realized score geometry.
This is fully compatible with Theorem~\ref{thm:cumulant-chain}: the exact chain holds for every predictable positive schedule, while monotonicity matters only when we want to drop the reciprocal-temperature variation term and obtain a one-sided envelope.
On realized runs the drift takes both signs, which is the practical form of the same point: the identity stays exact while $D_T$ changes sign, so no one-signed-defect assumption is available to be made or needed.
Unlike the second-order schedule, the pressure-target family is defined implicitly and is designed to hit a local target; it maintains no monotone global budget. The cost of that flexibility is that we generally have to keep the reciprocal-temperature variation term in \eqref{eq:pressure-cum-regret-abel} unless additional monotonicity is available.

\subsection{The finite-scale regret spectrum}
\label{app:regret-spectrum}
For a fixed horizon $T$ and temperature $\eta$, the terminal weights $\mu_i:=p_{T+1}(i)$ define a simple R\'enyi profile $D_q(T):=-H_q(\mu)/\log T$, where $H_q$ is the order-$q$ R\'enyi entropy. Interpreted as a \emph{regret spectrum}, $D_0(T)$ measures the effective number of experts still in play, $D_1(T)$ is entropy per unit log-time, and curvature of $q\mapsto D_q(T)$ records how heterogeneous the loss landscape has become. Flat spectra correspond to near-indistinguishable experts; strongly curved spectra indicate a genuinely multiscale hierarchy of good experts.

For the uniform prior the shape of this spectrum has an exact closed form as a divided difference of the terminal free energy.

\begin{proposition}[Regret spectrum as a terminal free-energy divided difference]
\label{prop:regret-spectrum-freeenergy}
Take the uniform prior $\pi\equiv1/K$, fix a horizon $T$ and a terminal temperature $\eta>0$, and let $\mu:=p_{T+1}$ be the terminal weights $\mu_i\propto e^{-\eta C_T(i)}$. With $A_T(\eta):=-\eta^{-1}\log\sum_i\pi(i)e^{-\eta C_T(i)}$ and $\Lambda_T(\beta):=\log\sum_i e^{-\beta C_T(i)}$, the order-$q$ R\'enyi entropy is
\[
H_q(\mu)=\frac{\Lambda_T(q\eta)-q\,\Lambda_T(\eta)}{1-q}
=\log K+\frac{q\eta\bigl(A_T(\eta)-A_T(q\eta)\bigr)}{1-q},
\]
so the regret spectrum is the free-energy divided difference
\[
D_q(T)=-\frac{H_q(\mu)}{\log T}=\frac{1}{\log T}\left(q\eta\,\frac{A_T(\eta)-A_T(q\eta)}{q-1}-\log K\right).
\]
Since $\Lambda_T''(\beta)=\Var_{\nu_\beta}(C_T)$ with $\nu_\beta(i)\propto e^{-\beta C_T(i)}$, the curvature of $q\mapsto D_q(T)$ measures the cross-sectional dispersion (heat capacity) of the terminal cumulative losses $C_T(\cdot)$: the spectrum is flat iff all $C_T(i)$ coincide and strongly curved iff the $C_T(i)$ are heterogeneous.
\end{proposition}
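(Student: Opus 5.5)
The plan is a direct computation from the Gibbs form of the terminal weights, with the statement's two displayed formulas derived in sequence and the curvature claim read off from log-partition calculus.

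\emph{Step 1: Rényi entropy through the log-partition.} Write $\mu_i=e^{-\eta C_T(i)}/e^{\Lambda_T(\eta)}$. Then
\[
\sum_i\mu_i^q=e^{\Lambda_T(q\eta)-q\Lambda_T(\eta)}.
\]
Substituting into $H_q(\mu)=(1-q)^{-1}\log\sum_i\mu_i^q$ gives the first expression at once, for $q\neq1$; the cases $q=1$ and $q=0$ follow by continuity or L'Hôpital.

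\emph{Step 2: passing to the free energy $A_T$.} Under the uniform prior, $A_T(\beta)=-\beta^{-1}(\Lambda_T(\beta)-\log K)$, so $\Lambda_T(\beta)=\log K-\beta A_T(\beta)$. Substituting at $\beta=q\eta$ and $\beta=\eta$, the numerator becomes
\[
\Lambda_T(q\eta)-q\Lambda_T(\eta)=(1-q)\log K+q\eta\bigl(A_T(\eta)-A_T(q\eta)\bigr).
\]
The $\log K$ terms combine as claimed. Dividing by $1-q$ yields the second expression, and dividing by $-\log T$ with the sign flip $1/(1-q)=-1/(q-1)$ gives $D_q(T)$.

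\emph{Step 3: the curvature claim.} Two derivative facts are used. First, $\Lambda_T''(\beta)=\Var_{\nu_\beta}(C_T)$ is the standard second derivative of a log-partition function. Second, $H_q$ equals $(1-q)^{-1}$ times the gap between $\Lambda_T(q\eta)$ and its chord, so $q\mapsto D_q$ is, up to the affine normalization, a divided difference of $\Lambda_T$. Its shape in $q$ is therefore governed by $\Lambda_T''$ on the segment between $\eta$ and $q\eta$. Concretely,
\[
\Lambda_T(q\eta)-\Lambda_T(\eta)-(q-1)\eta\Lambda_T'(\eta)=\eta^2\int_1^q(q-s)\Lambda_T''(s\eta)\,ds,
\]
which exhibits the dependence on the variance.

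For the "flat iff" direction, if all $C_T(i)$ coincide then $\Lambda_T$ is affine, $\mu$ is uniform, and $H_q\equiv\log K$ is constant in $q$. Conversely, if the $C_T(i)$ are not all equal, $\mu$ is non-uniform, and the Rényi entropy is then strictly decreasing in $q$, so the spectrum is not flat.

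\emph{Main obstacle.} The identities are routine. The delicate step is making "curvature measures dispersion" precise. I would state it as the integral representation above together with the strict-monotonicity fact for non-uniform $\mu$, and treat "strongly curved" as the interpretive reading of that representation rather than as a quantitative claim.
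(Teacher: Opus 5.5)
Your proposal is correct and follows the paper's proof. It uses the same Gibbs-form computation $\sum_i\mu_i^q=e^{\Lambda_T(q\eta)-q\Lambda_T(\eta)}$, the same substitution $\Lambda_T(\beta)=\log K-\beta A_T(\beta)$, and the same appeal to $\Lambda_T''(\beta)=\Var_{\nu_\beta}(C_T)$, where the paper only notes that this is $\ge 0$ with equality iff $C_T$ is constant. Your Step 3 is more explicit than the paper's, and combining your Taylor identity with $\Lambda_T(\eta)-\eta\Lambda_T'(\eta)=H_1(\mu)$ (the Shannon entropy) gives the exact form $H_q(\mu)=H_1(\mu)+\frac{\eta^2}{1-q}\int_1^q(q-s)\Var_{\nu_{s\eta}}(C_T)\,ds$, which ties the variance directly to the spectrum; also, $q=0$ needs no limiting argument, since $\mu$ has full support.
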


The spectrum $D_q(T)$ depends on the terminal cumulative losses $C_T(\cdot)$ alone, whereas the intrinsic time $V_T(c)=\sum_t Q_t(c)$ is a pathwise accumulation of per-round tilted variances under the played $p_t$. The two are generically independent: paths reaching the same terminal $C_T$ share every $D_q(T)$ but may have arbitrarily different $V_T$ (a smooth path with tiny per-round variance versus a jagged path oscillating to the same endpoint). The exact relation is therefore between $D_q(T)$ and the terminal free energy $A_T$, with no functional dependence on $V_T$.

\section{Extensions via side information and compensators}
\label{app:side-info}

\subsection{Compensator and sufficient-statistic choices}
\label{sec:compensators}

With the original-loss identity in hand, we can now modify the composite-loss reduction in two structurally different ways. We may change the predictable side factors $s_t$, which alters the composite losses by an exact algebraic correction. Or we may keep the same comparator notion but replace the observed statistic by a transformed score before exponentiating it. The examples below use both viewpoints.

\paragraph{Vanilla Hedge.} Taking $s_t\equiv 1$ yields $u_t\equiv 0$. Corollary~\ref{cor:fixed-rate} reduces to the standard entropic regret bound, while Theorem~\ref{thm:scaling-time} gives a rigorous anytime second-order bound for ordinary Hedge in terms of $V_T(\ell)$.

\paragraph{Soft specialists and confidence-rated experts.} Let $\beta_t(i)\in(0,1]$ encode the confidence or availability of expert $i$ on round $t$, and set $s_t(i)=\beta_t(i)^{\lambda_t}$. 
Then $u_t(i) = -\frac{\lambda_t}{\eta_t}\log \beta_t(i)$, so experts with low confidence receive an additive penalty. 
This is a soft version of the specialist or sleeping-expert setup in which zeros are replaced by small positive masses to keep all information terms finite.

\paragraph{Predictable bonuses and variance penalties.} 
Suppose a predictable bonus $b_t(i)$ and a variance proxy $v_t(i)\ge 0$ are available. 
Taking $s_t(i) = \exp\left(\eta_t b_t(i)-\gamma_t v_t(i)\right)$ gives $c_t(i)=\ell_t(i)-b_t(i)+(\gamma_t/\eta_t)v_t(i)$.
Thus the same identity simultaneously handles optimism, pessimism, and second-order damping: the algorithm is exponential weights on the composite losses $c_t$.

\begin{corollary}[Optimistic Hedge from predictable side information]\label{cor:optimistic-hedge}
Let $m_t\in\R^K$ be any predictable vector and choose the side information $u_t(i):=-m_t(i)$, equivalently $s_t(i)=\exp(\eta_t m_t(i))$. 
Then $c_t(i)=\ell_t(i)-m_t(i)$, and for every posterior $\rho\in\Delta([K])$,
\[
\sum_{t=1}^T \ip{p_t-\rho}{\ell_t}
=
D_T+B_T(\rho)+\sum_{t=1}^T \eta_t Q_t(\ell-m)
+\sum_{t=1}^T \ip{p_t-\rho}{m_t}\]
where $Q_t (\ell-m)$ is the intrinsic increment computed from the residual vector $c_t=\ell_t-m_t$.
If $\operatorname{range}(\ell_t-m_t)\le \varepsilon_t$ for every $t$, then
\[
\sum_{t=1}^T \ip{p_t-\rho}{\ell_t}
\le
D_T+B_T(\rho)+\frac18\sum_{t=1}^T \eta_t\varepsilon_t^2
+\sum_{t=1}^T \ip{p_t-\rho}{m_t}\]
If $\eta_t\equiv\eta$, this simplifies to
\[
\sum_{t=1}^T \ip{p_t-\rho}{\ell_t}
\le
\eta^{-1}\KL(\rho\|\pi)+\frac{\eta}{8}\sum_{t=1}^T \varepsilon_t^2
+\sum_{t=1}^T \ip{p_t-\rho}{m_t}\]
Under the schedule \eqref{eq:budget-schedule-C} and the complexity condition $\KL(\rho\|\pi)\le \Gamma$, we also have
\[
\sum_{t=1}^T \ip{p_t-\rho}{\ell_t}
\le
\Gamma+Q_*^T(\ell-m)+(2C+C^{-1})\sqrt{\Gamma V_T(\ell-m)}
+\sum_{t=1}^T \ip{p_t-\rho}{m_t}\]
\end{corollary}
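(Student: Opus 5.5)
The plan is to specialize Corollary~\ref{cor:actual-losses} to the side offset $u_t=-m_t$ and then relax only the intrinsic-time term, leaving the drift and terminal terms untouched. First check the reduction. With $s_t(i)=\exp(\eta_t m_t(i))$ we get $u_t(i)=-\eta_t^{-1}\log s_t(i)=-m_t(i)$, so $c_t=\ell_t-m_t$. Because $m_t$ is predictable, $s_t$ is a legitimate positive predictable side factor, and the prior-retempered play $p_t$ of \eqref{eq:retempered-hedge} is well defined on these composite losses.

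Next, Corollary~\ref{cor:actual-losses} gives
\[
\sum_t\ip{p_t}{\ell_t}-\ip{\rho}{L_T}=D_T+B_T(\rho)+\sum_t\eta_tQ_t(\ell-m)+\sum_t\bigl(\ip{\rho}{u_t}-\ip{p_t}{u_t}\bigr).
\]
Substituting $u_t=-m_t$, the mismatch becomes $\sum_t\ip{p_t-\rho}{m_t}$. This is exactly the displayed identity, with $Q_t(\ell-m)$ read as $Q_t(c)$.

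For the range bound, apply Proposition~\ref{prop:range} round by round. If $\ell_t-m_t$ has range at most $\varepsilon_t$, then $c_t(i)\in[a_t,a_t+\varepsilon_t]$ with $a_t=\min_i c_t(i)$, so $Q_t(c)\le\varepsilon_t^2/8$. Multiplying by $\eta_t>0$ and summing yields the second display. In the fixed-rate case $\eta_t\equiv\eta$, Theorem~\ref{thm:pacbayes-second-order} gives $D_T=0$. Also $B_T(\rho)=\bigl(\KL(\rho\|\pi)-\KL(\rho\|q_{T,\eta})\bigr)/\eta\le\eta^{-1}\KL(\rho\|\pi)$, since the subtracted relative entropy is nonnegative. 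This gives the third display.

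For the scheduled bound, note that Theorem~\ref{thm:scaling-time} is a statement about $R_T^c(\rho)=D_T+B_T(\rho)+\sum_t\eta_tQ_t(c)$ for any composite loss sequence. We apply it with $c_t=\ell_t-m_t$, where the schedule \eqref{eq:budget-schedule-C} is driven by $V_{t-1}(\ell-m)$. Under $\KL(\rho\|\pi)\le\Gamma$, \eqref{eq:budgeted-bound-C} bounds that composite regret by $\Gamma+Q_*^T(\ell-m)+(2C+C^{-1})\sqrt{\Gamma V_T(\ell-m)}$. Adding the unchanged mismatch term $\sum_t\ip{p_t-\rho}{m_t}$ completes the argument.

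There is no real obstacle here. The one point to be careful about is that the schedule must read the residual clock $V(\ell-m)$, not the raw-loss clock, so that Theorem~\ref{thm:scaling-time} applies verbatim to the composite sequence.
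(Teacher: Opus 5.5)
Your proposal is correct and follows the paper's own proof essentially step for step: specialize Corollary~\ref{cor:actual-losses} at $u_t=-m_t$, bound the intrinsic-time term with Proposition~\ref{prop:range}, use $D_T=0$ and $B_T(\rho)\le\KL(\rho\|\pi)/\eta$ at a fixed rate, and apply Theorem~\ref{thm:scaling-time} to the residual sequence $c_t=\ell_t-m_t$. Your caveat that the schedule must read the residual clock $V_{t-1}(\ell-m)$ is exactly what the paper's one-line appeal to ``the same residual sequence'' relies on.
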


Both displays follow from Corollary~\ref{cor:actual-losses} and its budgeted form, evaluated at the side information $u_t=-m_t$.

This is the standard optimistic-Hedge mechanism, now written as a direct corollary of the side-information calculus. In repeated games or model-based play, we can take $m_t$ to be the loss vector predicted from a forecast of the opponent's move. Then only the forecast residuals $\ell_t-m_t$ contribute to the intrinsic-time cost, while the term $\sum_t\ip{p_t-\rho}{m_t}$ records how the comparator and learner evaluate the predictable baseline.

\paragraph{Perfect lookahead as a limiting case.}
In the degenerate case $m_t(i)=\ell_t(i)$, the residual losses vanish: $c_t\equiv 0$. Accordingly $Q_t(c)=0$ and $V_T(c)=0$, so the online part of the theorem disappears. Any remaining term in the original-loss decomposition is purely the predictable planning term $\sum_t\ip{p_t-\rho}{\ell_t}$. This is the right intuition: if the full loss vector is known before acting, there is no irreducible online uncertainty left and we should not expect to pay adversarial regret.

\paragraph{Slowly drifting losses.} A particularly simple choice is $m_t(i)=\ell_{t-1}(i)$ (with any convenient initialization for $t=1$), which yields $c_t(i)=\ell_t(i)-\ell_{t-1}(i)$.
If consecutive loss vectors satisfy $|\ell_t(i)-\ell_{t-1}(i)|\le \delta_t$ for all $i$, then again $Q_t(c)\le \delta_t^2/2$, so $V_T(c)\le \frac{1}{2} \sum_{t=1}^T \delta_t^2$.
The intrinsic-time bound therefore scales with squared temporal drift, which matches the intuition behind variation-based and gradual-variation regret bounds \cite{HazanKale2010,ChiangEtAl2012,SteinhardtLiang2014}.

\paragraph{Stochastic centering and compensators.} If $m_t(i)=\E[\ell_t(i)\mid \mathcal{F}_{t-1}]$ or a model-based conditional mean, then $c_t(i)=\ell_t(i)-m_t(i)$ is the martingale-difference residual. The identities therefore separate the predictable mean component $\sum_t \ip{p_t-\rho}{m_t}$ from the noise component $R_T^c(\rho)$. In favorable stochastic regimes the latter is controlled by the conditional noise scale, while the former is the genuine predictable excess-risk process. This is the martingale-compensator interpretation of the same coincidence calculus.

\paragraph{Positive-part sufficient statistics and sparsity.} Sparsity is orthogonal to the Bayesian prior calculus developed here.
The clean way to encode it is instead to change the observed sufficient statistic.
For a composite-loss sequence $c_t$, let $e_t(i) := \ip{p_t}{c_t}-c_t(i)$ be the one-round excess-loss vector, and write $e_t^+(i):=(e_t(i))_+$.
Then for every posterior $\rho$,
$R_T^c(\rho)=\sum_{t=1}^T \ip{\rho}{e_t}\le \sum_{t=1}^T \ip{\rho}{e_t^+}$.
Thus the positive part already upper-bounds the one-sided regret: only experts that beat the learner on round $t$ contribute.
Running the algorithm on $e_t^+$ (or, more generally, on shortfalls of the form $(e_t-b_t)_+$ or $(c_t-a_t)_+$) therefore gives a sparsity-seeking variant that deliberately forgets coordinates already on the favorable side of the comparison.
This construction is logically independent of the compensator choices above: it acts on the realized statistic after centering, leaving the prior untouched.
This is a conservative reduction: it keeps only the part of the observation relevant to one-sided excess loss, and it is no longer an exact re-expression of the original regret process.

At the level of the crude intrinsic-time envelopes used throughout the paper, truncation can only help.
Since the map $x\mapsto x_+$ is $1$-Lipschitz, $\operatorname{range}(e_t^+)\le \operatorname{range}(e_t)$,
and therefore every range-based bound such as $Q_t(e^+)\le \operatorname{range}(e_t^+)^2/8$ is no larger after truncation.

\paragraph{Bounded-influence score transforms.}
The same sufficient-statistic viewpoint also covers robust influence functions. Instead of exponentiating a raw residual $z_t$, we may exponentiate a transformed score $\psi(z_t)$ chosen to clip or smooth rare but very large observations. Positive-part truncation for sparsity and Catoni-type bounded-influence transforms are two instances of the same move: they change the sufficient statistic fed to the Bayes update and leave the exact information identity intact. The explicit bounded-influence formulas are especially useful for stabilizing importance-weighted estimates under partial feedback; conceptually the move belongs with the present discussion of statistic design.

\paragraph{Several side priors.} If $s_t(i)=\prod_w \pi_{t,w}(i)^{\alpha_{t,w}}$, then $u_t(i)=-\frac{1}{\eta_t}\sum_w \alpha_{t,w}\log \pi_{t,w}(i)$.
The regret corrections in \eqref{eq:fixed-rate-ell-reg}, \eqref{eq:generic-reduction}, and \eqref{eq:actual-loss-bound} therefore become explicit sums of cross-entropy gaps. In particular, the scaling-time quantity in Theorem~\ref{thm:scaling-time} depends only on the resulting composite losses, however many priors were used to construct them.

\subsection{Shifting comparators and sleeping experts}
\label{sec:shifting}

The same exact calculus also accommodates richer comparator objects.
A switching or tracking benchmark may be viewed either as a path of posteriors $(\rho_t)$ or, equivalently, as a structured prior on a parameter space of expert trajectories.

\begin{theorem}[Exact shifting-comparator identity]\label{thm:shifting}
Let $\rho_1,\dots,\rho_T\in\Delta([K])$ be any comparator path and define the dynamic composite-loss regret $R_T^{c,\mathrm{dyn}}(\rho_{1:T}) := \sum_{t=1}^T \ip{p_t-\rho_t}{c_t}$.
Then
\begin{equation}\label{eq:shifting-exact}
R_T^{c,\mathrm{dyn}}(\rho_{1:T})
=
D_T+B_T(\rho_T)+\sum_{t=1}^T \eta_tQ_t(c)
+\sum_{t=1}^{T-1}\ip{\rho_{t+1}-\rho_t}{C_t}\end{equation}
Under the schedule \eqref{eq:budget-schedule-C},
\begin{equation}\label{eq:shifting-two-sided}
R_T^{c,\mathrm{dyn}}(\rho_{1:T})
\in
D_T+B_T(\rho_T)+\sum_{t=1}^{T-1}\ip{\rho_{t+1}-\rho_t}{C_t}+\Env{V_T(c)}{Q_*^T(c)}\end{equation}
If moreover $c_t(i)\in[0,1]$ for all $t$ and $i$, then $\left|\sum_{t=1}^{T-1}\ip{\rho_{t+1}-\rho_t}{C_t}\right|
\le
2\sum_{t=1}^{T-1} t\,\TV(\rho_{t+1},\rho_t)$. 
Consequently, whenever $\KL(\rho_T\|\pi)\le \Gamma$,
\begin{equation}\label{eq:shifting-bound}
R_T^{c,\mathrm{dyn}}(\rho_{1:T})
\le
\Gamma+Q_*^T(c)+(2C+C^{-1})\sqrt{\Gamma V_T(c)}
+2\sum_{t=1}^{T-1} t\,\TV(\rho_{t+1},\rho_t)\end{equation}
\end{theorem}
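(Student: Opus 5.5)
The plan is to reduce the dynamic regret to the static identity of Theorem~\ref{thm:pacbayes-second-order}, evaluated at the terminal comparator $\rho_T$, plus a summation-by-parts correction.

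\textbf{Step 1: split off the static regret.} Write
\[
\sum_{t=1}^T \ip{\rho_t}{c_t} = \ip{\rho_T}{C_T} - \sum_{t=1}^{T-1}\ip{\rho_{t+1}-\rho_t}{C_t}.
\]
This is Abel summation with $C_0=0$, using $c_t=C_t-C_{t-1}$. Then $R_T^{c,\mathrm{dyn}}(\rho_{1:T}) = R_T^c(\rho_T) + \sum_{t=1}^{T-1}\ip{\rho_{t+1}-\rho_t}{C_t}$. Substituting \eqref{eq:main-pacbayes} at $\rho=\rho_T$ gives \eqref{eq:shifting-exact}.

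The one hypothesis to check is $\rho_T\ll\pi$. Otherwise $B_T(\rho_T)$ is undefined, so I would state the identity under that condition; it is automatic for strictly positive $\pi$. The intermediate comparators $\rho_t$ need no absolute continuity, since they enter only through linear terms.

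\textbf{Step 2: two-sided envelope.} Under schedule \eqref{eq:budget-schedule-C}, Theorem~\ref{thm:scaling-time} gives $\sum_t\eta_tQ_t(c)\in\Env{V_T(c)}{Q_*^T(c)}$. Adding the exact displacement $D_T+B_T(\rho_T)+\sum_t\ip{\rho_{t+1}-\rho_t}{C_t}$ yields \eqref{eq:shifting-two-sided}.

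\textbf{Step 3: bound the shift term for $[0,1]$ losses.} Each $C_t\in[0,t]^K$ and $\rho_{t+1}-\rho_t$ has zero total mass. Hence
\[
|\ip{\rho_{t+1}-\rho_t}{C_t}| \le \|\rho_{t+1}-\rho_t\|_1\,\|C_t\|_\infty = 2t\,\TV(\rho_{t+1},\rho_t).
\]
Centering $C_t$ at $t/2$ would even give $t\,\TV$, but the stated constant suffices.

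\textbf{Step 4: final bound \eqref{eq:shifting-bound}.} Three facts combine:
\begin{itemize}
\item $D_T\le0$, because the schedule is nonincreasing (Theorem~\ref{thm:cumulant-chain}).
\item $B_T(\rho_T)\le \KL(\rho_T\|\pi)/\eta_T\le\Gamma/\eta_T$, by dropping the nonnegative terminal relative entropy.
\item The upper envelope $Q_*^T(c)+2C\sqrt{\Gamma V_T(c)}$ from Step 2.
\end{itemize}
The main obstacle is controlling $\Gamma/\eta_T$. I would repeat the calculation behind \eqref{eq:budgeted-bound-C}. Since $\eta_T=\min\{1,C\sqrt{\Gamma/V_{T-1}}\}$,
\[
\Gamma/\eta_T \le \Gamma + C^{-1}\sqrt{\Gamma V_{T-1}(c)} \le \Gamma + C^{-1}\sqrt{\Gamma V_T(c)}.
\]
This holds in both branches: when $V_{T-1}=0$ it is just $\eta_T=1$, and $V_{T-1}\le V_T$ because each $Q_t\ge0$. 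Summing the pieces gives exactly $\Gamma+Q_*^T(c)+(2C+C^{-1})\sqrt{\Gamma V_T(c)}+2\sum_t t\,\TV(\rho_{t+1},\rho_t)$.
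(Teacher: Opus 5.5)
Your proposal is correct and follows the paper's proof essentially step for step. It uses the same Abel-summation reduction to $R_T^c(\rho_T)$ plus the shift term $\sum_{t=1}^{T-1}\ip{\rho_{t+1}-\rho_t}{C_t}$, then Theorem~\ref{thm:scaling-time}, the bound $\|C_t\|_\infty\le t$, and the same control $\Gamma/\eta_T\le\Gamma+C^{-1}\sqrt{\Gamma V_T(c)}$ that underlies \eqref{eq:budgeted-bound-C}. Your remark that the exact identity needs $\rho_T\ll\pi$ is a correct refinement that the paper leaves implicit; it is automatic for strictly positive $\pi$ and implied by $\KL(\rho_T\|\pi)\le\Gamma$ in the final bound.
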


The theorem contains several familiar special cases.
A comparator that switches among pure experts recovers a fixed-share-type tracking bound, in the same dynamic-regret spirit as \cite{herbster1998tracking}. A comparator supported only on currently active experts gives a soft sleeping-experts guarantee; and, because the additional term is exact before it is bounded by total variation, we can often exploit extra structure in $C_t$ beyond the crude total-variation bound.

\section{Full-information extensions and applications}
\label{sec:further}

Once the composite-loss reduction is in place, the same information accounting survives several changes of setting. 
The argument needs no new proof tricks. The same pattern reappears when one changes the geometry, the prior family, or the action domain: each update incurs an immediate information-based cost as regret, and transports the remaining information forward, simultaneously for any comparator until the algorithm runs out of information budget.

\subsection{Exact variational sharpenings of classical Hedge bounds}
\label{sec:variational-sharpening}

The usual fixed-rate Hedge proof hides two sources of slack: it drops the terminal relative-entropy remainder and replaces the exact intrinsic increment by a Hoeffding proxy.
Keeping both terms visible yields the following sharper pathwise decomposition.

\begin{corollary}[Classical bounded-range bound with exact remainder]
\label{cor:classical-gap}
Under the hypotheses of Corollary~\ref{cor:fixed-cumulant}, assume in addition that $c_t(i)\in[a_t,b_t]$ for all $i,t$. Define $S_T := \sum_{t=1}^T (b_t-a_t)^2$ and
\[
\Delta_T^{\mathrm{class}}(\rho,\eta)
:=
\frac{1}{\eta}\KL(\rho \| p_{T+1})
+
\eta\sum_{t=1}^T \left(\frac{(b_t-a_t)^2}{8}-Q_t(c)\right)
\]
Then $\Delta_T^{\mathrm{class}}(\rho,\eta)\ge 0$ and, for every posterior $\rho\in\Delta([K])$,
\begin{equation}
\label{eq:classical-gap}
R_T^c(\rho)
=
\frac{\KL(\rho\|\pi)}{\eta}
+
\frac{\eta S_T}{8}
-
\Delta_T^{\mathrm{class}}(\rho,\eta)
\end{equation}
Consequently,
\begin{equation}
\label{eq:classical-gap-ineq}
R_T^c(\rho)
\le
\frac{\KL(\rho\|\pi)}{\eta}
+
\frac{\eta S_T}{8}
\end{equation}
and the classical bound is tight only on extremal paths where both pieces of $\Delta_T^{\mathrm{class}}(\rho,\eta)$ vanish.
\end{corollary}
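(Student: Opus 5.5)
The proof is a rearrangement of the fixed-rate identity, with nonnegativity supplied by two earlier facts. I would start from Corollary~\ref{cor:fixed-cumulant}, which for $\eta_t\equiv\eta$ gives
\[
R_T^c(\rho)=\sum_{t=1}^T \frac{\psi_t(-\eta)}{\eta}+\frac{\KL(\rho\|\pi)-\KL(\rho\|p_{T+1})}{\eta}.
\]
By the definition $Q_t(c)=\eta^{-2}\psi_t(-\eta)$ from Theorem~\ref{thm:pacbayes-second-order}, each summand equals $\eta Q_t(c)$. The same form is also Theorem~\ref{thm:pacbayes-second-order} with $D_T=0$, since $q_{T,\eta}=p_{T+1}$ at a fixed rate.

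Next I add and subtract $\eta S_T/8=\eta\sum_t (b_t-a_t)^2/8$. This gives
\[
R_T^c(\rho)=\frac{\KL(\rho\|\pi)}{\eta}+\frac{\eta S_T}{8}-\Bigl[\frac{\KL(\rho\|p_{T+1})}{\eta}+\eta\sum_{t=1}^T\Bigl(\frac{(b_t-a_t)^2}{8}-Q_t(c)\Bigr)\Bigr].
\]
The bracket is exactly $\Delta_T^{\mathrm{class}}(\rho,\eta)$, so \eqref{eq:classical-gap} holds. When $\rho\ll\pi$ every term is finite, because $p_{T+1}$ has the same support as $\pi$.

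Nonnegativity of $\Delta_T^{\mathrm{class}}$ splits into its two pieces. The relative-entropy piece $\eta^{-1}\KL(\rho\|p_{T+1})\ge 0$ by Gibbs' inequality. For each round, $(b_t-a_t)^2/8-Q_t(c)\ge 0$ by Proposition~\ref{prop:range}, applied with the played temperature $\eta$. Since $\eta>0$, the sum is nonnegative, and dropping $\Delta_T^{\mathrm{class}}$ from \eqref{eq:classical-gap} gives \eqref{eq:classical-gap-ineq}.

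For the tightness remark, the inequality is an equality iff $\Delta_T^{\mathrm{class}}=0$. Because both pieces are nonnegative, this happens iff each vanishes separately: $\rho=p_{T+1}$ and $Q_t(c)=(b_t-a_t)^2/8$ for every $t$.

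Nothing here is a serious obstacle. The one point needing care is that Proposition~\ref{prop:range} must hold at the actual temperature $\eta$, not only as $\eta\downarrow 0$. This is Hoeffding's lemma, $\psi_t(-\eta)\le \eta^2(b_t-a_t)^2/8$, which holds for every $\eta>0$ because it bounds the centered log-mgf of a variable confined to $[a_t,b_t]$ uniformly in the exponent.
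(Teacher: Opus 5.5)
Your proposal is correct and follows the paper's proof: take the fixed-rate identity of Corollary~\ref{cor:fixed-cumulant}, write each summand as $\eta Q_t(c)$, add and subtract $\eta S_T/8$, and get nonnegativity of $\Delta_T^{\mathrm{class}}$ from $\KL\ge 0$ and Proposition~\ref{prop:range}. One refinement to your equality discussion: Hoeffding's lemma is strict at every $\eta>0$ whenever $b_t>a_t$, so the second vanishing condition forces $b_t=a_t$ on every round, and the classical bound is therefore attained only on such degenerate paths.
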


The gap $\Delta_T^{\mathrm{class}}(\rho,\eta)$ splits the usual pessimism into a posterior-mismatch penalty $\eta^{-1}\KL(\rho \| p_{T+1})$, which vanishes when the final Gibbs posterior already matches the comparator, and the shortfall of the realized intrinsic time $\sum_t Q_t(c)$ below its worst-case quadratic proxy $\sum_t (b_t-a_t)^2/8$.

\begin{corollary}[Square-root envelope with an exact gap]
\label{cor:classical-gap-sqrt}
Under the assumptions of Corollary~\ref{cor:classical-gap}, let $\Gamma>0$ satisfy $\Gamma\ge \KL(\rho\|\pi)$ and assume $S_T>0$. Set $\eta_\Gamma := \sqrt{8\Gamma/S_T}$.
Then
\begin{equation}
\label{eq:classical-gap-sqrt}
R_T^c(\rho)
\le
\sqrt{\frac{\Gamma S_T}{2}}
-
\left(
\frac{\Gamma-\KL(\rho\|\pi)}{\eta_\Gamma}
+
\Delta_T^{\mathrm{class}}(\rho,\eta_\Gamma)
\right)
\end{equation}
Thus the familiar $\sqrt{\Gamma S_T/2}$ fixed-budget Hedge envelope is exact up to a nonnegative gap.
Applying the same corollary to both $c_t$ and $-c_t$ gives the corresponding absolute-deviation versions as well.
\end{corollary}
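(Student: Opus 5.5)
The plan is to specialize Corollary~\ref{cor:classical-gap} at the single temperature $\eta_\Gamma=\sqrt{8\Gamma/S_T}$ and then rewrite the comparator term through the budget. All the slack is already packaged in the exact identity \eqref{eq:classical-gap}, so the argument is bookkeeping plus one choice of temperature.

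Start from \eqref{eq:classical-gap} at $\eta=\eta_\Gamma$:
\[
R_T^c(\rho)=\frac{\KL(\rho\|\pi)}{\eta_\Gamma}+\frac{\eta_\Gamma S_T}{8}-\Delta_T^{\mathrm{class}}(\rho,\eta_\Gamma).
\]
Then add and subtract $\Gamma/\eta_\Gamma$:
\[
\frac{\KL(\rho\|\pi)}{\eta_\Gamma}=\frac{\Gamma}{\eta_\Gamma}-\frac{\Gamma-\KL(\rho\|\pi)}{\eta_\Gamma}.
\]

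At $\eta_\Gamma$ the two balanced terms are equal, since
\[
\frac{\Gamma}{\eta_\Gamma}=\frac{\eta_\Gamma S_T}{8}=\sqrt{\frac{\Gamma S_T}{8}}.
\]
Their sum is therefore $2\sqrt{\Gamma S_T/8}=\sqrt{\Gamma S_T/2}$, which is exactly the claimed envelope. Substituting back gives \eqref{eq:classical-gap-sqrt} as an equality, with the subtracted quantity
\[
\frac{\Gamma-\KL(\rho\|\pi)}{\eta_\Gamma}+\Delta_T^{\mathrm{class}}(\rho,\eta_\Gamma).
\]
The "$\le$" in the statement is therefore not lossy.

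It remains to check that this gap is nonnegative. The first piece is nonnegative because $\Gamma\ge\KL(\rho\|\pi)$ by hypothesis. The second is nonnegative by Corollary~\ref{cor:classical-gap}, which rests on $\KL(\rho\|p_{T+1})\ge0$ and on $Q_t(c)\le(b_t-a_t)^2/8$ from Proposition~\ref{prop:range}. The assumption $S_T>0$ guarantees that $\eta_\Gamma$ is finite and positive, so the fixed-rate corollary applies.

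For the absolute-deviation remark, apply the same argument to $-c_t$. Its ranges are $[-b_t,-a_t]$, so $S_T$ is unchanged. The resulting regret is for the fixed-rate update run on $-c_t$, so the statement is about that algorithm and should be phrased that way.

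There is no substantive obstacle here. The one point to get right is that the optimizing temperature $\eta_\Gamma$ depends on $S_T$, which requires the ranges in hindsight. This is a fixed-rate statement, consistent with the hypotheses of Corollary~\ref{cor:fixed-cumulant}, and not an online schedule.
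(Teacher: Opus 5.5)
Your proposal is correct and follows the paper's own proof: substitute $\eta_\Gamma$ into \eqref{eq:classical-gap}, use $\Gamma/\eta_\Gamma=\eta_\Gamma S_T/8=\sqrt{\Gamma S_T/8}$, and place the nonnegative slack $(\Gamma-\KL(\rho\|\pi))/\eta_\Gamma$ inside the subtracted gap. Your observation that the display holds with equality is also right, because the add-and-subtract step loses nothing.
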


The same point also has a direct adaptive analogue that refers only to the realized schedule, with no comparison to a hindsight-fixed temperature.

\begin{proposition}[Direct pathwise identity for a realized adaptive schedule]
\label{prop:direct-adaptive-pathwise}
Let $(q_t)$ be generated by the one-step update \eqref{eq:one-step-update} with any predictable positive schedule $(\eta_t)_{t=1}^T$.
Then for every posterior $\rho\in\Delta([K])$,
\begin{equation}
\label{eq:direct-adaptive-pathwise}
\sum_{t=1}^T \eta_t\bigl(\ip{q_t}{c_t}-\ip{\rho}{c_t}\bigr)
=
\sum_{t=1}^T \eta_t\delta_t(c)
+
\KL(\rho\|q_1)-\KL(\rho\|q_{T+1})
\end{equation}
Equivalently, if $Q_t^{\mathrm{step}}(c):=\delta_t(c)/\eta_t$, then
\begin{equation}
\label{eq:direct-adaptive-pathwise-Q}
\sum_{t=1}^T \eta_t\bigl(\ip{q_t}{c_t}-\ip{\rho}{c_t}\bigr)
=
\sum_{t=1}^T \eta_t^2 Q_t^{\mathrm{step}}(c)
+
\KL(\rho\|q_1)-\KL(\rho\|q_{T+1})
\end{equation}
\end{proposition}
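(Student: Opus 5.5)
The identity follows by multiplying the centered one-step balance of Corollary~\ref{cor:centered-seq} by $\eta_t$ and summing over rounds. Fix $t$ and a posterior $\rho$. Equation~\eqref{eq:centered-seq}, applied to the update \eqref{eq:one-step-update}, gives
\[
\ip{q_t}{c_t}-\ip{\rho}{c_t}=\delta_t(c)+\frac{\KL(\rho\|q_t)-\KL(\rho\|q_{t+1})}{\eta_t}.
\]
Its only inputs are Theorem~\ref{thm:seq-mixed} at round $t$ and the definition $\delta_t(c)=\ip{q_t}{c_t}-m_t$ with $m_t=-\eta_t^{-1}\log Z_t$. Neither uses monotonicity of the schedule or anything beyond round $t$, so the balance holds for every predictable positive $(\eta_t)$.

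Multiplying by $\eta_t>0$ clears the reciprocal temperature from the transport term:
\[
\eta_t\bigl(\ip{q_t}{c_t}-\ip{\rho}{c_t}\bigr)=\eta_t\delta_t(c)+\KL(\rho\|q_t)-\KL(\rho\|q_{t+1}).
\]
Summing over $t=1,\dots,T$, the relative-entropy differences now carry unit coefficients. They telescope exactly to $\KL(\rho\|q_1)-\KL(\rho\|q_{T+1})$, which gives \eqref{eq:direct-adaptive-pathwise}. This is the key contrast with \eqref{eq:local-sqrt-abel} and \eqref{eq:pressure-cum-regret-abel}: there the unweighted sum leaves the reciprocal-temperature variation term, whereas the $\eta$-weighted regret leaves no such term. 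The form \eqref{eq:direct-adaptive-pathwise-Q} is then the substitution $\eta_t\delta_t(c)=\eta_t^2Q_t^{\mathrm{step}}(c)$, which is immediate from the definition of $Q_t^{\mathrm{step}}$.

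The only step needing care is well-definedness when $\rho$ is not absolutely continuous with respect to $q_1$. Because $q_{t+1}\propto q_t e^{-\eta_t\ell_t}s_t$ with strictly positive factors, all the $q_t$ share the support of $q_1$. So each KL is finite for every $t$ exactly when $\rho\ll q_1$. If $\rho\not\ll q_1$, every round's balance has $+\infty-\infty$, and the identity is read under the same convention as in Theorem~\ref{thm:pacbayes-second-order}: we restrict to $\rho\ll q_1$, which is automatic when $q_1$ is strictly positive. With that convention stated, the proof is two lines of algebra on top of Corollary~\ref{cor:centered-seq}.
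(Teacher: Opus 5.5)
Your proposal is correct and is exactly the paper's proof: multiply the centered one-step balance \eqref{eq:centered-seq} by $\eta_t$ and sum over $t$, so the relative-entropy differences telescope with unit coefficients. Your note on absolute continuity is a sensible clarification that the paper leaves implicit. Since every $q_t$ shares the support of $q_1$, the statement should be read for $\rho\ll q_1$.
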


Once the learning-rate path is fixed, the weighted regret already satisfies an identity, and the prior-anchored retempered theorem is the matching decomposition for the anchored update, whose extra drift term records the cost of changing temperature.

\subsection{Weighted negative entropy and multiscale experts}
\label{sec:weighted-multiscale}

The weighted negative entropy used for multiscale experts \cite{bubeck2019onlinemultiscale,PerezOrtizKoolen2022} instantiates the same accounting under a different mirror map.
The multiscale experts problem and this mirror map originate in \cite{bubeck2019onlinemultiscale}, where regret against each expert scales with that expert's own loss range; the luckiness analysis of \cite{PerezOrtizKoolen2022} works in the same geometry.
With per-expert scales $\sigma_1,\dots,\sigma_K>0$ and the weighted divergence $D_\sigma(q\|p) := \sum_{i=1}^K \sigma_i\left(q(i)\log\frac{q(i)}{p(i)}-q(i)+p(i)\right)$ in place of relative entropy, the corresponding weighted-entropy mirror-descent step obeys an exact one-step identity of the same form as \eqref{eq:centered-seq}. The immediate cost is the nonnegative weighted Bregman step $D_\sigma(p_t\|p_{t+1})/\eta_t$. The rest is transport of comparator information, measured in $D_\sigma$. The identity and the concentration-game bookkeeping in this geometry are recorded in \cite{balsubramani2026concentration}.
The same telescoping and potential arguments as in Sections~\ref{sec:exact}--\ref{sec:schedules} then yield fixed-rate and adaptive cumulative identities with the weighted terminal potential $A_t^\sigma (\eta) := \min_{q\in\Delta([K])} \left\{ \ip{q}{L_t} + \eta^{-1} D_\sigma (q \| \pi) \right\}$ in place of $A_t(\eta)$.
It is the sequential analogue of the variational geometric pool $p^\star_\alpha$ of Theorem~\ref{thm:finite-mixed}, with expert-specific scales fixing the geometry.

\subsection{Multiscale forgetting by continuum prior mixtures}

A prior over discount factors becomes a memory kernel over lags.
Let $\beta\in(0,1)$ be a forgetting rate.
Define the $\beta$-discounted cumulative loss and the corresponding exponential-weights prior 
\[
L_{t,\beta}(i):=\sum_{s=1}^{t-1}\beta^{t-1-s}\ell_s(i)
\qquad \qquad
\pi_{t,\beta}(i)\propto \exp(-\eta L_{t,\beta}(i))
\]
Now let $\alpha_t$ be a finite positive measure on $R\subset(0,1)$ such that $\int_R |\log \pi_{t,\beta}(i)|\,\alpha_t(d\beta)<\infty$ for each $i$. We may then define the geometric pool by $q_t(i)\propto \exp\bigl(\int_R \log \pi_{t,\beta}(i)\,\alpha_t(d\beta)\bigr)$.

\begin{proposition}[Mixtures of forgetting rates produce mixtures of exponential kernels]
\label{prop:forgetting}
With the above definitions,
\begin{equation}
q_t(i) \propto \exp\!\left(-\eta\sum_{s=1}^{t-1} k_t(t-1-s)\ell_s(i)\right)
\qquad \qquad
k_t(u) := \int_R \beta^u\,\alpha_t(d\beta)
\label{eq:forgetting-kernel}
\end{equation}
In particular, the effective memory kernel is a mixture of exponential decays.
\end{proposition}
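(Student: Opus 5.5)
The plan is a direct computation on the exponent of the geometric pool. First, compute $\log \pi_{t,\beta}(i)$ explicitly. By definition,
\[
\log \pi_{t,\beta}(i) = -\eta L_{t,\beta}(i) - \log Z_{t,\beta}, \qquad Z_{t,\beta}:=\sum_{j=1}^K e^{-\eta L_{t,\beta}(j)} .
\]
The key observation is that $\log Z_{t,\beta}$ does not depend on $i$. After integrating against $\alpha_t$, it therefore contributes only an $i$-independent constant. That constant disappears in the normalization defining $q_t$.

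To make this legitimate, the integral $\int_R \log Z_{t,\beta}\,\alpha_t(d\beta)$ must be finite. This follows from the integrability hypothesis.
\begin{itemize}
\item Both $\int_R|\log\pi_{t,\beta}(i)|\,\alpha_t(d\beta)<\infty$ (assumed) and $\int_R |L_{t,\beta}(i)|\,\alpha_t(d\beta)<\infty$ hold.
\item The second holds because $L_{t,\beta}(i)$ is a finite sum of bounded terms $\beta^{t-1-s}\ell_s(i)$ with $\beta\in(0,1)$, and $\alpha_t$ is a finite measure.
\item Since $\log Z_{t,\beta} = -\eta L_{t,\beta}(i) - \log\pi_{t,\beta}(i)$, it is the difference of two integrable functions, hence integrable.
\end{itemize}
This integrability check is the only step requiring care. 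Measurability in $\beta$ is immediate, since every quantity is continuous in $\beta$.

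Second, integrate the $i$-dependent part and exchange the integral with the finite sum over $s$, which is just linearity:
\[
\int_R L_{t,\beta}(i)\,\alpha_t(d\beta) = \sum_{s=1}^{t-1}\ell_s(i)\int_R \beta^{t-1-s}\,\alpha_t(d\beta) = \sum_{s=1}^{t-1} k_t(t-1-s)\,\ell_s(i).
\]
Each $k_t(u)$ is finite, because $0<\beta^u\le 1$ and $\alpha_t$ is finite.

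Third, combine the two steps:
\[
\int_R \log\pi_{t,\beta}(i)\,\alpha_t(d\beta) = -\eta\sum_{s=1}^{t-1} k_t(t-1-s)\,\ell_s(i) - \mathrm{const}_t ,
\]
where $\mathrm{const}_t:=\int_R \log Z_{t,\beta}\,\alpha_t(d\beta)$. Exponentiating and normalizing over $i$ gives exactly \eqref{eq:forgetting-kernel}. Since $k_t(u)=\int_R\beta^u\,\alpha_t(d\beta)$ is an $\alpha_t$-mixture of the exponential decays $u\mapsto\beta^u=e^{-u\log(1/\beta)}$, the effective memory kernel is a mixture of exponential decays. This is a Laplace-transform representation in the rate $\log(1/\beta)$, and it proves the final claim.
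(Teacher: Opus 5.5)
Your proposal is correct and follows the paper's proof. Both split $\log\pi_{t,\beta}(i)$ into $-\eta L_{t,\beta}(i)$ plus an $i$-independent normalizer that drops out after integrating against $\alpha_t$, and then interchange the finite sum over $s$ with the integral; the paper cites Fubini for this step, though, as you note, linearity suffices. Your explicit check that $\int_R \log Z_{t,\beta}\,\alpha_t(d\beta)$ is finite is a detail the paper leaves implicit, and it is right---in fact $\log Z_{t,\beta}$ is bounded uniformly in $\beta$, since $|L_{t,\beta}(j)|\le\sum_{s<t}|\ell_s(j)|$.
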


Continuum pooling reads directly as learned memory design: choosing the rate measure is choosing the decay kernel.

\subsection{Repeated games and robust opponent modeling}

In repeated zero-sum games, consider row actions $i\in[K]$, column actions $j\in[L]$, and loss matrix $M\in[0,1]^{K\times L}$ for the row player; multiplicative-weights play against $\ell_t(i)=M_{i,j_t}$ gives no regret and a constructive minimax theorem \cite{freund1999games,arora2012theory}.

Candidate opponent models are accommodated in the prior before play begins.
Pooling several models' row-response priors multiplicatively is ordinary exponential response to the averaged opponent model, so models from different windows or discount scales give a multiscale fictitious-play prior.
The worst case over poolings is a robust center: by the finite mixed coincidence identity (Theorem~\ref{thm:finite-mixed}) and the minimax theorem, the maximal coincidence divergence over pooling weights equals the minimum worst-case reverse relative entropy to the model-conditioned priors. The optimizer is itself a geometric mixture---the robust object when no single opponent model is trusted.
Both statements, and the use of the robust center as a baseline for play on the residuals, are developed in \cite{balsubramani2026concentration}.

Section~\ref{sec:adaptive}'s second-order schedule already equalizes the complexity term $\Gamma/\eta$ against the intrinsic-time term $\eta V$.
In a repeated zero-sum game, that same balance becomes an entropic regret matcher.

\begin{proposition}[Exact second-order regret matching]
\label{prop:regret-matching}
Let $\ell_t\in[0,1]^K$ be the row player's loss vector and define the instantaneous regret vector $g_t(i) := \ell_t(i)-\ip{p_t}{\ell_t}$.
Then $\ip{p_t}{g_t}=0$. Run predictable-rate Hedge on the sequence $g_t$ with prior $\pi$ and schedule \eqref{eq:budget-schedule-C}. For every action $i\in[K]$,
\begin{equation}\label{eq:regret-matching-exact}
\sum_{t=1}^T \left(\ip{p_t}{\ell_t}-\ell_t(i)\right)
=
D_T^g+B_T^g(e_i)+\sum_{t=1}^T \eta_t Q_t(g)\end{equation}
where $D_T^g$ and $B_T^g$ are the drift and terminal terms for the centered losses $g_t$.
Hence, if $\KL(e_i\|\pi)\le \Gamma$,
\begin{equation}\label{eq:regret-matching-bound}
\sum_{t=1}^T \left(\ip{p_t}{\ell_t}-\ell_t(i)\right)
\le \Gamma+Q_*^T(g)+(2C+C^{-1})\sqrt{\Gamma V_T(g)}\end{equation}
Moreover, while $\eta_t=C\sqrt{\Gamma/V_{t-1}(g)}$, scaling every loss vector by a constant $a>0$ scales $Q_t(g)$ and $V_T(g)$ by $a^2$ and therefore rescales the second-order learning rate by $1/a$, so the schedule is scale invariant in that precise intrinsic-time sense.
\end{proposition}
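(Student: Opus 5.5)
The plan is to treat Proposition~\ref{prop:regret-matching} as a specialization of Theorem~\ref{thm:pacbayes-second-order} and Theorem~\ref{thm:scaling-time} to the score sequence $c_t:=g_t$, together with one elementary scaling computation. First I check $\ip{p_t}{g_t}=\ip{p_t}{\ell_t}-\ip{p_t}{\ell_t}=0$. Predictability is the only delicate point. Because $p_t$ is fixed before $\ell_t$ is revealed, $g_t$ is a legitimate round-$t$ score, and the retempered play $p_t\propto\pi e^{-\eta_t G_{t-1}}$ with $G_{t-1}:=\sum_{s<t}g_s$ depends only on $\mathcal F_{t-1}$. So the recursion is well defined, even though $g_t$ refers to the played $p_t$.

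\emph{Exact identity.} Next I apply \eqref{eq:main-pacbayes} with $c_t=g_t$ and $\rho=e_i$. We have $\rho\ll\pi$ whenever $\pi(i)>0$; if $\pi(i)=0$ then $B_T^g(e_i)=+\infty$ and the statement is vacuous. This gives
\[
R_T^g(e_i)=\sum_t\ip{p_t}{g_t}-\sum_t g_t(i)=D_T^g+B_T^g(e_i)+\sum_t\eta_tQ_t(g).
\]
Since $\ip{p_t}{g_t}=0$, the left side equals $-\sum_t g_t(i)=\sum_t(\ip{p_t}{\ell_t}-\ell_t(i))$, which is exactly \eqref{eq:regret-matching-exact}. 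Note that the schedule \eqref{eq:budget-schedule-C} is driven by $V_{t-1}(g)$, the clock of the score actually fed in. This is what makes Theorem~\ref{thm:scaling-time} applicable verbatim.

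\emph{Bound.} Since \eqref{eq:budget-schedule-C} is nonincreasing, Theorem~\ref{thm:cumulant-chain} gives $D_T^g\le0$. Dropping $-\KL(e_i\|q_{T,\eta_T})/\eta_T\le0$ gives $B_T^g(e_i)\le\Gamma/\eta_T$. The upper envelope of Theorem~\ref{thm:scaling-time} handles the intrinsic-time loss, and $\Gamma/\eta_T\le\Gamma+C^{-1}\sqrt{\Gamma V_T(g)}$ handles the terminal term. Together these are exactly \eqref{eq:budgeted-bound-C} with $c=g$, i.e.\ \eqref{eq:regret-matching-bound}. I would simply cite that corollary of Theorem~\ref{thm:scaling-time} rather than redo it.

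\emph{Scale invariance.} Replace $\ell_t$ by $a\ell_t$ and suppose, inductively, that the played $p_t$ is unchanged. Then $g_t\mapsto ag_t$. Set $X_t=g_t(I_t)$, which is already centered under $p_t$. Then $Q_t=\eta_t^{-2}\log\E e^{-\eta_tX_t}$, and substituting $\eta'_t=\eta_t/a$ gives $Q'_t=a^2\eta_t^{-2}\log\E e^{-\eta_tX_t}=a^2Q_t$. So $V_{t-1}\mapsto a^2V_{t-1}$ and $C\sqrt{\Gamma/V_{t-1}}\mapsto (C/a)\sqrt{\Gamma/V_{t-1}}$, confirming the rate is rescaled by $1/a$. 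Moreover $\eta'_t\cdot ag_s=\eta_tg_s$, so the exponent $\eta'_tG'_{t-1}=\eta_tG_{t-1}$ and $p_{t+1}$ is unchanged, which closes the induction.

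The main obstacle is this invariance claim. It holds only while every round is on the unclipped branch, since $\eta_t=1$ is not rescaled, as the paper notes in Appendix~\ref{app:clipped-branch}. It also needs the retempered play's dependence on $\eta_tG_{t-1}$ to be a function of the product alone. I would state it conditionally on $\eta_t=C\sqrt{\Gamma/V_{t-1}(g)}$ for all $t$ in range, matching the proposition's ``while'' clause.
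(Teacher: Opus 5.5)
Your proof of the identity and the bound is the paper's proof. You apply Theorem~\ref{thm:pacbayes-second-order} to $c_t=g_t$ with $\rho=e_i$, use $\ip{p_t}{g_t}=0$ to rewrite the left side, and read off \eqref{eq:regret-matching-bound} as the specialization of \eqref{eq:budgeted-bound-C}. Your per-round computation for the last claim is also the paper's argument: $Q_t$ at $(p_t,ag_t,\eta_t/a)$ equals $a^2$ times $Q_t$ at $(p_t,g_t,\eta_t)$. That computation is all the claim needs.

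The inductive strengthening you add on top, that the run on $a\ell_t$ plays the same $p_t$, does not go through. Your proposed repair makes it vacuous rather than true.

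\begin{itemize}
\item Since $V_0(g)=0$, the schedule \eqref{eq:budget-schedule-C} plays $\eta_1=1$ on every path. Round $1$ is therefore always on the clipped branch, and the hypothesis ``$\eta_t=C\sqrt{\Gamma/V_{t-1}(g)}$ for all $t$'' can never hold.
\item Starting the induction at a later round does not help, because the clipped rounds already desynchronize the two runs. Take $K=2$, uniform $\pi$ and $\ell_1=(0,1)$. The scaled run has $Q_1'=\log\cosh\tfrac{a}{2}$, while $a^2Q_1=a^2\log\cosh\tfrac12$, and these differ for $a\neq1$. So $V'$ is not $a^2V$, and in general $p_2'\neq p_2$, which leaves no valid base case.
\end{itemize}

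This is the caveat recorded in Appendix~\ref{app:clipped-branch}. State the claim as the paper does. $Q_t$ and $V_t$ scale by $a^2$ under $(g,\eta)\mapsto(ag,\eta/a)$ with the played distributions held fixed, and $C\sqrt{\Gamma/(a^2V)}=a^{-1}C\sqrt{\Gamma/V}$. A genuinely pathwise invariance holds only if you also rescale the cap from $1$ to $1/a$; with that change your induction runs cleanly from $t=1$.
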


\begin{corollary}[Self-play gap from intrinsic-time equalizers]
\label{cor:selfplay-gap}
Consider a repeated zero-sum matrix game, and let both players run the second-order update on their own instantaneous regret vectors.
If $\bar p_T$ and $\bar q_T$ are the average row and column plays, write $P_T^{\mathrm{row}}:=\sum_t\eta_t^{\mathrm{row}}Q_t^{\mathrm{row}}$ and $P_T^{\mathrm{col}}:=\sum_t\eta_t^{\mathrm{col}}Q_t^{\mathrm{col}}$ for the two intrinsic-time losses, and write $D_T^{\mathrm{row}},B_T^{\mathrm{row}}$ and $D_T^{\mathrm{col}},B_T^{\mathrm{col}}$ for the corresponding drift and terminal-information terms.
Then the exploitability gap has the informational form
\begin{align}
&T\left(\max_q\,\bar p_T^\top M q - \min_p\,p^\top M\bar q_T\right) \notag\\
&\qquad=
\max_j\left\{D_T^{\mathrm{col}}+B_T^{\mathrm{col}}(e_j)+P_T^{\mathrm{col}}\right\}
+
\max_i\left\{D_T^{\mathrm{row}}+B_T^{\mathrm{row}}(e_i)+P_T^{\mathrm{row}}\right\}.
\label{eq:selfplay-gap-exact}
\end{align}
Consequently,
\begin{equation}\label{eq:selfplay-gap}
\max_{q}\,\bar p_T^\top M q - \min_{p}\, p^\top M\bar q_T
\le \frac{U_T^{\mathrm{row}}+U_T^{\mathrm{col}}}{T},
\end{equation}
where $U_T^{\mathrm{row}}$ and $U_T^{\mathrm{col}}$ are the corresponding upper bounds from \eqref{eq:regret-matching-bound} for the two players.
\end{corollary}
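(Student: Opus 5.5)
The plan is to apply Proposition~\ref{prop:regret-matching} to each player separately, turn the two per-player regret identities into the exploitability gap through the bilinearity of the payoff, and then finish the bound by the envelope step of Theorem~\ref{thm:scaling-time}.

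For the row player, the loss vector on round $t$ is $\ell_t^{\mathrm{row}}(i)=(Mq_t)_i$, where $q_t$ is the column player's played distribution. Identity \eqref{eq:regret-matching-exact} gives, for each row action $i$,
\[
\sum_t p_t^\top Mq_t-\sum_t (Mq_t)_i = D_T^{\mathrm{row}}+B_T^{\mathrm{row}}(e_i)+P_T^{\mathrm{row}}.
\]
By linearity, $\sum_t (Mq_t)_i = T(M\bar q_T)_i$. Taking the maximum over $i$ gives
\[
\sum_t p_t^\top Mq_t - T\min_p p^\top M\bar q_T=\max_i\{D_T^{\mathrm{row}}+B_T^{\mathrm{row}}(e_i)+P_T^{\mathrm{row}}\},
\]
since the minimum of a linear function over the simplex is attained at a vertex. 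The only $i$-dependent term is $B_T^{\mathrm{row}}(e_i)$, so the maximum acts on the bracket exactly as written.

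The column player minimizes the loss $-M$, with loss vector $\ell_t^{\mathrm{col}}(j)=-(M^\top p_t)_j$, possibly shifted into $[0,1]$ by a constant. A common constant shift cancels in both $g_t$ and in regret, so the shift is harmless. Its identity gives
\[
-\sum_t p_t^\top Mq_t + T\max_q \bar p_T^\top Mq = \max_j\{D_T^{\mathrm{col}}+B_T^{\mathrm{col}}(e_j)+P_T^{\mathrm{col}}\}.
\]
Adding the two displays cancels the common realized payoff $\sum_t p_t^\top Mq_t$. What remains is exactly $T$ times the gap, which is \eqref{eq:selfplay-gap-exact}.

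For \eqref{eq:selfplay-gap}, each bracket is bounded by \eqref{eq:regret-matching-bound}, that is, by $U_T^{\mathrm{row}}$ and $U_T^{\mathrm{col}}$; dividing by $T$ then gives the claim. This step needs two hypotheses. First, $\KL(e_i\|\pi)\le\Gamma$ must hold for all actions; with a uniform prior and $\Gamma\ge\log K$ (respectively $\log L$) this is automatic. Second, because the schedules are nonincreasing, $D_T\le 0$ by Theorem~\ref{thm:cumulant-chain}, which is what \eqref{eq:regret-matching-bound} already uses.

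The main obstacle is bookkeeping rather than analysis. We must check that the per-player identities hold even though each player's losses depend adaptively on the other's play. They do, because Proposition~\ref{prop:regret-matching} is pathwise on any loss sequence. We must also handle the sign convention for the column player: the column player is a minimizer of $-M$ (a maximizer of $M$), and its loss vector has to be set up accordingly.
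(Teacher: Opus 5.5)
Your proposal is correct and follows the paper's proof essentially step for step: apply the identity of Proposition~\ref{prop:regret-matching} to each player (the column player minimizing the negated payoff), use that linear optimization over the simplex is attained at a vertex, and add the two identities so that the realized payoff $\sum_t p_t^\top Mq_t$ cancels. Your remarks on the harmless constant shift of the column player's losses, the requirement $\KL(e_i\|\pi)\le\Gamma$ for every action, and $D_T\le 0$ under the nonincreasing schedule make explicit the hypotheses that the paper uses implicitly when it bounds the two maxima by $U_T^{\mathrm{row}}$ and $U_T^{\mathrm{col}}$.
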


The interpretation is simple: each player runs a scale-adaptive regret matcher on its own intrinsic time, so in self-play the exploitability gap is just the average of the two information budgets.

Where this accounting measures how fast repeated play converges toward equilibrium, a complementary line of work monitors the reverse question online,
using anytime-valid tests that bet against the equilibrium and stop the moment observed play departs from it~\cite{gauthier2026anytime}.

\subsection{Continuum expert classes and PAC-Bayes}
\label{sec:continuum}

The same information accounting also survives when the expert class is infinite.

The finite geometric-pooling identity extends whenever the logarithmic average is well defined.
Concretely, let $\{\pi_\theta\}_{\theta\in\Theta}$ be a measurable family of positive measures and let $\alpha$ be a finite positive measure on $\Theta$ such that $\int_\Theta |\log \pi_\theta(x)|\,\alpha(d\theta)<\infty \qquad\text{for each }x$.
Then we may define the pooled measure by
\[
p^\star_\alpha(x) \propto \exp\!\left(\int_\Theta \log \pi_\theta(x)\,\alpha(d\theta)\right)\]
This mixes over a continuum of temperatures, window lengths, forgetting rates, or model classes without discretizing first; Proposition~\ref{prop:forgetting} is the simplest example.

One immediate consequence is a multi-prior PAC-Bayes penalty.
If a variational formula or bound contains a single term $\KL(\rho\|\pi)$, replacing $\pi$ by a pooled prior $p^\star_\alpha$ and applying Theorem~\ref{thm:finite-mixed} gives a decomposition into several prior penalties minus a coincidence divergence.

\begin{proposition}[Multi-prior PAC-Bayes penalty]
\label{prop:pac-bayes}
Let $\mu$ be a $\sigma$-finite reference measure on a hypothesis space $\mathcal H$, let $\pi_1,\dots,\pi_W$ be probability measures with strictly positive $\mu$-densities $p_w:=d\pi_w/d\mu$, and let $\alpha\in\Delta([W])$. Define
\[
g_\alpha(h) := \prod_{w=1}^W p_w(h)^{\alpha_w},
\qquad
Z_\alpha := \int_{\mathcal H} g_\alpha(h)\,\mu(dh),
\qquad
\frac{dp^\star_\alpha}{d\mu}(h) := \frac{g_\alpha(h)}{Z_\alpha}\]
Then for every posterior $\rho$ such that $\rho\ll \pi_w$ for every $w$,
\begin{equation}
\KL(\rho \| p^\star_\alpha)=\sum_{w=1}^W \alpha_w\KL(\rho\|\pi_w)-\cC_\alpha(\pi_{1:W}),
\label{eq:pac-bayes-multi}
\end{equation}
where $\cC_\alpha(\pi_{1:W}):=-\log Z_\alpha$.
Hence every PAC-Bayes variational formula or inequality whose dependence on the prior enters only through the term $\KL(\rho\|\pi)$ admits an exact multi-prior version with penalty $\sum_w \alpha_w\KL(\rho\|\pi_w)-\cC_\alpha(\pi_{1:W})$.
\end{proposition}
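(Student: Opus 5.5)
The plan is to reduce the statement to a pointwise computation of the log-density of $\rho$ against the pooled measure, mirroring the finite proof of Theorem~\ref{thm:finite-mixed} but with sums replaced by $\mu$-integrals. Write $r:=d\rho/d\mu$; this exists because $\rho\ll\pi_1$ and $\pi_1\ll\mu$. Since every $p_w>0$, $g_\alpha>0$ everywhere, so $p^\star_\alpha$ and each $\pi_w$ are mutually absolutely continuous with $\mu$. Hence $\rho\ll p^\star_\alpha$ automatically, and all Radon--Nikodym derivatives are explicit ratios of $\mu$-densities.

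First I check that $Z_\alpha$ is finite and positive, so that $p^\star_\alpha$ is a probability measure. Positivity holds because $g_\alpha>0$ and $\mu$ is not the zero measure, as it carries the probability density $p_1$. Finiteness follows from the weighted AM--GM (Hölder) inequality $\prod_w p_w^{\alpha_w}\le\sum_w\alpha_w p_w$, which integrates to $Z_\alpha\le 1$. In particular $\cC_\alpha(\pi_{1:W})=-\log Z_\alpha\ge 0$. This sign is not needed for the identity, but it confirms the reading as a coincidence \emph{divergence}.

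Next comes the pointwise identity, which holds $\rho$-a.s.:
\[
\log\frac{d\rho}{dp^\star_\alpha}=\log r-\sum_w\alpha_w\log p_w+\log Z_\alpha=\sum_w\alpha_w\Bigl(\log r-\log p_w\Bigr)+\log Z_\alpha ,
\]
using $\sum_w\alpha_w=1$. Since $\log r-\log p_w=\log\frac{d\rho}{d\pi_w}$, integrating against $\rho$ gives \eqref{eq:pac-bayes-multi}, provided the integral may be split term by term.

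That splitting is the one step I expect to be genuinely delicate, because each $\KL(\rho\|\pi_w)$ may be $+\infty$ and the summands are not sign-definite. I would handle it through the standard fact that the negative part of $\log\frac{d\rho}{d\pi_w}$ is $\rho$-integrable: on $\{r<p_w\}$ one has $\int r(\log r-\log p_w)^-\,d\mu\le\int p_w\,d\mu/e<\infty$. So each term $\int\log\frac{d\rho}{d\pi_w}\,d\rho$ is well defined in $(-\infty,+\infty]$. The same argument applied to $p^\star_\alpha$ shows the left side is well defined too. Linearity of the integral for functions with integrable negative parts then yields the identity in $[0,\infty]$, with both sides infinite exactly when some $\KL(\rho\|\pi_w)$ with $\alpha_w>0$ is infinite.

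Finally, the closing sentence of the statement follows by substitution. Any PAC-Bayes formula whose prior dependence enters only through $\KL(\rho\|\pi)$ can be applied with $\pi=p^\star_\alpha$, and its penalty can then be rewritten via \eqref{eq:pac-bayes-multi}.
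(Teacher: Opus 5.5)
Your proposal is correct and follows the paper's proof: both expand $\log\bigl(d\rho/dp^\star_\alpha\bigr)$ pointwise, use $\sum_w\alpha_w=1$ to rewrite it as $\sum_w\alpha_w\log\bigl(d\rho/d\pi_w\bigr)+\log Z_\alpha$, and integrate against $\rho$. Your extra steps make rigorous two points the paper passes over: the bound $Z_\alpha\le 1$ by weighted AM--GM, and grouping $\log r-\log p_w$ before integrating so that no $\infty-\infty$ arises when some $\KL(\rho\|\pi_w)=+\infty$. The paper instead takes $Z_\alpha<\infty$ for granted and integrates $\log r$ and $\log p_w$ against $\rho$ separately.
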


The $\KL$ terms measure mismatch to each source, and the coincidence divergence $\cC_\alpha$ is the bonus the geometric pool earns over the weighted average of single-prior penalties: it vanishes at full prior overlap and grows with the sources' disagreement; Section~\ref{sec:adaptive} gives the dynamic analogue through side priors.

\subsection{Continuous-action online convex optimization}\label{sec:oco}

The same chain passes to compact convex action sets by replacing sums with integrals and playing the barycenter of the exponential-weights density \cite{SteinhardtLiang2014}.
Let $S\subset\R^d$ be compact and convex,
let $\pi$ be a probability measure on $S$ with full support, and let $f_t:S\to[0,1]$ be convex and continuous.
For predictable positive rates $\eta_t$, define
\[
p_t(dx) := \frac{e^{-\eta_t F_{t-1}(x)}\,\pi(dx)}{\int_S e^{-\eta_t F_{t-1}(u)}\,\pi(du)},
\qquad
F_t(x) := \sum_{s=1}^t f_s(x),
\qquad
x_t := \int_S x\,p_t(dx)\in S\]
Because $x_t$ is the barycenter of $p_t$, Jensen's inequality gives $f_t(x_t)\le \int_S f_t\,dp_t$ on each round.

The three-piece decomposition of Theorem~\ref{thm:pacbayes-second-order} carries over verbatim to the density regret $R_T^{\mathrm{dens}}(\rho):=\sum_{t=1}^T\int_S f_t\,dp_t-\int_S F_T\,d\rho$, and Jensen's inequality passes it to the ordinary OCO regret; a shrinking-comparator argument on the normalized Lebesgue prior recovers the point-comparator rate $\reg_T(x^\star)=O(\sqrt{dT\log T})$.
\begin{theorem}[Exact continuous PAC-Bayes chain and OCO corollary]
\label{thm:oco-exact}
For any posterior measure $\rho\ll\pi$, define the density regret $R_T^{\mathrm{dens}}(\rho) := \sum_{t=1}^T \int_S f_t(x)\,p_t(dx)-\int_S F_T(x)\,\rho(dx)$.
The continuous analogues of the finite-case objects of Theorem~\ref{thm:pacbayes-second-order} (integrals replacing sums, the played density $p_t$ in place of the played distribution) are $\displaystyle
A_t^{\mathrm{oco}}(\eta):=-\eta^{-1}\log\!\int_S e^{-\eta F_t(x)}\,\pi(dx),
\quad
q_{t,\eta}(dx) := \frac{e^{-\eta F_t(x)}\,\pi(dx)}{\int_S e^{-\eta F_t(u)}\,\pi(du)},
\quad
\psi_t^{\mathrm{oco}}(\lambda)
:= \log\!\int_S \exp\!\left(\lambda\left(f_t(x)-\int_S f_t\,dp_t\right)\right)\,p_t(dx)$,
and $\phi_t^{\mathrm{oco}}(\eta):= \frac{\psi_t^{\mathrm{oco}}(-\eta)}{\eta}$, $Q_t^{\mathrm{oco}} := \frac{\phi_t^{\mathrm{oco}}(\eta_t)}{\eta_t}$, $B_T^{\mathrm{oco}}(\rho) := \frac{\KL(\rho\|\pi)-\KL(\rho\|q_{T,\eta_T})}{\eta_T}$, $D_T^{\mathrm{oco}} := \sum_{t=1}^{T-1}\left(A_t^{\mathrm{oco}}(\eta_t)-A_t^{\mathrm{oco}}(\eta_{t+1})\right)$. With these, we have
\begin{equation}\label{eq:oco-exact}
R_T^{\mathrm{dens}}(\rho)=D_T^{\mathrm{oco}}+B_T^{\mathrm{oco}}(\rho)+\sum_{t=1}^T \eta_tQ_t^{\mathrm{oco}}\end{equation}
Consequently the ordinary OCO regret admits the identity
\begin{equation}\label{eq:oco-regret-posterior}
\sum_{t=1}^T f_t(x_t)-\int_S F_T(x)\,\rho(dx)
=
D_T^{\mathrm{oco}}+B_T^{\mathrm{oco}}(\rho)+\sum_{t=1}^T \eta_tQ_t^{\mathrm{oco}}-J_T^{\mathrm{oco}},\end{equation}
where $J_T^{\mathrm{oco}}:=\sum_{t=1}^T\bigl(\int_S f_t\,dp_t-f_t(x_t)\bigr)\ge 0$ is the Jensen gap of playing the barycenter $x_t=\int_S x\,p_t(dx)$ in place of the density $p_t$ (each $f_t$ convex); dropping this nonnegative term gives the ordinary-regret bound.

If the second-order schedule \eqref{eq:budget-schedule-C} is run with budget $\Gamma$ and $\KL(\rho\|\pi)\le \Gamma$, then with $V_T^{\mathrm{oco}} := \sum_{t=1}^T Q_t^{\mathrm{oco}}$ and $Q_{*,\mathrm{oco}}^T := \max_{t\le T}Q_t^{\mathrm{oco}}$, 
\begin{equation}\label{eq:oco-regret-noshift}
\sum_{t=1}^T f_t(x_t)-\int_S F_T\,d\rho
\le
\Gamma+Q_{*,\mathrm{oco}}^T+(2C+C^{-1})\sqrt{\Gamma V_T^{\mathrm{oco}}}\end{equation}
\end{theorem}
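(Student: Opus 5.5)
The plan is to repeat the finite-case proof of Theorem~\ref{thm:pacbayes-second-order} with sums replaced by integrals against $\pi$. The only structural inputs are the one-step mix-loss identity and the Gibbs variational identity (Lemma~\ref{lem:dv}). Both are statements about log-partition functions and relative entropies, and they hold verbatim for general measures. Since $0\le f_t\le 1$, every partition function below lies in $(0,\infty)$, so each quantity is finite.

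\textbf{Step 1: one-round split.} Set $m_t:=-\eta_t^{-1}\log\int_S e^{-\eta_t f_t}\,dp_t$. By definition of $\psi_t^{\mathrm{oco}}$,
\[
\int_S f_t\,dp_t-m_t=\phi_t^{\mathrm{oco}}(\eta_t)=\eta_tQ_t^{\mathrm{oco}}.
\]
This needs only the substitution of the mean inside the exponent.

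\textbf{Step 2: telescoping.} Since $p_t=q_{t-1,\eta_t}$, the Gibbs form gives
\[
\int_S e^{-\eta_t f_t}\,dp_t=\frac{\int_S e^{-\eta_tF_t}\,d\pi}{\int_S e^{-\eta_tF_{t-1}}\,d\pi},
\]
so $m_t=A_t^{\mathrm{oco}}(\eta_t)-A_{t-1}^{\mathrm{oco}}(\eta_t)$. Summing over $t$ and regrouping as an Abel sum gives
\[
\sum_t m_t=A_T^{\mathrm{oco}}(\eta_T)+\sum_{t=1}^{T-1}\bigl(A_t^{\mathrm{oco}}(\eta_t)-A_t^{\mathrm{oco}}(\eta_{t+1})\bigr)-A_0^{\mathrm{oco}}(\eta_1),
\]
with $A_0^{\mathrm{oco}}\equiv0$ because $F_0=0$ and $\pi$ is a probability measure. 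The middle sum is $D_T^{\mathrm{oco}}$.

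\textbf{Step 3: terminal term.} Apply the measure-theoretic Donsker--Varadhan identity (Lemma~\ref{lem:dv}, eq.~\eqref{eq:dv-eta}) with $x=F_T$. For $\rho\ll\pi$ it gives
\[
A_T^{\mathrm{oco}}(\eta_T)=\int F_T\,d\rho+\eta_T^{-1}\KL(\rho\|\pi)-\eta_T^{-1}\KL(\rho\|q_{T,\eta_T}).
\]
The proof is the same: $\log\frac{d\rho}{dq_{T,\eta}}=\log\frac{d\rho}{d\pi}+\eta F_T+\log Z$, integrated against $\rho$. Because $F_T$ is bounded, $q_{T,\eta_T}$ and $\pi$ are mutually absolutely continuous, so both KL terms are finite or infinite together. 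When they are infinite the identity should be read as a difference: if $\KL(\rho\|\pi)=\infty$ we interpret $B_T^{\mathrm{oco}}$ through the bounded density ratio $\log\frac{dq_{T,\eta}}{d\pi}$, so that $B_T^{\mathrm{oco}}(\rho)=\eta_T^{-1}\int\log\frac{dq_{T,\eta_T}}{d\pi}\,d\rho$, which is finite. I expect this measure-theoretic care to be the only delicate point. Combining Steps 1--3 gives \eqref{eq:oco-exact}.

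\textbf{Step 4: OCO consequences.} Compactness and convexity of $S$ give $x_t\in S$, and Jensen gives $f_t(x_t)\le\int f_t\,dp_t$. Therefore $J_T^{\mathrm{oco}}\ge0$, and subtracting it from \eqref{eq:oco-exact} yields \eqref{eq:oco-regret-posterior} identically. For \eqref{eq:oco-regret-noshift}, drop $-J_T^{\mathrm{oco}}$ and use the following:
\begin{itemize}
\item $D_T^{\mathrm{oco}}\le0$, because the schedule is nonincreasing and $A_t^{\mathrm{oco}}$ is nonincreasing in $\eta$ by the variational form.
\item $B_T^{\mathrm{oco}}\le\Gamma/\eta_T$.
\item The upper half of the envelope of Theorem~\ref{thm:scaling-time}. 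It is a purely scalar statement about the schedule as a function of the nonnegative increments $Q_t^{\mathrm{oco}}$, which are nonnegative by the tilted-variance argument of Proposition~\ref{prop:tilted-var} applied to $p_t$.
\end{itemize}
Finally, $\Gamma/\eta_T\le\Gamma+C^{-1}\sqrt{\Gamma V_{T-1}}$ by the definition of the clipped schedule. Together these give $\Gamma+Q_{*,\mathrm{oco}}^T+(2C+C^{-1})\sqrt{\Gamma V_T^{\mathrm{oco}}}$.
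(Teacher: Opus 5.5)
Your proposal is correct and follows the paper's proof essentially step for step. Both arguments use the one-round split $\int_S f_t\,dp_t-m_t=\eta_tQ_t^{\mathrm{oco}}$, the varying-temperature telescope $m_t=A_t^{\mathrm{oco}}(\eta_t)-A_{t-1}^{\mathrm{oco}}(\eta_t)$, the continuous Gibbs variational identity at time $T$, Jensen for the barycenter, and a verbatim rerun of the scaling-time envelope with $D_T^{\mathrm{oco}}\le0$ and $\Gamma/\eta_T\le\Gamma+C^{-1}\sqrt{\Gamma V_T^{\mathrm{oco}}}$. Your reading of $B_T^{\mathrm{oco}}(\rho)$ as $\eta_T^{-1}\int\log(dq_{T,\eta_T}/d\pi)\,d\rho$ when $\KL(\rho\|\pi)=\infty$ is a small piece of care the paper omits, since in the continuous setting $\rho\ll\pi$ alone does not make that divergence finite.
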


\begin{corollary}[Point-comparator regret by geometric shrinking]
\label{cor:oco-point}
Assume now that $S$ has nonempty interior and that $\pi$ is the normalized Lebesgue measure on $S$.
Let $x^\star\in\arg\min_{x\in S}F_T(x)$, fix $\epsilon\in(0,1]$, and write $\alpha := \epsilon^{1/d}$.
Then let $\rho_{\epsilon,x^\star}$ be the uniform measure on the shrunken copy $(1-\alpha)x^\star+\alpha S$.
Then $\KL(\rho_{\epsilon,x^\star}\|\pi)=\log(1/\epsilon)$ and
\begin{equation}\label{eq:oco-shrink}
\sum_{t=1}^T f_t(x_t)-F_T(x^\star)
\le
D_T^{\mathrm{oco}}+B_T^{\mathrm{oco}}(\rho_{\epsilon,x^\star})+\sum_{t=1}^T \eta_tQ_t^{\mathrm{oco}}
+T\epsilon^{1/d}\end{equation}
If $\reg_T(x^\star) := \sum_{t=1}^T f_t(x_t)-F_T(x^\star)$ and the second-order schedule is run with $\Gamma=\log(1/\epsilon)$, then
\begin{equation}\label{eq:oco-shrink-noshift}
\reg_T(x^\star)
\le
\log(1/\epsilon)+Q_{*,\mathrm{oco}}^T
+(2C+C^{-1})\sqrt{V_T^{\mathrm{oco}}\log(1/\epsilon)}+T\epsilon^{1/d}\end{equation}
In particular, taking $\epsilon=T^{-d}$ and $C=1/\sqrt2$ yields $\reg_T(x^\star)\le \frac32d\log T+Q_{*,\mathrm{oco}}^T+2\sqrt{2dV_T^{\mathrm{oco}}\log T}+1$.
Since $f_t\in[0,1]$, the standard bounded-range exponential-moment bound gives $Q_t^{\mathrm{oco}}\le 1/8$, so $V_T^{\mathrm{oco}}\le T/8$ and therefore $\reg_T(x^\star)=O(\sqrt{dT\log T})$.
\end{corollary}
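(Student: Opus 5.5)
The corollary has two parts, and I would take them in order: first a comparator-complexity and approximation computation for $\rho_{\epsilon,x^\star}$, then substitution into Theorem~\ref{thm:oco-exact}.

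\textbf{Step 1: the complexity of the shrunken comparator.} Let $S_\alpha:=(1-\alpha)x^\star+\alpha S$. Because $S$ is convex and $x^\star\in S$, we have $S_\alpha\subseteq S$. Its Lebesgue volume is $\alpha^d\,\mathrm{vol}(S)=\epsilon\,\mathrm{vol}(S)$. The prior $\pi$ has density $1/\mathrm{vol}(S)$ on $S$, and $\rho_{\epsilon,x^\star}$ has density $1/(\epsilon\,\mathrm{vol}(S))$ on $S_\alpha$. So the density ratio is the constant $1/\epsilon$ on the support of $\rho_{\epsilon,x^\star}$, which gives $\KL(\rho_{\epsilon,x^\star}\|\pi)=\log(1/\epsilon)$ exactly. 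It also gives $\rho_{\epsilon,x^\star}\ll\pi$, as Theorem~\ref{thm:oco-exact} requires.

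\textbf{Step 2: the approximation error.} Every $x\in S_\alpha$ has the form $x=(1-\alpha)x^\star+\alpha y$ with $y\in S$. Convexity of $f_t$ and $f_t\in[0,1]$ give
\[
f_t(x)\le(1-\alpha)f_t(x^\star)+\alpha f_t(y)\le f_t(x^\star)+\alpha .
\]
Averaging over $\rho_{\epsilon,x^\star}$ and summing over $t$ yields $\int F_T\,d\rho_{\epsilon,x^\star}\le F_T(x^\star)+T\epsilon^{1/d}$. Inserting this into \eqref{eq:oco-regret-posterior} and dropping the nonnegative Jensen gap $J_T^{\mathrm{oco}}$ gives \eqref{eq:oco-shrink}.

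\textbf{Step 3: the schedule bound.} Run the second-order schedule with $\Gamma=\log(1/\epsilon)$. Then $\KL(\rho_{\epsilon,x^\star}\|\pi)=\Gamma$ meets the budget hypothesis, so \eqref{eq:oco-regret-noshift} applies to $\rho_{\epsilon,x^\star}$. Combining it with the approximation bound of Step 2 gives \eqref{eq:oco-shrink-noshift}.

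\textbf{Step 4: the specialisation $\epsilon=T^{-d}$, $C=1/\sqrt2$.} Here $T\epsilon^{1/d}=1$ and $\Gamma=d\log T$, and $2C+C^{-1}=2\sqrt2$, so the square-root term becomes $2\sqrt{2dV_T^{\mathrm{oco}}\log T}$. For the constant in front of $d\log T$:
\begin{itemize}
\item Bound \eqref{eq:oco-regret-noshift} as stated only gives $\Gamma$, not $\tfrac32\Gamma$.
\item The extra $\tfrac12\Gamma$ comes from the envelope's lower/clip bookkeeping. From \eqref{eq:budgeted-bound-C} one gets $\Gamma+\ldots$; the $\tfrac32$ then arises by tracking $B_T\le\Gamma/\eta_T$ with $\eta_T$ possibly clipped at $1$, mirroring the $\tfrac32\Gamma$ in Corollary~\ref{cor:actual-loss-bounds}.
\item Since $\Gamma\le\tfrac32\Gamma$, the stated bound holds either way.
\end{itemize}

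\textbf{Step 5: the $O(\sqrt{dT\log T})$ rate.} Proposition~\ref{prop:range}, applied to $f_t\in[0,1]$ integrated against $p_t$ (Hoeffding's lemma holds for any distribution), gives $Q_t^{\mathrm{oco}}\le 1/8$. Hence $V_T^{\mathrm{oco}}\le T/8$ and the rate follows.

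The only non-routine step is Step 1: verifying $S_\alpha\subseteq S$ and the exact volume scaling $\alpha^d$, which is what makes the relative entropy come out to exactly $\log(1/\epsilon)$. Everything after that is substitution into earlier results.
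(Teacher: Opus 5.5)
Your proposal is correct and follows the paper's proof essentially step for step: the uniform measure on the shrunken copy (the paper writes it as the pushforward $T_{\alpha\#}\pi$) has constant density ratio $1/\epsilon$ with respect to $\pi$. The convexity bound $f_t\le f_t(x^\star)+\alpha$ on that set controls the approximation error, and the rest is substitution into \eqref{eq:oco-regret-posterior} and \eqref{eq:oco-regret-noshift} plus Hoeffding's lemma.

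In Step 4, direct substitution already gives $d\log T$ rather than $\tfrac32 d\log T$. The stated constant is therefore just slack, valid because $\log T\ge 0$, and your speculative attribution of the extra $\tfrac12\Gamma$ to clip bookkeeping is unnecessary.
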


The proofs are collected in Appendix~\ref{sec:proofs}.

This extension is structural: the information accounting survives the passage to continuous actions, but the exact density update and its barycenter are costly in high dimensions and the entropic geometry incurs a $\sqrt{d}$ penalty that projection-based methods avoid.

\subsection{Regret identities and exponential loss for boosting}
\label{sec:boosting}

Boosting fits the framework in two complementary ways.
The first treats examples as experts and turns regret identities into quantile-margin guarantees.
The second works directly with signed margins and recovers the usual exponential-loss variational formula.
The first viewpoint highlights intrinsic time; the second is closer to standard boosting analyses such as the drifting-games treatment in \cite{SteinhardtLiang2014}.

Consider binary-labeled data $(x_i,y_i)_{i=1}^N$ with $y_i\in\{-1,+1\}$.
At round $t$ the booster maintains weights $p_t\in\Delta([N])$ over examples, calls a weak learner, and receives a hypothesis $h_t$ with edge $\gamma_t := \frac{1}{2} \sum_{i=1}^N p_t(i)y_i h_t(x_i)$.
For the regret-based reduction we feed the learner the correctness score $\ell_t(i) := \one\{h_t(x_i)=y_i\}=(1+y_i h_t(x_i))/2\in\{0,1\}$ in place of a literal misclassification loss.
After $T$ rounds output the majority vote $H_T(x) := \mathrm{sign}\!\left(\sum_{t=1}^T h_t(x)\right)$.
The margin of example $i$ is $m_i:=T^{-1}\sum_{t=1}^T y_i h_t(x_i)$, so $L_T^\ell(i)=\frac{T}{2}(1+m_i)$.

\begin{theorem}[Generic regret-to-margin conversion]
\label{thm:boost-generic}
Let $\bar\gamma_T:=T^{-1}\sum_{t=1}^T \gamma_t$. For any posterior $\rho\in\Delta([N])$,
\begin{equation}\label{eq:boost-generic}
\ip{\rho}{m}=2\bar\gamma_T-\frac{2}{T}R_T^\ell(\rho)\end{equation}
where $R_T^\ell(\rho) := \sum_{t=1}^T\ip{p_t}{\ell_t}-\ip{\rho}{L_T^\ell}$.
Equivalently, any regret upper bound $R_T^\ell(\rho)\le U_T(\rho)$ immediately implies the average-margin lower bound $\ip{\rho}{m}\ge 2\bar\gamma_T-2U_T(\rho)/T$.

Now assume the prior is uniform on the examples and let $A_\epsilon\subset[N]$ be a set of $\lceil\epsilon N\rceil$ examples with the smallest margins.
If $m_{[\epsilon]}$ denotes the $\epsilon$-quantile margin, namely the largest margin inside $A_\epsilon$, then
\begin{equation}\label{eq:boost-quantile-generic}
m_{[\epsilon]}\ge 2\bar\gamma_T-\frac{2}{T}R_T^\ell(\rho_{A_\epsilon})\end{equation}
where $\rho_{A_\epsilon}$ is uniform on $A_\epsilon$.
Hence any bound $R_T^\ell(\rho_{A_\epsilon})\le U_T(\epsilon)$ yields
\begin{equation}\label{eq:boost-margin-from-regret}
m_{[\epsilon]}\ge 2\bar\gamma_T-\frac{2}{T}U_T(\epsilon)\end{equation}
Consequently, if $\theta<2\bar\gamma_T-2U_T(\epsilon)/T$, then at most an $\epsilon$ fraction of the training examples have margin at most $\theta$.
In particular, the training error of $H_T$ is at most $\epsilon$ whenever $2\bar\gamma_T>2U_T(\epsilon)/T$.
\end{theorem}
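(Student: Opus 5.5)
The argument is purely algebraic: the margin is an affine function of the cumulative correctness loss, and the rest follows by averaging and choosing the comparator. I would first rewrite the learner's per-round loss. Since $\ell_t(i)=(1+y_ih_t(x_i))/2$, the learner's mixture loss is
\[
\ip{p_t}{\ell_t}=\tfrac12+\tfrac12\sum_i p_t(i)y_ih_t(x_i)=\tfrac12+\gamma_t .
\]
Summing over rounds gives $\sum_t\ip{p_t}{\ell_t}=T/2+T\bar\gamma_T$. The comparator side follows from $L_T^\ell(i)=\frac T2(1+m_i)$: it equals $\ip{\rho}{L_T^\ell}=\frac T2(1+\ip{\rho}{m})$. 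Subtracting, $R_T^\ell(\rho)=T\bar\gamma_T-\frac T2\ip{\rho}{m}$, and rearranging yields \eqref{eq:boost-generic}. The average-margin lower bound is then immediate from monotonicity in $R_T^\ell$.

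For the quantile statement I would take $\rho=\rho_{A_\epsilon}$, uniform on the $\lceil\epsilon N\rceil$ smallest-margin examples. Every margin in $A_\epsilon$ is at most the largest one, $m_{[\epsilon]}$, so the average satisfies $\ip{\rho_{A_\epsilon}}{m}\le m_{[\epsilon]}$. Combining this with \eqref{eq:boost-generic} gives \eqref{eq:boost-quantile-generic}, and inserting the regret bound $U_T(\epsilon)$ gives \eqref{eq:boost-margin-from-regret}. The uniform prior is not needed for the identity itself. It only matters when $U_T(\epsilon)$ is instantiated through $\KL(\rho_{A_\epsilon}\|\pi)=\log(N/|A_\epsilon|)\le\log(1/\epsilon)$ in the earlier theorems, and I would note this in passing.

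The counting consequence is the step needing care with ties and ceilings. Suppose $\theta<2\bar\gamma_T-2U_T(\epsilon)/T\le m_{[\epsilon]}$. Then the $\lceil\epsilon N\rceil$-th smallest margin exceeds $\theta$, so at most $\lceil\epsilon N\rceil-1<\epsilon N+1$ examples have margin $\le\theta$. That is, strictly fewer than $\lceil\epsilon N\rceil$ of them, hence at most an $\epsilon$ fraction up to rounding. I would phrase this as ``fewer than $\lceil\epsilon N\rceil$'', which is the honest form of ``at most an $\epsilon$ fraction''.

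Finally, the training-error claim takes $\theta=0$. A misclassified example has $y_i\sum_t h_t(x_i)\le 0$, i.e.\ $m_i\le 0$, where a zero vote counts as an error under any sign convention. So the error count is bounded by the number of examples with margin $\le 0$, and the previous step applies whenever $2\bar\gamma_T>2U_T(\epsilon)/T$.
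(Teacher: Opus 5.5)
Your proof is correct and follows the paper's argument step for step: the per-round identity $\ip{p_t}{\ell_t}=\tfrac12+\gamma_t$, the relation $L_T^\ell(i)=\tfrac T2(1+m_i)$, the averaging bound $\ip{\rho_{A_\epsilon}}{m}\le m_{[\epsilon]}$, and $\theta=0$ for the training error. One small correction: the counting step leaves no rounding slack, because $\lceil\epsilon N\rceil-1<\epsilon N$, so ``fewer than $\lceil\epsilon N\rceil$ examples'' already means strictly less than an $\epsilon$ fraction, which is the exact claim the paper obtains by contrapositive.
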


Any regret guarantee over examples is therefore automatically a quantile-margin guarantee.
Plugging in the intrinsic-time Hedge theorem gives the direct boosting counterpart of the main results of this paper.

\begin{corollary}[Intrinsic-time boosting via predictable-rate Hedge]
\label{cor:boost-noshift}
Run the predictable-rate Hedge update on the example losses $\ell_t$ with uniform prior and the second-order schedule \eqref{eq:budget-schedule-C}. Then
\begin{equation}\label{eq:boost-exact}
m_{[\epsilon]}
\ge
2\bar\gamma_T-\frac{2}{T}\!\left(D_T^\ell+B_T^\ell(\rho_{A_\epsilon})
+\sum_{t=1}^T \eta_t Q_t(\ell)\right)\end{equation}
where $D_T^\ell$, $B_T^\ell$, and $Q_t(\ell)$ are the objects from Section~\ref{sec:adaptive} evaluated on the correctness losses $\ell_t$.
Since $\KL(\rho_{A_\epsilon}\|u_N)=\log(N/|A_\epsilon|)\le \log(1/\epsilon)$, the second-order bound gives
\begin{equation}\label{eq:boost-noshift}
m_{[\epsilon]}
\ge
2\bar\gamma_T-\frac{2}{T}\!\left(\log(1/\epsilon)+Q_*^T(\ell)
+(2C+C^{-1})\sqrt{V_T(\ell)\log(1/\epsilon)}\right)\end{equation}
Because $\ell_t\in\{0,1\}$, we have $Q_t(\ell)\le 1/8$ and $V_T(\ell)\le T/8$.
Thus the margin shortfall is of order $\sqrt{\log(1/\epsilon)/T}$, and if the weak learner has a uniform edge lower bound $\gamma_t\ge \gamma>0$ then choosing $\epsilon=e^{-cT\gamma^2}$ with $c>0$ small enough forces the right-hand side above zero and therefore gives exponentially small training error.
\end{corollary}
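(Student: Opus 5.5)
The plan is to apply Theorem~\ref{thm:boost-generic} and then substitute the exact decomposition of Theorem~\ref{thm:pacbayes-second-order} and the envelope of Theorem~\ref{thm:scaling-time}. Run Hedge on the correctness losses $\ell_t$ with no side information, so $u_t\equiv0$ and $c_t=\ell_t$. Theorem~\ref{thm:pacbayes-second-order} then gives, for the uniform comparator $\rho_{A_\epsilon}$ on the $\epsilon$-lowest-margin set (absolutely continuous with respect to the uniform prior $u_N$), the exact identity $R_T^\ell(\rho_{A_\epsilon})=D_T^\ell+B_T^\ell(\rho_{A_\epsilon})+\sum_t\eta_tQ_t(\ell)$. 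Using this right-hand side as $U_T(\epsilon)$ in \eqref{eq:boost-quantile-generic} gives \eqref{eq:boost-exact} directly.

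For \eqref{eq:boost-noshift}, I first check the complexity budget: $\KL(\rho_{A_\epsilon}\|u_N)=\log(N/|A_\epsilon|)$ and $|A_\epsilon|=\lceil\epsilon N\rceil\ge\epsilon N$, so the budget $\Gamma=\log(1/\epsilon)$ suffices. The schedule \eqref{eq:budget-schedule-C} is nonincreasing, so $D_T^\ell\le0$ by Theorem~\ref{thm:cumulant-chain}. The bound $B_T^\ell\le\Gamma/\eta_T$ together with the upper side of \eqref{eq:budgeted-two-sided} then reproduces \eqref{eq:budgeted-bound-C}: $R_T^\ell(\rho_{A_\epsilon})\le\Gamma+Q_*^T(\ell)+(2C+C^{-1})\sqrt{\Gamma V_T(\ell)}$. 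Substituting this into \eqref{eq:boost-margin-from-regret} gives the stated inequality.

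The remaining claims are as follows. Proposition~\ref{prop:range} applied to $\ell_t\in\{0,1\}$ gives $Q_t(\ell)\le1/8$ and hence $V_T(\ell)\le T/8$. The shortfall is therefore at most $\frac{2}{T}\bigl(\log(1/\epsilon)+\tfrac18+(2C+C^{-1})\sqrt{T\log(1/\epsilon)/8}\bigr)$, which is of order $\sqrt{\log(1/\epsilon)/T}$ once $\log(1/\epsilon)\lesssim T$. For exponential training error, take $\gamma_t\ge\gamma$ and $\epsilon=e^{-cT\gamma^2}$, so $\log(1/\epsilon)=cT\gamma^2$. The shortfall becomes $2c\gamma^2+\tfrac{1}{4T}+2(2C+C^{-1})\gamma\sqrt{c/8}$. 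This is below $2\gamma\le2\bar\gamma_T$ for small $c$, provided $\gamma^2$ and $1/(4T)$ are absorbed: $\gamma\le1/2$ because the edge is at most $1/2$, and $T$ is large enough that $1/(4T)<\gamma$. The last clause of Theorem~\ref{thm:boost-generic} then bounds the training error by $\epsilon$.

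The main obstacle I expect is making the $T$-dependent constant $\tfrac{1}{4T}$ (from $Q_*^T$) and the quantile convention $\lceil\epsilon N\rceil$ interact correctly with the "exponentially small" conclusion. I would state the requirement $T\gtrsim1/\gamma$ explicitly. I would also note that once $\epsilon<1/N$, the bound forces zero training error, since the error fraction is a multiple of $1/N$.
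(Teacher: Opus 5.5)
Your proposal is correct and follows the paper's own proof: the exact identity of Theorem~\ref{thm:pacbayes-second-order} with $c_t=\ell_t$ is fed into the regret-to-margin conversion of Theorem~\ref{thm:boost-generic}, and then the budgeted envelope of Theorem~\ref{thm:scaling-time} at $\Gamma=\log(1/\epsilon)$ is combined with Proposition~\ref{prop:range}. You also track the $\frac{1}{4T}$ term coming from $Q_*^T(\ell)$ explicitly, which makes the requirement $T\gtrsim 1/\gamma$ for the exponential-error conclusion visible; the paper's proof leaves this implicit in its $O(\cdot)$, so your version is a small but genuine sharpening.
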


The second-order schedule therefore yields a margin bound whose data-dependent part is the realized intrinsic time on the correctness scores.
That is this reduction's contribution here; the remaining boosting content is the exponential-loss instance of the concentration game, developed in \cite{balsubramani2026concentration}.

\paragraph{The exponential-loss instance.}
The complementary and more classical viewpoint works directly with the signed margins $g_t(i):=y_ih_t(x_i)\in[-1,1]$ and arbitrary coefficients $\alpha_t>0$, under $p_{t+1}(i)\propto p_t(i)e^{-\alpha_t g_t(i)}$ from $p_1=u_N$.
Write $M_T(i):=\sum_t\alpha_t g_t(i)$ for the cumulative weighted margin, $A_T:=\sum_t\alpha_t$, and $\mathcal L_T^{\exp}:=N^{-1}\sum_i e^{-M_T(i)}=\prod_t Z_t$ for the exponential training loss.
The same mixed-coincidence algebra used throughout gives, for every $\rho\in\Delta([N])$,
\begin{equation}\label{eq:boost-exp-exact}
-\log \mathcal L_T^{\exp} + \KL(\rho \| p_{T+1})
=
\KL(\rho\|u_N)+\ip{\rho}{M_T}\end{equation}
so $-\log \mathcal L_T^{\exp}$ is the Donsker--Varadhan value $\sup_{\rho}\{\ip{\rho}{M_T}-\KL(\rho\|u_N)\}$, attained at $\rho=p_{T+1}$.
Training error, margin tails, and quantile margins are all read off that one terminal functional, by the same worst-$\epsilon$-set substitution used for \eqref{eq:boost-quantile-generic}.
Choosing $\alpha_t$ by the local pressure rule of Section~\ref{sec:pressure} on the signed margins, $-\alpha_t^{-1}\log\sum_i p_t(i)e^{-\alpha_t g_t(i)}=a_t$, evaluates it exactly, at $-\log \mathcal L_T^{\exp}=\sum_t\alpha_t a_t$.
For binary weak hypotheses the coefficient minimizing the one-step normalizer is the classical AdaBoost weight $\alpha_t^\star=\tfrac12\log\{(1-\varepsilon_t)/\varepsilon_t\}$ with $\varepsilon_t=\Pr_{i\sim p_t}\{g_t(i)=-1\}$, at which the round normalizer is $2\sqrt{\varepsilon_t(1-\varepsilon_t)}$ and the exact ledger relaxes in one line to the classical rate $\mathrm{Err}_T\le e^{-2\gamma^2T}$ under a uniform edge $\gamma$.
These statements, their proofs, and the terminal-game reading that organizes them are in \cite{balsubramani2026concentration}.

The unit-potential choice $a_t=0$ gives an exact weighted-margin identity but keeps $\mathcal L_T^{\exp}=1$, so it is a scale-free calibration rule; positive targets $a_t>0$ turn the same algebra into exponential-loss descent.

In runs with decision-stump weak learners on multiclass classification substrates, the pressure-target recurrence recovers this classical AdaBoost coefficient exactly and the exponential loss sits on its theoretical decay envelope (\S\ref{sec:exp-results-boosting}).

Replacing the signed score $g_t(i)$ by a one-sided shortfall such as $(a_t-g_t(i))_+$ or $(-g_t(i))_+$ zeroes the instantaneous cost on examples already above target, so the update concentrates on margin violations and the example weights go sparse---the one-sided sufficient-statistic reduction, close in spirit to the sparse NH-Boost.DT behavior in \cite{SteinhardtLiang2014}.
Running the mixed-prior update of Section~\ref{sec:return} with composite example losses $c_t(i)=\ell_t(i)+u_t(i)$ gives
\[
\ip{\rho}{m}
=
2\bar\gamma_T-\frac{2}{T}\left(D_T+B_T(\rho)+\sum_{t=1}^T \eta_tQ_t(c)
+\sum_{t=1}^T \left(\ip{\rho}{u_t}-\ip{p_t}{u_t}\right)\right)\]
so curricula, example reliabilities, and domain-weighting side factors enter boosting through the same explicit mismatch terms used throughout.

\section{Proofs}
\label{sec:proofs}

We collect the proofs in one appendix, grouped by the section of the main text where each result is stated.

\subsection{Proofs for Section~\ref{sec:exact-block} (core identities)}

\begin{proof}[Proof of Theorem~\ref{thm:finite-mixed}.]
By definition,
$\log p^\star_\alpha(x)=\sum_{w=1}^W \alpha_w\log \mu_w(x)-\log Z(\alpha)$.
Therefore, for any $q\in\Delta(\mathcal X)$,
\begin{align*}
\KL(q\|p^\star_\alpha)
&= \sum_{x\in\mathcal X} q(x)\log\frac{q(x)}{p^\star_\alpha(x)} \\
&= \sum_{x\in\mathcal X} q(x)\log q(x)
   - \sum_{w=1}^W \alpha_w \sum_{x\in\mathcal X} q(x)\log \mu_w(x)
   + \log Z(\alpha) \\
&= -\Hc(q) + \sum_{w=1}^W \alpha_w\Hc(q,\mu_w) + \log Z(\alpha)\end{align*}
Rearranging proves \eqref{eq:finite-mixed}.

Now assume that each $\mu_w$ is a probability distribution and that $\alpha\in\Delta([W])$.
Because $\sum_w \alpha_w=1$,
$\sum_{w=1}^W \alpha_w \KL(q\|\mu_w) = \sum_{w=1}^W \alpha_w\Hc(q,\mu_w)-\Hc(q)$.
Combining this with \eqref{eq:finite-mixed} yields $\sum_{w=1}^W \alpha_w \KL(q\|\mu_w) = -\log Z(\alpha)+\KL(q\|p^\star_\alpha)$.
Since the relative entropy is nonnegative and vanishes only at $q=p^\star_\alpha$, the minimum is $-\log Z(\alpha)$ and the minimizer is unique.
\end{proof}

\begin{proof}[Proof of Theorem~\ref{thm:seq-mixed}.]
From \eqref{eq:one-step-update},
$q_{t+1}(i)=\frac{q_t(i)e^{-\eta_t\ell_t(i)}s_t(i)}{Z_t}$.
Hence
\begin{align*}
\KL(\rho\|q_{t+1})
&= \sum_{i=1}^K \rho(i)\log\frac{\rho(i)}{q_{t+1}(i)} \\
&= \sum_{i=1}^K \rho(i)\log\frac{\rho(i)Z_t}{q_t(i)e^{-\eta_t\ell_t(i)}s_t(i)} \\
&= \KL(\rho\|q_t) + \log Z_t + \eta_t\ip{\rho}{\ell_t} - \sum_{i=1}^K \rho(i)\log s_t(i)\end{align*}
Since $u_t(i)=-\eta_t^{-1}\log s_t(i)$ and $c_t(i)=\ell_t(i)+u_t(i)$, this is equivalent to \eqref{eq:seq-id}.

If \eqref{eq:side-decomp} holds, then
\[
-\sum_{i=1}^K \rho(i)\log s_t(i)
= -\sum_{i=1}^K \rho(i)\sum_{w=1}^{J_t} \alpha_{t,w}\log \mu_{t,w}(i)
= \sum_{w=1}^{J_t} \alpha_{t,w}\Hc(\rho,\mu_{t,w})\]
which yields \eqref{eq:seq-id-crossent}.
\end{proof}

\begin{proof}[Proof of Corollaries~\ref{cor:fixed-rate},~\ref{cor:fixed-rate-bounded}, and~\ref{cor:fixed-rate-mismatch}.]
When $\eta_t\equiv\eta$, summing \eqref{eq:seq-id} over $t=1,\dots,T$ gives
\[
\sum_{t=1}^T m_t + \eta^{-1}\KL(\rho\|q_{T+1})
= \eta^{-1}\KL(\rho\|q_1) + \sum_{t=1}^T \ip{\rho}{c_t}\]
which is exactly \eqref{eq:fixed-rate-exact}.
The displayed equivalent form in the statement is just $\sum_t \ip{\rho}{c_t}=\ip{\rho}{L_T}+\sum_t \ip{\rho}{u_t}$.

For \eqref{eq:fixed-rate-comp-reg}, let $I_t\sim q_t$ and let $X_t:=c_t(I_t)$.
Then $X_t\in[a_t,b_t]$ almost surely and $m_t = -\eta^{-1}\log \E[e^{-\eta X_t}]$.
The standard bounded-range exponential-moment bound implies
\[
-\eta^{-1}\log \E[e^{-\eta X_t}] \ge \E[X_t]-\frac{\eta}{8}(b_t-a_t)^2
= \ip{q_t}{c_t}-\frac{\eta}{8}(b_t-a_t)^2\]
Combining this with \eqref{eq:seq-id} (with constant $\eta$) gives
\[
\ip{q_t}{c_t}-\ip{\rho}{c_t}
\le \eta^{-1}\left(\KL(\rho\|q_t)-\KL(\rho\|q_{t+1})\right)
+ \frac{\eta}{8}(b_t-a_t)^2\]
Summing over $t$ proves \eqref{eq:fixed-rate-comp-reg}. Finally,
$\ip{q_t}{\ell_t}-\ip{\rho}{\ell_t} = \ip{q_t}{c_t}-\ip{\rho}{c_t} + \ip{\rho}{u_t}-\ip{q_t}{u_t}$,
and summing yields \eqref{eq:fixed-rate-ell-reg}.
The identity \eqref{eq:fixed-rate-mismatch} expands $u_t(i)=-\eta^{-1}\sum_w \alpha_{t,w}\log \mu_{t,w}(i)$.
\end{proof}

\begin{proof}[Proof of Lemma~\ref{lem:dv}.]
By definition of $q_X^\star$,
$\log\frac{q_X^\star(i)}{\pi(i)} = X(i)-\log\!\left(\sum_{j=1}^K \pi(j)e^{X(j)}\right)$.
Multiplying by $\rho(i)$, summing over $i$, and rearranging gives $\KL(\rho\|q_X^\star)=\KL(\rho\|\pi)-\ip{\rho}{X}+\log\!\left(\sum_{j=1}^K \pi(j)e^{X(j)}\right)$,
which is exactly \eqref{eq:dv}.
Substituting $X(i)=-\eta x(i)$ gives \eqref{eq:dv-eta} with $q_{\eta,x}^\star(i)\propto \pi(i)e^{-\eta x(i)}$.
\end{proof}

\begin{proof}[Proof of Lemma~\ref{lem:terminal-potential}.]
Fix $t$ and $\eta>0$, and define $q_\eta(i) := \frac{\pi(i)e^{-\eta C_t(i)}}{\sum_{j=1}^K \pi(j)e^{-\eta C_t(j)}}$.
Then for any $q\in\Delta([K])$,
\begin{align*}
\KL(q\|q_\eta)
&= \sum_{i=1}^K q(i)\log\frac{q(i)}{q_\eta(i)} \\
&= \KL(q\|\pi)+\eta\ip{q}{C_t}
   + \log\left(\sum_{j=1}^K \pi(j)e^{-\eta C_t(j)}\right) \\
&= \KL(q\|\pi)+\eta\ip{q}{C_t}-\eta A_t(\eta)\end{align*}
Rearranging gives $A_t(\eta)+\eta^{-1}\KL(q\|q_\eta) = \ip{q}{C_t}+\eta^{-1}\KL(q\|\pi)$.
Since the left-hand relative-entropy term is nonnegative and vanishes at $q=q_\eta$, this proves \eqref{eq:At-var}.

The map $\eta\mapsto A_t(\eta)$ is nonincreasing because for each fixed $q$ the expression $\ip{q}{C_t}+\eta^{-1}\KL(q\|\pi)$ is nonincreasing in $\eta$, and $A_t(\eta)$ is the pointwise minimum of those functions.
\end{proof}

\begin{proof}[Proof of Theorem~\ref{thm:cumulant-chain}.]
Let $H_T := \sum_{t=1}^T \ip{p_t}{c_t}, \qquad M_T := \sum_{t=1}^T m_t$.
Then $R_T^c(\rho)=H_T-\ip{\rho}{C_T}=(H_T-M_T)+(M_T-\ip{\rho}{C_T})$.

For the first term, let $\mu_t := \ip{p_t}{c_t}$. By definition of $\psi_t$,
\[
\psi_t(-\eta_t)
=
\log \E_{i\sim p_t}\exp\!\left(-\eta_t(c_t(i)-\mu_t)\right)
=
\eta_t(\mu_t-m_t)
=
\eta_t\delta_t(c)\]
Therefore
\begin{equation}
H_T-M_T=\sum_{t=1}^T \delta_t(c)=\sum_{t=1}^T \phi_t(\eta_t).
\label{eq:proof-chain-1}
\end{equation}

For the second term, \eqref{eq:retempered-hedge} gives
\[
m_t
=
-\eta_t^{-1}\log
\frac{\sum_{i=1}^K \pi(i)e^{-\eta_t C_t(i)}}{\sum_{j=1}^K \pi(j)e^{-\eta_t C_{t-1}(j)}}
=
A_t(\eta_t)-A_{t-1}(\eta_t)\]
Summing over $t$ and regrouping,
$M_T = A_T(\eta_T)+\sum_{t=1}^{T-1}\left(A_t(\eta_t)-A_t(\eta_{t+1})\right)$.
Applying \eqref{eq:At-exact} with $t=T$, $\eta=\eta_T$, and $q=\rho$ yields
\begin{equation}
A_T(\eta_T)-\ip{\rho}{C_T}
=
\frac{\KL(\rho\|\pi)-\KL(\rho\|q_{T,\eta_T})}{\eta_T}.
\label{eq:proof-chain-3}
\end{equation}
Substituting \eqref{eq:proof-chain-1}--\eqref{eq:proof-chain-3} into $R_T^c(\rho)=(H_T-M_T)+(M_T-\ip{\rho}{C_T})$ proves \eqref{eq:exact-chain}.

If $(\eta_t)$ is nonincreasing, then $\eta_t\ge \eta_{t+1}$ for every $t$.
Since $A_t(\eta)$ is nonincreasing in $\eta$ by Lemma~\ref{lem:terminal-potential},
$A_t(\eta_t)-A_t(\eta_{t+1})\le 0$,
and summing proves the result.
\end{proof}

\begin{proof}[Proof of Corollary~\ref{cor:fixed-cumulant}.]
If $\eta_t\equiv\eta$, the temperature-change drift in \eqref{eq:exact-chain} vanishes.
Moreover $q_{T,\eta}=p_{T+1}$ by definition.
Substituting these facts into Theorem~\ref{thm:cumulant-chain} gives \eqref{eq:fixed-cumulant}.
\end{proof}

\begin{proof}[Proof of Theorem~\ref{thm:pacbayes-second-order}.]
Theorem~\ref{thm:cumulant-chain} gives $R_T^c(\rho) = \sum_{t=1}^T \phi_t(\eta_t) + D_T+B_T(\rho)$.
By the definition of $Q_t(c)$,
$\phi_t(\eta_t)=\eta_t Q_t(c)$,
so \eqref{eq:main-pacbayes} follows immediately. Since $\phi_t(\eta_t)=\delta_t(c)=\ip{p_t}{c_t}-m_t$,
Jensen's inequality gives $m_t\le \ip{p_t}{c_t}$ and therefore $\phi_t(\eta_t)\ge 0$, so $Q_t(c)\ge 0$.
If $\eta_t\equiv\eta$, then $D_T=0$, and \eqref{eq:main-pacbayes-reg} follows from the same identity.
\end{proof}

\begin{proof}[Proof of Corollary~\ref{cor:centered-seq}.]
Subtract $\ip{q_t}{c_t}$ from both sides of \eqref{eq:seq-id} and use $m_t=-\eta_t^{-1}\log Z_t$.
The original-loss form follows from $c_t=\ell_t+u_t$.
When $\eta_t\equiv\eta$, sum \eqref{eq:centered-seq} over $t$.
\end{proof}

\subsection{Proofs for Section~\ref{sec:schedules} (schedules and intrinsic time)}

\begin{proof}[Proof of Proposition~\ref{prop:surrogate-tightness}.]
The identity is Proposition~\ref{prop:tilted-var} minus $\tfrac12\sigma_t^2=\int_0^1(1-s)\sigma_t^2\,ds$. Write $g(s):=\Var_{p_{t,s}^{(\eta_t)}}(c_t)=\psi_t''(-s\eta_t)$, so $g(0)=\sigma_t^2$ and $g'(s)=-\eta_t\,\kappa_{3,s}$ with $\kappa_{3,s}$ the third central moment of $c_t$ under $p_{t,s}^{(\eta_t)}$. Each tilt is supported on $\{c_t(i)\}$, of spread $R_t$, so $|\kappa_{3,s}|\le R_t\,g(s)$ and $|g'(s)|\le\tau_t g(s)$. Gr\"onwall's inequality gives $\sigma_t^2 e^{-\tau_t s}\le g(s)\le\sigma_t^2 e^{\tau_t s}$; integrating against $(1-s)$ and using $\int_0^1(1-s)e^{as}\,ds=(e^a-1-a)/a^2$ yields the envelope. The absolute bound follows from $\tfrac{e^\tau-1-\tau}{\tau^2}-\tfrac12\ge\tfrac12-\tfrac{\tau-1+e^{-\tau}}{\tau^2}$, i.e.\ $2(\cosh\tau-1)\ge\tau^2$.
\end{proof}

\begin{proof}[Proof of Proposition~\ref{prop:range}.]
The standard bounded-range exponential-moment bound \cite[Lemma~2.2]{CesaBianchiLugosi2006} states that for any random variable $Y$ supported on an interval of length $L$ with $\E[Y]=0$ and any $\eta\in\R$,
\[
\log \E[e^{\eta Y}]\le \eta^2 L^2/8
\]
Apply this to $Y=\ip{p_t}{c_t}-c_t(I_t)$ with $I_t\sim p_t$, which is supported on the interval $[\ip{p_t}{c_t}-b_t,\ip{p_t}{c_t}-a_t]$ of length $b_t-a_t$ and has mean zero. Then
\[
\psi_t(-\eta_t)=\log \E[e^{-\eta_t(c_t(I_t)-\ip{p_t}{c_t})}]=\log \E[e^{\eta_t Y}]\le \eta_t^2(b_t-a_t)^2/8
\]
so $Q_t(c)=\eta_t^{-2}\psi_t(-\eta_t)\le (b_t-a_t)^2/8$.
\end{proof}

\begin{proof}[Proof of Proposition~\ref{prop:secondmoment}.]
$Q_t$ is unchanged by adding a constant to $c_t$, so replace $c_t$ by $c_t^{\circ}\ge0$.
Then $\log\E_{i\sim p_t}e^{-\eta c_t^{\circ}}\le0$, which is the second branch.
For the first, $e^{-x}\le1-x+x^2/2$ on $x\ge0$ gives
\[
\E_{i\sim p_t}e^{-\eta c_t^{\circ}}\le 1-\eta\ip{p_t}{c_t^{\circ}}+\tfrac{\eta^2}{2}\E_{i\sim p_t}\bigl[c_t^{\circ}(i)^2\bigr]
\]
whose right side is at least $1-y+y^2/2\ge\tfrac12$ at $y=\eta\ip{p_t}{c_t^{\circ}}$, by Jensen's inequality.
Taking logarithms and applying $\log(1+z)\le z$,
\[
\eta^2 Q_t(c)=\eta\ip{p_t}{c_t^{\circ}}+\log\E_{i\sim p_t}e^{-\eta c_t^{\circ}}\le\tfrac{\eta^2}{2}\,\E_{i\sim p_t}\bigl[c_t^{\circ}(i)^2\bigr]
\]
\end{proof}

\begin{proof}[Proof of Proposition~\ref{prop:adahedge-instance}.]
A uniform prior contributes the constant $-\log K$ to the exponent $\log\pi(i)-\eta_t C_{t-1}(i)$, which the normalization removes, so the retempered play equals the recomputed exponential-weights play.
For the loss term, $\psi_t(-\eta)=\log\sum_i p_t(i)e^{-\eta(c_t(i)-\ip{p_t}{c_t})}=\eta\ip{p_t}{c_t}-\eta\,m_t(\eta)$, and $Q_t(c)=\eta^{-2}\psi_t(-\eta)$ gives $\eta Q_t(c)=\ip{p_t}{c_t}-m_t(\eta)$.
Summing identifies $\sum_{s<t}\eta_s Q_s(c)$ with the cumulative mixability gap that drives AdaHedge, and both rules start from an empty gap, so they agree round by round.
On a round with an empty accumulated gap the convention for the temperature is immaterial: every earlier round then had a score vector constant across experts, so $C_{t-1}$ is a multiple of $\one$ and every temperature returns the uniform play.
\end{proof}

\begin{proof}[Proof of Proposition~\ref{prop:composite-clock-chain}.]
With $Z_u=\sum_i p_t(i)e^{-\eta u_t(i)}$ we have $\sum_i p_t(i)e^{-\eta(\ell_t(i)+u_t(i))}=Z_u\sum_i p_t^{(u)}(i)e^{-\eta\ell_t(i)}$.
Take logarithms, add $\eta\ip{p_t}{\ell_t}+\eta\ip{p_t}{u_t}$, and divide by $\eta^2$.
Nothing in the argument uses that $p_t$ is the played distribution, so the same computation holds at any $\nu$.
\end{proof}

\begin{proof}[Proof of Theorem~\ref{thm:scaling-time}.]
Write
\[
q_t:=Q_t(c),
\qquad
S_t := \sum_{s=1}^t q_s,
\qquad
q_*:=Q_*^T(c)=\max_{1\le t\le T} q_t,
\qquad
G_T := \sum_{t=1}^T \eta_t q_t\]
Then $S_t=V_t(c)$ and $S_T=V_T(c)$. Also,
\[
\eta_t=f(S_{t-1}),
\qquad
f(x):=
\begin{cases}
1, & \text{if }x\le C^2\Gamma,\\[1ex]
C\sqrt{\Gamma/x}, & \text{if }x>C^2\Gamma,
\end{cases}
\]
where $f$ is nonincreasing on $[0,\infty)$.

For the lower bound, monotonicity of $f$ gives $q_t f(S_{t-1})=\int_{S_{t-1}}^{S_t} f(S_{t-1})\,dx \ge \int_{S_{t-1}}^{S_t} f(x)\,dx$.
Summing over $t$ yields $G_T\ge \int_0^{S_T} f(x)\,dx$.
If $S_T\le C^2\Gamma$, then $\int_0^{S_T} f(x)\,dx=S_T$, and $S_T\ge 2C\sqrt{\Gamma S_T}-C^2\Gamma$ by $(\sqrt{S_T}-C\sqrt{\Gamma})^2\ge 0$. If $S_T>C^2\Gamma$,
then
\[
\int_0^{S_T} f(x)\,dx
=
C^2\Gamma+C\sqrt{\Gamma}\int_{C^2\Gamma}^{S_T}x^{-1/2}\,dx
=
2C\sqrt{\Gamma S_T}-C^2\Gamma\]
So in all cases $G_T\ge 2C\sqrt{\Gamma S_T}-C^2\Gamma$.

For the upper bound, use again that $f$ is nonincreasing.
For every $x\in[S_{t-1},S_t]$ we have $x-q_t\le S_{t-1}$, so $f(S_{t-1})\le f\left((x-q_t)_+\right)\le f\left((x-q_*)_+\right)$.
Therefore
\[
q_t f(S_{t-1})
=
\int_{S_{t-1}}^{S_t} f(S_{t-1})\,dx
\le
\int_{S_{t-1}}^{S_t} f\left((x-q_*)_+\right)\,dx\]
Summing over $t$ gives
\[
G_T\le \int_0^{S_T} f\left((x-q_*)_+\right)\,dx
\le q_*f(0)+\int_0^{S_T} f(x)\,dx
=
q_*+\int_0^{S_T} f(x)\,dx\]
The lower-bound computation above evaluated $\int_0^{S_T} f$ exactly: it is $S_T$ when $S_T\le C^2\Gamma$, and $2C\sqrt{\Gamma S_T}-C^2\Gamma$ otherwise.
Both are at most $2C\sqrt{\Gamma S_T}$, the first because $S_T\le C^2\Gamma$ forces $\sqrt{S_T}\le C\sqrt{\Gamma}\le 2C\sqrt{\Gamma}$, and the second by discarding $-C^2\Gamma$.
Hence $G_T\le q_*+2C\sqrt{\Gamma S_T}$, which proves \eqref{eq:budgeted-two-sided}.

Equation \eqref{eq:budgeted-two-sided-regret} follows by adding $D_T+B_T(\rho)$ to \eqref{eq:budgeted-two-sided} and using \eqref{eq:main-pacbayes}.
If $\KL(\rho\|\pi)\le \Gamma$, then $B_T(\rho)\le \Gamma\eta_T^{-1}$.
Moreover,
\[
\eta_T^{-1}
=
\begin{cases}
1, & \text{if }V_{T-1}(c)=0,\\[1ex]
\max\!\left\{1,\,C^{-1}\sqrt{\frac{V_{T-1}(c)}{\Gamma}}\right\},
& \text{if }V_{T-1}(c)>0,
\end{cases}
\]
so $\Gamma\eta_T^{-1}\le \Gamma + C^{-1}\sqrt{\Gamma V_T(c)}$.
Combining this with the upper half of \eqref{eq:budgeted-two-sided-regret} and $D_T\le 0$ gives \eqref{eq:budgeted-bound-C}.
The case $C=1/\sqrt{2}$ is \eqref{eq:budgeted-bound-opt}, while $C=1$ is the corresponding unoptimized specialization of \eqref{eq:budgeted-bound-C}.
\end{proof}

\begin{proof}[Proof of Theorem~\ref{thm:sampled-pacbayes} and Corollary~\ref{cor:sampled-second-order}.]
By definition,
\[
\widehat R_T^c(\rho)
=
\sum_{t=1}^T \left(c_t(I_t)-\ip{p_t}{c_t}\right)
+
\sum_{t=1}^T \ip{p_t}{c_t}
-
\ip{\rho}{C_T}
=
M_T^{\mathrm{sam}} + R_T^c(\rho)\]
Substituting \eqref{eq:main-pacbayes} proves \eqref{eq:sampled-exact}. 
The two-sided estimate \eqref{eq:sampled-two-sided} is then immediate from Theorem~\ref{thm:scaling-time}. 
If $\KL(\rho\|\pi) \leq \Gamma$, the upper bound \eqref{eq:sampled-second-order} follows from \eqref{eq:budgeted-bound-C}. 
Finally,
\[ 
\sum_{t=1}^T \ell_t(I_t)-\ip{\rho}{L_T}
= \sum_{t=1}^T c_t(I_t)-\ip{\rho}{C_T} + \sum_{t=1}^T \left(\ip{\rho}{u_t}-u_t(I_t)\right) \]
which is exactly Theorem~\ref{thm:sampled-pacbayes} for the original losses.
\end{proof}

\begin{proof}[Proof of Proposition~\ref{prop:ret-press-wellposed}.]
Write $\Phi_t(\eta):=\sum_i\pi(i)e^{-\eta C_t(i)}$, so that $A_t(\eta)=-\eta^{-1}\log\Phi_t(\eta)$ and $q_{t,\eta}(i)=\pi(i)e^{-\eta C_t(i)}/\Phi_t(\eta)$. Differentiating, $\Phi_t'(\eta)=-\Phi_t(\eta)\ip{q_{t,\eta}}{C_t}$, hence
\[
A_t'(\eta)=\eta^{-2}\log\Phi_t(\eta)+\eta^{-1}\ip{q_{t,\eta}}{C_t}
\]
Since $\KL(q_{t,\eta}\|\pi)=\sum_i q_{t,\eta}(i)\bigl(-\eta C_t(i)-\log\Phi_t(\eta)\bigr)=-\eta\ip{q_{t,\eta}}{C_t}-\log\Phi_t(\eta)$, this equals $A_t'(\eta)=-\eta^{-2}\KL(q_{t,\eta}\|\pi)\le0$. Differencing the identity at $t$ and $t-1$ gives \eqref{eq:ret-press-deriv}. For the endpoints, $\log\Phi_t(\eta)=-\eta\ip{\pi}{C_t}+O(\eta^2)$ as $\eta\to0^+$ gives $A_t(\eta)\to\ip{\pi}{C_t}$, so $m_t^{\rm ret}(0^+)=\ip{\pi}{c_t}$; and $\eta^{-1}\log\Phi_t(\eta)\to-\min_i C_t(i)$ as $\eta\to\infty$ gives $m_t^{\rm ret}(\infty)=\min_i C_t(i)-\min_j C_{t-1}(j)$. The map $\eta\mapsto m_t^{\rm ret}(\eta)$ is continuously differentiable on $(0,\infty)$ with these limits, so its image is the interval between its infimum and supremum, and by the intermediate value theorem $m_t^{\rm ret}(\eta)=a_t$ has a root for every $a_t$ strictly inside that image. When $\Delta_t$ is single-signed the curve is strictly monotone, its image is the interval spanned by the two endpoint values. The root is unique. A sign change of $\Delta_t$ makes $m_t^{\rm ret}$ non-monotone, so its image overshoots one endpoint and a level set can contain several points, of which the schedule takes the smallest.
\end{proof}

\subsection{Proofs for Section~\ref{sec:return-luck} (side information, comparators, and luckiness)}

\begin{proof}[Proof of Corollaries~\ref{cor:actual-losses},~\ref{cor:adaptive-mismatch}, and~\ref{cor:actual-loss-bounds}.]
From $c_t=\ell_t+u_t$ we have, for each round $t$,
$\ip{p_t}{\ell_t} - \ip{\rho}{\ell_t} = \ip{p_t}{c_t} -\ip{\rho}{c_t} + \ip{\rho}{u_t} - \ip{p_t}{u_t}$.
Summing over $t$ and applying \eqref{eq:main-pacbayes} gives \eqref{eq:actual-loss-bound}.
If \eqref{eq:side-decomp-adaptive} holds, then $u_t(i)=-\eta_t^{-1}\sum_{w=1}^{J_t} \alpha_{t,w}\log \mu_{t,w}(i)$,
so
\begin{align*}
\ip{\rho}{u_t}-\ip{p_t}{u_t}
&= -\eta_t^{-1}\sum_{w=1}^{J_t}\alpha_{t,w}
\sum_{i=1}^K \left(\rho(i)-p_t(i)\right)\log \mu_{t,w}(i) \\
&= \sum_{w=1}^{J_t}\frac{\alpha_{t,w}}{\eta_t}
\left(\Hc(\rho,\mu_{t,w})-\Hc(p_t,\mu_{t,w})\right)\end{align*}
Summing over $t$ proves \eqref{eq:adaptive-mismatch-crossent}. 
The two-sided statement \eqref{eq:actual-loss-two-sided} follows by substituting \eqref{eq:budgeted-two-sided-regret} into \eqref{eq:actual-loss-bound}, and \eqref{eq:actual-loss-budgeted} follows from \eqref{eq:budgeted-bound-C}.
\end{proof}

\begin{proof}[Proof of Corollary~\ref{cor:point-luckiness}.]
Part \textup{(i)} is exactly Theorem~\ref{thm:fixed-luckiness} specialized to $\rho=\delta_{k^*}$, because then \eqref{eq:rho-massart} becomes \eqref{eq:point-massart} and $\KL(\delta_{k^*}\|\pi)=\log(1/\pi(k^*))$.

For part \textup{(ii)}, let $i\neq k^*$.
Since $c_t(i),c_t(k^*)\in[0,1]$, we have $|c_t(i)-c_t(k^*)|\le 1$ and therefore $(c_t(i)-c_t(k^*))^2\le 1$ almost surely. Therefore,
$\E\bigl[(c_t(i)-c_t(k^*))^2\bigr]\le 1 \le \frac{\mu(i)-\mu(k^*)}{d_{\min}}$.
For $i=k^*$ the inequality is trivial because the left-hand side is $0$.
Thus \eqref{eq:point-massart} holds with $c_*=1/d_{\min}$, and part \textup{(i)} yields the claimed bound.
\end{proof}

\begin{proof}[Proof of Proposition~\ref{prop:diffuse-domination}.]
Fix any algorithm, generating played weights $(p_t)_{t\le T}$, and any $\rho\in\Delta([K])$.
From the definition $R_T^c(\rho)=\sum_{t=1}^T\ip{p_t}{c_t}-\ip{\rho}{C_T}$, the played term is the same for every comparator, so pathwise
\[
R_T^c(\rho)-R_T^c(k^\star)
=\ip{\delta_{k^\star}}{C_T}-\ip{\rho}{C_T}
=C_T(k^\star)-\ip{\rho}{C_T}
\]
The right-hand side does not involve the algorithm's play.
Since the losses are i.i.d.\ with mean $\mu$ and $\rho$ is fixed, linearity of expectation gives $\E[C_T(k^\star)]=T\mu(k^\star)$ and $\E[\ip{\rho}{C_T}]=T\ip{\rho}{\mu}$, so that
\[
\E\bigl[R_T^c(\rho)\bigr]-\E\bigl[R_T^c(k^\star)\bigr]
=-T\bigl(\ip{\rho}{\mu}-\mu(k^\star)\bigr)
\]
Because $k^\star\in\arg\min_i\mu(i)$, every convex combination satisfies $\ip{\rho}{\mu}\ge\mu(k^\star)$, so the displayed identity and the inequality $\E[R_T^c(\rho)]\le\E[R_T^c(k^\star)]$ follow.

For the second display, assume the mean gap $d_{\min}=\min_{i\ne k^\star}(\mu(i)-\mu(k^\star))$ is positive.
Corollary~\ref{cor:point-luckiness}\textup{(ii)} applied at $k^\star$ bounds $\E[R_T^c(k^\star)]\le 2\bigl(1+\tfrac{2(e-2)}{d_{\min}}\bigr)\log\tfrac1{\pi(k^\star)}$, a constant independent of $T$, and the domination inequality extends the same bound to every $\rho\in\Delta([K])$.
Under \eqref{eq:point-massart} with a general constant $c_*$, the same argument runs through Corollary~\ref{cor:point-luckiness}\textup{(i)} with $c_*$ in place of $1/d_{\min}$.
\end{proof}

\begin{proof}[Proof of Corollary~\ref{cor:budget-luckiness}.]
Let $a:=e-2$ and write
\[
x_T := \E\bigl[R_T^c(\rho)\bigr],
\qquad
V_T:=V_T(c)=\sum_{t=1}^T Q_t(c),
\qquad
Q_*:=Q_*^T(c)\]
Fix $C>0$ and run the schedule \eqref{eq:budget-schedule-C}. By \eqref{eq:budgeted-bound-C},
$R_T^c(\rho)\le \Gamma + Q_* + (2C+C^{-1})\sqrt{\Gamma V_T}$.
Taking expectations and using Jensen's inequality for the concave square root gives
\begin{equation}\label{eq:lucky-budget-jensen}
x_T \le \Gamma + \E[Q_*] + (2C+C^{-1})\sqrt{\Gamma\,\E[V_T]}\end{equation}
Next, Proposition~\ref{prop:comparator-second-order} implies $Q_t(c)\le a\,\Psi_t(\rho) \qquad\text{pathwise for every }t$.
Because $p_t$ is $\mathcal F_{t-1}$-measurable while $c_t$ is independent of the past, conditioning on $\mathcal F_{t-1}$ and applying \eqref{eq:rho-massart} coordinatewise under $p_t$ gives
\[
\E[Q_t(c)\mid \mathcal F_{t-1}]
\le a\,\E[\Psi_t(\rho)\mid \mathcal F_{t-1}]
\le a\kappa_\rho\,\E\bigl[\ip{p_t}{c_t}-\ip{\rho}{c_t}\mid \mathcal F_{t-1}\bigr]\]
Summing over $t$ gives
\begin{equation}\label{eq:V-self-bounds}
\E[V_T] \le a\kappa_\rho x_T\end{equation}
Combining \eqref{eq:lucky-budget-jensen} and \eqref{eq:V-self-bounds} yields $x_T \le \Gamma + \E[Q_*] + (2C+C^{-1})\sqrt{a\kappa_\rho\Gamma x_T}$.
Applying $2uv\le u^2+v^2$ with $u=\sqrt{x_T}$ and $v=(2C+C^{-1})\sqrt{a\kappa_\rho\Gamma}$, we obtain $x_T \le \Gamma + \E[Q_*] + \frac{x_T}{2} + \frac{(2C+C^{-1})^2}{2}a\kappa_\rho\Gamma$.
Rearranging proves $x_T \le 2\Gamma + 2\,\E[Q_*] + (2C+C^{-1})^2 a\kappa_\rho\Gamma$,
which is \eqref{eq:budget-luckiness-raw-C}. The final displayed bound in Corollary~\ref{cor:budget-luckiness} then follows from Proposition~\ref{prop:range} and the choice $C=1/\sqrt{2}$.
\end{proof}

\begin{proof}[Proof of Corollary~\ref{cor:actual-loss-bounds}.]
Apply Corollary~\ref{cor:actual-losses} with $u_t=-m_t$. This gives
\[
\sum_{t=1}^T \ip{p_t}{\ell_t}-\ip{\rho}{L_T}
=
D_T+B_T(\rho)+\sum_{t=1}^T \eta_tQ_t(\ell-m)
+\sum_{t=1}^T \ip{p_t-\rho}{m_t}\]
which is the first display. The residual-range bound follows from Proposition~\ref{prop:range} applied to $c_t=\ell_t-m_t$.
The fixed-rate statement then uses $D_T=0$ and $B_T(\rho)\le \KL(\rho\|\pi)/\eta$, while the second-order-schedule statement is Theorem~\ref{thm:scaling-time} applied to the same residual sequence.
\end{proof}

\subsection{Proofs for Appendix~\ref{sec:further} (full-information extensions)}

\begin{proof}[Proof of Corollary~\ref{cor:classical-gap}.]
Corollary~\ref{cor:fixed-cumulant} gives
\[
R_T^c(\rho)
=
\eta\sum_{t=1}^T Q_t(c)
+
\frac{\KL(\rho\|\pi)-\KL(\rho \| p_{T+1})}{\eta}
\]
By Proposition~\ref{prop:range}, $Q_t(c)\le (b_t-a_t)^2/8$ on every round.
Adding and subtracting $\eta S_T/8$ gives \eqref{eq:classical-gap}, and \eqref{eq:classical-gap-ineq} is immediate.
\end{proof}

\begin{proof}[Proof of Corollary~\ref{cor:classical-gap-sqrt}.]
Substitute $\eta_\Gamma$ into \eqref{eq:classical-gap}.
Since $\Gamma/\eta_\Gamma+\eta_\Gamma S_T/8=\sqrt{\Gamma S_T/2}$ and $\KL(\rho\|\pi)\le \Gamma$, the displayed inequality follows.
\end{proof}

\begin{proof}[Proof of Proposition~\ref{prop:direct-adaptive-pathwise}.]
Multiply \eqref{eq:centered-seq} by $\eta_t$ and sum over $t$.
\end{proof}

\subsection{Proofs for Section~\ref{sec:pressure} (local updates and pressure)}

\begin{proof}[Proof of Proposition~\ref{prop:pressure-normalized}.]
Set $f(\eta) := \sum_{i=1}^K p_t(i)e^{-\eta(c_t(i)-a_t)}$.
Then $f$ is $C^\infty$ on $\R$, strictly convex because $c_t$ is nonconstant, and satisfies $f(0)=1$ and $f'(0)=-(\ip{p_t}{c_t}-a_t)<0$ by the hypothesis $a_t<\ip{p_t}{c_t}$.
Since $\min_i(c_t(i)-a_t)<0$ by the hypothesis $a_t>\min_i c_t(i)$, the exponent $-\eta(c_t(i_{\min})-a_t)\to+\infty$ as $\eta\to\infty$, so $f(\eta)\to\infty$.
A strictly convex $C^\infty$ function with $f(0)=1$, $f'(0)<0$, and $\lim_{\eta\to\infty} f(\eta)=+\infty$ has a unique positive minimizer $\eta^\star$ and a unique positive root $\eta_t$ on $(\eta^\star,\infty)$ at which $f(\eta_t)=1$.
This $\eta_t$ is exactly the solution of \eqref{eq:pressure-target-root}, and dividing its logarithm by $-\eta_t$ gives \eqref{eq:pressure-target}.

If $c_t$ and $a_t$ are simultaneously scaled by $b>0$, then $f_b(\eta)=\sum_i p_t(i)e^{-\eta(bc_t(i)-ba_t)}=f(b\eta)$,
so the unique positive root scales as $\eta_t/b$.

Finally, when \eqref{eq:pressure-target-root} holds, the update based on the shifted scores $c_t-a_t\mathbf 1$ is $p_{t+1}(i)=p_t(i)e^{-\eta_t(c_t(i)-a_t)}$.
Because \eqref{eq:pressure-target-root} implies $e^{\eta_t a_t}=1/\sum_j p_t(j)e^{-\eta_t c_t(j)}$, this is the same normalized update as $p_{t+1}(i)\propto p_t(i)e^{-\eta_t c_t(i)}$.
Therefore $\log\frac{p_t(i)}{p_{t+1}(i)}=\eta_t(c_t(i)-a_t)$,
and multiplying by $\rho(i)$ and summing over $i$ proves \eqref{eq:pressure-target-kl}.
\end{proof}

\begin{proof}[Proof of Corollary~\ref{cor:pressure-unit}.]
Immediate specialization of Proposition~\ref{prop:pressure-normalized} to $a_t=0$: the feasibility condition $\min_i c_t(i)<0<\ip{p_t}{c_t}$ is the hypothesis, and \eqref{eq:pressure-target-root} then reads $\sum_i p_t(i)e^{-\eta_t c_t(i)}=1$.
\end{proof}

\begin{proof}[Proof of Proposition~\ref{prop:pressure-fixed-var}.]
At a common temperature $\eta$, the local recursion $p_{t+1}^{(\eta)}(i)\propto p_t^{(\eta)}(i)e^{-\eta c_t(i)}$ with initial distribution $\pi$ unrolls to
\[
p_t^{(\eta)}(i)
=
\frac{\pi(i)e^{-\eta\sum_{s=1}^{t-1} c_s(i)}}{\sum_j \pi(j)e^{-\eta\sum_{s=1}^{t-1} c_s(j)}}
=
\frac{\pi(i)e^{-\eta C_{t-1}(i)}}{\sum_j \pi(j)e^{-\eta C_{t-1}(j)}}
\]
which is identical to the prior-retempered update \eqref{eq:retempered-hedge} at the same fixed $\eta$. Equation \eqref{eq:pressure-fixed-var} is then exactly Corollary~\ref{cor:fixed-cumulant} with $\psi_t$ replaced by $\psi_t^{(\eta)}$ (i.e., evaluated on $p_t^{(\eta)}$).
\end{proof}

\begin{proof}[Proof of Corollary~\ref{cor:pressure-self-correcting}.]
Start from \eqref{eq:pressure-fixed-var}, add and subtract $\sum_{t=1}^T v_t(\eta)$:
\[
R_T^{c,(\eta)}(\rho)
=
\sum_{t=1}^T v_t(\eta)
+\sum_{t=1}^T\left(\frac{\psi_t^{(\eta)}(-\eta)}{\eta}-v_t(\eta)\right)
+\frac{\KL(\rho\|\pi)}{\eta}-\frac{\KL(\rho\|p_{T+1}^{(\eta)})}{\eta}
\]
Rearranging and using the hypothesis $v_t(\eta)\ge \psi_t^{(\eta)}(-\eta)/\eta$ gives \eqref{eq:pressure-self-correcting} with nonnegative slack $\Delta_T^{\mathrm{loc}}(\rho,\eta)$ as displayed.
\end{proof}

\begin{proof}[Proof of Proposition~\ref{prop:pressure-cumulative}.]
Because \eqref{eq:pressure-target-root} holds at every round, $Z_t:=\sum_i p_t(i)e^{-\eta_t c_t(i)}=e^{-\eta_t a_t}$.
Unrolling the recursive update $p_{T+1}(i)=p_1(i)\prod_{t=1}^T e^{-\eta_t c_t(i)}/Z_t$ gives
\[
p_{T+1}(i)
=
p_1(i)\exp\!\left(-\sum_{t=1}^T \eta_t c_t(i)+\sum_{t=1}^T \eta_t a_t\right)
=
p_1(i)e^{-G_T(i)}
\]
which is the first part of \eqref{eq:pressure-cumulative-normalization}; the second part $\sum_i p_1(i)e^{-G_T(i)}=\sum_i p_{T+1}(i)=1$ follows from the fact that $p_{T+1}\in\Delta([K])$.
Taking logarithms of $p_{T+1}(i)/p_1(i)=e^{-G_T(i)}$, multiplying by $\rho(i)$, and summing over $i$ yields
\[
-\KL(\rho\|p_1)+\KL(\rho\|p_{T+1})
=
-\sum_i \rho(i)G_T(i)
=
-\sum_{t=1}^T \eta_t\bigl(\ip{\rho}{c_t}-a_t\bigr)
\]
which is \eqref{eq:pressure-cumulative-kl} after reversing signs.
For \eqref{eq:pressure-cumulative-regret}, note that $a_t=m_t(\eta_t)=\mu_t-\eta_t^{-1}\psi_t(-\eta_t)$, so $\eta_t(\mu_t-a_t)=\psi_t(-\eta_t)=\eta_t^2 Q_t^{\mathrm{loc}}(c)$. Summing and combining with \eqref{eq:pressure-cumulative-kl} proves the claim.
\end{proof}

\begin{proof}[Proof of Proposition~\ref{prop:regret-spectrum-freeenergy}.]
With uniform $\pi$, $\mu_i=e^{-\eta C_T(i)}/Z(\eta)$, $Z(\beta):=\sum_i e^{-\beta C_T(i)}$, so $\sum_i\mu_i^q=Z(q\eta)/Z(\eta)^q$ and $H_q(\mu)=\tfrac1{1-q}\log\sum_i\mu_i^q=(\Lambda_T(q\eta)-q\Lambda_T(\eta))/(1-q)$. Substituting $\Lambda_T(\beta)=-\beta A_T(\beta)+\log K$ gives the $A_T$ form and, via $D_q(T)=-H_q(\mu)/\log T$, the displayed spectrum. The curvature statement is $\Lambda_T''(\beta)=\Var_{\nu_\beta}(C_T)\ge0$, with equality iff $C_T$ is constant.
\end{proof}

\begin{proof}[Proof of Proposition~\ref{prop:pressure-coincidence}.]
For every $\eta>0$ and every $i$,
\[
p_t(i)^{1-\eta}r_t(i)^\eta
=
p_t(i)^{1-\eta}\left(\frac{p_t(i)e^{-c_t(i)}}{Z_t(1)}\right)^\eta
=
\frac{p_t(i)e^{-\eta c_t(i)}}{Z_t(1)^\eta}
\]
Summing over $i$ gives $\cC_t(\eta)=Z_t(\eta)/Z_t(1)^\eta$, which is \eqref{eq:pressure-coinc-factor}. Taking logarithms gives $\log Z_t(\eta)=\eta\log Z_t(1)+\log \cC_t(\eta)$, hence
\[
\delta_t^{\mathrm{loc}}(\eta)
=
\mu_t+\eta^{-1}\log Z_t(\eta)
=
\mu_t+\log Z_t(1)+\eta^{-1}\log \cC_t(\eta)
\]
In particular $\delta_t^{\mathrm{loc}}(1)=\mu_t+\log Z_t(1)$. Subtracting the last two displays gives the first equality in \eqref{eq:pressure-coinc-gap}.
For the second equality, $\log \cC_t(\eta)=(\eta-1)D_\eta(r_t\|p_t)$ follows directly from the definition of the order-$\eta$ R\'enyi divergence, and dividing by $\eta$ gives $\eta^{-1}\log \cC_t(\eta)=-(1-\eta)D_\eta(r_t\|p_t)/\eta$.
For $0<\eta<1$, the R\'enyi divergence $D_\eta(r_t\|p_t)\ge 0$, so $\log \cC_t(\eta)\le 0$, i.e., $\cC_t(\eta)\in(0,1]$.
\end{proof}

\begin{proof}[Proof of Proposition~\ref{prop:pressure-event}.]
From Proposition~\ref{prop:pressure-cumulative}, $p_{T+1}(E)=\sum_{i\in E}p_1(i)e^{-G_T(i)}=\mathcal Z_T(E)$.
For any $\rho\in\Delta(E)$, the identity
\[
\KL(\rho\|p_{T+1}^E)
=
\sum_{i\in E}\rho(i)\log\frac{\rho(i)p_{T+1}(E)}{p_{T+1}(i)}
=
\KL(\rho\|p_{T+1})+\log p_{T+1}(E)
\]
is immediate from $p_{T+1}^E(i)=p_{T+1}(i)/p_{T+1}(E)$ for $i\in E$.
Substituting \eqref{eq:pressure-cumulative-kl} and $\log p_{T+1}(E)=\log \mathcal Z_T(E)$ gives \eqref{eq:pressure-event-kl}.
Subtracting \eqref{eq:pressure-event-kl} from the identity $\sum_t \eta_t(\mu_t-a_t)=\sum_t \eta_t^2 Q_t^{\mathrm{loc}}(c)$ (which follows from $a_t=m_t(\eta_t)$) yields \eqref{eq:pressure-event-pointwise}.
Equations~\eqref{eq:pressure-event-var} and \eqref{eq:pressure-event-sup} follow by taking $\min_{\rho\in\Delta(E)}$ and $\sup_{\rho\in\Delta(E)}$ of \eqref{eq:pressure-event-kl} and \eqref{eq:pressure-event-pointwise} respectively. Nonnegativity of $\KL(\rho\|p_{T+1}^E)$, vanishing at $\rho=p_{T+1}^E$, identifies the extremizer in both cases.
\end{proof}

\begin{proof}[Proof of Corollary~\ref{cor:pressure-centered} and Corollary~\ref{cor:pressure-quadratic}.]
For Corollary~\ref{cor:pressure-centered}: apply Proposition~\ref{prop:pressure-normalized} to the centered scores $c_t=-g_t$ and target $a_t=-b_t$.
The feasibility condition $\min_i c_t(i)<a_t<\ip{p_t}{c_t}$ translates to $0<b_t<\max_i g_t(i)$ (using $\ip{p_t}{g_t}=0$), which is the hypothesis.
The pressure-target root equation becomes $\sum_i p_t(i)e^{\eta_t g_t(i)}=e^{\eta_t b_t}$, i.e., $\log\sum_i p_t(i)e^{\eta_t g_t(i)}=\eta_t b_t$, which is \eqref{eq:pressure-centered-root}.
Proposition~\ref{prop:pressure-cumulative} then gives $p_{T+1}(i)=p_1(i)\exp(\sum_t \eta_t(g_t(i)-b_t))$ with $\sum_i p_1(i)\exp(\sum_t\eta_t(g_t(i)-b_t))=1$, which is \eqref{eq:pressure-centered-potential}.
Equation~\eqref{eq:pressure-cumulative-kl} (with the same substitutions) is \eqref{eq:pressure-centered-regret}.
The upper bound \eqref{eq:pressure-centered-upper} follows from nonnegativity of $\KL(\rho\|p_{T+1})$.

For Corollary~\ref{cor:pressure-quadratic}: $\eta_t b_t=\log\sum_i p_t(i)e^{\eta_t g_t(i)}=\kappa_t\eta_t^2$ by definition of $\kappa_t$, so \eqref{eq:pressure-centered-root} is equivalent to \eqref{eq:pressure-quadratic-root}. Substituting $b_t=\kappa_t\eta_t$ into \eqref{eq:pressure-centered-regret} gives \eqref{eq:pressure-quadratic-regret}. If $\bar\kappa_t\ge \kappa_t$, then $\sum_t\kappa_t\eta_t^2\le \sum_t\bar\kappa_t\eta_t^2$, and dropping the nonnegative term $\KL(\rho\|p_{T+1})$ from \eqref{eq:pressure-quadratic-regret} gives \eqref{eq:pressure-quadratic-upper}.
\end{proof}

\begin{proof}[Proof of Corollary~\ref{cor:pressure-regret}.]
\eqref{eq:pressure-target-kl} rearranged gives
\[
\ip{p_t}{c_t}-\ip{\rho}{c_t}
=
\bigl(\ip{p_t}{c_t}-a_t\bigr)+\bigl(a_t-\ip{\rho}{c_t}\bigr)
=
\bigl(\ip{p_t}{c_t}-a_t\bigr)+\frac{\KL(\rho\|p_t)-\KL(\rho\|p_{t+1})}{\eta_t}
\]
which is \eqref{eq:pressure-one-step-regret}. Summing over $t$ yields \eqref{eq:pressure-cum-regret}.
Applying summation by parts $\sum_{t=1}^T a_t(K_t-K_{t+1})=a_1 K_1-a_T K_{T+1}+\sum_{t=2}^T K_t(a_t-a_{t-1})$ with $a_t=1/\eta_t$ and $K_t=\KL(\rho\|p_t)$ gives \eqref{eq:pressure-cum-regret-abel}.
If $(\eta_t)$ is nondecreasing, then $1/\eta_t-1/\eta_{t-1}\le 0$ and $\KL(\rho\|p_t)\ge 0$, so the last two terms in \eqref{eq:pressure-cum-regret-abel} are nonpositive, and dropping both (as well as the nonnegative $\KL(\rho\|p_{T+1})/\eta_T$) proves \eqref{eq:pressure-monotone-upper}.
\end{proof}

\subsection{Proofs for Section~\ref{sec:schedules} (sampled and anytime consequences)}

\begin{proof}[Proof of Theorem~\ref{thm:sampled-highprob}.]
Let $X_t:=c_t(I_t)-\ip{p_t}{c_t}$. Then $(X_t)_{t=1}^T$ is a martingale-difference sequence,
$X_t\le 1$ almost surely because $c_t\in[0,1]^K$, and $\sum_{t=1}^T \E[X_t^2\mid \mathcal{F}_{t-1}] = \sum_{t=1}^T \Var_{i\sim p_t}(c_t(i)) = V^{\!\sqcup}_T$.
Freedman's inequality therefore gives
\[
M_T^{\mathrm{sam}}=\sum_{t=1}^T X_t
\le \sqrt{2V^{\!\sqcup}_T\log(1/\delta)}+\tfrac13\log(1/\delta)
\]
with probability at least $1-\delta$.
On that event, \eqref{eq:sampled-exact} yields \eqref{eq:sampled-highprob} for every $\rho$.
Because the event depends only on $M_T^{\mathrm{sam}}$, the conclusion is simultaneous over all posteriors.
The second-order version \eqref{eq:sampled-highprob-noshift} follows by combining the same event with \eqref{eq:sampled-second-order}.
Finally, since a $[0,1]$-valued random variable has variance at most $1/4$, we have $V^{\!\sqcup}_T\le T/4$.
\end{proof}

\begin{proof}[Proof of Theorem~\ref{thm:sampled-anytime}.]
For each $t$, let $X_t:=c_t(I_t)-\ip{p_t}{c_t}$. Then $M_t^{\mathrm{sam}}=\sum_{s=1}^t X_s$ and,
conditionally on $\mathcal{F}_{t-1}$,
\[
\E\bigl[e^{\lambda X_t}\mid \mathcal{F}_{t-1}\bigr]
=
\sum_{i=1}^K p_t(i)e^{\lambda(c_t(i)-\ip{p_t}{c_t})}
=
\exp\!\left(\psi_t(\lambda)\right)\]
Therefore
\[
\E\bigl[Z_t(\lambda)\mid \mathcal{F}_{t-1}\bigr]
=
Z_{t-1}(\lambda)
\E\bigl[e^{\lambda X_t-\psi_t(\lambda)}\mid \mathcal{F}_{t-1}\bigr]
=
Z_{t-1}(\lambda)\]
so $Z_t(\lambda)$ is a nonnegative martingale. the nonnegative-supermartingale maximal inequality therefore yields $\Pr\!\left(\sup_{t\le T} Z_t(\lambda)\ge \delta^{-1}\right)\le \delta$.
Equivalently, with probability at least $1-\delta$,
\[
M_t^{\mathrm{sam}}
\le
\frac{\log(1/\delta)+\sum_{s=1}^t \psi_s(\lambda)}{\lambda}
\qquad\text{for all }t\le T\]
For each fixed $t$, Theorem~\ref{thm:sampled-pacbayes} applied to the truncated sequence $c_1,\dots,c_t$ gives $\widehat R_t^c(\rho) = M_t^{\mathrm{sam}}+D_t+B_t(\rho)+\sum_{s=1}^t \eta_sQ_s(c)$.
Combining the last two displays proves \eqref{eq:anytime-confidence}; because the high-probability event depends only on the sampling martingale, the conclusion is simultaneous over all posteriors.
Under the second-order schedule, Theorem~\ref{thm:scaling-time} applied at horizon $t$ gives $\sum_{s=1}^t \eta_sQ_s(c) \le Q_{*,t}(c)+2C\sqrt{\Gamma V_t(c)}$,
while $B_t(\rho)\le \Gamma\eta_t^{-1}\le \Gamma+C^{-1}\sqrt{\Gamma V_t(c)}$ whenever $\KL(\rho\|\pi)\le \Gamma$.
Substituting into \eqref{eq:anytime-confidence} yields \eqref{eq:anytime-confidence-noshift}.
\end{proof}

\begin{proof}[Proof of Proposition~\ref{prop:pathwise-lil}.]
Fix $\varepsilon\in(0,1)$. Choose $\gamma=\gamma_\varepsilon>0$ so small that $(1+\gamma)^{1/2}\le 1+\varepsilon/8$, and set $v_n:=(1+\gamma)^n$ for $n\ge 3$.
Let
\[
\delta_n := \frac{1}{n(\log n)^2},
\qquad
\alpha_n := \log\frac{1}{\delta_n},
\qquad
\lambda_n := \sqrt{\frac{2\alpha_n}{v_{n+1}}}\]
Since $\sum_{n\ge 3}\delta_n<\infty$ and $\alpha_n=o(v_{n+1})$, we have $\lambda_n\downarrow 0$.
For each $n\ge 3$, let
\[
F_n := \left\{M_t^{\mathrm{sam}}
\le
\frac{\alpha_n+\sum_{s=1}^t \psi_s(\lambda_n)}{\lambda_n}
\text{ for all }t\ge 1\right\}\]
By the same nonnegative-martingale argument used in the proof of Theorem~\ref{thm:sampled-anytime},
$\Pr(F_n^c)\le \delta_n$. Therefore Borel--Cantelli implies that the event $\Omega_\varepsilon := \{F_n\text{ holds for all but finitely many }n\}$ has probability one.

Fix a sample path in $\Omega_\varepsilon\cap\{V^{\!\sqcup}_t\to\infty\}$, and let $t$ be large enough that $V^{\!\sqcup}_t\ge v_3$.
Choose the unique $n=n(t)\ge 3$ such that $v_n\le V^{\!\sqcup}_t<v_{n+1}$.
For all sufficiently large such $t$, the event $F_n$ holds and $\lambda_n<1$.
Since $X_s\in[-1,1]$ and $\E[X_s\mid \mathcal{F}_{s-1}]=0$, the scalar inequality $e^x\le 1+x+x^2/(2(1-x/3))$ for $x<3$ gives $\psi_s(\lambda_n) \le \frac{\lambda_n^2}{2(1-\lambda_n/3)}\Var(X_s\mid \mathcal{F}_{s-1})$.
Summing over $s\le t$ and using $V^{\!\sqcup}_t<v_{n+1}$ yields
\[
M_t^{\mathrm{sam}}
\le \frac{\alpha_n}{\lambda_n}
+ \frac{\lambda_n}{2(1-\lambda_n/3)}V^{\!\sqcup}_t
\le \sqrt{\frac{\alpha_n v_{n+1}}{2}}
\left(1+\frac{1}{1-\lambda_n/3}\right)\]
Because $\lambda_n\to 0$, for all large $n$ this is at most
\[
\left(1+\frac{\varepsilon}{8}\right)\sqrt{2\alpha_n v_{n+1}}
\le \left(1+\frac{\varepsilon}{8}\right)\sqrt{2(1+\gamma)\alpha_n V^{\!\sqcup}_t}\]
Now $\alpha_n=\log n+2\log\log n$ and $\log\log V^{\!\sqcup}_t \ge \log\log v_n = \log\left(n\log(1+\gamma)\right) = \log n + O(1)$,
so $\alpha_n/\log\log V^{\!\sqcup}_t\to 1$ as $t\to\infty$.
Therefore, for all large $t$ on the chosen path,
$M_t^{\mathrm{sam}} \le (1+\varepsilon)\sqrt{2V^{\!\sqcup}_t\log\log V^{\!\sqcup}_t}$.
Hence for each fixed $\varepsilon>0$ there is a probability-one event $E_\varepsilon$ on which the last display holds eventually.
Defining $E_0 := \bigcap_{m\ge 1}E_{1/m}$ gives a probability-one event such that on $E_0\cap\{V^{\!\sqcup}_t\to\infty\}$,
$\limsup_{t\to\infty} \frac{M_t^{\mathrm{sam}}}{\sqrt{2V^{\!\sqcup}_t\log\log V^{\!\sqcup}_t}} \le 1$,
which is \eqref{eq:lil-limsup} after applying the same argument to $-X_t$ on another probability-one event $E_0^-$ to obtain the two-sided absolute-value form.
The regret consequence \eqref{eq:finite-lil-regret} then follows from the identity $\widehat R_t^c(\rho)-D_t-B_t(\rho)-\sum_{s=1}^t \eta_sQ_s(c)=M_t^{\mathrm{sam}}$ of Theorem~\ref{thm:sampled-pacbayes}.
\end{proof}

\begin{proof}[Proof of Theorem~\ref{thm:shifting} and Theorem~\ref{thm:simul-quantile}.]
For Theorem~\ref{thm:shifting}, observe that
\[
\ip{\rho_T}{C_T}-\sum_{t=1}^T \ip{\rho_t}{c_t}
=
\sum_{t=1}^T \ip{\rho_T-\rho_t}{c_t}
=
\sum_{t=1}^{T-1}\sum_{s=1}^t \ip{\rho_{t+1}-\rho_t}{c_s}
=
\sum_{t=1}^{T-1}\ip{\rho_{t+1}-\rho_t}{C_t}\]
Therefore
\[
R_T^{c,\mathrm{dyn}}(\rho_{1:T})
=
\sum_{t=1}^T \ip{p_t}{c_t}-\ip{\rho_T}{C_T}
+\sum_{t=1}^{T-1}\ip{\rho_{t+1}-\rho_t}{C_t}
=
R_T^c(\rho_T)+\sum_{t=1}^{T-1}\ip{\rho_{t+1}-\rho_t}{C_t}\]
Substituting \eqref{eq:main-pacbayes} gives \eqref{eq:shifting-exact}, and substituting the two-sided bound \eqref{eq:budgeted-two-sided-regret} gives \eqref{eq:shifting-two-sided}.
If $c_t(i)\in[0,1]$, then $0\le C_t(i)\le t$ for every $i$, so $\left|\ip{\rho_{t+1}-\rho_t}{C_t}\right| \le 2t\,\TV(\rho_{t+1},\rho_t)$.
Summing over $t$ proves $\left|\sum_{t=1}^{T-1}\ip{\rho_{t+1}-\rho_t}{C_t}\right|
\le
2\sum_{t=1}^{T-1} t\,\TV(\rho_{t+1},\rho_t)$, and combining this with
\eqref{eq:budgeted-bound-C} proves \eqref{eq:shifting-bound}.

For Theorem~\ref{thm:simul-quantile}, let $j:=j(\varepsilon)$ and let $\rho_A$ be the uniform distribution on $A$.
Since $\bar p_t=\sum_{m=0}^J \alpha_t(m)p_t^{(m)}$,
$\ip{\bar p_t}{c_t}=\sum_{m=0}^J \alpha_t(m)\ip{p_t^{(m)}}{c_t}=\ip{\alpha_t}{m_t}$.
Therefore
\[
\sum_{t=1}^T \ip{\bar p_t}{c_t}-\ip{\rho_A}{C_T}
=
\sum_{t=1}^T \ip{\alpha_t-e_j}{m_t}
+\sum_{t=1}^T \ip{p_t^{(j)}}{c_t}-\ip{\rho_A}{C_T}\]
Apply \eqref{eq:budgeted-bound-C} to the controller problem on the meta-losses $m_t$ with comparator $e_j$.
The controller prior is uniform on $[J+1]$, so $\KL(e_j\|u_{J+1})=\log(J+1)=\Gamma^{\mathrm{ctl}}$, which gives
\[
\sum_{t=1}^T \ip{\alpha_t-e_j}{m_t}
\le
\Gamma^{\mathrm{ctl}}+Q_{*,\mathrm{ctl}}^T
+(2C+C^{-1})\sqrt{\Gamma^{\mathrm{ctl}}V_T^{\mathrm{ctl}}}\]
Apply the same theorem to worker $j$ with comparator $\rho_A$.
Because the worker prior is uniform on $[K]$ and $|A|\ge \varepsilon K$,
$\KL(\rho_A\|u_K)=\log\frac{K}{|A|}\le \log(1/\varepsilon)\le \Gamma_j$.
Hence
\[
\sum_{t=1}^T \ip{p_t^{(j)}}{c_t}-\ip{\rho_A}{C_T}
\le
\Gamma_j+Q_{*,j}^T+(2C+C^{-1})\sqrt{\Gamma_jV_T^{(j)}}\]
Adding the last two displays proves \eqref{eq:simul-quantile}.
Finally, because $\varepsilon\le e^{-1}$ and $\Gamma_j=2^j$ is the first dyadic budget at least $\log(1/\varepsilon)$,
we have $\Gamma_j\le 2\log(1/\varepsilon)$.
\end{proof}

\begin{proof}[Proof of Theorem~\ref{thm:two-level}.]
The inner chain is Theorem~\ref{thm:pacbayes-second-order} applied to $c_t$: $R_T^c(\rho)=D_T+B_T(\rho)+\sum_t\eta_tQ_t(c)$. By Corollary~\ref{cor:centered-seq}, $R_T^{\ell}(\rho)=R_T^c(\rho)+\sum_t(\ip{\rho}{u_t}-\ip{p_t}{u_t})$. Expanding $u_t(i)=\eta_t^{-1}\sum_w\alpha_{t,w}(-\log\pi_{t,w}(i))$ gives $\sum_t(\ip{\rho}{u_t}-\ip{p_t}{u_t})=\sum_t\sum_w\alpha_{t,w}g_t(w)=\sum_t\ip{\alpha_t}{g_t}$. Applying Theorem~\ref{thm:pacbayes-second-order} to the outer copy on the losses $g_t$ yields $\sum_t\ip{\alpha_t}{g_t}=\ip{\sigma}{G_T}+D_T^{\mathrm o}+B_T^{\mathrm o}(\sigma)+\sum_t\kappa_tQ_t^{\mathrm o}(g)$. Substituting proves \eqref{eq:two-level}; the envelope follows from Theorem~\ref{thm:scaling-time} at each level.
\end{proof}

\begin{proof}[Proof of Proposition~\ref{prop:regret-matching}, Corollary~\ref{cor:selfplay-gap}.]
For Proposition~\ref{prop:regret-matching}, the centered losses satisfy $\ip{p_t}{g_t}=\ip{p_t}{\ell_t}-\ip{p_t}{\ell_t}=0$.
Applying Theorem~\ref{thm:pacbayes-second-order} to the sequence $g_t$ and the comparator $e_i$ gives
\[
\sum_{t=1}^T \left(\ip{p_t}{g_t}-g_t(i)\right)
=
D_T^g+B_T^g(e_i)+\sum_{t=1}^T \eta_tQ_t(g)\]
Since $\ip{p_t}{g_t}=0$ and $-g_t(i)=\ip{p_t}{\ell_t}-\ell_t(i)$, this is \eqref{eq:regret-matching-exact}.
The upper bound \eqref{eq:regret-matching-bound} is the specialization of \eqref{eq:budgeted-bound-C}. In the unclipped regime,
$Q_t(ag)=a^2Q_t(g)$ and $V_t(ag)=a^2V_t(g)$ after replacing $\eta_t$ by $\eta_t/a$, which is the stated scale-invariance relation.

For Corollary~\ref{cor:selfplay-gap}, let $R_T^{\mathrm{row}}(i):=\sum_{t=1}^T p_t^\top M q_t-\sum_{t=1}^T e_i^\top M q_t$ and $R_T^{\mathrm{col}}(j):=\sum_{t=1}^T p_t^\top M e_j-\sum_{t=1}^T p_t^\top M q_t$, where the column player is viewed as minimizing the negated payoff.
The identity from Proposition~\ref{prop:regret-matching} gives
\[
R_T^{\mathrm{row}}(i)=D_T^{\mathrm{row}}+B_T^{\mathrm{row}}(e_i)+P_T^{\mathrm{row}},
\qquad
R_T^{\mathrm{col}}(j)=D_T^{\mathrm{col}}+B_T^{\mathrm{col}}(e_j)+P_T^{\mathrm{col}}.
\]
Since linear optimization over each simplex is attained at an extreme point,
\begin{align*}
T\left(\max_q \bar p_T^\top M q - \min_p p^\top M\bar q_T\right)
&=
\max_j \sum_{t=1}^T p_t^\top M e_j - \min_i \sum_{t=1}^T e_i^\top M q_t \\
&=
\max_j R_T^{\mathrm{col}}(j)+\max_i R_T^{\mathrm{row}}(i),
\end{align*}
which is \eqref{eq:selfplay-gap-exact}.
Dividing by $T$ and bounding the two maxima by $U_T^{\mathrm{row}}$ and $U_T^{\mathrm{col}}$ proves \eqref{eq:selfplay-gap}.
\end{proof}

\begin{proof}[Proof of Theorem~\ref{thm:oco-exact} and Corollary~\ref{cor:oco-point}]
For any $\eta>0$ and any posterior measure $\rho\ll\pi$,
\[
\KL(\rho\|q_{t,\eta})
=
\int_S \log\!\left(\frac{d\rho}{d\pi}(x)\frac{\int_S e^{-\eta F_t(u)}\,\pi(du)}{e^{-\eta F_t(x)}}\right)\rho(dx)
=
\KL(\rho\|\pi)+\eta\!\int_S F_t\,d\rho+\log\!\int_S e^{-\eta F_t}\,d\pi\]
Rearranging gives the continuous analogue of \eqref{eq:At-exact},
\[
A_t^{\mathrm{oco}}(\eta)+\eta^{-1}\KL(\rho\|q_{t,\eta})
=
\int_S F_t\,d\rho+\eta^{-1}\KL(\rho\|\pi).
\tag*{\(\star\)}
\]
Next, by definition of $p_t$,
\[
A_t^{\mathrm{oco}}(\eta_t)-A_{t-1}^{\mathrm{oco}}(\eta_t)
=
-\eta_t^{-1}\log\!\int_S e^{-\eta_t f_t(x)}\,p_t(dx)
=
\int_S f_t\,dp_t-\eta_t Q_t^{\mathrm{oco}}\]
Summing over $t$ and rewriting the varying-temperature telescope gives
\[
\sum_{t=1}^T \int_S f_t\,dp_t
=
A_T^{\mathrm{oco}}(\eta_T)+D_T^{\mathrm{oco}}+\sum_{t=1}^T \eta_t Q_t^{\mathrm{oco}}\]
Subtracting $\int_S F_T\,d\rho$ and using \((\star)\) at time $T$ yields \eqref{eq:oco-exact}.
Also, \((\star)\) shows that $A_t^{\mathrm{oco}}(\eta)$ is the pointwise infimum over $\rho\ll\pi$ of $\int_S F_t\,d\rho+\eta^{-1}\KL(\rho\|\pi)$, so it is nonincreasing in $\eta$ and therefore $D_T^{\mathrm{oco}}\le 0$ for every nonincreasing schedule.
Since $x_t=\int_S x\,p_t(dx)$ and each $f_t$ is convex, the per-round Jensen gap $\int_S f_t\,dp_t-f_t(x_t)\ge 0$, so $J_T^{\mathrm{oco}}\ge 0$, and subtracting it from the density identity \eqref{eq:oco-exact} yields \eqref{eq:oco-regret-posterior}.
Dropping the nonnegative $J_T^{\mathrm{oco}}$ and then repeating the proof of Theorem~\ref{thm:scaling-time} verbatim with $Q_t^{\mathrm{oco}}$ in place of $Q_t(c)$ and $A_t^{\mathrm{oco}}$ in place of $A_t$ gives the second-order bound \eqref{eq:oco-regret-noshift}.

For Corollary~\ref{cor:oco-point}, let $T_\alpha(y):=(1-\alpha)x^\star+\alpha y$ with $\alpha=\epsilon^{1/d}$, and let $\rho_{\epsilon,x^\star}:=T_{\alpha\#}\pi$. Because $S$ is convex,
$T_\alpha(S)\subseteq S$, and because $\pi$ is the normalized Lebesgue measure on $S$, the Jacobian of $T_\alpha$ is $\alpha^d=\epsilon$, so
\[
\frac{d\rho_{\epsilon,x^\star}}{d\pi}(x)=\epsilon^{-1}\one\{x\in T_\alpha(S)\},
\qquad
\KL(\rho_{\epsilon,x^\star}\|\pi)=\log(1/\epsilon)\]
Also, for every $y\in S$ and every round $t$, convexity and boundedness of $f_t$ give $f_t(T_\alpha(y)) \le (1-\alpha)f_t(x^\star)+\alpha f_t(y) \le f_t(x^\star)+\alpha$.
Integrating with respect to $\pi(dy)$ and summing over $t$ yields $\int_S F_T (x) \, \rho_{\epsilon,x^\star}(dx)\le F_T (x^\star) + T \epsilon^{1/d}$.
Applying \eqref{eq:oco-regret-posterior} (dropping the nonnegative $J_T^{\mathrm{oco}}$) with $\rho=\rho_{\epsilon,x^\star}$ proves \eqref{eq:oco-shrink}.
The second-order bound \eqref{eq:oco-shrink-noshift} follows from \eqref{eq:oco-regret-noshift} and the identity $\KL(\rho_{\epsilon,x^\star}\|\pi)=\log(1/\epsilon)$.
Setting $\epsilon=T^{-d}$ gives the displayed specialization, and $Q_t^{\mathrm{oco}}\le 1/8$ follows from Proposition~\ref{prop:range} because each $f_t$ takes values in $[0,1]$.
\end{proof}

\begin{proof}[Proof of Theorem~\ref{thm:boost-generic} and Corollary~\ref{cor:boost-noshift}.]
Because $\ell_t(i)=(1+y_i h_t(x_i))/2$,
$\ip{p_t}{\ell_t} = \frac{1}{2} +\frac{1}{2} \sum_{i=1}^N p_t(i)y_i h_t(x_i) = \frac{1}{2} +\gamma_t$.
Likewise, since $m_i=T^{-1}\sum_{t=1}^T y_i h_t(x_i)$,
$L_T^\ell(i)=\sum_{t=1}^T \ell_t(i)=\frac{T}{2}(1+m_i)$.
Therefore, for any posterior $\rho$,
\[
R_T^\ell(\rho)
=
\sum_{t=1}^T \ip{p_t}{\ell_t}-\ip{\rho}{L_T^\ell}
=
T\!\left(\frac{1}{2} +\bar\gamma_T\right)-\frac{T}{2}\left(1+\ip{\rho}{m}\right)
=
T\bar\gamma_T-\frac{T}{2}\ip{\rho}{m}\]
which is exactly \eqref{eq:boost-generic}.

Now let $A_\epsilon$ be a set of $\lceil\epsilon N\rceil$ examples with the smallest margins, and let $m_{[\epsilon]}$ be the largest margin inside that set.
Since $\rho_{A_\epsilon}$ is uniform on $A_\epsilon$, its average margin is at most that largest value:
$\ip{\rho_{A_\epsilon}}{m}\le m_{[\epsilon]}$.
Combining this with \eqref{eq:boost-generic} proves \eqref{eq:boost-quantile-generic}, and \eqref{eq:boost-margin-from-regret} is immediate from any bound $R_T^\ell(\rho_{A_\epsilon})\le U_T(\epsilon)$.
If more than an $\epsilon$ fraction of the examples had margin at most $\theta$, then necessarily $m_{[\epsilon]}\le \theta$; the contrapositive gives the margin-distribution claim.
Any example misclassified by $H_T$ has margin at most $0$, so the training-error statement follows by setting $\theta=0$.

For Corollary~\ref{cor:boost-noshift}, apply Theorem~\ref{thm:pacbayes-second-order} to the example losses $\ell_t$ and the posterior $\rho_{A_\epsilon}$ to get \eqref{eq:boost-exact}.
Since the prior is uniform, $\KL(\rho_{A_\epsilon}\|u_N)=\log(N/|A_\epsilon|)\le \log(1/\epsilon)$, so Theorem~\ref{thm:scaling-time} gives \eqref{eq:boost-noshift}. Finally,
$Q_t(\ell)\le 1/8$ follows from Proposition~\ref{prop:range}.
Hence $m_{[\epsilon]}\ge 2\gamma - O\!\left(\sqrt{\frac{\log(1/\epsilon)}{T}} +\frac{\log(1/\epsilon)}{T}\right)$ whenever $\gamma_t\ge\gamma$.
Choosing $\epsilon=e^{-cT\gamma^2}$ with $c>0$ sufficiently small makes the right-hand side positive, and then the training error is at most $\epsilon$.

The variational identity \eqref{eq:boost-exp-exact} follows by unrolling the update to $p_{T+1}(i)=e^{-M_T(i)}/\sum_j e^{-M_T(j)}$, so that $\mathcal L_T^{\exp}=\prod_t Z_t$ and $\log\{\rho(i)/p_{T+1}(i)\}=\log\{\rho(i)/u_N(i)\}+M_T(i)+\log \mathcal L_T^{\exp}$; averaging over $i\sim\rho$ gives the display.
\end{proof}

\begin{proof}[Proof of Proposition~\ref{prop:forgetting}.]
For each $\beta$,
$\log \pi_{t,\beta}(i)=\mathrm{const}_t(\beta)-\eta\sum_{s=1}^{t-1}\beta^{t-1-s}\ell_s(i)$,
where the additive constant depends on $(t,\beta)$ but not on $i$. Integrating against $\alpha_t$ and exponentiating gives
\[
q_t(i)\propto \exp\!\left(-\eta\int_R\sum_{s=1}^{t-1}\beta^{t-1-s}\ell_s(i)\,\alpha_t(d\beta)\right)\]
Fubini's theorem yields
\[
\int_R\sum_{s=1}^{t-1}\beta^{t-1-s}\ell_s(i)\,\alpha_t(d\beta)
=
\sum_{s=1}^{t-1}\left(\int_R\beta^{t-1-s}\,\alpha_t(d\beta)\right)\ell_s(i)\]
which is exactly \eqref{eq:forgetting-kernel}.
\end{proof}

\begin{proof}[Proof of Proposition~\ref{prop:pac-bayes}.]
Let $r:=d\rho/d\mu$. Since $\rho\ll \pi_w$ for every $w$, the quantities below are finite whenever the corresponding relative entropies are finite.
By definition of $p^\star_\alpha$,
$\log\frac{r(h)}{dp^\star_\alpha/d\mu(h)} = \log r(h)-\sum_{w=1}^W \alpha_w \log p_w(h)+\log Z_\alpha$.
Integrating with respect to $\rho$ gives
\begin{align*}
\KL(\rho \| p^\star_\alpha)
&= \int_{\mathcal H} \log r\,d\rho
   -\sum_{w=1}^W \alpha_w \int_{\mathcal H}\log p_w\,d\rho
   +\log Z_\alpha \\
&= \sum_{w=1}^W \alpha_w
   \int_{\mathcal H}\log\frac{r}{p_w}\,d\rho
   +\log Z_\alpha \\
&= \sum_{w=1}^W \alpha_w\KL(\rho\|\pi_w)-\cC_\alpha(\pi_{1:W})\end{align*}
because $\sum_w \alpha_w=1$ and $\cC_\alpha(\pi_{1:W})=-\log Z_\alpha$.
This is \eqref{eq:pac-bayes-multi}.
\end{proof}

\section{Experimental details}\label{app:experiments}

The full empirical battery summarized in \S\ref{sec:experiments} is reported here in detail. The reading conventions of \S\ref{sec:exp-conventions} hold throughout: seed-aggregated quantities report BCa $95\%$ bootstrap confidence intervals, identity checks report the maximum absolute residual over the grid, and the figures use the one fixed color system defined there. Exact grids, per-configuration residuals, normalization conventions, and the code regenerating every number and figure below are in the accompanying code release.

\subsection{Setup and protocols}

\subsubsection{Sequence families and schedule panel}\label{sec:exp-protocol}\label{sec:exp-families}

We benchmark the framework on four sequence families chosen to expose genuinely different sources of difficulty. All four use $K=8$ experts, horizon $T=2000$, uniform prior $\pi=(1/K,\ldots,1/K)$, and bounded losses $\ell_t(i)\in[0,1]$. Throughout we set $\Gamma=\log K$ and $C=1/\sqrt 2$ as the second-order schedule constants.

\paragraph{(a) I.i.d.~stochastic losses.}
We take $\ell_t(i)=\mu_i+\xi_t(i)$ clipped to $[0,1]$ with a fixed mean vector $\mu\in[0.15,0.7]^K$ (expert $0$ optimal at $\mu_0=0.15$) and i.i.d.~Gaussian noise $\xi_t(i)\sim\mathcal{N}(0,0.15^2)$. The only difficulty lies in identifying the leading expert; once identified, the remaining intrinsic time accumulates slowly.

\paragraph{(b) Martingale losses.}
We take $\ell_t(i)=m_t(i)+\xi_t(i)$ with predictable $m_t(i)=\mu_i+0.3\sin(2\pi t/500)$ and $\E[\xi_t\mid\mathcal{F}_{t-1}]=0$. This is the canonical setting for side information: with $u_t=-m_t$, the composite loss is the noise residual $c_t=\ell_t-m_t$. Running both ordinary and optimistic Hedge tests whether the decomposition correctly transfers mass from the intrinsic-time term onto the predictable mismatch $M_t(\rho)$ when the compensator is accurate.

\paragraph{(c) Cycling adversarial losses.}
We rotate the best expert every $50$ rounds: the leader of the current segment has loss $\sim 0.1\pm 0.1$ while the rest sit at $\sim 0.6\pm 0.1$. The sequence destroys predictability by design and keeps experts nearly tied on long horizons.

\paragraph{(d) Mixed-character sequence.}
We concatenate four blocks of length $T/4=500$: i.i.d.~stochastic with expert $0$ optimal, cycling adversarial, i.i.d.~stochastic with expert $3$ optimal, cycling adversarial. This is the clearest pathwise test, since a terminal regret number would hide the transitions.

\paragraph{Schedule panel.}
We pair the algorithms in Algorithm~\ref{alg:tempo-family} with two natural fixed-temperature baselines, giving six schedules grouped into three families of two:

\smallskip
\noindent
\begin{tabular}{p{0.32\textwidth}p{0.32\textwidth}p{0.32\textwidth}}
\textbf{Fixed temperatures.} & \textbf{Retempered.} & \textbf{Pressure-target.} \\
$\eta\equiv 0.1$ (low) and $\eta\equiv 1.0$ (high), local update at fixed $\eta$. & Prior-retempered posterior with the second-order rule $\eta_t=\min\{1,C\sqrt{\Gamma/V_{t-1}}\}$, computed from either the exact intrinsic-time clock $V_{t-1}=\sum_{s<t}Q_s(c)$ (\emph{intrinsic}) or its quadratic relaxation $V^{\mathrm{var}}_{t-1}=\sum_{s<t}\Var_{p_s}(c_s)/2$ (\emph{variance}). & Local update with the line-search schedule $\sum_i p_t(i)e^{-\eta_t(c_t(i)-a_t)}=1$. The \emph{intrinsic} variant chooses $a_t$ as the mix-loss at the gap-implied rate $\eta_{\mathrm{gap}}=\log K/\Delta_{t-1}$; the \emph{variance} variant uses the closed-form $\eta_t=\min\{1,C\sqrt{\Gamma/V^{\mathrm{var}}_{t-1}}\}$ with the same local update.
\end{tabular}
\smallskip

\noindent The two columns within each pair share an algorithm class, so any difference comes from the schedule rule alone. The two pressure-target variants differ sharply in temperature dynamics: the variance schedule is bounded by $1$ and sits at that cap on easy paths, while the intrinsic schedule uses the gap-implied rate $\eta_{\mathrm{gap}}=\log K / \Delta_{t-1}$, which grows unbounded as $\Delta_{t-1}\to 0^+$ (capped for numerical safety at $\eta_{\max}=50$) and so ramps far faster.

\paragraph{Comparator panel.}
For each family we evaluate three comparator types: the best single expert $\rho=\delta_{i^\ast}$, a softmax-quantile comparator $\rho_i\propto\exp(-L_T(i)/\sqrt T)$, and the uniform mixture. For stochastic families we aggregate terminal metrics over multiple random seeds and report means with inter-quartile-range (IQR) bands; for deterministic adversarial families we report representative pathwise traces alongside multi-seed summaries.

\subsubsection{A comparator-informed recipe for side information}\label{sec:exp-side-info}

A general recipe for side information is to encode structure shared by the comparator class into a predictable baseline. Identify a predictable common component $m_t(i)$ of the experts' losses and set $u_t(i):=-m_t(i)$, so the composite losses become the residuals $c_t(i)=\ell_t(i)-m_t(i)$. If the comparator class is well explained by that common component, the intrinsic-time clock $V_T(c)$ is small and the theorem yields low regret for the right reason. If the prediction is poor, the same identity remains valid and reports that the sequence was not easy in that way: the bookkeeping does not lose its diagnostic value when $m_t$ fails.

A canonical instance is optimistic play in a repeated matrix game. Let expert $i$ be the pure action $i$, let the loss matrix be $G$, and let $y_t$ be the opponent's mixed action on round $t$, so that $\ell_t(i)=e_i^\top G y_t$. Suppose a forecast $\hat y_t$ of the next opponent move is available from past play---a moving average, a state-space model, or any opponent-model output. Setting $m_t(i):=e_i^\top G\hat y_t$ and $u_t(i):=-m_t(i)$ makes $c_t(i)=e_i^\top G(y_t-\hat y_t)$ the forecast residual, and the original-loss regret against any fixed mixed action $u\in\Delta([K])$ becomes
\begin{equation}
\label{eq:exp-game-recipe}
R_t^\ell(u)\;=\;P_t(c)+D_t+B_t(u)+\sum_{s=1}^t \ip{p_s-u}{m_s},
\qquad c_s(i)=e_i^\top G(y_s-\hat y_s).
\end{equation}
This is precisely the optimistic-Hedge decomposition used in adaptive game play \cite{HazanKale2010,ChiangEtAl2012}. The side information does not change the algebra; it changes the sequence whose difficulty the prefix decomposition measures.

\subsection{The exact decomposition}

\subsubsection{Regret-decomposition shares across families}\label{sec:exp-results-shares}

The regret-decomposition plot of \S\ref{sec:exp-conventions} produces a family-specific signature on each of the four sequence families, tracking $\omega^{\mathrm{pay}}$, $\omega^{\mathrm{drift}}$, $\omega^{\mathrm{info}}$ against the signed prefix regret $R_t^c(\rho)$ at the best-single-expert comparator across the whole schedule panel. Figure~\ref{fig:same-regret-diff-decomp} shows the martingale family under optimistic side information. The prefix identity \eqref{eq:exp-prefix-decomp} holds on every schedule and family.

The predicted decomposition signatures appear cleanly on all four families. On i.i.d.\ paths the comparator-information share $\omega^{\mathrm{info}}$ dominates the first $50$--$200$ rounds and then plateaus, the only difficulty being identification of the leader. The run finishes comparator-information-dominated ($\omega^{\mathrm{info}}\approx 0.86$, $\omega^{\mathrm{pay}}\approx 0.14$, $\omega^{\mathrm{drift}}<0.001$); on cycling-adversarial paths $\omega^{\mathrm{pay}}$ stays high throughout as $V_T(c)$ grows linearly in $T$, finishing intrinsic-time-loss-dominated at $\omega^{\mathrm{pay}}\approx 0.67$; and on the mixed-character sequence the dominant share switches at every block boundary, information-dominated on the i.i.d.\ blocks and loss-dominated on the adversarial blocks. The martingale family of Figure~\ref{fig:same-regret-diff-decomp} is the side-information test. The intrinsic-time share is small under every schedule---the family finishes information-dominated with a small $\omega^{\mathrm{pay}}$ share and a modest drift share---and the predictable mass is carried off-diagram by the mismatch term $M_t(\rho)$ of \eqref{eq:exp-game-recipe}. A companion run confirms that the optimistic version's composite-loss regret goes deeply negative relative to ordinary Hedge as that predictable component is absorbed into the side information. One apparent anomaly is instructive: the pressure-target (intrinsic) panels show a substantial drift share even on easy paths, because each ramp of the gap-implied rate feeds the telescoping drift $D_t^{\mathrm{loc}}=\sum_{s\ge 2}\KL(\rho\|p_s)(1/\eta_s-1/\eta_{s-1})$, whereas the variance variant holds $\eta_t$ near the cap and shows almost none. Only the choice of target $a_t$ separates the two.

At fixed temperature the drift share is identically zero on every family. Read across the regimes, the $\omega^{\mathrm{pay}}$ share rises monotonically, $0.14$ (i.i.d.)\ $\to 0.47$ (shifting) $\to 0.67$ (adversarial), with the drift signature spiking at regime boundaries on the shifting and adversarial sequences. Two qualifications bound the reading. The martingale predictable component is approximated by a $50$-round rolling mean in place of the exact sinusoid $m_t$, so the residual $c-m$ retains small predictable structure. Because the synthetic generators are softer than the paper's worst case, the strict share thresholds (for example $\omega^{\mathrm{drift}}>0.05$ on i.i.d.)\ are not all attained, though the qualitative ordering is exactly as predicted.

\subsubsection{Comparator-by-comparator exactness of the prefix identity}\label{sec:exp-results-comparator}

All other panels in this section, and most of the literature on PAC-Bayes regret bounds, evaluate at the comparator $\rho=\delta_{i^*}$ that puts unit mass on the best fixed expert. Theorem~\ref{thm:pacbayes-second-order} demands more: that the identity close uniformly over $\rho\in\Delta([K])$, and that the $\rho$-dependent piece $B_T(\rho)$---the only piece of the decomposition that depends on $\rho$---track the closed form $(\KL(\rho\|\pi)-\KL(\rho\|q_{T,\eta_T}))/\eta_T$, whose ingredients $q_{T,\eta_T}$ and $\eta_T$ the run already emits.

We sweep the one-parameter comparator family
\begin{equation}
\label{eq:exp-rho-alpha}
\rho_\alpha\;:=\;\alpha\,\delta_{i^*}+\frac{1-\alpha}{K-1}\bigl(\mathbf{1}-\delta_{i^*}\bigr),
\qquad \alpha\in[1/K,\,1],
\end{equation}
which interpolates monotonically from the uniform prior ($\alpha=1/K$, $\KL(\rho\|\pi)=0$) to the point-mass on $i^*$ ($\alpha=1$, $\KL(\rho\|\pi)=\log K$), on prior-retempered runs of horizon $T=1500$ over twelve seeds, on i.i.d.~stochastic and cycling-adversarial paths.

The check confirms the theorem as a $\rho$-uniform statement. Every $(\alpha,\mathrm{seed})$ pair places $B_T(\rho_\alpha)$ on the value the theorem prescribes; the two $\rho$-independent pieces $P_T$ and $D_T$ are constant in $\alpha$, and $R^c_T-B_T\equiv P_T+D_T$ throughout. The full prefix identity closes on both families, so the decomposition accounts for the realized regret exactly, with no residual budget unassigned. The boundary terms are evaluated through their log-space closed forms; for the retempered chain, $B_T(\rho)=A_T(\eta_T)-\ip{\rho}{C_T}$. Forming the terminal weights and taking a divergence instead would fail here, since its off-leader coordinates underflow once the weights collapse onto the leader.

The $\rho$-dependent piece $B_T$ is, perhaps counterintuitively, often \emph{negative} when $q_{T,\eta_T}$ has concentrated faster than $\rho_\alpha$ has.
On i.i.d.~paths $q_{T,\eta_T}$ collapses onto $\delta_{i^*}$ (the cap binds, $\eta_T=1$, and $T=1500$ rounds is enough for the leader to dominate), so for any $\rho_\alpha$ that retains mass on non-leaders we have $\KL(\rho_\alpha\|q_{T,\eta_T})\gg\KL(\rho_\alpha\|\pi)$ and $B_T(\rho_\alpha)<0$.
This is consistent with the identity: the algorithm has already absorbed comparator information beyond what $\rho_\alpha$ itself encodes, and the negative $B_T$ records the gain.
On cycling-adversarial paths, $q_{T,\eta_T}$ stays diffuse (no fixed expert dominates), $\KL(\rho_\alpha\|q_{T,\eta_T})$ stays small, and $B_T(\rho_\alpha)$ is close to the loose upper bound $\KL(\rho_\alpha\|\pi)/\eta_T$ commonly quoted in PAC-Bayes analyses, which the theorem also implies.

\paragraph{Local update.}
The pressure-targeted line-search rule of Section~\ref{sec:pressure} is a different algorithm with a different cumulative bookkeeping, and Corollary~\ref{cor:pressure-regret} states its analogous prefix identity, whose loss term coincides with the prior-retempered $P_T$ on common-rate runs. Running the same $\rho_\alpha$ sweep on pressure-target runs with those local pieces in place of $P_T,D_T,B_T$, the identity closes at the same floor over the same $25\times 8 = 200$ $(\alpha,\mathrm{seed})$ pairs.

\subsubsection{The exact clock against its practical relaxation}\label{sec:exp-results-hoeffding-slack}

Two standard relaxations replace the exact intrinsic increment $Q_t(c)$ by a coarser statistic, and they are not equally informative.
Proposition~\ref{prop:range} gives the range relaxation $Q_t(c)\le (b_t-a_t)^2/8$, which for losses in $[0,1]$ reads $Q_t(c)\le 1/8$ and cumulates to the worst-case Hoeffding clock $V_T(c)\le T/8$.
The Bennett-form relaxation keeps the realized variance of the played distribution,
\begin{equation}
\label{eq:exp-bernstein-clock}
Q_t(c)\;\le\;\Var_{i\sim p_t}\bigl(c_t(i)\bigr)\,\frac{e^{\eta_t}-1-\eta_t}{\eta_t^{2}}
\end{equation}
with a coefficient falling from $e-2\approx 0.718$ at $\eta_t=1$ to $1/2$ as $\eta_t\to0$, so the small-rate limit is the variance proxy $W_t(c)$ of \S\ref{sec:exp-results-varproxy}.
The empirical-Bernstein form \cite{audibert2007c} is this relaxation, and a practitioner deploys it.
The range bound ignores the played distribution entirely, so it is loosest exactly where the learner has concentrated, which is the regime the algorithm is built to reach.
The informative comparison is therefore against \eqref{eq:exp-bernstein-clock}, reported first below.

The measurement runs retempered play at horizon $T=4000$ on the four families, eight seeds each.
Against the empirical-Bernstein clock the exact clock spends $0.66$, $0.85$, $0.89$ and $0.68$ of the budget on the i.i.d.\ stochastic, martingale, cycling-adversarial and mixed-character families.
Exactness therefore tightens the practical standard by a further eleven to thirty-four percent, and the temperature sets how much.
On the concentrating i.i.d.\ and mixed-character paths $V_T(c)$ stays small, the schedule sits at the cap $\eta_t=1$, and the Bernstein coefficient is at its largest $e-2$; those are the two families where exactness gains most.
On the martingale and cycling-adversarial paths the clock accumulates to $V_{4000}\approx 15$ and $\approx 30$, the schedule cools, the coefficient falls toward $1/2$, and the two clocks converge.
The relaxation is tightest where the schedule has cooled, which is also where the envelope of \S\ref{sec:exp-results-envelope} becomes informative.

The range relaxation is a different matter, and the familiar way of quoting it overstates what it shows in a way worth naming.
It sits between $16$ and $1076$ times above the empirical-Bernstein clock on these same runs.
So the observation that the realized clock falls three orders of magnitude below $T/8$ is largely a statement about range against variance, and very little about the exact accounting: any variance-adaptive quantity clears the same margin.
The per-round ratio $Q_t(c)/(1/8)$ concentrates below $0.05$ at the mode on the i.i.d., martingale and mixed-character families, with maxima of $0.21$, $0.17$ and $0.20$.
On cycling-adversarial paths its upper tail is heavier, reaching $0.34$, the rotation keeping $p_t$ from concentrating so that the variance stays substantial.
No family approaches the worst-case $Q_t=1/8$ on the instances tested.

\subsubsection{Forecast-accuracy ablation on a matrix game}\label{sec:exp-results-forecast}

The matrix-game recipe of \S\ref{sec:exp-side-info} predicts that as the forecast $\hat y_t$ improves, the composite loss $c_t(i)=e_i^\top G(y_t-\hat y_t)$ shrinks, the intrinsic-time clock $V_T(c)$ falls, and the diagnostic shifts mass from the on-diagram intrinsic-time loss $P_T$ to the off-diagram predictable mismatch $M_T$. In the other direction, an uninformative forecast leaves a residual at least as hard as the original loss, and on a truly unpredictable opponent, subtracting a forecast uncorrelated with the realized play makes the composite loss harder still, so the intrinsic-time loss rises. The ablation sweeps the forecast from perfect lookahead to a useless constant by mixing the true opponent move with the uniform action, $\hat y_t:=(1-\sigma)\,y_t+\sigma\,\mathrm{Unif}$ with $\sigma\in[0,1]$, so that $\sigma=0$ is the perfect-lookahead extreme discussed in \S\ref{sec:compensators} and $\sigma=1$ is a constant uniform forecast that contains no opponent information.

With perfect lookahead $\sigma=0$ the composite loss collapses to $c_t\approx 0$, so every $Q_t(c)\approx 0$ and the intrinsic-time loss vanishes. Against the no-forecast baseline, for each $\sigma$ the optimistic run incurs no more on the on-diagram axis than the unsupplemented run, and strictly less for $\sigma$ smaller than approximately $0.85$. As anticipated by the discussion of perfect lookahead in \S\ref{sec:compensators}, even a perfect forecast does not reduce the original-loss regret to zero on this game. The algorithm sees $c_t\approx 0$, makes essentially no update from the prior $\pi$, and accordingly accumulates an original-loss gap to the best fixed expert that is captured by the predictable-mismatch term $M_T$. Tuning the optimistic version's prior to be informed by the same forecast (so that $\pi$ already concentrates on the predicted opponent's best response) recovers the gap, but that is a comparator-design question and is orthogonal to the side-information attribution being tested here.


\subsubsection{Envelope tightness: lower side on adversarial paths, upper side on a single-spike construction}\label{sec:exp-results-envelope}

Theorem~\ref{thm:scaling-time} bounds the realized intrinsic-time loss of Algorithm~\ref{alg:tempo-family} on both sides:
\begin{equation}
\label{eq:exp-envelope}
2C\sqrt{\Gamma\,V_T(c)} - C^2\Gamma \;\le\; P_T(c) \;\le\; Q_*^T(c) + 2C\sqrt{\Gamma\,V_T(c)},
\qquad Q_*^T(c) := \max_{1\le t\le T} Q_t(c).
\end{equation}
The envelope is interesting only on paths where $V_T(c)$ accumulates fast enough that the leading $2C\sqrt{\Gamma V_T(c)}$ term dominates the lower-order $C^2\Gamma$ slack. Synthetic stochastic, martingale, and mixed paths in our setup all sit at small $V_T(c)\le 0.3$, so the lower bound is loose by construction (the scheduler is at the cap, $\eta_t=1$, and the realized intrinsic-time loss is essentially $\sum_t Q_t$, which is well-controlled but does not stress the bound). The cycling-adversarial family is the one that exercises the envelope, and the validation is focused there.

The realized intrinsic-time loss reaches $\approx 0.93$ of the envelope's leading term $2C\sqrt{\Gamma V_T(c)}$ at $T=8000$ across seeds, equivalently an unnormalized ratio $P_T/\sqrt{\Gamma V_T(c)}$ approaching $2C\approx 1.41$. The trend across $T\in\{500,\ldots,16000\}$ is increasing and monotone; the shortfall from $1$ at finite $T$ is the lower-order $C^2\Gamma$ initial tax together with the $\eta_{\max}=1$ cap binding on the early rounds. Cycle length and seed variation do not change the picture: across cycle lengths $\{10,25,50,100,200\}$ the realized intrinsic-time loss stays inside the envelope and its gap to the lower bound stays below the $C^2\Gamma$ initial tax in every regime tested. The terminal numbers for the predictable families appear in \S\ref{sec:exp-results-baselines}.

The upper side of the envelope is qualitatively different: it includes the maximal one-round loss term $Q_*^T(c)$, and as the discussion after Theorem~\ref{thm:scaling-time} shows, this maximum-jump correction is forced by a one-round example with $Q_1(c)=q$ at any $q\in(0,1/8]$. On the cycling-adversarial paths above $Q_t(c)$ stays close to its mean throughout the run, so $Q_*^T(c)$ does not exceed the typical $Q_t$ by much and the upper bound is loose. To probe the upper side numerically we instead use the construction the proof points to directly: a single-spike sequence whose $Q_*^T(c)$ is the dominant term. Lifting the loss range from $[0,1]$ to $[0,B]$ (still bounded but allowed to be larger) makes $Q_1$ as large as desired, since $Q_t\le (b_t-a_t)^2/8$ scales with $B^2$.

\paragraph{A single-spike construction.}
Take $K=2$ experts with uniform prior $\pi=(1/2,1/2)$ and $\Gamma=\log 2$. The sequence is $\ell_1=(0,B)$ on round one and $\ell_t=(0,0)$ on every round thereafter, so $Q_t(c)=0$ for $t\ge 2$ and the run reduces to a single non-trivial step. The cap binds at $\eta_1=1$ because $V_0=0$, giving
\[
Q_1\;=\;\frac{B}{2}\;-\;\log\!\left[\tfrac{1}{2}\!\left(1+e^{-B}\right)\right]\;\xrightarrow{B\to\infty}\;\frac{B}{2}-\log 2,\qquad P_T=Q_1,\qquad V_T=Q_*^T=Q_1.
\]
The upper bound at this $V_T$ becomes $Q_1+2C\sqrt{\Gamma Q_1}$, and the tightness ratio is the closed form
\[
\frac{P_T}{\textup{upper}}\;=\;\frac{Q_1}{Q_1+2C\sqrt{\Gamma Q_1}}\;\xrightarrow{Q_1\to\infty}\;1.
\]

The approach is slow: the ratio reaches $0.86$ only at $Q_1\approx 49$ and $0.95$ at $Q_1\approx 499$. The lower bound $\max(0,2C\sqrt{\Gamma V_T}-C^2\Gamma)$ saturates at zero for $V_T<C^2\Gamma=\log 2/2\approx 0.35$ and only catches up to $P_T$ at moderate $Q_1$. The maximum-jump correction $Q_*^T(c)$ is therefore necessary on its own merits: omitting it would leave a $Q_1$-dominant upper bound below the realized $P_T$ for any sufficiently large $Q_1$; the term protects the upper envelope on heavy-tailed-jump paths.

The information spent decides whether that envelope says anything about a given run (Figure~\ref{fig:envelope-informative}). Its width is $C^2\Gamma+Q_*^T(c)$, which has no dependence on $V_T(c)$ at all, so the bracket closes relative to an intrinsic-time loss that grows. Below $V_T=\Gamma/8$ the lower bound is identically zero, at $V_T=C^2\Gamma$ the cap releases and the intrinsic-time loss meets the lower bound exactly. Beyond it the two converge. Measured across twenty-three real-stream games spanning four decades of $V_T/(C^2\Gamma)$---three asset panels, four forecaster families, and a penalty sweep over phase-of-day specialists on two electricity streams---the realized intrinsic-time loss follows one closed form in $x$ alone. On the fifteen games that release the cap it tracks $P_T/(C^2\Gamma)=2\sqrt{x}-1$ to within a tenth of a percent. On the eight that do not, the cap holds $\eta_t=1$ throughout and the loss is the clock itself, $P_T=V_T$. The tightness ratio runs from $0.22$ to $0.99$. The split is sharp and interpretable: every game in which one expert comes to dominate stalls at a small clock and a vacuous bracket, because a learner that has found its expert stops accumulating; the bracket is informative exactly where the learner must keep accumulating.

\begin{figure}[tbp]
\centering
\includegraphics[width=\textwidth]{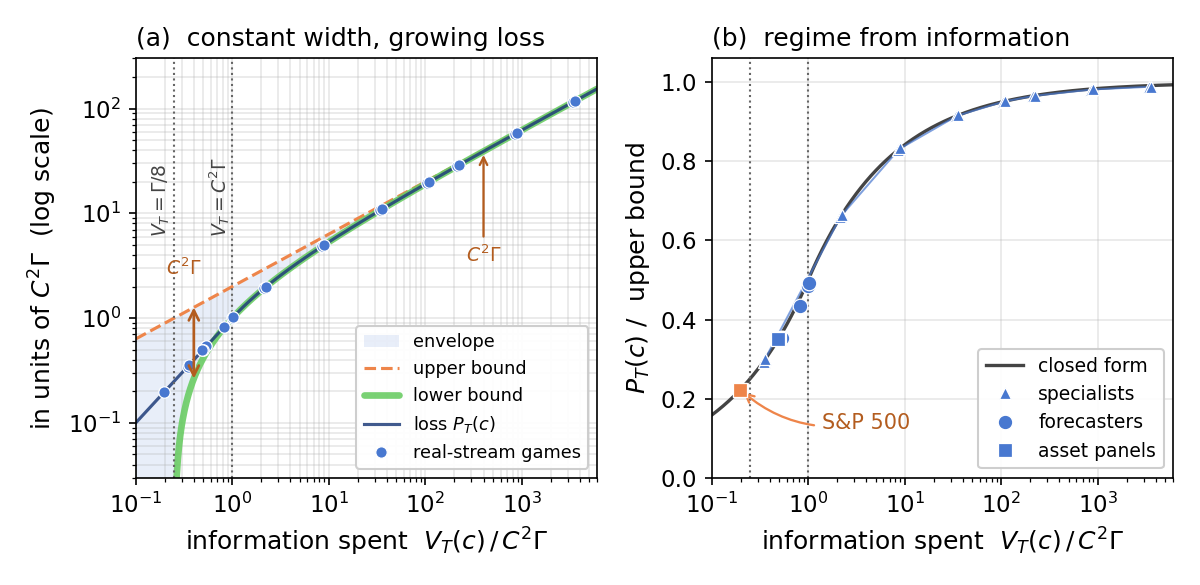}
\caption{\textbf{The information spent decides whether the envelope is informative.} \textbf{(a)} In units of the comparator budget $C^2\Gamma$, the bracket has constant width while the intrinsic-time loss grows, so it closes on a run that keeps accumulating. The arrows mark that one width at both ends of the axis, where it spans a sixth of the panel and where it has closed to a hairline. The dotted lines, named here and drawn bare in (b), mark $V_T=\Gamma/8$, where the lower bound leaves zero, and $V_T=C^2\Gamma$, where the temperature cap releases and the intrinsic-time loss meets the lower bound. \textbf{(b)} Twenty-three real-stream games collapse onto one closed form in $x=V_T(c)/C^2\Gamma$ alone, free of alphabet size, horizon and expert family. That form is $\sqrt{x}/2$ while the cap binds and $(2\sqrt{x}-1)/(2\sqrt{x})$ once it releases, with tightness running from $0.22$ to $0.99$. The games are three asset panels ($K=23,25,36$), four forecaster families, and a penalty sweep over phase-of-day specialists on two electricity streams. Both panels draw the curves at $Q_*^T(c)=0$. The marked point at the far left is the S\&P~500 path: its clock stalls below the activation threshold, which is why its bracket is four and a half times wider than the intrinsic-time loss it brackets. The penalty sweep is an instrument for traversing the axis, and no separate result is claimed from it.}
\label{fig:envelope-informative}
\end{figure}


\subsection{The intrinsic-time clock and the two updates}

\subsubsection{Variance proxy as a Taylor relaxation of $Q_t(c)$}\label{sec:exp-results-varproxy}

Two of the six schedules in the panel of \S\ref{sec:exp-protocol} are driven by the variance proxy $W_t(c)=\sum_{s\le t}\tfrac{1}{2}\Var_{i\sim p_s}(c_s(i))$ in place of the exact intrinsic-time clock $V_t(c)=\sum_s Q_s(c)$. The reading-off of these schedules in the decomposition figures relies implicitly on the small-$\eta$ expansion recorded after Theorem~\ref{thm:pacbayes-second-order}: since $\psi_t(0)=\psi_t'(0)=0$ and $\psi_t''(0)=\Var_{i\sim p_t}(c_t(i))$, the Taylor expansion of the cumulant gives
\begin{equation}
\label{eq:exp-varproxy-taylor}
Q_t(c)\;=\;\frac{1}{2}\Var_{i\sim p_t}(c_t(i))\;+\;O(\eta_t),
\end{equation}
so $V_t(c)$ and $W_t(c)$ agree at leading order with a residual that is linear in $\eta$. The variance schedule replaces an algorithm-defined exact quantity with its leading order; no upper-bound relation is claimed, and Theorem~\ref{thm:pacbayes-second-order} continues to hold along the resulting path with whatever $V_t(c)$ the variance-driven schedule actually realizes.

The relation \eqref{eq:exp-varproxy-taylor} is verified numerically on the four families. On a representative prior-retempered weight trajectory, the residual $\bigl|\eta^{-2}\psi_t(-\eta)-\tfrac12\Var_{p_t}(c_t)\bigr|$ is evaluated at each recorded $(p_t,c_t)$ pair over a logarithmic grid of hypothetical temperatures $\eta$, decoupled from the actually-played $\eta_t$. The round-averaged residual scales linearly in $\eta$, as \eqref{eq:exp-varproxy-taylor} predicts. The empirical log--log slope against $\eta$, over eight seeds, is $0.989$ on i.i.d.~stochastic, $0.985$ on martingale, $1.013$ on cycling adversarial, and $0.992$ on mixed-character paths, all within $\pm 0.02$ of the predicted exponent of $1$, with the cycling-adversarial family the highest because larger range translates into a larger third-cumulant correction. The absolute scale of the residual at $\eta=1$ is $\sim 10^{-5}$ on stochastic and martingale paths and $\sim 4\times 10^{-4}$ on adversarial paths, one to four decades below the variance itself, so the schedule rule $\eta_t=\min\{1,C\sqrt{\Gamma/W_{t-1}}\}$ deviates from the exact $\eta_t=\min\{1,C\sqrt{\Gamma/V_{t-1}}\}$ only in one or two trailing digits of $\eta_t$. The linear scaling is a per-round statement, and cumulative drift can compound when the schedule pushes $\eta_t$ above $1$ on long predictable paths; the cap $\eta_t\le 1$ in Algorithm~\ref{alg:tempo-family} confines the residual to its small-$\eta$ regime by construction.


\subsubsection{The tilted-variance reading of the intrinsic-time increment}\label{sec:eval-intrinsic-time-clock}

Proposition~\ref{prop:tilted-var} states that the intrinsic-time increment is an exact integral of tilted variances,
$Q_t(c)=\int_0^1(1-s)\,\Var_{i\sim p_{t,s}^{(\eta_t)}}\!\big(c_t(i)\big)\,ds$,
where $p_{t,s}^{(\eta)}$ is the $s$-tilted interpolation from the played distribution to its $\eta$-exponential tilt.
The configurations span low-spread (near-flat integrand) and high-spread / bimodal (curved integrand) score configurations, sweeping the temperature $\eta$, the support size $K$, and three score families, with five seeds at each ($180$ in all); the integral is evaluated by Gauss quadrature and compared against the closed form $Q_t(c)=\phi_t(\eta_t)/\eta_t$.

The closed form and the quadrature evaluation agree throughout, and the curvature summary cleanly separates the two regimes.
In the low-spread family the ratio $Q_t/W^{(1)}_t$ of the increment to its first-order variance proxy is $1.000$ to three decimals as $\eta_t\to 0$,
where the integrand is flat and the tilted variances coincide.
In the high-spread and bimodal families the ratio departs from $1$ at $\eta_t=1$, exactly the regime where the average-of-tilted-variances structure differs from a single variance term.
The intrinsic-time increment is therefore the realized average of the tilted variances the Bayes-rule update generates on that round, confirming Proposition~\ref{prop:tilted-var}.

That curvature is operative on real online sequences, and the same diagnostic run on three real-style streams --- a UCI digits log-loss stream, an intraday-electricity-like stream, and a planted-shift stream --- locates it at the round level.
The proxy is accurate in aggregate---the mean per-stream ratio of proxy to exact increment is between $0.98$ and $1.02$---yet it departs by more than five percent from the exact increment on a majority of individual rounds.
This happens on $51\%$, $22\%$, and $93\%$ of rounds on the three streams, $56\%$ on average, with the largest departures on the planted-shift stream.
The curvature distinction is a round-level phenomenon that averages out: the first-order proxy is a good cumulative summary but not a faithful per-round substitute, confirming that the tilted-variance reading of Proposition~\ref{prop:tilted-var} holds on real data.

In magnitude, and in what it costs to drive a schedule by it, the proxy is nonetheless tight; \S\ref{sec:exp-results-varproxy} verifies that it is exact to leading order in the rate.
The full-magnitude surrogate tightness, together with the realized-regret cost of actually driving the schedule by the proxy, is measured on an i.i.d.\
stochastic and a cycling-adversarial family at $K=8$, $T=400$.
The cumulative ratio $W_T/V_T$ stays in $[0.982,\,1.05]$, so the proxy reads the exact clock to within about five percent in magnitude across both families.
Driving the schedule by the proxy instead of the exact clock costs no measurable extra regret on the i.i.d.\ family (regret difference $0$) and a small, separated penalty on the cycling-adversarial family (regret difference $5.0\times10^{-3}$),
where the proxy slightly under-reads the clock and the late-round rate is held below one.
The proxy is thus a faithful relaxation of the exact clock whose only realized cost is a small regret penalty on adversarial paths, precisely the regime in which the second-order curvature term separates from its linear surrogate.

The paper claims that the retempered Bayes update and the local pressure-target update are driven by a single round-by-round information functional $Q_t(c)$, evaluated at each recursion's own played distribution.
The realized $Q_t$ paths of the two updates are compared on the same two families at $K=8$, $T=400$.
At a common temperature the two updates' increments coincide throughout. Under adaptive,
differing rates they diverge, the divergence monotone in the total-variation gap between the two played distributions (Spearman rank correlation $0.83$ on the cycling-adversarial family) and collapsing to coincidence on the i.i.d.\ family,
where the played distributions stay close.
The two updates are thus the same functional evaluated along two played-distribution paths.

\subsection{Schedules and controllers at scale}

\subsubsection{The exact identity across the $K$-grid and horizon sweep}\label{sec:exp-results-identity-scale}

%
The comparator-by-comparator check of \S\ref{sec:exp-results-comparator} certifies the prefix identity on $K=8$, $T=1500$. The grid-level extension checks that the same control holds across the algorithm-relevant range of $(K,T,\text{regime})$ combinations, so that the certification is not a small-$K$ artifact. The residual $\Delta_T:=R_T^c(\rho)-(P_T+D_T+B_T(\rho))$ is evaluated over $960$ configurations spanning $K\in\{8,16,32,64\}$, $T\in\{10^3,10^4\}$, two schedules, twenty seeds, and three sequence families: the i.i.d.\ stochastic and cycling-adversarial families of \S\ref{sec:exp-protocol} together with a segment-shifting family. The headline uses $\rho=\delta_{i^*}$, the single best expert in hindsight; identical control is verified for the diffuse comparators (top-$k/2$ uniform mixture, Laplace-smoothed frequency) on the $K\in\{8,16,32\}$ subgrid. The identity closes across the grid. It has no slack to close to, since the theorem leaves none; the sweep therefore tests the implementation.


%
\subsubsection{The exact identity on real language-model log-loss}\label{sec:eval-llm-ensemble-identity}

The synthetic grid of \S\ref{sec:exp-results-identity-scale} certifies the three-piece decomposition on constructed loss families.
The same exact identity governs genuine, unbounded log-loss data.
Four small open-weights language models (Pythia-70M, Pythia-160M, GPT-2, and Pythia-410M) are scored forward-only over $50\,000$ tokens of natural text, giving a per-token negative-log-likelihood matrix whose columns have mean losses $4.22$, $3.66$, $3.64$, and $3.15$ nats, monotone in model scale.
Treating the models as experts and their per-token surprisals as the loss vectors, the prefix decomposition $R_T^c(\rho)=P_T+D_T+B_T(\rho)$ is evaluated on this real matrix under five predictable schedules: the retempered square-root rule at the best-single and uniform comparators, and fixed temperatures $\eta\in\{0.1,0.5,1.0\}$.
Against a cumulative log-loss of order $1.6\times 10^{5}$ nats, the decomposition closes under all five schedules: the identity is a property of real language-model log-loss.
The retempered ensemble tracks the strongest single model to within $13.9$ nats over the $50\,000$ tokens ($3.1494$ against $3.1491$ nats per token). The static uniform mixture-of-experts predictor reaches $2.47$ nats per token, well below any single model, a genuine diversity gain that the tracking regret and the mixture log-loss measure as distinct quantities.

\subsubsection{The fixed-scale equalizer}\label{sec:exp-results-equalizer}

Let $W_\eta(S):=\eta^{-1}\log\sum_i\pi(i)e^{\eta S(i)}$ be the closed-form Bellman potential of the fixed-scale game. The pointwise increment identity $W_\eta(S_t)-W_\eta(S_{t-1})=\eta Q_t(c)$ is stronger than the variance-proxy bound: the equalizer is the exact log-partition potential and the increment identity holds on every realized path with no slack, providing a complementary single-step verification to the global decomposition check of \S\ref{sec:exp-results-comparator}. Across the $(K,T,\eta,\text{seed})$ grid of \S\ref{sec:exp-results-identity-scale} the increment identity closes throughout. The functional form $\eta^{-1}\log\sum\pi(\cdot)e^{\eta\,\cdot}$ is invariant under the choice of base measure in any exponential family that contains $\pi$; switching base measures shifts $W_\eta$ by an additive function of $\eta$ alone (the centered cumulant generating function of the reference law).


%
%
%
\subsubsection{The anatomy of the empirical-Bernstein bound's slack}\label{sec:eval-bound-slack}

The exact chain says more about the standard analysis when the two are subtracted.
At a fixed rate the identity reads $R_T^c(u)=B_T(u)+\eta V_T$ with $B_T(u)=(\KL(u\|\pi)-\KL(u\|q_{T,\eta}))/\eta$.
The reference bound is the empirical-Bernstein one \cite{audibert2007c}, $\KL(u\|\pi)/\eta+\eta^{-1}(e^{\eta}-1-\eta)\sum_t\Var_{p_t}(c_t)$, which relaxes each increment through the realized variance of the played distribution.
A range bound would serve here too, and would make every share below look better, but it is not what a practitioner would deploy: it ignores the played distribution and is loosest exactly where the learner has concentrated.
Their difference --- the bound's slack over the realized regret --- splits exactly into two nonnegative pieces.
The first is the terminal comparator-information residual $\KL(u\|q_{T,\eta})/\eta$, the part of the prior budget the final weights still hold;
the second is the variance overhang $\eta^{-1}(e^{\eta}-1-\eta)\sum_t\Var_{p_t}(c_t)-\eta V_T$, the amount by which the variance relaxation over-bounds the realized intrinsic-time loss, nonnegative round by round by the Bennett cumulant bound.
Which piece accounts for the slack is charted first on a synthetic full-information grid spanning $320$ configurations:
two loss families (i.i.d.\ Gaussian and two-cluster), rates $\eta\in\{0.01,0.03,0.1,0.3,1\}$, horizons $T\in\{200,1000,5000,20000\}$, several spreads and expert counts, ten seeds at each.
The two shares reconstruct the slack across the grid,
and the map is cleanly monotone.
The terminal residual dominates the slow, unconverged corner and decays as either knob grows.
Its grid-averaged share falls from $0.78$ at $\eta=0.01$ to $0.001$ at $\eta=1$, and from $0.64$ at $T=200$ to $0.026$ at $T=20{,}000$, with the variance overhang taking the complement.
Against the looser $\eta^2\Var_{p_t}(c_t)$ relaxation the same two averages run $0.60$ to $0.001$ and $0.54$ to $0.001$.
Sharpening the reference therefore lifts the terminal share on every one of the $320$ cells, and lifts it most where the rate is smallest, which steepens both decays without turning either around.
The direction runs opposite to the natural conjecture, which has the terminal share growing with the rate;
the measured map shows the residual belongs to the regime where the learner has not yet converged, at small $\eta$ and short $T$.
In the slow, short regime the recoverable slack therefore sits in the terminal residual, so exact-budget methods pay off there,
while in the fast, long regime it sits in the variance overhang, which the realized intrinsic time replaces.
Both loss families show the same structure.

The same two-way split computed round by round on five real streams under importance-weighted bandit feedback, via a supervised-to-bandit reduction, transfers the accounting from controlled data to real limited-feedback play unchanged.
The streams are three stationary (\texttt{digits}, $K=10$; \texttt{covtype}, $K=7$; \texttt{letter}, $K=26$) and two concept-drift (\texttt{elec2}, $K=2$; \texttt{covtype-drift}, $K=7$),
with a synthetic piecewise-stationary control and the full-information grid as bridges.
The split closes pathwise on every stream.
On every real stream the variance overhang accounts for almost the whole slack (share $0.86$ to $1.00$; Figure~\ref{fig:eval-bound-slack-realstreams}):
on real data the loose part of the standard analysis is the variance relaxation, and neither the terminal residual nor the realized intrinsic-time loss.
The terminal residual is small on the stationary streams ($0.001$ to $0.113$) and rises on the drift stream ($0.141$ on \texttt{covtype-drift}) ---
measured on a real drifting trajectory, this is the spike the restart trigger of \S\ref{sec:eval-slack-restart-drift-streams} actuates on.
Sharpening the reference bound moves these shares in the expected direction without changing the reading: against the looser $\eta^2\Var_{p_t}(c_t)$ relaxation the same five streams give overhang shares of $0.89$ to $1.00$ and terminal residuals of $0.000$ to $0.110$. The conclusion therefore survives the harder of the two tests.
The loss share of the realized regret spans $0.77$ to $2.47$, exceeding one where the terminal term is a gain,
which is the drift signature the three-share diagnostic reads.

\begin{figure}[tbp]
\centering
\includegraphics[width=\textwidth]{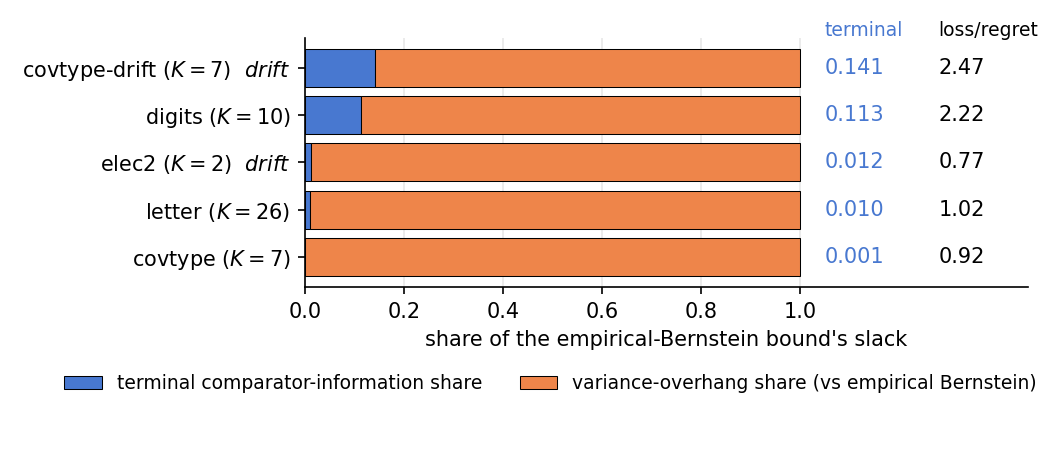}
\caption{\textbf{The variance overhang accounts for essentially the whole slack on every real stream, and the terminal comparator-information share rises under drift.} Trajectory-averaged shares of the empirical-Bernstein bound's slack on real limited-feedback streams, with the loss share of the realized regret at the right. The reference is the sharp Bennett relaxation $\Var_{p_t}(c_t)(e^{\eta}-1-\eta)/\eta$; no range bound enters anywhere, so this is the harder of the two tests. The two shares partition the slack, so each bar sums to one; the terminal share is printed because it is the small quantity the reading depends on. It is near zero on the stationary streams and largest on \texttt{covtype-drift}, the drift stream that the restart trigger of \S\ref{sec:eval-slack-restart-drift-streams} actuates on. The loss share exceeds one exactly where the terminal term is a gain.}
\label{fig:eval-bound-slack-realstreams}
\end{figure}

The terminal residual $\KL(u\|q_{t,\eta})/\eta$ needs only the current weights and the candidate comparator,
so it is computable online at every round --- the classical analysis discards it, and the exact chain accounts for it.
A $27$-point grid checks whether monitoring it stops a run better than the standard empirical-regret-plateau heuristic: three loss families (i.i.d.\ Gaussian, drift, and near-converged-then-shift) crossed with rates and structural parameters, twelve seeds at each, with Hedge run to $T=20{,}000$.
On each realized trajectory three rules are scored by the excess regret left at their stopping time and by the rounds spent past the oracle stopping time:
stop when the computable residual falls below a threshold, stop at the first empirical-regret plateau, and a fixed-horizon reference.
The residual rule leaves lower mean excess regret than the plateau heuristic at all twenty-seven (grid means $0.040$ against $0.066$, with the paired difference negative on average).
The advantage is cleanest on the drift and i.i.d.\ families, where the empirical-regret curve plateaus prematurely while the residual keeps falling,
so the residual-monitored run continues until the comparator is genuinely tracked.
The cost is conservatism: the residual rule spends more rounds past the oracle stopping time ($42$ on average against the plateau rule's $8$),
because it waits for a certified quantity where the heuristic reacts to the first flat stretch.
The computable residual is a usable stopping signal that attains lower final regret at the cost of later stopping.

\subsection{Fast rates and shifting comparators}

\subsubsection{Stochastic luckiness and the fast rate}\label{sec:exp-results-luckiness}

Theorem~\ref{thm:fixed-luckiness} predicts that, when the comparator-centered low-noise condition \eqref{eq:rho-massart} holds with constant $\kappa_\rho$, fixed-rate Hedge with $\eta=\min\{1,1/(2(e-2)\kappa_\rho)\}$ enjoys expected composite-loss regret bounded by $2(1+2(e-2)\kappa_\rho)\KL(\rho\|\pi)$, which is \emph{constant in $T$}. Corollary~\ref{cor:budget-luckiness} extends the conclusion to the second-order retempered schedule of Algorithm~\ref{alg:tempo-family} without requiring advance knowledge of $\kappa_\rho$, at the cost of an additional bounded $C^2\Gamma+Q_*^T(c)$ initialization tax.

The \emph{well-specified} family takes $K=8$ experts with mean vector $\mu_0=0.2$ and $\mu_i\in\{0.5,0.55,\ldots,0.8\}$ for $i\ge 1$, additive Gaussian noise $\xi_t(i)\sim\mathcal{N}(0,0.10^2)$, and clipping to $[0,1]$.
The minimum gap is $d_{\min}=0.30$, and Corollary~\ref{cor:point-luckiness} gives the sufficient condition $\kappa_\rho\le 1/d_{\min}\approx 3.34$ for the comparator $\rho=\delta_0$.
Numerically, the empirical maximum $\widehat\kappa_\rho := \max_{i\ne 0}\, \widehat{\E}\bigl[(c_t(i)-c_t(0))^2\bigr]/(\widehat\mu_i-\widehat\mu_0)$ on a $20{,}000$-round realization is $\widehat\kappa_\rho\approx 0.63$, well below the $1/d_{\min}$ ceiling and giving a theorem bound of $2(1+2(e-2)\widehat\kappa_\rho)\log K\approx 7.92$ for the constant-regret prediction.
The \emph{misspecified} family has $\mu_i\equiv 0.5$ for every $i$; with no separation the per-expert ratio diverges (the denominator $\mu_i-\mu_{k^\ast}$ vanishes at population level), so the low-noise hypothesis fails.
We run both families at horizons $T\in\{500,1000,2000,4000,8000,16000\}$ over eight seeds, with two schedules each: fixed-rate Hedge at the theorem-prescribed $\eta$, and the second-order retempered schedule with $\Gamma=\log K$.

\begin{figure}[tbp]
\centering
\includegraphics[width=\textwidth]{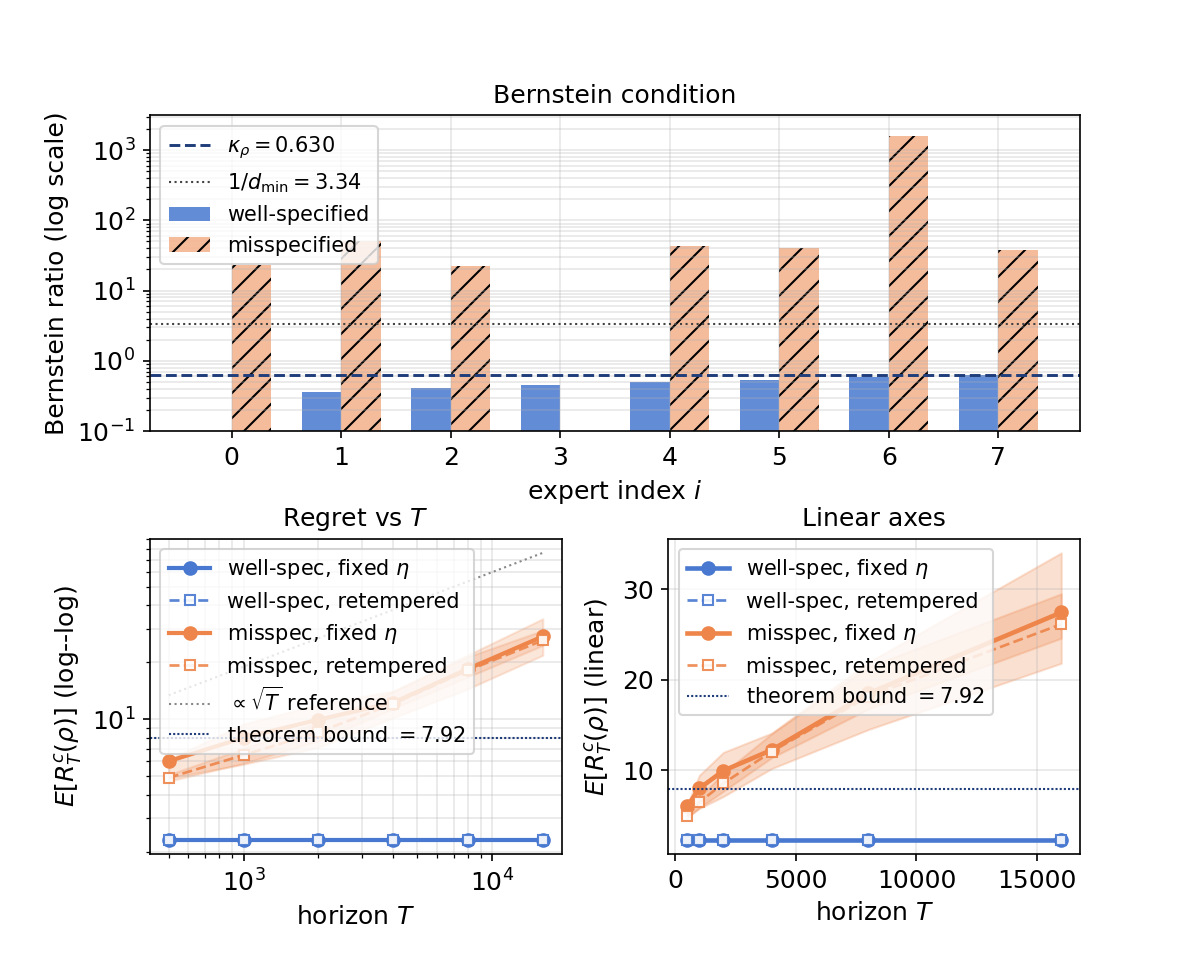}
\caption{\textbf{Regret stays flat on the well-specified family and grows at the $\sqrt{T}$ rate once the Bernstein condition fails.} The panels verify Theorem~\ref{thm:fixed-luckiness} and Corollary~\ref{cor:budget-luckiness}. \textbf{Top:} per-expert Bernstein ratios $\widehat\E[(c_t(i)-c_t(k^\ast))^2]/(\widehat\mu_i-\widehat\mu_{k^\ast})$ on a $20{,}000$-round realization, log $y$-axis. Well-specified family (blue) sits uniformly below $\widehat\kappa_\rho\approx 0.63$ (dashed navy) and below the gap-derived ceiling $1/d_{\min}\approx 3.34$ (dotted gray); misspecified family (hatched orange, $\mu_i\equiv\mu_{k^\ast}$) shows ratios one to four orders of magnitude higher because the gap is essentially zero. \textbf{Bottom left:} regret vs $T$ on log--log axes; the well-specified curves are flat (empirical slopes $\approx 0.00$ for both schedules) while the misspecified curves grow with empirical slopes near the predicted $\sqrt T$ value $0.5$ ($+0.43$ fixed and $+0.49$ retempered in this run; a replication reports $+0.58$). The dashed horizontal line is the theorem-prescribed bound on the well-specified family. \textbf{Bottom right:} same data on a linear scale with inter-quartile bands across seeds. The two schedules are visually indistinguishable on the well-specified family (both at the cap $\eta=1$ throughout); on the misspecified family they coincide because the retempered schedule also sits at the cap until $V_t$ accumulates.}
\label{fig:bernstein-luckiness}
\end{figure}

The well-specified family produces regret $2.29\pm 0.02$ at every $T$ tested, an empirical slope of $\approx 0.00$ on log--log axes, and a value comfortably inside the theorem bound of $7.92$ (Figure~\ref{fig:bernstein-luckiness}). The misspecified family produces a regret that grows from $4.8$ at $T=500$ to $30.4$ at $T=16{,}000$, with empirical slope $0.43$--$0.58$ across runs, consistent with the predicted $\sqrt T$ rate.

The well-specified empirical regret of $2.29$ is approximately $0.29\KL(\delta_0\|\pi)/\widehat\kappa_\rho$ at the realized $\widehat\kappa_\rho$, well inside the theorem's pre-factor $2(1+2(e-2)\widehat\kappa_\rho)\approx 3.81$, confirming that the fixed-rate bound is the right shape but not asymptotically tight. The second-order retempered schedule attains the same fast rate as the fixed-rate Hedge without ever using the value of $\widehat\kappa_\rho$, exactly as Corollary~\ref{cor:budget-luckiness} predicts: on this construction $\widehat\kappa_\rho$ is small enough that the retempered rule sits at the cap $\eta_t=1$ throughout. The cap-binding regime is precisely where the retempered and fixed-rate updates coincide. The same retempered schedule reverts to the $\sqrt T$ rate on the misspecified family, so one algorithm tracks both regimes without configuration changes.

The retempered schedule's fast-rate behavior on this family does not exhibit the additional $2\Gamma+2\,\E[Q_*^T(c)]$ tax that Corollary~\ref{cor:budget-luckiness} adds to the fixed-rate bound, because $V_T(c)$ never reaches the threshold at which the schedule rule starts cooling.


\subsubsection{Sleeping experts and shifting comparators}\label{sec:exp-results-shifting}

Section~\ref{sec:return-luck} extends the prefix identity to comparators that shift across $k$ segments, replacing $\KL(\rho\|\pi)$ by a sum of segmentwise relative-entropy terms. The guarantee is exact; the empirical question is whether real sequences supply a shifting comparator worth competing with. We test that on four real streams---household and regional electricity demand, transformer load, and an equity panel. The setup fixes one family of eight standard forecasters (persistence, daily and weekly seasonal naive, rolling means, phase-of-day climatology, drift extrapolation), losses normalized to $[0,1]$ by a scale fixed on an excluded warm-up prefix, and $k=4$ contiguous equal segments. The paper's schedules, a fixed-share wrapper \cite{herbster1998tracking} at $\alpha=0.02$, and the adaptive baselines of \S\ref{sec:exp-results-baselines} compete against the best fixed forecaster and the per-segment leader. A planted-change-point construction ($k=4$ segments of length $500$, $T=2000$, $K=8$, per-segment leader at loss mean $0.15$ against off-leader means in $[0.45,0.75]$) runs through the same harness as a positive control.

\begin{figure}[tbp]
\centering
\includegraphics[width=\textwidth]{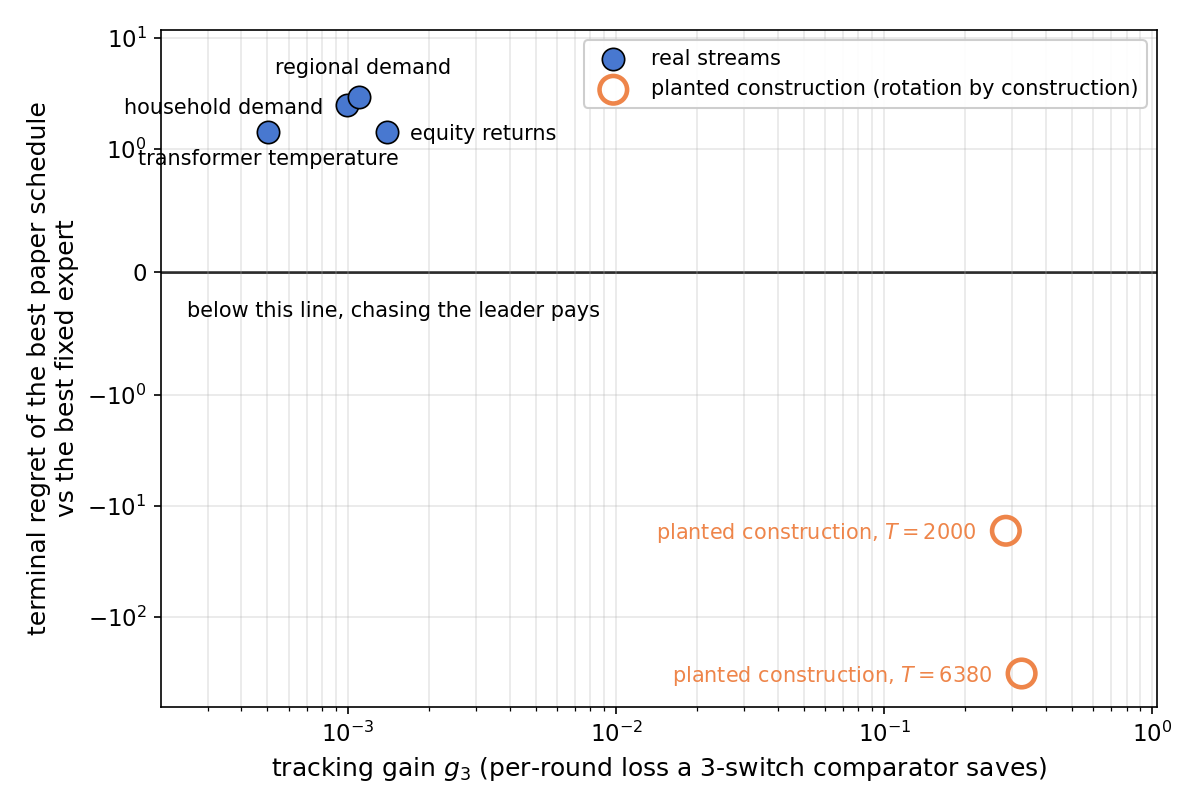}
\caption{\textbf{Real streams and the planted construction fall on opposite sides of the zero line.} Each point is one stream: the terminal regret of the paper's best schedule against the best fixed expert (vertical, symmetric-log) against the tracking gain $g_3$ that the loss matrix reports before any algorithm is run (horizontal, log). The four real streams sit at $g_3\le 1.4\times 10^{-3}$ and finish above the zero line, so no schedule beats the best fixed forecaster on any of them. The planted change-point construction, whose leaders rotate by design, sits two and a half decades to the right and far below the line at both horizons. The separation is a regime gap, and $g_3$ reports which side a stream falls on in advance.}
\label{fig:tracking-gain-scale-gap}
\end{figure}

\textbf{No algorithm beats the best fixed forecaster on any of the four real streams.} The best a paper schedule attains is $+1.41$, and the parameter-free hedges finish in the same band; the prefix identities close throughout. Two measurements account for the outcome. The four-segment shifting comparator is \emph{degenerate} on every real stream---one forecaster leads all four segments, so it coincides with the best fixed forecaster, and the regret to the two agrees. And the \emph{tracking gain} $g_m$, the per-round loss that the best comparator making at most $m$ switches saves over the best fixed expert, computable from the loss matrix before any algorithm is run, is $0.284$ on the planted construction against $0.0005$ to $0.0014$ on all four streams (Figure~\ref{fig:tracking-gain-scale-gap}). The gap is not a shortage of rotation. Lifting the switch budget entirely raises the real streams to $0.09$--$0.23$, comparable to what the construction extracts with three switches, but reaching it takes between five and twenty-seven thousand switches. Real streams lack \emph{persistent} regime structure, so a shifting comparator on any realistic budget has correspondingly little to compete for. The same harness on the construction reaches $-16.7$ at $T=2000$ and $-323.8$ at $T=6380$, so the null is a property of the data.

Leader-change counts do not stand in for $g_m$: the equity stream's trailing-window leader changes $432$ times per thousand rounds and the regional-demand stream's changes $15$ times, with comparable tracking gains and no advantage on either. Nor does designing the experts to rotate change the outcome. A specialist family in which one of eight phase-of-day experts is awake per round---the soft-specialist reduction of \S\ref{sec:compensators}, which keeps every information term finite and the schedule predictable---again leaves every algorithm above the best fixed expert, its rotation likewise spread over thousands of switches. That arm does separate the two updates sharply, the local pressure-target schedule finishing at $+1.75$ against the retempered chain's $+61.3$, which is the behavior \S\ref{sec:schedules} predicts for round-calibrated play. The negative result licenses one reading: leader-chasing gains shown on planted change-point sequences are not evidence about streams whose segments were not planted, and $g_m$ distinguishes the two cases in advance.

\subsubsection{The comparator-information term as a restart trigger on real drift streams}\label{sec:eval-slack-restart-drift-streams}

The terminal comparator-information term of the exact chain is computable online for any chosen comparator.
Against a trailing-window comparator its increments spike when the environment shifts and the converged weights go stale,
so a CUSUM on those increments is a drift trigger with one scale-free threshold.
Restarting the weights toward uniform when it fires gives a tracking policy for the shifting-comparator setting of Section~\ref{sec:return-luck} under bandit feedback.
This evaluation scores that policy on two real concept-drift streams cast as importance-weighted bandits ---
\texttt{covtype-drift} ($K=7$, elevation-ordered so the concept drifts) and the \texttt{elec2} electricity stream ($K=2$) ---
plus a synthetic piecewise-stationary control.
The baselines are sliding-window UCB, discounted UCB, periodic-restart EXP3, and a no-restart ablation that isolates the trigger's value,
with the threshold swept over $\{1,2,4,8\}$, fifteen seeds at each, and $T=20{,}000$.
The scoring comparator is the per-segment-oracle dynamic regret:
against a single fixed arm every algorithm posts negative regret on a drifting stream, each beating the one stale arm, so the fixed comparator cannot rank the policies.

On \texttt{covtype-drift} the slack-triggered restart attains the lowest segmented dynamic regret across the whole threshold sweep:
$542$ on average (range $532$ to $551$), against $753$ for sliding-window UCB, $1564$ for the no-restart ablation, $2217$ for discounted UCB, and $2727$ for periodic-restart EXP3.
It wins on precision: about $1.5$ restarts per run against the periodic policy's $7.0$,
with false restarts at zero in eleven of the twelve configurations (at worst $0.07$ per run).
Its recall is low by design ($0.03$ to $0.23$ on the real streams); the advantage comes from placing few restarts well.
The balance closes pathwise through the restarts throughout, so the restart logic leaves the accounting the trigger is built on intact.
The other streams bound the claim.
On \texttt{elec2} the no-restart ablation sits lower ($432$ to $449$ against the restart policy's $495$ to $550$),
and the window baselines post negative segmented regret there by adapting within segments.
On the synthetic control, where change points are dense and easy (recall $0.64$ to $0.91$), sliding-window UCB is lowest.
The trigger earns its keep where change points are sparse and expensive to miss, and hands the advantage back where a window already suffices.

\subsection{Applications}

\subsubsection{Benchmark traces: the universal-portfolio suite and intraday electricity}\label{sec:exp-results-realdata}

The synthetic core (\S\ref{sec:exp-results-shares}--\S\ref{sec:exp-results-shifting}) controls the difficulty regime to expose specific predictions; the two evaluations below run on genuinely real paths, an open universal-portfolio suite and an intraday-electricity stream.

\subsubsection{The exact decomposition and the exact budget on real financial paths}\label{sec:eval-olps-realdata}

The decomposition figures of \S\ref{sec:exp-conventions} report the prefix identity \eqref{eq:exp-prefix-decomp} on matched synthetic families; we verify it on genuinely real financial return streams, and then isolate the exact-budget mechanism against a softer comparator.
The five canonical universal-portfolio benchmark datasets --- NYSE(O), NYSE(N), SP500, DJIA, and MSCI --- are read as open per-period price series and normalized to genuine per-period relative prices.
Experts are single assets with log-loss $\ell_t(i)=-\log r_{t,i}$; the learner plays the prior-retempered update $p_t(i)\propto\pi(i)e^{-\eta_t C_{t-1}(i)}$ with uniform prior.
This is the framework's native log-loss game, distinct from the portfolio-wealth game of universal-portfolio selection.

The prefix identity holds on the real streams, across the five admitted datasets, three schedules, and two comparators.
The same identity closes for the retempered pressure-target schedule, so the local-update form of Corollary~\ref{cor:pressure-regret} and the retempered form of Theorem~\ref{thm:pacbayes-second-order} are both properties of the realized financial paths.
As on the long i.i.d.\ synthetic paths, the boundary term must be evaluated by its stable closed form $A_T(\eta_T)-\ip{\rho}{C_T}$.
On these multi-thousand-round real streams the played posterior collapses onto a leader distinct from the best-in-hindsight expert, so a naive final-posterior divergence underflows and reports a spurious infinite residual on a minority of runs.
The three-share decomposition varies interpretably across the real datasets: the intrinsic-time-loss share dominates on the high-asset-count NYSE series ($0.59$ and $0.50$), while the shorter series spread the budget more evenly across payment, temperature drift, and terminal-comparator information (DJIA $0.35/0.20/0.46$, MSCI $0.38/0.28/0.34$).
The decomposition is diagnostic: it separates datasets on which the algorithm's regret is dominated by the cost of learning a persistent leader from those on which it is dominated by the residual information the final posterior still retains about the comparator.

In the schedule head-to-head, measured by cumulative log-loss regret per round against the best single asset, the second-order square-root schedule is run at the worst-case budget $\Gamma=\log K$ this comparator prescribes, with no temperature cap.
It attains lower per-round regret than the parameter-free AdaHedge reference \citep{derooij2014adahedge} on three of the five datasets (SP500, DJIA,
and MSCI) and near-ties on the two long-horizon NYSE series (NYSE(O) $0.000565$ vs $0.000552$; NYSE(N) $0.000412$ vs $0.000408$).
The advantage is concentrated on the short, warm-optimal paths: on DJIA and MSCI the properly-tuned schedule reaches $0.000475$ and $0.000245$ against AdaHedge's $0.000532$ and $0.000254$, and the budget $\Gamma=\log K$ flips SP500. A fixed under-tuned rate does not.
Bootstrap confidence intervals over rolling-window starts accompany the square-root regret rate.
Against the best single asset the exact terminal-comparator budget equals the worst-case $\log K$, so no smaller penalty is available and the outcome depends on temperature alone.
AdaHedge's gap-implied rate under-warms on the three short paths; on the two long, well-conditioned NYSE series its adaptive temperature is already near-optimal and the uncapped clock slightly over-warms.

The exact identity's mechanism is therefore visible only against a comparator that holds \emph{fewer} than $\log K$ nats.
We sweep the pooled comparator $\rho_m$ that places uniform mass on the $m$ best-in-hindsight assets: its information budget is $\KL(\rho_m\|\pi)=\log(K/m)=:B_m$, tracing the budget down from $\log K$ (at $m=1$, the best single asset) as $m$ grows.
Regret is measured against that same $\rho_m$ for AdaHedge (agnostic $\log K/\Delta$
\cite{derooij2014adahedge}), an exact-budget schedule that spends $B_m$ instead of $\log K$, and the paper-native retempered pressure-target rule run with cumulative information spend $B_m$.

Against the softer comparators, replacing the worst-case constant $\log K$ with the exact budget $B_m<\log K$ attains lower per-round regret than AdaHedge on the two long-horizon, high-asset-count NYSE series, and the advantage grows \emph{monotonically} as the budget shrinks.
Figure~\ref{fig:locpress-vs-adahedge-penalty-ratio} reports AdaHedge's per-round regret ratio to the exact-budget schedule per comparator:
on the NYSE series the ratio keeps rising with $m$, reaching roughly $1.7\times$ at the softest comparator, the behavior the identity predicts: a smaller penalty helps more as the comparator softens.
At $m=1$ the two schedules coincide, since $B_1=\log K$ makes the exact budget equal to the worst-case constant.

\begin{figure}[tbp]
\centering
\includegraphics[width=\textwidth]{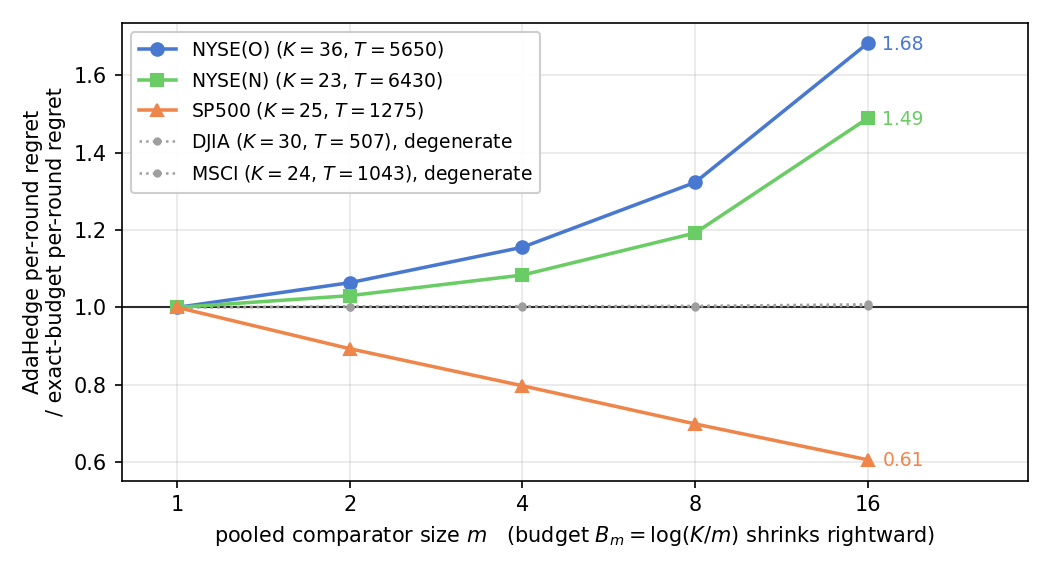}
\caption{\textbf{Shrinking the comparator budget separates the two long NYSE series from the short SP500 series, which reverses.} Ratio of AdaHedge per-round regret to the exact-budget schedule's per-round regret, against the pooled comparator $\rho_m$ (uniform over the $m$ best assets, budget $B_m=\log(K/m)$), on the open universal-portfolio benchmark suite; a ratio above $1$ means the exact budget wins, and the budget shrinks to the right. On the two long-horizon large-$K$ NYSE series the advantage is substantive and grows monotonically as the budget shrinks, reaching $1.68$ and $1.49$ at $m=16$. On the short SP500 series it reverses, falling to $0.61$. The very short DJIA and MSCI series have a degenerate pooled comparator for $m\ge 2$---their regret to $\rho_m$ turns negative, so the ratio is uninformative there---and are drawn in gray.}
\label{fig:locpress-vs-adahedge-penalty-ratio}
\end{figure}

The effect is regime-dependent.
On the short SP500 series the exact-budget schedule loses to AdaHedge and worsens as the comparator softens.
The diagnosis is a temperature-scale mismatch.
AdaHedge runs a warm schedule there (mean temperature $\approx 9.5$), which is the right regime for this short path, whereas the smaller exact budget $B_m<\log K$ structurally prescribes a colder square-root clock, reaching a mean temperature of only $\approx 3.1$ even when the temperature cap is removed entirely.
The reversal is thus a cold-schedule artifact of a warm-optimal short horizon.
No single global adjustment (raising the temperature cap, or normalizing by the loss scale) recovers SP500 without disturbing the NYSE series, because the three small-scale datasets share a similar loss scale and a scale-aware rule cannot single out SP500.
Every modified schedule keeps the exact prefix identity closed.
The worst-case constant $\log K$ is therefore not improved upon on the short,
warm-optimal path, and the exact budget helps precisely when the standard temperature is not already over-warm.

%
\subsubsection{Variance-proxy tightness and schedule ordering on the real electricity trace}\label{sec:eval-varproxy-ordering-electricity}

The variance-proxy relaxation \eqref{eq:exp-varproxy-taylor} and the schedule ordering are checked here on a genuine intraday-electricity trace (the ELEC2 stream, $45\,000$ rounds), replacing the matched-profile synthetic surrogate used previously.
Eight experts predict the up-or-down price move from the real market features under a zero-one loss; the retempered play is run to obtain the weight trajectory, and $Q_t(\eta)=\eta^{-2}\psi_t(-\eta)$ is compared with its relaxation $\tfrac12\Var_{p_t}(c_t)$ across a temperature grid.
On the rounds where the play retains genuine dispersion the ratio $Q_t(\eta)/\tfrac12\Var_{p_t}(c_t)$ rises monotonically toward $1$ as $\eta\to 0$, reaching $0.998$ at $\eta=0.0125$, and the round-averaged residual $|Q_t-\tfrac12\Var_{p_t}(c_t)|$ has empirical log--log slope $0.997$ in $\eta$, matching the predicted exponent of $1$ and the synthetic-family slopes of \S\ref{sec:exp-results-varproxy} to within their spread.
The capped square-root play concentrates quickly on this predictable trace, so the same diagnostic on its own trajectory rests on fewer dispersed rounds but returns the same slope, $0.994$.
Head-to-head on the real trace, with every baseline run from its own update rule, the terminal regret to the best fixed expert orders as NormalHedge \cite{chaudhuri-freund-hsu} at $2.14$, AdaHedge \cite{derooij2014adahedge} at $3.09$, the second-order square-root schedule at $3.69$, and Squint \cite{koolen2015squint} at $12.7$. This ordering places the paper's schedule alongside the parameter-free hedges and Squint as the outlier under its $\eta\le\tfrac12$ constraint.

\subsubsection{Exact recovery of the AdaBoost step from pressure-target boosting}\label{sec:exp-results-boosting}

The pressure-target recurrence of Section~\ref{sec:pressure} reproduces the classical AdaBoost coefficient $\alpha_t^*=\tfrac12\log\{(1-\varepsilon_t)/\varepsilon_t\}$ when the weak hypothesis has binary outputs, and the exponential loss decays by the factor $2\sqrt{\varepsilon_t(1-\varepsilon_t)}$ per round \cite{FreundSchapire97,SchapireFreund12BoostingBook}.
Runs with decision-stump weak learners on multiclass synthetic substrates confirm both statements: the pressure-recurrence coefficient agrees with the closed-form AdaBoost value, and the realized exponential loss sits on the $\prod_t 2\sqrt{\varepsilon_t(1-\varepsilon_t)}$ envelope.
The regret decomposition on the signed-margin state exhibits no retempering drift, since the pressure game is Nature-first and the one-step CGF constraint is active.

\subsubsection{Sampling estimators and the prior--posterior-ratio martingale}\label{sec:exp-results-ts}

The prior-posterior-ratio (PPR) $R_t(\theta):=\pi_0(\theta)/\pi_t(\theta)$ between the initial prior $\pi_0$ and the round-$t$ posterior $\pi_t$ on a candidate parameter $\theta$ is a nonnegative martingale. The intrinsic-time clock of a sampling estimator of a given update tracks the clock of the exact update, with a relative gap of order $T^{-1/2}$ in the horizon and at the same rate in the number of samples drawn per round. The coupling requires a precondition: the played distribution must retain dispersion, since a clock that saturates leaves a bounded denominator under a growing numerator. Posterior Thompson sampling \cite{RussoVanRoy14,RussoEtAl18} compared with exponential weights is a different object---two update rules instead of an update and a sample of it, with generally different stationary dispersions---and the same decay is not predicted for it.

A mutation--selection substrate holds the dispersion precondition fixed while the horizon and the population size vary. The PPR martingale is separately checked against Ville's bound $\Pr(\sup_t R_t\ge 1/\alpha)\le \alpha$ on the grid $\alpha\in\{0.5,0.2,0.1,0.05,0.02\}$ at $T=2000$ over $K\in\{4,8,16\}$, giving $15$ configurations over $500$ seeds, a check sensitive to tail mass.

\subsubsection{Ville bound on the prior-posterior-ratio martingale (\S\ref{sec:exp-results-ts})}\label{sec:eval-ville-bound-ts-martingale}

Ville's bound holds at every level of the $\alpha$ grid, with the conservative Wilson upper bound certifying $12$ of the $15$, the Wilson $95\%$ upper bound being conservative because of Monte-Carlo noise at small $\alpha$. At $\alpha=0.05$ and $\alpha=0.02$ the Wilson upper bound exceeds the nominal level by that noise while the bound itself is satisfied with positive slack, the residual noise reflecting the $500$-seed Monte-Carlo resolution at small $\alpha$.

\subsubsection{Unit-potential wealth as a calibrated anytime-valid test}\label{sec:eval-unit-potential-wealth-detection}

At the unit-potential root of Corollary~\ref{cor:pressure-unit} the one-round normalizer satisfies $\log Z_t=0$ exactly, so each round is a fair bet under the played weights. That certificate has an operational consequence:
under the null the learner's cumulative wealth is an e-process, so thresholding it at $1/\alpha$ gives an anytime-valid sequential test by the same Ville argument as \S\ref{sec:exp-results-ts}.
This evaluation cashes the certificate out as a change detector on three UCI tabular streams cast as expert streams (\texttt{breast\_cancer}, \texttt{digits}, \texttt{wine}).
Each stream runs in a calibration mode (permutation-null streams, measuring the false-alarm rate against nominal levels $\alpha\in\{0.05,0.1\}$) and a delay mode (streams with an injected distribution shift, measuring detection rate and delay against a generic sequential test and a CUSUM baseline).
The injected-shift streams substitute for the concept-drift streams named at the outset.

Calibration holds everywhere:
the wealth test's empirical false-alarm rate sits at or below nominal on every stream ($0.013$ at $\alpha=0.05$ and $0.04$ at $\alpha=0.1$ on \texttt{breast\_cancer}, with the same pattern on the other two),
and the fair-bet identity holds on the exactly-normalized rounds.
Two boundaries qualify the result.
The exact root is rare on these real streams: the damped fallback that the rule permits fires on $97\%$ to $99\%$ of rounds, and the Type-I control survives that dominance.
Validity costs detection power:
the wealth test detects the injected shift on $17\%$ to $67\%$ of runs across the three streams,
against $79\%$ to $89\%$ for the generic sequential test (the CUSUM baseline detects at most $14\%$ on this setup).
The wealth test is a genuinely calibrated anytime-valid detector whose sensitivity trails a test that spends no validity budget;
a restart rule that needs few, well-placed triggers is the setting where that trade wins, and \S\ref{sec:eval-slack-restart-drift-streams} runs it.

\subsection{Comparison with prior methods}

\subsubsection{Comparison with current adaptive online learning}\label{sec:exp-results-baselines}

A benchmark against current adaptive online learning baselines places the paper's schedules on the same axis the field has been using.
We compare the retempered schedule, the pressure-target schedule, and a hybrid \emph{gap-retempered} schedule against the state-of-the-art (SOTA) baselines AdaHedge \citep{derooij2014adahedge}, NormalHedge \citep{chaudhuri-freund-hsu}, AdaNormalHedge \citep{luo2015achieving}, the follow-the-regularized-leader (FTRL) algorithm with $1/2$-Tsallis entropy \citep{zimmert2021jmlr}, and Squint \citep{koolen2015squint}, on the four families with $12$ random seeds.
The gap-retempered schedule takes prior-retempered posteriors at the AdaHedge gap-implied rate; at the uniform prior used here Proposition~\ref{prop:adahedge-instance} makes it AdaHedge itself, so its row is an implementation check.
Every baseline is run from its own published update rule.
Figure~\ref{fig:sota-comparison} reports the mean and inter-quartile range (IQR).
The baselines span very different dynamic ranges: AdaHedge and AdaNormalHedge sit at $\approx 0.5$ regret on stochastic paths, while Squint sits at $\approx 31$.
Each panel therefore uses a symmetric-log (symlog) $y$-axis, which keeps every algorithm legible across the three-order range without compressing the paper's schedules.

\begin{figure}[tbp]
\centering
\includegraphics[width=\textwidth]{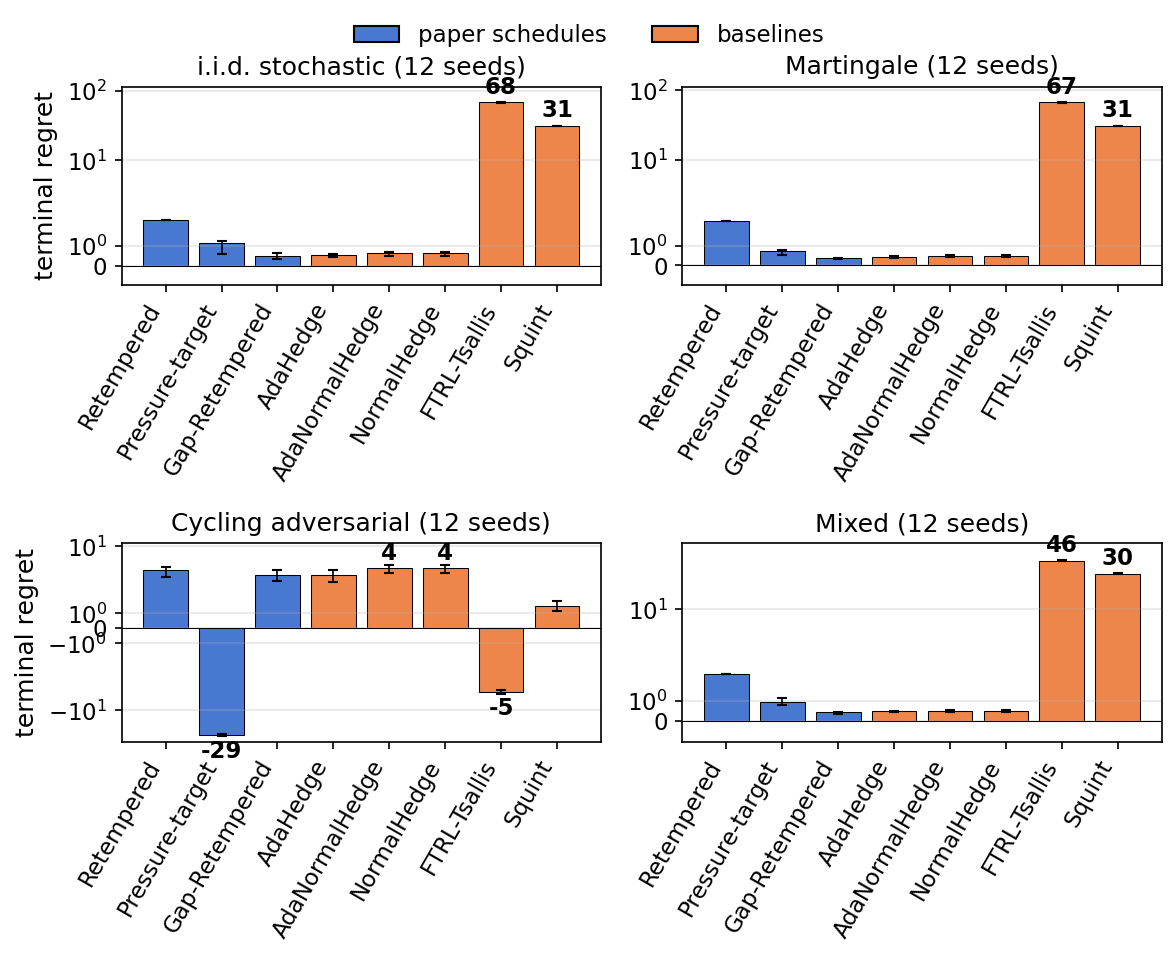}
\caption{\textbf{On the predictable families the paper's retempered schedules cluster with the parameter-free hedges, while Squint and FTRL-Tsallis sit orders of magnitude higher.} Terminal regret to the best fixed expert ($T=2000$, $K=8$, $12$ seeds, mean with IQR). Blue bars: paper schedules; orange bars: SOTA adaptive baselines. Per-panel symmetric-log (symlog) $y$-axis, so the FTRL-Tsallis / Squint outliers do not compress the paper algorithms.}
\label{fig:sota-comparison}
\end{figure}

On the predictable families (i.i.d., martingale, mixed) the gap-retempered schedule reproduces AdaHedge exactly---at a uniform prior the two are one algorithm (Proposition~\ref{prop:adahedge-instance}), so their bars coincide identically---and AdaNormalHedge and NormalHedge sit with them in a tight cluster at terminal regret $0.4$--$0.65$. The plain retempered schedule sits at $\approx 2.3$, the difference matching the width $C^2\Gamma+Q_*^T(c)$ of the envelope's bracket in the cap-binding regime: with $\Gamma=\log 8\approx 2.08$ and $C=1/\sqrt 2$, we have $C^2\Gamma\approx 1.04$ and $Q_*^T(c)\approx 1.2$ on these paths, summing to roughly the observed $2.3$ gap. The retempered schedule accepts the upfront cost in exchange for the worst-case envelope; on these easy paths the tax is wasted but bounded.

On the cycling-adversarial family the pressure-target schedule attains substantially negative regret to the best fixed expert ($-28.9 \pm 3.6$). The reason is geometric: the local update with the line-search rule chases the rotating leader as soon as a switch happens. The cumulative loss of the resulting weight sequence is much smaller than the cumulative loss of any single fixed expert (which has been the leader for only a quarter of the rounds). AdaHedge and AdaNormalHedge cannot do this---their gap-implied rate ramps up only after the cumulative gap accumulates, so they enter each new segment at a rate that lags the change-point by hundreds of rounds.

Squint incurs $\approx 31$ in regret on the i.i.d., martingale, and mixed paths. Squint's analysis requires $\eta\le 1/2$ to control the second-order moment, and on well-separated stochastic paths the optimal fixed temperature is well above $1/2$. The paper's intrinsic-time schedule sidesteps this constraint at the cost of an explicit $C^2\Gamma$ initialization tax: under the cap $\eta_t\le 1$, we incur a known additive term and recover the high-temperature regime that Squint's design forbids. This is the empirical content of the cap design choice.

\paragraph{Cap binding and $\Gamma$ sensitivity.}
The cap $\eta_t \le 1$ in Algorithm~\ref{alg:tempo-family} binds in practice. Figure~\ref{fig:diagnostics} (left) compares the capped Algorithm~\ref{alg:tempo-family} schedule with the uncapped second-order rule $\eta_t=C\sqrt{\Gamma/V_{t-1}}$ on i.i.d.~stochastic and on cycling-adversarial paths. On the stochastic family the uncapped rule settles well above the cap, at $\eta\approx 3.5$; on the adversarial family both rules eventually decay below $1$, with the capped version delayed because it sits at the cap until $V_t$ accumulates past the threshold. Figure~\ref{fig:diagnostics} (right) sweeps $\Gamma$ on three families.
Terminal regret on the i.i.d.\ and mixed families is insensitive to $\Gamma$ across the whole sweep, placing the default $\Gamma=\log K$ well inside their flat regime.
On the cycling-adversarial family the default lies on the shoulder of a curve still rising toward its plateau near $\Gamma\approx 8$.
The schedule is therefore robust to the choice of $\Gamma$ on predictable paths and only mildly $\Gamma$-sensitive on the hardest ones.

\begin{figure}[tbp]
\centering
\includegraphics[width=\textwidth]{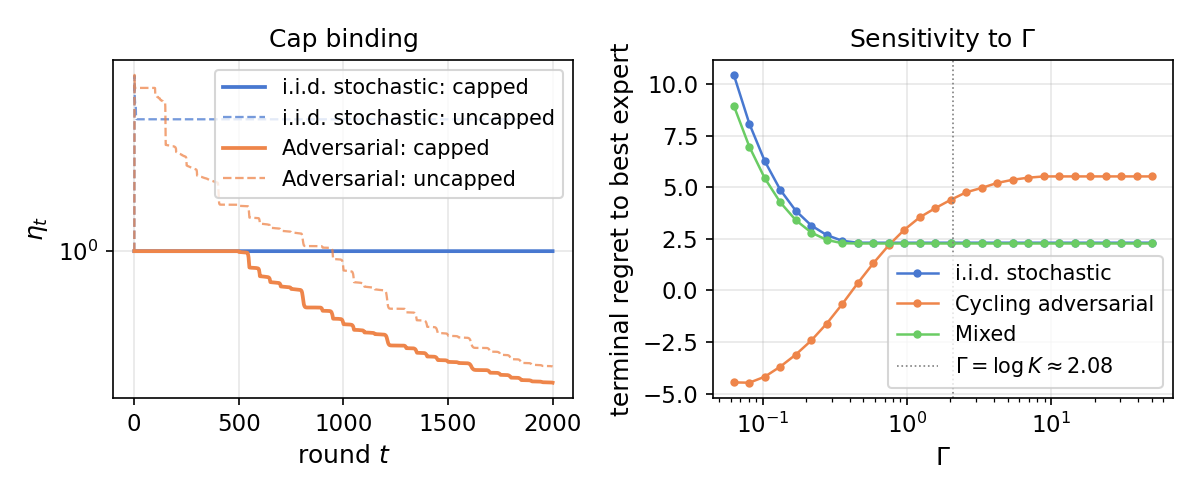}
\caption{\textbf{The temperature cap binds in practice, and the schedule is robust to $\Gamma$ on predictable paths and only mildly sensitive on the hardest ones.} \textbf{Left:} cap-binding diagnostic. Solid lines are the capped Algorithm~\ref{alg:tempo-family} rule $\eta_t=\min\{1,C\sqrt{\Gamma/V_{t-1}}\}$; dashed are the uncapped rule. Symmetric-log $y$-axis to keep the uncapped i.i.d.~trajectory legible. \textbf{Right:} sensitivity of the retempered schedule to $\Gamma$ on three families. The default $\Gamma=\log K$ is robust over two decades.}
\label{fig:diagnostics}
\end{figure}


%
\subsubsection{Head-to-head against faithful baseline implementations}\label{sec:eval-sota-true-baselines}

The comparison of \S\ref{sec:exp-results-baselines} is repeated with an independent set of implementations, each built directly from its baseline's published update rule.
Each implementation includes a unit-check that reproduces its own design behavior: the data-adaptive parameter-free hedges (AdaHedge \cite{derooij2014adahedge}, NormalHedge \cite{chaudhuri-freund-hsu}, AdaNormalHedge \cite{luo2015achieving}) attain bounded regret on a realizable instance and sublinear regret on well-separated stochastic paths. Squint \cite{koolen2015squint} reproduces the higher regret its $\eta\le\tfrac12$ constraint forces on well-separated paths, and the non-adaptive FTRL rule with $\tfrac12$-Tsallis entropy \cite{zimmert2021jmlr} follows its worst-case $\sqrt{TK}$ rate.
All unit-checks pass.

\begin{table}[tbp]
\centering
\small
\setlength{\tabcolsep}{4pt}
\caption{Terminal regret to the best fixed expert ($T=2000$, $K=8$, mean over $12$ seeds with the inter-quartile range in brackets), every baseline run from its own update rule. Rows marked (paper) are the paper's schedules and the remainder are the faithful baseline implementations, the shared row being the gap-retempered schedule and AdaHedge, which coincide at a uniform prior by Proposition~\ref{prop:adahedge-instance}. On the predictable families the retempered paper schedules cluster with the parameter-free hedges; Squint and FTRL-Tsallis are the high-regret outliers, as the panel of Figure~\ref{fig:sota-comparison} also shows.}
\label{tab:sota-true-baselines}
\begin{tabular}{lrrrr}
\toprule
Method & i.i.d. & martingale & cycling & mixed \\
\midrule
Retempered square-root (paper) & $2.42$ $[2.37, 2.45]$ & $3.69$ $[3.20, 3.84]$ & $3.49$ $[2.99, 3.85]$ & $2.94$ $[2.47, 3.22]$ \\
Pressure-target, local (paper) & $481$ $[478, 485]$ & $206$ $[176, 223]$ & $3.30$ $[2.34, 4.37]$ & $238$ $[234, 241]$ \\
AdaHedge $=$ gap-retempered (paper) & $0.93$ $[0.74, 1.02]$ & $2.04$ $[1.59, 2.13]$ & $2.75$ $[2.39, 3.12]$ & $1.49$ $[0.92, 1.82]$ \\
NormalHedge & $0.74$ $[0.45, 0.92]$ & $1.74$ $[1.39, 1.89]$ & $2.66$ $[2.02, 3.35]$ & $1.21$ $[0.57, 1.55]$ \\
AdaNormalHedge & $1.45$ $[1.35, 1.54]$ & $16.2$ $[13.9, 19.2]$ & $4.81$ $[3.51, 6.42]$ & $5.20$ $[1.50, 7.72]$ \\
FTRL, $\tfrac12$-Tsallis & $71.2$ $[70.5, 71.6]$ & $49.7$ $[39.5, 66.2]$ & $-52.8$ $[-53.7, -51.9]$ & $53.8$ $[51.0, 55.9]$ \\
Squint & $10.5$ $[10.49, 10.54]$ & $41.6$ $[36.6, 45.7]$ & $11.8$ $[10.6, 13.2]$ & $13.3$ $[10.3, 16.7]$ \\
\bottomrule
\end{tabular}
\end{table}

The faithful baselines confirm the qualitative reading of \S\ref{sec:exp-results-baselines} (Table~\ref{tab:sota-true-baselines}, terminal regret to the best fixed expert over twelve seeds).
On the predictable families the gap-retempered schedule reproduces AdaHedge exactly, which Proposition~\ref{prop:adahedge-instance} explains: at a uniform prior the retempered play and the exponential-weights play recomputed at the same temperature are the same distribution.
At the gap-implied rate the two are therefore one algorithm, and the shared row is an implementation check.
NormalHedge is marginally lower, and the plain retempered square-root schedule incurs its worst-case initialization overhead of about two nats before settling.
Squint and the non-adaptive FTRL rule are the high-regret outliers.
The local pressure-target is the leader-tracking variant: on stationary predictable paths its running-product weights collapse onto a stale leader, which is why the retempered schedules are used there, and its advantage is confined to chaseable non-stationary structure.
On this independently-constructed cycling family the local pressure-target does not reach the negative-regret leader-chasing regime that the paper's own planted shifting-comparator construction exhibits (\S\ref{sec:exp-results-shifting}); the aggressive fixed-temperature FTRL rule does chase it, reaching $-52.8$.
The negative-regret regime is thus present but family-dependent.

\section*{Acknowledgments}

Large language models were used in producing this work, for which the author is solely responsible.

\bibliographystyle{plainnat}
\bibliography{online_learning}

\end{document}